\definecolor{yxc}{RGB}{255,0,0}
\definecolor{yjc}{RGB}{125,0,0}
\definecolor{ytw}{RGB}{255,69,0}
\definecolor{gen}{RGB}{0,0,200}
\DeclareMathOperator{\ind}{\mathds{1}}  % Indicator
\newcommand{\defn}{\coloneqq}
\newcommand{\real}{\mathbb{R}}
\newcommand{\mymid}{\,|\,}
\newcommand{\Pdata}{p_{\mathsf{data}}}
\newcommand{\score}{\mathsf{score}}
\newcommand{\Jacobi}{\mathsf{Jacobi}}
\definecolor{yanxi}{RGB}{0,200,100}
\theoremstyle{plain}
\newtheorem{theo}{Theorem}[section]
\newtheorem{lem}{Lemma}[section]
\newtheorem{prop}{Proposition}[section]
\newtheorem{cor}{Corollary}[section]
\theoremstyle{definition} 
\newtheorem{nota}{Notation}[section]
\newtheorem{de}{Definition}[section]
\newtheorem{exa}{Example}[section]
\newtheorem{as}{Assumption}[section]
\newtheorem{alg}{Algorithm}[section]
\newcommand{\btheo}{\begin{theo}}
\newcommand{\bde}{\begin{de}}
\newcommand{\ble}{\begin{lem}}
\newcommand{\bpr}{\begin{prop}}
\newcommand{\bno}{\begin{nota}}
\newcommand{\bex}{\begin{exa}}
\newcommand{\bcor}{\begin{cor}}
\newcommand{\spro}{\begin{proof}}
\newcommand{\bas}{\begin{as}}
\newcommand{\balg}{\begin{alg}}
\newcommand{\etheo}{\end{theo}}
\newcommand{\ede}{\end{de}}
\newcommand{\ele}{\end{lem}}
\newcommand{\epr}{\end{prop}}
\newcommand{\eno}{\end{nota}}
\newcommand{\eex}{\end{exa}}
\newcommand{\ecor}{\end{cor}}
\newcommand{\fpro}{\end{proof}}
\newcommand{\eas}{\end{as}}
\newcommand{\ealg}{\end{alg}}
\theoremstyle{plain}
\newtheorem{theos}{Theorem}
\newtheorem{props}{Proposition}
\newtheorem{lems}{Lemma}
\newtheorem{cors}{Corollary}
\theoremstyle{definition}
\newtheorem{exas}{Example}
\newtheorem{algs}{Algorithm}
\newtheorem{asss}{Assumption}
\newtheorem{defns}{Definition}
\newcommand{\btheos}{\begin{theos}}
\newcommand{\etheos}{\end{theos}}
\newcommand{\bprops}{\begin{props}}
\newcommand{\eprops}{\end{props}}
\newcommand{\bdes}{\begin{defns}}
\newcommand{\edes}{\end{defns}}
\newcommand{\blems}{\begin{lems}}
\newcommand{\elems}{\end{lems}}
\newcommand{\bcors}{\begin{cors}}
\newcommand{\ecors}{\end{cors}}
\newcommand{\bexs}{\begin{exas}}
\newcommand{\eexs}{\end{exas}}
\newcommand{\balgs}{\begin{algs}}
\newcommand{\ealgs}{\end{algs}}
\newcommand{\bass}{\begin{asss}}
\newcommand{\eass}{\end{asss}}
\title{Accelerating Convergence of \\ Score-Based Diffusion Models, Provably}
\author{ \qquad~~~Gen Li\footnote{The first two authors contributed equally.} \thanks{Department of Statistics, The Chinese University
of Hong Kong, Hong Kong.}  \qquad \\
 \qquad~~ CUHK  \qquad 
\and \qquad\qquad~~ Yu Huang\footnotemark[1] \thanks{Department of Statistics and Data Science, Wharton School, University
of Pennsylvania, Philadelphia, PA 19104, USA.} \qquad\qquad \\
\qquad\qquad~~~ UPenn \qquad\qquad
\and \qquad Timofey Efimov\thanks{Department of Electrical and Computer Engineering, Carnegie Mellon University, Pittsburgh, PA 15213, USA.}  \qquad \\
\qquad CMU \qquad 
\and
~~~~~Yuting Wei\footnotemark[3] \qquad\qquad \\
 ~~~~~UPenn \qquad\qquad
\and   \qquad Yuejie Chi\footnotemark[4] \qquad\qquad \\
\quad~~~  CMU \qquad\qquad \\ 
\and ~~\qquad Yuxin Chen\footnotemark[3] ~\qquad \\
\qquad ~~UPenn \qquad
 }
\date{\today}
\begin{document}

\theoremstyle{plain} \newtheorem{lemma}{\textbf{Lemma}}\newtheorem{proposition}{\textbf{Proposition}}\newtheorem{theorem}{\textbf{Theorem}}

\theoremstyle{assumption}\newtheorem{assumption}{\textbf{Assumption}}
\theoremstyle{remark}\newtheorem{remark}{\textbf{Remark}}
\theoremstyle{definition}\newtheorem{definition}{\textbf{Definition}}

\maketitle

\begin{abstract}	
	Score-based diffusion models, while achieving remarkable empirical performance, often suffer from low sampling speed, due to extensive function evaluations needed during the sampling phase.  Despite a flurry of recent activities towards speeding up diffusion generative modeling in practice, 
	 theoretical underpinnings for acceleration techniques remain severely limited. 
	 In this paper, we design novel training-free algorithms to accelerate popular deterministic (i.e., DDIM) and stochastic (i.e., DDPM) samplers.  
	 Our accelerated deterministic sampler converges at a rate $O(\frac{1}{{T}^2})$ with $T$ the number of steps, improving upon the $O(\frac{1}{T})$ rate for the DDIM sampler; 
	 and our accelerated stochastic sampler converges at a rate $O(\frac{1}{T})$, outperforming the rate $O(\frac{1}{\sqrt{T}})$ for the DDPM sampler. 
	 The design of our algorithms leverages insights from higher-order approximation, and shares similar intuitions as popular high-order ODE solvers like the DPM-Solver-2. 
	Our theory accommodates $\ell_2$-accurate score estimates, and does not require log-concavity or smoothness on the target distribution. 
	%Numerical experiments on real datasets are conducted to corroborate the effectiveness of the proposed samplers. 
\end{abstract}

\noindent \textbf{Keywords:} diffusion models,  training-free samplers, DDPM, DDIM,  probability flow ODE, higher-order ODE

%or the stochastic case, we show that the accelerated sampler  is provably faster than the original DDPM with  convergence rate $O(\frac{1}{T})$ v.s. $O(\frac{1}{\sqrt{T}})$.

 % Despite the state-of-the-art performance of diffusions model in various generation tasks, they are notoriously slow to sample due to high demand of score function evalutations of large neural networks.

\tableofcontents

\section{Introduction}
\label{sec:intro}

Initially introduced by \citet{sohl2015deep} and subsequently gaining momentum through the works \citet{ho2020denoising,song2020score}, 
diffusion models have risen to the forefront of generative modeling. 
Remarkably, score-based diffusion models have demonstrated superior performance across various domains like computer vision,  natural language processing, medical imaging, and bioinformatics 
\citep{croitoru2023diffusion,yang2023diffusion,kazerouni2023diffusion,guo2023diffusion}, 
outperforming earlier generative methods such as GANs \citep{goodfellow2020generative} and VAEs \citep{kingma2013auto} on multiple fronts \citep{dhariwal2021diffusion}.

\subsection{Score-based diffusion models}

On a high level, 
diffusion-based generative modeling begins by considering a forward Markov diffusion process that progressively diffuses a data distribution into noise:  
\begin{equation}
	X_0 \overset{\text{add noise}}{\longrightarrow} X_1 \overset{\text{add noise}}{\longrightarrow} 
X_2 \overset{\text{add noise}}{\longrightarrow} \cdots \overset{\text{add noise}}{\longrightarrow} X_T,
	\label{eq:forward-process-informal}
\end{equation}
where $X_0 \sim p_{\mathsf{data}}$ is drawn from the target data distribution   in $\mathbb{R}^d$, and $X_T$ resembles pure noise (e.g., with a distribution close to $\mathcal{N}(0,I_d)$). 
The pivotal step then lies in learning to construct a reverse Markov process 
\begin{equation}
	Y_0 \overset{\text{use scores}}{\longleftarrow} Y_1 \overset{\text{use scores}}{\longleftarrow} 
Y_2 \overset{\text{use scores}}{\longleftarrow} \cdots \overset{\text{use scores}}{\longleftarrow} Y_T ,
\label{eq:reverse-MC}
\end{equation}
which starts from purse noise $Y_T\sim \mathcal{N}(0,I_d)$ and maintains distributional proximity throughout in the sense that $Y_t \overset{\mathrm{d}}{\approx}  X_t$ ($t\leq T$). 
To accomplish this goal,  $Y_{t-1}$ in  each step is typically obtained from $Y_t$ with the aid of (Stein) score functions 
--- namely, $\nabla_X \log p_{X_t}(X)$, with $p_{X_t}$ denoting the distribution of $X_t$ ---  
where the score functions are pre-trained by means of score matching techniques (e.g., \citet{hyvarinen2005estimation,ho2020denoising,hyvarinen2007some,vincent2011connection,song2019generative,pang2020efficient}).

The mainstream approaches for constructing the reverse-time process \eqref{eq:reverse-MC} can roughly be  divided into two categories, as described below. 
\begin{itemize}
		%[leftmargin=*, topsep=1pt, itemsep=1pt]
	\item {\em Stochastic (or SDE-based) samplers.} 
	A widely adopted strategy  
involves exploiting both the score function and some injected random noise 
when generating each $Y_{t-1}$; 
that is,  
	 $Y_{t-1}$ is taken to be a function of $Y_t$ and some independent noise $Z_t$.  
	A prominent example of this kind is the Denoising Diffusion Probabilistic Model (DDPM) 
	 \citep{ho2020denoising}, to be detailed in Section~\ref{sec:background}.  
	Notably, this approach has intimate connections with certain stochastic differential equations (SDEs), 
	which can be elucidated via celebrated SDE results concerning the existence of reverse-time diffusion processes \citep{anderson1982reverse,haussmann1986time}.

	\item {\em Deterministic (or ODE-based) samplers.} 
In contrast, another approach is purely deterministic (except for the generation of $Y_T$), 
constructing $Y_{t-1}$ as a function of the previously computed steps (e.g., $Y_t$) without injecting any additional noise. 
This approach was  introduced by \citet{song2020score}, 
		as inspired by the existence of ordinary differential equations (ODEs) --- termed {\em probability flow ODEs} --- exhibiting the same marginal distributions as the above-mentioned reverse-time diffusion process. 
A notable example in this category is often referred to as the Denoising Diffusion Implicit Model (DDIM) \citep{song2020denoising}.

\end{itemize}
\noindent 
In practice, it is often observed that DDIM converges more rapidly than DDPM, although the final data instances produced by DDPM (given sufficient runtime) 
enjoy better diversity compared to the output of DDIM.

\subsection{Non-asymptotic convergence theory and acceleration}

Despite the astounding empirical success, 
theoretical analysis for diffusion-based generative modeling is still in its early stages of development.  
Treating the score matching step as a blackbox and exploiting only (crude) information about the score estimation error, 
a recent strand of works have explored the convergence rate of the data generating process (i.e., the reverse Markov process) in a non-asymptotic fashion, 
in an attempt to uncover how fast sampling can be performed (e.g., \citet{lee2022convergence,lee2023convergence,chen2022sampling,chen2023improved,chen2023restoration,chen2023probability,li2023towards,benton2023error,benton2023linear,liang2024non}).  
In what follows, let us give a brief overview of the state-of-the-art results in this direction. 
Here and throughout, the iteration complexity of a sampler refers to the number of steps $T$ needed to attain $\varepsilon$ accuracy 
in the sense that $\mathsf{TV}(p_{X_1},p_{Y_1}) \leq \varepsilon$, 
where $\mathsf{TV}(\cdot,\cdot)$ represents the total-variation (TV) distance between two distributions, and $p_{X_1}$ (resp.~$p_{Y_1}$) stands for the distribution of $X_1$ (resp.~$Y_1$). 
\begin{itemize}
		%[leftmargin=*, topsep=1pt, itemsep=1pt]
	\item {\em Convergence rate of stochastic samplers.} 
		Assuming Lipschitz continuity (or smoothness) of the score functions across all steps, 
	\citet{chen2022sampling} proved that the iteration complexity 
	of the DDPM sampler is proportional to $1/ \varepsilon^2$. 
	The Lipschitz assumption is then relaxed by \citet{chen2023improved,benton2023linear,li2023towards}, 
	revealing that the scaling $1/ \varepsilon^2$ is achievable for a fairly general family of data distributions.

	\item {\em Convergence rate of deterministic samplers.} 
	As alluded to previously, deterministic samplers often exhibit faster convergence in both practice and theory. 
	For instance, \citet{chen2023restoration} provided  the first polynomial convergence guarantees for the probability flow ODE sampler under exact scores, 
	whereas \citet{li2023towards} demonstrated that its iteration complexity scales proportionally to $1/\varepsilon$ allowing score estimation error. 
	Additionally, it is noteworthy that an iteration complexity proportional to $1/\varepsilon$ has also been established 
	by \citet{chen2023probability} for a variant of the probability flow ODE sampler, 
	although the sampler studied therein incorporates a stochastic corrector step in each iteration. 

\end{itemize}

\paragraph{Acceleration?} 
While the theoretical studies outlined above have offered non-asymptotic convergence guarantees for both the stochastic and deterministic samplers, 
one might naturally wonder whether there is potential for achieving faster rates.  
In practice, the evaluation of Stein scores in each step often entails computing the output of a large neural network, 
thereby calling for new solutions to reduce the number of score evaluations without compromising sampling fidelity.  
Indeed, this has inspired a large strand of recent works focused on speeding up diffusion generative modeling. 
Towards this end, one prominent approach is {\em distillation}, 
which attempts to distill a pre-trained diffusion model into another model (e.g., progressive distillation, consistency model) that can be executed in significantly fewer steps \citep{luhman2021knowledge,salimans2021progressive,meng2023distillation,song2023consistency}. 
However, while distillation-based techniques have achieved outstanding empirical performance, 
they often necessitate additional training processes, imposing high computational burdens beyond score matching. 
In contrast, an alternative route towards acceleration is ``training-free,''  
which directly invokes the pre-trained diffusion model (particularly the pre-trained score functions) for sampling without requiring additional training processes.  
Examples of training-free accelerated samplers include DPM-Solver \citep{lu2022dpm},  DPM-Solver++ \citep{lu2022dpmv2}, DEIS \citep{zhang2022fast}, 
UniPC \citep{zhao2023unipc}, the SA-Solver \citep{xue2023sa}, among others, 
which leverage faster solvers for ODE and SDE using only the pre-trained score functions. 
Nevertheless, non-asymptotic convergence analyses for these methods remain largely absent, 
making it challenging to rigorize the degrees of acceleration compared to the non-accelerated results \citep{lee2023convergence,chen2022sampling,chen2023improved,li2023towards,benton2023linear}. 
All of this leads to the following question that we aim to explore in this work: 
{
\setlist{rightmargin=\leftmargin}
\begin{itemize}[topsep=5pt, itemsep=0pt]
	\item[] {\em Can we design a training-free deterministic (resp.~stochastic) sampler that converges provably faster than the DDIM (resp.~DDPM)? }
		%$O(1/\varepsilon)$ (resp.~$O(1/\varepsilon^2)$)?
\end{itemize}
}

\subsection{Our contributions}
In this paper, we answer the above question in the affirmative. Our main contributions can be summarized as follows.  
\begin{itemize}
		%[leftmargin=*, topsep=1pt, itemsep=1pt]
	\item  In the deterministic setting, we demonstrate how to speed up the ODE-based sampler (i.e., the DDIM-type sampler).  
		The proposed sampler, which exploits some sort of momentum term to adjust the update rule, 
		leverages insights from higher-order ODE approximation in discrete time and shares similar intuitions with the fast ODE-based sampler DPM-Solver-2~\citep{lu2022dpm}.
	We establish non-asymptotic convergence guarantees for the accelerated DDIM-type sampler, showing that its iteration complexity  scales proportionally to 
		$1/\sqrt{\varepsilon}$ (up to log factor). This substantially improves upon the prior convergence theory for the original DDIM sampler~\citep{li2023towards} 
		(which has an iteration complexity proportional to $1/\varepsilon$). %Experiments on several real-world datasets are carried out to validate the effectiveness of the accelerated DDIM sampler. 

	\item In the stochastic setting, we propose a novel sampling procedure to accelarate the SDE-based sampler (i.e., the DDPM-type sampler). 
		For this new sampler, we  establish an iteration complexity bound proportional to $1/\varepsilon$ (modulo some log factor), 
		thus unveiling the superiority of the proposed sampler compared to the original DDPM sampler (recall that the original DDPM sampler has an iteration complexity proportional to $1/\varepsilon^2$~\citep{li2023towards,chen2023improved,chen2022sampling}). 
     %A series of numerical experiments have also been conducted to illustrate the effectiveness of the accelerated DDIM sampler. 
\end{itemize}
In addition, 
two aspects of our theory are worth emphasizing: 
(i) our theory accommodates $\ell_2$-accurate score estimates, rather than requiring $\ell_{\infty}$ score estimation accuracy; 
(ii) our theory covers a fairly general family of target data distributions, without imposing stringent assumptions like log-concavity and smoothness on the target  distributions.

\subsection{Other related works}

We now briefly discuss additional related works in the prior art. 

\paragraph{Convergence of score-based generative models (SGMs).}   For stochastic samplers of SGMs, the convergence guarantees were initially provided by early works including but not limited to~\citet{de2021diffusion,liu2022let, pidstrigach2022score,block2020generative,de2022convergence,wibisono2022convergence,gao2023wasserstein}, which often faced issues of either being not quantitative or suffering from the curse of dimensionality. 
More recent research has advanced this field by relaxing the
assumptions on the score function and achieving polynomial convergence rates~\citep{lee2022convergence,lee2023convergence,chen2022sampling,chen2023improved,chen2023probability,li2023towards,benton2023linear,liang2024non,tang2024score}.  Furthermore, theoretical insights into probability flow-based ODE samplers, though less abundant, have been explored in recent works~\citep{chen2023restoration,li2023towards,chen2023probability,benton2023error,gao2024convergence}. Additionally, \citet{tang2024contractive} provided a continuous-time sampling error guarantee  for a novel class of contraction diffusion models. \citet{gao2024convergence} studies the convergence properties for general probability flow ODEs w.r.t.~Wasserstein distances. Most recently, \citet{chen2024convergence} makes a step towards the convergence analysis of discrete state space diffusion model.  Note that this body of research primarily aims to quantify the proximity between distributions generated by SGMs and the ground truth distributions, assuming availability of an accurate score estimation oracle.   Interestingly, a very recent research by \citet{li2024good} reveals that even SGMs with empirically optimized score functions might underperform due to strong memorization effects.  Moreover, some works delve into other aspects of the theoretical understanding of diffusion
models. %A latest study~\citep{han2024neural} provides the first  score estimation guarantees in terms of generalization error  bounds for GD-trained neural networks. 
Furthermore, \citet{wu2024theoretical} investigated how diffusion guidance combined with DDPM and DDIM samplers influences the conditional sampling quality.
%and  \citet{zhang2023emergence} provided an  empirical  counterpart to existing theoretical studies by examining why different diffusion models tend to generate remarkably similar outputs from the same initial noise input using a deterministic sampler.

\paragraph{Fast sampling in diffusion models.} A recent strand of works to achieve few-step sampling --- or even one-step sampling --- falls under the category of training-based samplers,  primarily focused  on knowledge distillation~\citep{meng2023distillation,salimans2021progressive,song2023consistency}. This method aims to distill a pre-trained diffusion model into another model that can be executed in significantly fewer steps. 
The recent work \cite{li2024consistency} provided a first attempt towards theoretically understanding  the sampling efficiency of consistency models. 
Another line of works aims to design training-free samplers~\citep{lu2022dpm,lu2022dpmv2,zhao2023unipc,zhang2022fast,liu2022pseudo,zhang2022gddim}, which addresses the efficiency issue by developing faster solvers for the  reverse-time SDE or ODE without requiring other information beyond the pre-trained SGMs. In addition, 
\citet{li2023towards,liang2024non} introduced accelerated samplers that require additional training pertaining to estimating Hessian information at each step. %necessitating the estimation of Hessian information at each iteration. 
Furthermore, combining GAN with diffusion has shown to be an effective strategy  to speed up the sampling process~\citep{wang2022diffusion, xiao2021tackling}.  %directly invokes the pre-trained SGM for sampling without requiring additional training phases. %Examples of training-free samplers include the DPM-Solver \citep{lu2022dpm}, the DPM-Solver++ \citep{lu2022dpmv2}, DEIS \citep{zhang2022fast}, UniPC \citep{zhao2023unipc}, the SA-Solver \citep{xue2023sa}, among others, which leverage faster solvers for ODE and SDE using only the pre-trained score functions. 

%\paragraph{Other theory for diffusion models.} 
% \citet{lu2022dpm, lu2022dpmv2, zhao2023unipc, zhang2022fast}

% \citet{xue2023sa}

%\citet{tang2024contractive}

\subsection{Notation} 

Before continuing, we find it helpful to introduce some notational conventions to be used throughout this paper. 
Capital letters are often used to represent random variables/vectors/processes, while lowercase letters denote deterministic variables. 
When considering two probability measures $P$ and $Q$, we define their total-variation (TV) distance as $\mathsf{TV}(P,Q)\coloneqq \frac{1}{2}\int |\mathrm{d}P - \mathrm{d}Q|$, 
and the Kullback-Leibler (KL) divergence as $\mathsf{KL}(P\,\|\,Q)\coloneqq \int  \big(\log \frac{\mathrm{d}P}{ \mathrm{d}Q} \big) \mathrm{d}P  $. 
We use $p_X(\cdot)$ and $p_{X\mymid Y}(\cdot \mymid \cdot)$ to denote the probability density function of a random vector $X$, and the conditional probability of $X$ given $Y$, respectively. 
For matrices, $\|A\|$ and $\|A\|_{\mathrm{F}}$ refer to the spectral norm and Frobenius norm of a matrix $A$, respectively. For vector-valued functions $f$, 
we use $J_f$ or $\frac{\partial f}{\partial x}$ to represent the Jacobian matrix of $f$.  Given two functions $f(d,T)$ and $g(d,T)$, we employ the notation  $f(d,T)\lesssim g(d,T)$ or $f(d,T)=O( g(d,T) )$ (resp.~$f(d,T)\gtrsim g(d,T)$) to indicate the existence of a universal constant $C_1 > 0$ such that for all $d$ and $T$, $f(d,T)\leq C_1g(d,T)$ (resp.~$f(d,T)\geq C_1g(d,T)$). The notation $f(d,T)\asymp g(d,T)$ indicates that both $f(d,T)\lesssim g(d,T)$ and $f(d,T)\gtrsim g(d,T)$ hold at once.

% Before proceeding, we introduce a couple of notation to be used throughout. 
% For any two functions $f(d,T)$ and $g(d,T)$, 
% we adopt the notation $f(d,T)\lesssim g(d,T)$ or $f(d,T)=O( g(d,T) )$ (resp.~$f(d,T)\gtrsim g(d,T)$)
% to mean that there exists some universal constant $C_1>0$ such that $f(d,T)\leq C_1 g(d,T)$ (resp.~$f(d,T)\geq C_1 g(d,T)$) for all $d$ and $T$; moreover, the notation  $f(d,T)\asymp g(d,T)$ indicates that $f(d,T)\lesssim g(d,T)$ and $f(d,T)\gtrsim g(d,T)$ hold at once. 
% The notation $\widetilde{O}(\cdot)$ is defined similar to $O(\cdot)$ except that it hides the logarithmic dependency. 
% Additionally, the notation  $f(d,T)=o\big( g(d,T) \big)$ means that $f(d,T)/g(d,T) \rightarrow 0$ as $d,T$ tend to infinity. 
% %
% We shall often use capital letters to denote random variables/vectors/processes, and lowercase letters for deterministic variables. 
% For any two probability measures $P$ and $Q$, the total variation (TV) distance between them is defined to be $\mathsf{TV}(P,Q)\coloneqq \frac{1}{2}\int |\mathrm{d}P - \mathrm{d}Q|$. 
% Throughout the paper, $p_X(\cdot)$ (resp.~$p_{X\mymid Y}(\cdot \mymid \cdot)$) denotes the probability density function of $X$ (resp.~$X$ given $Y$). 
% For any matrix $A$, we denote by $\|A\|$ (resp.~$\|A\|_{\mathrm{F}}$) the spectral norm (resp.~Frobenius norm) of $A$. 
% Also, for any vector-valued function $f$, we let $J_f$ or $\frac{\partial f}{\partial x}$ represent the Jacobian matrix of $f$. 

%\cite{chen2023restoration,chen2023probability}

\section{Problem settings}
\label{sec:background}

In this section, we formulate the problem, and introduce a couple of key assumptions.

\subsection{Model and sampling process}

\paragraph{Forward process.}
Consider the forward Markov process \eqref{eq:forward-process-informal} in discrete time 
that 
starts from the target data distribution $X_0 \sim \Pdata$ in $\mathbb{R}^d$ and proceeds as follows:  
%
%\begin{subequations}
\begin{align}
	\label{eq:forward-process}
	X_t &= \sqrt{1-\beta_t}X_{t-1} + \sqrt{\beta_t}\,W_{t}, \qquad t= 1,\cdots, T, 
\end{align}
%\end{subequations}
%
where the $W_t$'s are independently drawn from $\mathcal{N}(0,I_d)$. 
This process is said to be ``variance-preserving,'' 
in the sense that the covariance $\mathsf{Cov}(X_t)=I_d$ holds throughout if  $\mathsf{Cov}(X_0)=I_d$.  
Taking 
\begin{align} 
	\overline{\alpha}_t \coloneqq \prod_{k = 1}^t \alpha_k 
	%\qquad 1\leq t\leq T,
	\qquad \text{with }\alpha_t\coloneqq 1 - \beta_t
	\label{eq:defn-alpha-bar-t}
\end{align}
for every $1\leq t\leq T$, one can write
\begin{align}
\label{eqn:Xt-X0}
	X_t = \sqrt{\overline{\alpha}_t} X_{0} + \sqrt{1- \overline{\alpha}_t} \,\overline{W}_{t}
	\quad \text{for } \overline{W}_{t}\sim \mathcal{N}(0,I_d) .
\end{align}
Throughout the paper, we shall use $q_t(\cdot)$ or $p_{X_t}(\cdot)$ interchangeably to denote the probability density function (PDF) of $X_t$. 
While we shall concentrate on the discrete-time process in the current paper, 
we shall note that the forward process has also been commonly studied in the continuous-time limit through the following diffusion process
\begin{equation}
	%\mathrm{d}X_t= f( X_t, t ) \mathrm{d}t + g(t)\mathrm{d}W_t
	\mathrm{d}X_{t}=-\frac{1}{2}\beta(t)X_{t}\mathrm{d}t+\sqrt{\beta(t)}\mathrm{d}W_{t} \quad(0 \leq t \leq T), \quad X_0 \sim p_{\text {data }}
	\label{eq:forward-diffusion}
\end{equation}
 %
 %for some functions $f(\cdot,\cdot)$ and $g(\cdot)$, 
for some function $\beta(t)$ related to the learning rate, 
 where $W_t$ is the standard Brownian motion.

\paragraph{Score functions and score estimates.} 
A key ingredient that plays a pivotal role in the sampling process is  
the (Stein) score function, defined as the log marginal density of the forward process. 
\begin{definition}[Score function]
	\label{defn:score-function}
	The score function, denoted by $s_t^{\star}: \mathbb{R}^d \rightarrow \mathbb{R}^d(1 \leq t \leq T)$, is defined as
\begin{align}
\label{eq:defn-score-true}
	s_t^{\star}(X)	&\coloneqq 
\nabla \log q_t(X)
	%\notag\\&
	=-\frac{1}{1 - \overline{\alpha}_t}\int_{x_0} p_{X_0 \mymid X_{t}}(x_0 \mymid x)\big(x - \sqrt{\overline{\alpha}_{t}}x_0\big) \mathrm{d} x_0.
\end{align}
\end{definition}
\noindent 
Here, the last identity follows from standard properties about Gaussians; see, e.g., \citet{chen2022sampling}. 
In most applications, we have no access to perfect score functions; 
instead, what we have available are certain estimates for the score functions, 
to be denoted by $\{s_t(\cdot)\}_{1\leq t\leq T}$ throughout.

\paragraph{Data generation process.}  
The sampling process is performed via careful construction of the reverse process \eqref{eq:reverse-MC} to ensure distributional proximity. 
Working backward from $t=T,\dots,1$, we assume throughout that $Y_T\sim \mathcal{N}(0,I_d)$. 
\begin{itemize}[leftmargin=*, topsep=5pt, itemsep=0pt]
	\item {\em Deterministic sampler.} 
		A deterministic sampler typically chooses $Y_{t-1}$ for each $t$ to be a function of $\{Y_t,\dots, Y_T\}$. 
		For instance, the following construction 
		\begin{equation}
			Y_{t-1}=\frac{1}{\sqrt{\alpha_{t}}}\left(Y_{t}+\frac{1-\alpha_{t}}{2}s_{t}(Y_{t})\right), 
			\qquad t=T,\dots,1
			\label{eq:original-DDIM}
		\end{equation}
		%
		%for $t=T,\dots,1$, 
		can be viewed as a DDIM-type sampler in discrete time. 
		%  (see, e.g., \citet{li2023towards})
		Note that the DDIM sampler is intimately connected with the following ODE --- called the probability flow ODE or the diffusion ODE --- in the continuous-time limit: 
		\begin{equation}
			%\mathrm{d}\widetilde{Y}_{t}=\left(f\big(\widetilde{Y}_{t},T-t\big)-\frac{1}{2}g(T-t)^{2}\nabla\log q_{T-t}\big(\widetilde{Y}_{t}\big)\right)\mathrm{d}t\qquad\text{with }\widetilde{Y}_{0}\sim q_{T},
			%\mathrm{d}\widetilde{Y}_{t}=-\frac{1}{2}\beta\big(T-t\big)\left(\widetilde{Y}_{t}+\nabla\log q_{T-t}\big(\widetilde{Y}_{t}\big)\right)\mathrm{d}t
			%\qquad\text{with }\widetilde{Y}_{0}\sim q_{T},		
\mathrm{d}\widetilde{Y}_{t}=-\frac{1}{2}\beta\big(t\big)\left(\widetilde{Y}_{t}+\nabla\log q_{t}\big(\widetilde{Y}_{t}\big)\right)\mathrm{d}t,
			\quad\widetilde{Y}_{T}\sim q_{T},
			\label{eq:diffusion-ODE}
		\end{equation}
		which enjoys matching marginal distributions as the forward diffusion process \eqref{eq:forward-diffusion} in the sense that $\widetilde{Y}_{t}\overset{\mathrm{d}}{=} X_t$ for all $0\leq t\leq T$
		\citep{song2020score}.

	\item {\em Stochastic sampler.}
		In contrast to the deterministic case, each $Y_{t-1}$ is a function of not only $\{Y_t,\dots, Y_T\}$ but also an additional independent noise $Z_t\sim \mathcal{N}(0,I_d)$.  
		One example is the following sampler:  
		\begin{equation}
			Y_{t-1}=\frac{1}{\sqrt{\alpha_{t}}}\Big(Y_{t}+(1-\alpha_{t})s_{t}(Y_{t})+\sqrt{1-\alpha_{t}}Z_{t}\Big),  \quad t=T,\dots,1
			\label{eq:original-DDPM}
		\end{equation}
		which is closely related to the DDPM sampler in discrete time. 
		The design of DDPM draws inspiration from a well-renowned result in the SDE literature \citep{anderson1982reverse,haussmann1986time}; namely, there exists a reverse-time SDE 
		\begin{align}
			%\mathrm{d}\widehat{Y}_{t}=\left(f\big(\widehat{Y}_{t},T-t\big)-g(T-t)^{2}\nabla\log q_{T-t}\big(\widehat{Y}_{t}\big)\right)\mathrm{d}t+g(T-t)\mathrm{d}Z_{t}
			%\qquad\text{with }\widehat{Y}_{0}\sim q_{T}
			\mathrm{d}\widehat{Y}_{t}&=-\frac{1}{2}\beta\big(t\big)\left(\widehat{Y}_{t}+2\nabla\log q_{t}\big(\widehat{Y}_{t}\big)\right)\mathrm{d}t 
			%\notag\\
			%&\qquad 
			+\sqrt{\beta(t)}\mathrm{d}\widehat{Z}_{t}	
			%\text{ with }\widehat{Y}_{T}\sim q_{T} \notag
			%\text{ and }\widehat{Z}_t\text{ a backward Brownian motion}		
			\label{eq:diffusion-SDE}
		\end{align}
		with $\widehat{Y}_{T}\sim q_{T} \notag$ that exhibits the same marginals --- $\widehat{Y}_{t}\overset{\mathrm{d}}{=} X_t$ for all $t$ --- as the forward diffusion process \eqref{eq:forward-diffusion}. 
		Here, $\widehat{Z}_t$ indicates a backward standard Brownian motion. 
		
\end{itemize}

\subsection{Assumptions} 

Before moving on to our algorithms and theory, 
let us introduce several assumptions that shall be used multiple times in this paper. 
To begin with, we impose the following assumption on the target data distribution. 
\begin{assumption}\label{assump:assumption-data-bounded}
    Suppose that $X_0$ is a continuous random vector,  
    and obeys
\begin{equation}
\mathbb{P}\big(\|X_0\|_2 \leq R = T^{c_R} \mid X_0\sim \Pdata \big)=1
\label{eq:assumption-data-bounded}
\end{equation}
for some arbitrarily large constant $c_R>0$. 
\end{assumption}
\noindent 
In words, the size of $X_0$ is allowed to grow polynomially (with arbitrarily large constant degree) in the number of steps, 
which suffices to accommodate the vast majority of practical applications.

Next,  we specify the learning rates $\{\beta_t\}$ (or $\{\alpha_t\}$) employed in the forward process \eqref{eq:forward-process}. 
Throughout this paper, we select the same learning rate schedule as in~\citet{li2023towards}, namely, 
\begin{subequations}
	\begin{align}
		%\qquad\qquad\qquad 
	   \beta_1&=1-\alpha_1=\frac{1}{T^{c_0}}, \\
		\beta_t&=1-\alpha_t =\frac{c_1 \log T}{T} \min \left\{\beta_1\left(1+\frac{c_1 \log T}{T}\right)^t, 1\right\}, \quad t>1 \label{eqn:alpha-t}
	   \end{align}   
\end{subequations}
for some large enough numerical constants $c_0, c_1>0$. 
In short, there are two phases here: at first $\beta_t$ grows exponentially fast, and then stays unchanged after surpassing some threshold. 
This also resembles the learning rate choices recommended by \citet{benton2023linear}.

Moreover, let us also introduce two assumptions regarding the accuracy of the score estimates $\{s_t\}$, which are adopted in \citet{li2023towards}.  
Here and throughout, 
we denote by 
\begin{align}
	J_{s_t^{\star}}=\frac{\partial s_t^{\star}}{\partial x}
	\qquad \text{and} \qquad
	J_{s_t}=\frac{\partial s_t}{\partial x}
\end{align}
the Jacobian matrices of $s_t^{\star}(\cdot)$ and $s_t(\cdot)$, respectively. 
\begin{assumption}\label{assumption:score-estimate}
 Suppose that the mean squared estimation error of the score estimates $\left\{s_t\right\}_{1 \leq t \leq T}$ obeys
$$
\frac{1}{T} \sum_{t=1}^T \underset{X \sim q_t}{\mathbb{E}}\left[\left\|s_t(X)-s_t^{\star}(X)\right\|_2^2\right] \leq \varepsilon_{\mathsf {score }}^2 .
$$
\end{assumption}
\begin{assumption}\label{assumption:score-estimate-Jacobi}
	Suppose that $s_t(\cdot)$ is continuously differentiable for each $1\leq t\leq T$, and that the Jacobian matrices associated with the score estimates $\left\{s_t\right\}_{1 \leq t \leq T}$ satisfy
$$
\frac{1}{T} \sum_{t=1}^T \underset{X \sim q_t}{\mathbb{E}}\left[\left\|J_{s_t}(X)-J_{s_t^{\star}}(X)\right\|\right] \leq \varepsilon_{\mathsf {Jacobi }}. 
$$
\end{assumption}
\noindent 
In short, Assumption~\ref{assumption:score-estimate} 
is concerned with the $\ell_2$ score estimation error averaged across all steps, 
whereas Assumption~\ref{assumption:score-estimate-Jacobi} is about the average discrepancy in the associated Jacobian matrices. 
It is worth noting that Assumption~\ref{assumption:score-estimate-Jacobi} will 
only be imposed when analyzing the convergence of deterministic samplers, 
and is completely unnecessary for the stochastic counterpart.

\section{Algorithm and main theory}
%\subsection{Assumptions and learning rates}

In this section, we put forward two accelerated samplers --- an ODE-based algorithm and an SDE-based algorithm --- and present convergence theory to confirm the acceleration compared with prior DDIM and DDPM approaches.

\subsection{Accelerated ODE-based sampler}
\label{sec:accelerated_ODE}

The first algorithm we propose is an accelerated variant of the ODE-based deterministic sampler. 
Specifically,  starting from $Y_T \sim \mathcal{N}(0, I_d)$, the proposed discrete-time sampler adopts the following update rule:
\begin{subequations}
\label{eqn:ode-extragradient}
\begin{equation}\label{eqn:ode-extragradient-Y}
 Y_{t}^- = \Phi_t(Y_t),
	\qquad Y_{t-1} = \Psi_t(Y_t, Y_{t}^-) \quad~ \text{for } t= T,\cdots,1 
\end{equation}
where the mappings $\Phi_t(\cdot)$ and $\Psi_t(\cdot, \cdot)$ are chosen to be
\begin{align}
\Phi_{t}(x) =& \sqrt{\alpha_{t+1}}\bigg(x-\frac{1-\alpha_{t+1}}{2}s_{t}(x)\bigg), \label{eqn:ode-extragradient-Phi} \\
	\Psi_{t}(x, y) =& \frac{1}{\sqrt{\alpha_{t}}}\bigg(x+\frac{1-\alpha_{t}}{2}s_{t}(x) +\frac{(1-\alpha_{t})^2}{4(1-\alpha_{t+1})}\big(s_{t}(x) - \sqrt{\alpha_{t+1}}s_{t+1}(y)\big)\bigg), \label{eqn:ode-extragradient-Psi}
\end{align}
\end{subequations}
and we remind the reader that $s_{t}$ is the score estimate. 
In contrast to the original DDIM-type solver \eqref{eq:original-DDIM}, 
the proposed accelerated sampler enjoys two distinguishing features: 
\begin{itemize}
		%[leftmargin=*, topsep=1pt, itemsep=1pt]
	\item In each iteration $t$, the proposed sampler computes a mid-point $Y_t^-=\Phi_t(Y_t)$ (cf.~\eqref{eqn:ode-extragradient-Phi}). 
		As it turns out, this mid-point is designed as a prediction of the probability flow ODE at time $t+1$ using $Y_t$. 
		
	\item In contrast to \eqref{eq:original-DDIM}, 
		the proposed update rule $Y_{t-1} = \Psi_t(Y_t, Y_{t}^-)$ (see \eqref{eqn:ode-extragradient-Psi}) 
		includes an additional term that is a properly scaled version of  
		$s_{t}(Y_t) - \sqrt{\alpha_{t+1}}s_{t+1}(Y_t^-)$. 
		In some sense, this term can be roughly viewed as 
		exploiting ``momentum'' in adjusting the original sampling rule.

\end{itemize}

\paragraph{Theoretical guarantees.} 
Let us proceed to present our convergence theory and its implications for the proposed deterministic sampler. 
%, before providing more interpretations about the algorithm. 
%
\begin{theorem}\label{thm:main}
	Suppose that Assumptions~\ref{assump:assumption-data-bounded}, \ref{assumption:score-estimate} and \ref{assumption:score-estimate-Jacobi} hold. 
	Then the proposed  sampler \eqref{eqn:ode-extragradient} with the learning rate schedule \eqref{eqn:alpha-t} satisfies
 \begin{align}\label{eq:ratio-ODE}
	 \mathsf{TV}\big(q_1, p_1\big) &\leq C_1\frac{d^{6}\log^{6}T}{T^{2}}+C_1\sqrt{d\log^{3}T}\varepsilon_{\score}+C_1(d\log T)\varepsilon_{\Jacobi}
	\end{align}
	for some universal constants $C_1>0$, where we recall that $p_1$ (resp.~$q_1$) denotes the distribution of $Y_1$ (resp.~$X_1$).
\end{theorem}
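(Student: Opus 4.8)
The plan is to control the total-variation distance $\mathsf{TV}(q_1,p_1)$ by introducing an auxiliary ``true-score, true-dynamics'' deterministic process $\{Y_t^\star\}$ that runs the same update rule \eqref{eqn:ode-extragradient} but with the exact scores $s_t^\star$ in place of the estimates $s_t$, initialized so that its law matches the forward marginals as closely as possible (say $Y_T^\star \sim q_T$). The triangle inequality then splits the error into (i) an \emph{initialization} error $\mathsf{TV}(q_T,\mathcal N(0,I_d))$, which under the learning-rate schedule \eqref{eqn:alpha-t} and Assumption~\ref{assump:assumption-data-bounded} is known to be super-polynomially small in $T$ (this is a standard mixing estimate for the OU-type forward process, borrowed from \citet{li2023towards,benton2023linear}); (ii) a \emph{score-estimation} error quantifying how far the estimated-score trajectory drifts from the true-score trajectory, which I expect to be bounded by something like $\sqrt{d\log^3 T}\,\varepsilon_{\score} + (d\log T)\,\varepsilon_{\Jacobi}$ by a discrete Grönwall/stability argument along the flow, using Assumptions~\ref{assumption:score-estimate} and \ref{assumption:score-estimate-Jacobi} together with the standard fact that the score and its Jacobian are well-controlled on the support of $q_t$ up to rare events; and (iii) the \emph{discretization} error, i.e., the gap between the exact-score discrete sampler and the continuous probability flow ODE \eqref{eq:diffusion-ODE}, which should contribute the headline $d^6\log^6 T / T^2$ term.

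The heart of the argument is step (iii): showing that the two-stage update \eqref{eqn:ode-extragradient-Phi}--\eqref{eqn:ode-extragradient-Psi} is a genuine second-order discretization of the probability flow ODE, so that the per-step local error is $O(1/T^3)$ (modulo $d$ and $\log T$ factors) rather than $O(1/T^2)$ as for vanilla DDIM. The key identity to establish is that the midpoint $Y_t^- = \Phi_t(Y_t)$ is, up to higher-order terms, a prediction of $\widetilde Y_{t+1}$, and that the correction term $\tfrac{(1-\alpha_t)^2}{4(1-\alpha_{t+1})}\big(s_t(Y_t)-\sqrt{\alpha_{t+1}}\,s_{t+1}(Y_t^-)\big)$ supplies precisely the second-order Taylor coefficient — namely a finite-difference surrogate for the time-derivative of the velocity field $-\tfrac12\beta(t)(x+\nabla\log q_t(x))$ along the flow. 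Concretely, I would Taylor-expand $q_t$, $s_t^\star$, and the flow map in the small parameter $\beta_t \asymp \tfrac{\log T}{T}$, use the forward-process relation \eqref{eqn:Xt-X0} to evaluate derivatives of $\log q_t$, and verify that the $O(\beta_t)$ terms cancel against the DDIM term while the $O(\beta_t^2)$ terms cancel against the momentum term, leaving an $O(\beta_t^3)$ residual. The Jacobian appears because differencing two scores at nearby points $Y_t$ and $Y_t^-$ introduces a $J_{s_t^\star}$-type factor, which is why Assumption~\ref{assumption:score-estimate-Jacobi} is needed here but not in the stochastic setting.

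For the bookkeeping I would work step by step and control the accumulation of local errors. A natural device, following \citet{li2023towards}, is to track the KL or TV between the law of $Y_t$ and $q_t$ via a one-step contraction-plus-error recursion: writing $\phi_t$ for the true reverse map, one shows $\mathsf{TV}(\mathrm{law}(Y_{t-1}), q_{t-1}) \le (1+O(\beta_t))\,\mathsf{TV}(\mathrm{law}(Y_t), q_t) + (\text{local error}_t)$, then sums over $t=T,\dots,1$; since $\sum_t \beta_t = O(\log^2 T)$ the amplification factor $\prod_t (1+O(\beta_t))$ is only polylogarithmic, and $\sum_t O(\beta_t^3) \cdot \mathrm{poly}(d,\log T)$ telescopes to the stated $d^6\log^6 T/T^2$ after using $\sum_t \beta_t^3 \lesssim \log^3 T / T^2$. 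Throughout, quantities like $\|s_t^\star(x)\|$, $\|J_{s_t^\star}(x)\|$, and higher derivatives must be bounded; this is handled by restricting to a high-probability ``typical set'' $\{\|x\|_2 \lesssim \sqrt{d\log T} + R\}$ on which Gaussian tail bounds give polynomial control, and absorbing the complement into a super-polynomially small TV term — exactly the technique used in the non-accelerated analysis.

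The main obstacle I anticipate is the delicate second-order Taylor matching in step (iii): one must expand everything to one order beyond what the DDIM analysis requires, keeping track of the Jacobian (and effectively Hessian-of-log-density) terms, and verify that the specific coefficients $\tfrac{1-\alpha_t}{2}$, $\tfrac{(1-\alpha_t)^2}{4(1-\alpha_{t+1})}$, and $\sqrt{\alpha_{t+1}}$ in \eqref{eqn:ode-extragradient-Phi}--\eqref{eqn:ode-extragradient-Psi} make the $\beta_t^2$ terms vanish \emph{identically}, not merely approximately. A secondary difficulty is that the midpoint $Y_t^-$ uses the score $s_{t+1}$ at time $t+1$, so the error recursion couples consecutive steps and the ``typical set'' must be shown to be (approximately) forward-invariant under $\Phi_t$; care is needed so that evaluating $s_{t+1}$ at $Y_t^-$ — a point distributed roughly like $q_{t+1}$ but not exactly — does not break the expectation bounds in Assumptions~\ref{assumption:score-estimate}--\ref{assumption:score-estimate-Jacobi}, which are stated only under $X\sim q_t$.
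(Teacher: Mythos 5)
Your high-level intuition---that the two-stage update \eqref{eqn:ode-extragradient} is a second-order discretization whose $O(\beta_t^2)$ terms cancel, leaving per-step residuals that sum to $d^6\log^6T/T^2$---matches the paper's design rationale, but the machinery you propose for turning this into a TV bound has genuine gaps. First, for a \emph{deterministic} sampler, small trajectory (displacement) error does not by itself yield small TV error: to compare $q_{t-1}$ with the pushforward of $q_t$ (or of the law of $Y_t$) under the map $\tfrac{1}{\sqrt{\alpha_t}}\psi_t$, one must compare \emph{densities} through the change-of-variables formula, i.e., expand the determinant of the Jacobian of the update map and show that this expansion matches, to second order, the expansion of the true one-step ratio $p_{\sqrt{\alpha_t}X_{t-1}}(\psi_t(x))/p_{X_t}(x)$. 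This is precisely the content of Lemma~\ref{lem:main-ODE}, where the main terms of \eqref{eq:xt} and \eqref{eq:yt} (the $A_t,B_t,C_t,D_t,E_t$ expressions) coincide and only an $O(d^6\log^6T/T^3)$ residual plus score/Jacobian errors survive, fed into the ratio recursion \eqref{eq:recursion}. Your plan stops at trajectory-level Taylor matching of the flow map and never explains how displacement accuracy becomes a bound on the TV between the pushforward and $q_{t-1}$; relatedly, the reason Assumption~\ref{assumption:score-estimate-Jacobi} is needed is this density/Jacobian comparison (vanilla DDIM already requires it), not primarily the score differencing inside the momentum term.

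Second, the ``discrete Gr\"onwall along the flow'' you invoke for the score-estimation part cannot deliver a polylogarithmic amplification. The deviation between the estimated-score and exact-score trajectories grows per step by a factor of order $1+\tfrac{1-\alpha_t}{2}\|J_{s_t}\|\approx 1+O(d\log^2T/T)$ (since $\|J_{s_t^\star}\|\lesssim d\log T/(1-\overline{\alpha}_t)$ and $(1-\alpha_t)/(1-\overline{\alpha}_t)\lesssim \log T/T$), so the product over $T$ steps is $\exp\big(O(d\log^2T)\big)$; your claim that ``$\sum_t\beta_t=O(\log^2T)$ so the amplification is polylog'' conflates the step size with the Lipschitz constant of the update map. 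Moreover, Assumptions~\ref{assumption:score-estimate}--\ref{assumption:score-estimate-Jacobi} control errors only in expectation over $X\sim q_t$, while your auxiliary and actual trajectories are deterministic functions of $y_T$ with $Y_T\sim p_T$; converting averaged errors into along-trajectory control is exactly what the paper's bootstrapping/stopping-time construction does (the accumulated error $S_t(y_T)$, the stopping time $\tau(y_T)$ in \eqref{eq:defn-tao-i}, the decomposition into $\mathcal{I}_0,\dots,\mathcal{I}_4$, and Lemma~\ref{lem:density-ratio-tau}, which keeps $q_t/p_t$ bounded so that expectations under $p_t$ can be exchanged for expectations under $q_t$). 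You flag this difficulty only for the midpoint $Y_t^-$ and offer no mechanism to resolve it. As written, neither your step (ii) nor your error bookkeeping closes; the paper's route---track the density ratio $q_t(y_t)/p_t(y_t)$ directly along the sampler's own trajectory, prove the second-order matching of the two one-step density ratios, and absorb score/Jacobian errors via the stopping-time decomposition---is not optional bookkeeping but the substance of the argument.
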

We now take a moment to discuss the implications about this theorem. 
%Here, we shall focus on the dependency on $T$ (or $1/\varepsilon$ with $\varepsilon$ the target accuracy level),  and ignore the 
%
\begin{itemize}
		%[leftmargin=*, topsep=1pt, itemsep=1pt]
	\item {\em Iteration complexity.} 
		When the target accuracy level $\varepsilon$ is small enough, the number of iterations needed to yield $\mathsf{TV}\big(q_1, p_1\big) \leq \varepsilon$ 
		is no larger than 
		\begin{equation}
			(\text{iteration complexity}) \qquad \frac{\mathrm{poly}(d)}{\sqrt{\varepsilon}},
		\end{equation}
		ignoring any logarithmic factor in $1/\varepsilon$. 
		Clearly, the dependency on $1/\varepsilon$ substantially improves upon the vanilla DDIM sampler, the latter of which has an iteration complexity proportional to $1/\varepsilon$ \citep{li2023towards}. 

	\item {\em Stability vis-a-vis score errors.} 
%		Under mild conditions on the Jacobians of the score functions, 
		The discrepancy between the distribution of $Y_1$  and the target distribution of $X_1$ 
		is proportional to the $\ell_2$ score estimation error $\varepsilon_{\score}$ defined in Assumption~\ref{assumption:score-estimate}, as well as the Jacobian error $\varepsilon_{\Jacobi}$ defined in Assumption~\ref{assumption:score-estimate-Jacobi}. 
		It is worth noting, however, that the same result might not hold if we remove Assumption~\ref{assumption:score-estimate-Jacobi}. 
		More specifically, when only score estimation accuracy is assumed, 
		the deterministic sampler is not guaranteed to achieve small TV error; see \citet{li2023towards} for an illustrative example.  
\end{itemize}

%defined as:
% \begin{align}
% s_{t}^{\star}(x) = \nabla \log p_{X_t}(x) = -\frac{1}{1 - \overline{\alpha}_t}\int_{x_0} p_{X_0 \mymid X_{t}}(x_0 \mymid x)\big(x - \sqrt{\overline{\alpha}_{t}}x_0\big) \mathrm{d} x_0.
% \end{align}

\paragraph{Interpretation via second-order ODE.} 
In order to help elucidate the rationale of the proposed sampler, 
we make note of an intimate connection between \eqref{eqn:ode-extragradient} and high-order ODE, 
the latter of which has facilitated the design of fast deterministic samplers (e.g., DPM-Solver \citep{lu2022dpm}).

In view of the relation \eqref{eqn:Xt-X0}, for any $0<\gamma<1$,  let us first  abuse the notation and introduce 
\begin{subequations}
	\begin{align}
		&X(\gamma) \overset{\mathrm{d}}{=} \sqrt{\gamma}X_{0}+\sqrt{1-\gamma}Z, \quad Z\sim \mathcal{N}(0,I_d)  \\
		%\quad \text{ and } \quad
		&s^{\star}_{\gamma}(X) \coloneqq \nabla_X \log p_{X(\gamma)}(X). 
	\end{align}
	We further consider the following continuous-time analog $\overline{\alpha}(t)$ of the discrete learning rate $\overline{\alpha}_t$ (cf. \eqref{eq:defn-alpha-bar-t}):
	\begin{align}
		\frac{\mathrm{d}\overline{\alpha}(t)}{\mathrm{d}t}=-\beta(t) \overline{\alpha}(t),\quad \overline{\alpha}(T)=\overline{\alpha}_T. \label{eq: conti-alpha}
	\end{align}
% \begin{align}
% 	&X(\overline{\alpha}) \overset{\mathrm{d}}{=} \sqrt{\overline{\alpha}}X_{0}+\sqrt{1-\overline{\alpha}}Z, \quad Z\sim \mathcal{N}(0,I_d)  \\
% 	%\quad \text{ and } \quad
% 	&s^{\star}_{\overline{\alpha}}(X) \coloneqq \nabla_X \log p_{X(\overline{\alpha})}(X). 
% \end{align}
\end{subequations}
Given that the probability flow ODE \eqref{eq:diffusion-ODE} yields identical marginal distributions as the forward process $X_t$ (cf.~\eqref{eq:forward-diffusion}) for every $t$, invoking \eqref{eq: conti-alpha}, 
we can easily see that $X(\overline{\alpha}(t)) \overset{\mathrm{d}}{=} X_t$ can be generated as follows: 
\begin{align}
\label{eq:ODE-alpha}
	\frac{\mathrm{d}X\big(\overline{\alpha}(t)\big)}{\mathrm{d}\overline{\alpha}(t)}
	=\frac{1}{2\overline{\alpha}(t)}\bigg(X\big(\overline{\alpha}(t)\big)+s^{\star}_{\overline{\alpha}(t)}\Big(X\big(\overline{\alpha}(t)\big)\Big)\bigg),
	 \quad X\big(\overline{\alpha}(T)\big) \sim q_T,
\end{align}
%
%where we have invoked the property that $\frac{\mathrm{d}\overline{\alpha}}{\mathrm{d}t}=-\beta(t) \overline{\alpha}$. 
%By taking $f\big(\overline{\alpha}(t)\big)=\frac{1}{\sqrt{\overline{\alpha}(t)}}X\big(\overline{\alpha}(t)\big)$, we can apply \eqref{eq:ODE-alpha} to derive
%
By taking $f\big(\gamma\big)=\frac{1}{\sqrt{\gamma}}X\big(\gamma\big)$, we can apply \eqref{eq:ODE-alpha} to derive
\[
\frac{\mathrm{d}f\big(\gamma\big)}{\mathrm{d}\gamma}=-\frac{1}{2\sqrt{\gamma^{3}}}X\big(\gamma\big)+\frac{1}{\sqrt{\gamma}}\frac{\mathrm{d}X\big(\gamma\big)}{\mathrm{d}\gamma}=\frac{1}{2\sqrt{\gamma^{3}}}s_{\gamma}^{\star}\Big(X\big(\gamma\big)\Big). 
\]
This taken together with $\overline{\alpha}_{t}=\overline{\alpha}_{t-1} \alpha_t$ (cf.~\eqref{eq:defn-alpha-bar-t}) immediately implies that
\begin{align}
	&\frac{1}{\sqrt{\overline{\alpha}_{t-1}}}X(\overline{\alpha}_{t-1})=\frac{1}{\sqrt{\overline{\alpha}_{t}}}X(\overline{\alpha}_{t})+\frac{1}{2}{\displaystyle \int}_{\overline{\alpha}_{t}}^{\overline{\alpha}_{t-1}}\frac{1}{\sqrt{\gamma^{3}}}s_{\gamma}^{\star}\big(X(\gamma)\big)\mathrm{d}\gamma,\notag\\
	\Longrightarrow\quad 
	&X(\overline{\alpha}_{t-1})=
\frac{1}{\sqrt{\alpha_{t}}}X(\overline{\alpha}_{t})+\frac{\sqrt{\overline{\alpha}_{t-1}}}{2}{\displaystyle \int}_{\overline{\alpha}_{t}}^{\overline{\alpha}_{t-1}}\frac{1}{\sqrt{\gamma^{3}}}s_{\gamma}^{\star}\big(X(\gamma)\big)\mathrm{d}\gamma. 
	\label{eq:X-alphat-int}
\end{align}

% \begin{align}
% 	\Longrightarrow\quad 
% 	&X(\overline{\alpha}_{t-1})=\\
% 	&\frac{1}{\sqrt{\alpha_{t}}}X(\overline{\alpha}_{t})+\frac{\sqrt{\overline{\alpha}_{t-1}}}{2}{\displaystyle \int}_{\overline{\alpha}_{t}}^{\overline{\alpha}_{t-1}}\frac{1}{\sqrt{\overline{\alpha}^{3}}}s_{\overline{\alpha}}^{\star}\big(X(\overline{\alpha})\big)\mathrm{d}\overline{\alpha}. 
% 	\label{eq:X-alphat-int}
% \end{align}
%
% \[
% \frac{1}{\sqrt{\overline{\alpha}_{t-1}}}X(\overline{\alpha}_{t-1})=\frac{1}{\sqrt{\overline{\alpha}_{t}}}X(\overline{\alpha}_{t})+\frac{1}{2}{\displaystyle \int}_{\overline{\alpha}_{t}}^{\overline{\alpha}_{t-1}}\frac{1}{2\sqrt{\overline{\alpha}^{3}}}s_{\overline{\alpha}}^{\star}\big(X(\overline{\alpha})\big)\mathrm{d}\overline{\alpha},
% \]
%
%which together with $\overline{\alpha}_{t}=\overline{\alpha}_{t-1} \alpha_t$ (cf.~\eqref{eq:defn-alpha-bar-t}) gives
%
% \begin{equation}
% 	\Longrightarrow\qquad 
% 	X(\overline{\alpha}_{t-1})=\frac{1}{\sqrt{\alpha_{t}}}X(\overline{\alpha}_{t})+\frac{\sqrt{\overline{\alpha}_{t-1}}}{2}{\displaystyle \int}_{\overline{\alpha}_{t}}^{\overline{\alpha}_{t-1}}\frac{1}{\sqrt{\overline{\alpha}^{3}}}s_{\overline{\alpha}}^{\star}\big(X(\overline{\alpha})\big)\mathrm{d}\overline{\alpha}. 
% 	\label{eq:X-alphat-int}
% \end{equation}
%
With this relation in mind, we are ready to discuss the following approximation in discrete time: 
%as well as the fact that $X(\overline{\alpha}_{t-1})\overset{\mathrm{d}}{=} X_{t-1}$, 
%
\begin{itemize}
		%[leftmargin=*, topsep=1pt, itemsep=1pt]
	\item {\em Scheme 1: } If we approximate $s_{\gamma}^{\star}\big(X(\gamma)\big)$ for $\gamma\in [\overline{\alpha}_{t},\overline{\alpha}_{t-1}]$  by 
		$s_{\gamma}^{\star}\big(X(\gamma)\big)\approx s_{\overline{\alpha}_t}^{\star}\big(X(\overline{\alpha}_t)\big) \approx s_{t}(X_{t})$, then we arrive at
		\begin{align*}
			X(\overline{\alpha}_{t-1})&\approx\frac{1}{\sqrt{\alpha_{t}}}X(\overline{\alpha}_{t})+\left(\frac{\sqrt{\overline{\alpha}_{t-1}}}{\sqrt{\overline{\alpha}_{t}}}-1\right)s_{t}(X_{t})\\
			&\approx\frac{1}{\sqrt{\alpha_{t}}}\left\{ X(\overline{\alpha}_{t})+\frac{1-\alpha_{t}}{2}s_{t}(X_{t})\right\} ,
		\end{align*}
% 		\[
% X(\overline{\alpha}_{t-1})\approx\frac{1}{\sqrt{\alpha_{t}}}X(\overline{\alpha}_{t})+\left(\frac{\sqrt{\overline{\alpha}_{t-1}}}{\sqrt{\overline{\alpha}_{t}}}-1\right)s_{t}^{\star}(X_{t})\approx\frac{1}{\sqrt{\alpha_{t}}}\left\{ X(\overline{\alpha}_{t})+\frac{1-\alpha_{t}}{2}s_{t}^{\star}(X_{t})\right\} 
% \]
		%
		where we use the facts that $\overline{\alpha}_{t}/\overline{\alpha}_{t-1}=\alpha_t$ and $\alpha_t\approx 1$. This coincides with the deterministic sampler \eqref{eq:original-DDIM}.

	\item {\em Scheme 2: } If we invoke a more refined approximation for $s_{\gamma}^{\star}\big(X(\gamma)\big)$ as
		\begin{align}
			s_{\gamma}^{\star}\big(X(\gamma)\big)&\approx s_{\overline{\alpha}_t}^{\star}\big(X(\overline{\alpha}_t)\big)+\frac{\mathrm{d}s_{\gamma}^{\star}\big(X(\gamma)\big)}{\mathrm{d}\gamma}\left(\gamma-\overline{\alpha}_{t}\right)\\&\approx s_{\overline{\alpha}_{t}}^{\star}(X(\overline{\alpha}_{t}))+\frac{\gamma-\overline{\alpha}_{t}}{\overline{\alpha}_{t}-\overline{\alpha}_{t+1}}\left(s_{\overline{\alpha}_{t}}^{\star}(X(\overline{\alpha}_{t}))-s_{\overline{\alpha}_{t+1}}^{\star}(X(\overline{\alpha}_{t+1}))\right) \nonumber \\
			& \approx s_{t}(X_{t})+\frac{\gamma-\overline{\alpha}_{t}}{\overline{\alpha}_{t}-\overline{\alpha}_{t+1}}\left(s_{t}\big(X_{t}\big)-s_{t+1}\big(X_{t+1}\big)\right),  \label{eq:fine_approx}
		\end{align}
% 		\[
% s_{\overline{\alpha}}^{\star}\big(X(\overline{\alpha})\big)\approx s_{t}^{\star}(X_{t})+\frac{\mathrm{d}s_{\overline{\alpha}}^{\star}\big(X(\overline{\alpha})\big)}{\mathrm{d}\overline{\alpha}}\left(\overline{\alpha}-\overline{\alpha}_{t}\right)\approx s_{t}^{\star}(X_{t})+\frac{\overline{\alpha}-\overline{\alpha}_{t}}{\overline{\alpha}_{t}-\overline{\alpha}_{t+1}}\left(s_{\overline{\alpha}_{t}}^{\star}\big(X_{t}\big)-s_{\overline{\alpha}_{t+1}}^{\star}\big(X_{t+1}\big)\right), 
% \]
		%
		then \eqref{eq:X-alphat-int} can be approximated by
		\begin{align}
&X(\overline{\alpha}_{t-1})\notag\\
 &~~ \approx\frac{1}{\sqrt{\alpha_{t}}}X(\overline{\alpha}_{t})+\frac{\sqrt{\overline{\alpha}_{t-1}}s_{t}\big(X_{t}\big)}{2}{\displaystyle \int}_{\overline{\alpha}_{t}}^{\overline{\alpha}_{t-1}}\frac{1}{\sqrt{\gamma^{3}}}\mathrm{d}\gamma+\frac{\sqrt{\overline{\alpha}_{t-1}}\Big(s_{t}\big(X_{t}\big)-s_{t+1}\big(X_{t+1}\big)\Big)}{2(\overline{\alpha}_{t}-\overline{\alpha}_{t+1})}{\displaystyle \int}_{\overline{\alpha}_{t}}^{\overline{\alpha}_{t-1}}\frac{\gamma-\overline{\alpha}_{t}}{\sqrt{\overline{\alpha}^{3}}}\mathrm{d}\gamma\notag\\
 &~~ \approx\frac{1}{\sqrt{\alpha_{t}}}\left\{ X(\overline{\alpha}_{t})+\frac{1-\alpha_{t}}{2}s_{t}(X_{t}) +\frac{\left(1-\alpha_{t}\right)^{2}}{4(1-\alpha_{t+1})}\Big(s_{t}\big(X_{t}\big)-\sqrt{\alpha_{t+1}}s_{t+1}\big(X_{t+1}\big)\Big)\right\} \label{eq:ode-imple} , 
\end{align}
		which resembles the proposed sampler \eqref{eqn:ode-extragradient}, and is computationally more appealing since it reuses the previous score function evaluation.

\end{itemize}
It is worth noting that similar approximation as in Scheme 2 has been invoked previously in \citet[Eqn~(3.6)]{lu2022dpm} to construct high-order ODE solvers 
(e.g., the DPM-Solver-2, with 2 indicating second-order ODEs). 
Consequently, the acceleration achieved by our sampler is achieved through ideas akin to the second-order ODE; 
in turn, our convergence guarantees shed light on the effectiveness of high-order ODE solvers like the popular DPM-Solver.

\subsection{Accelerated SDE-based sampler}

Next, we turn to stochastic samplers, and propose a new stochastic sampling procedure that enjoys improved convergence guarantees compared to the DDPM-type sampler \eqref{eq:original-DDPM}. 
To be precise, the proposed sampler begins by drawing $Y_T \sim \mathcal{N}(0, I_d)$ and adopts the following update rule:
\begin{subequations}
\label{eqn:sde-extragradient}
\begin{align} 
	%\qquad 
	Y_{t}^+ = \Phi_t(Y_t,Z_t),
	\quad  Y_{t-1} = \Psi_t(Y_t^+, Z_t^+) %\quad  t= T,\dots,1,
\end{align}
for $t= T,\dots,1$, where $Z_t, Z_t^+ \overset{\mathrm{i.i.d.}}{\sim} \mathcal{N}(0,I_d)$, and
\begin{align}
\Phi_t(x,z) &= x + \sqrt{\frac{1 - \alpha_t}{2}} z, \label{eqn:sde-extragradient-Phi} \\
\Psi_{t}(y,z) &=\frac{1}{\sqrt{\alpha_{t}}}\bigg(y + (1-\alpha_{t})s_t(y) + \sqrt{\frac{1 - \alpha_t}{2}} z\bigg).
	\label{eqn:sde-extragradient-Psi}
\end{align}
\end{subequations}
%Clearly, 
The key difference between the proposed sampler and the original DDPM-type sampler lies in the additional operation $\Phi_t(\cdot, \cdot)$. In this step, a random noise $Z_t$ is injected into the current sample $Y_t$ to obtain an intermediate point $Y_t^+$, which together with another random noise $Z_t^+$ is subsequently fed into $\Psi_t(\cdot,\cdot)$ --- a mapping identical to \eqref{eq:original-DDPM}.

\paragraph{Theoretical guarantees.} 
Let us present the convergence guarantees of the proposed stochastic sampler and their implications, 
followed by some interpretation of the design rationale of the algorithm. 
\begin{theorem}
    \label{thm:main-SDE}
    %
    %Suppose that \eqref{eq:assumption-data-bounded} holds true. 
	Suppose that Assumptions~\ref{assump:assumption-data-bounded} and \ref{assumption:score-estimate} hold. 
    Then the proposed stochastic sampler \eqref{eqn:sde-extragradient} with the learning rate schedule \eqref{eqn:alpha-t} 
     %\eqref{eqn:sde-sampling} 
    achieves
    \begin{align} \label{eq:ratio-SDE}
        \mathsf{TV}&\big(q_1, p_1\big) 
	    \le \sqrt{\frac{1}{2}\mathsf{KL}\big(q_1 \parallel p_1)} \leq C_1\frac{d^3\log^{4.5} T}{T} + C_1\sqrt{d}\varepsilon_{\score}\log^{1.5} T
        \end{align}
	for some universal constant $C_1>0$. 
    % \begin{align} \label{eq:ratio-SDE}
    % \mathsf{TV}\big(q_1, p_1\big) \le \sqrt{\frac{1}{2}\mathsf{KL}\big(q_1 \parallel p_1)} \le C_1\frac{d^2\log^3 T}{\sqrt{T}} + C_1\sqrt{d}\varepsilon_{\score}\log^2 T
    % \end{align}
    %
    % for some universal constants $C_1>0$, provided that $T\geq C_2 d^4 \log^6 T$ for some large enough constant $C_2>0$.
\end{theorem}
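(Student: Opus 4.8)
## Proof Strategy for Theorem~\ref{thm:main-SDE}

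The plan is to control the KL divergence $\mathsf{KL}(q_1 \parallel p_1)$ via a chain-rule decomposition over the reverse steps, and then bound each per-step contribution. First I would introduce, for each $t$, the ``true'' reverse kernel that would map $X_t \to X_{t-1}$ under the forward process, and compare it against the law of one step of the proposed sampler \eqref{eqn:sde-extragradient} applied to a sample from $q_t$. The composite map $\Psi_t(\Phi_t(\cdot, Z_t), Z_t^+)$ has a crucial structural feature: conditioned on the input $y$, the output is Gaussian with mean $\frac{1}{\sqrt{\alpha_t}}\big(y + (1-\alpha_t)s_t(y)\big)$ (after absorbing the $\Phi_t$ noise into the mean only through the nonlinear $s_t$ term — actually the noise enters $s_t(Y_t^+)$ nonlinearly, so a short lemma is needed to show $Y_t^+$ being a slightly noised version of $Y_t$ is what enables a second-order-accurate approximation of the conditional mean of $X_{t-1}$ given $X_t$). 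The injected extra noise of variance $\frac{1-\alpha_t}{2}$ in each of the two stages sums to the total variance $1-\alpha_t$ matching the DDPM noise scale, but the splitting is what buys the improved rate. I would make this precise by computing the conditional density of the sampler output and matching it, to second order in $\beta_t$, against $p_{X_{t-1}\mid X_t}$.

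Second, I would invoke the chain rule for KL divergence along the reverse Markov chains: writing $p_t$ for the law of $Y_t$ and $q_t$ for that of $X_t$, and using the data-processing / chain-rule identity, bound
\begin{equation}
\mathsf{KL}(q_1 \parallel p_1) \le \mathsf{KL}(q_T \parallel p_T) + \sum_{t=2}^{T} \mathbb{E}_{X_t \sim q_t}\Big[\mathsf{KL}\big(p_{X_{t-1}\mid X_t}(\cdot \mid X_t) \,\big\|\, \mathcal{K}_t(\cdot \mid X_t)\big)\Big],
\end{equation}
where $\mathcal{K}_t$ is the one-step kernel of the proposed sampler. The initialization term $\mathsf{KL}(q_T \parallel p_T)$ is handled by the standard fact that the forward process mixes to $\mathcal{N}(0,I_d)$ exponentially fast under the schedule \eqref{eqn:alpha-t}, contributing something negligible (dominated by the $d^3\log^{4.5}T/T$ term, or much smaller). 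For the per-step terms, since both conditionals are close to Gaussians with nearly identical covariance, I would use the Gaussian-vs-Gaussian KL formula (or a more robust comparison that tolerates the non-Gaussianity of $p_{X_{t-1}\mid X_t}$, e.g.\ splitting into a ``Gaussian part'' plus a remainder controlled via the boundedness Assumption~\ref{assump:assumption-data-bounded}), reducing everything to bounding the squared discrepancy between the true posterior mean $\mathbb{E}[X_{t-1}\mid X_t]$ and the sampler's mean $\frac{1}{\sqrt{\alpha_t}}(X_t + (1-\alpha_t)s_t(X_t))$.

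Third — and this is where the acceleration is won — I would perform a Taylor-type expansion of the true posterior mean. Using the Tweedie-type identity and the relation \eqref{eqn:Xt-X0}, $\mathbb{E}[X_{t-1}\mid X_t]$ differs from the naive DDPM mean by a term of order $\beta_t^2$ involving the time-derivative of the score (equivalently, $s_{t-1} - s_t$ or the Jacobian $J_{s_t}$ acting on the drift). The two-stage noise injection in $\Phi_t, \Psi_t$ is engineered precisely so that the effective mean of the sampler captures this $\beta_t^2$ correction in expectation over $Z_t$ — this is the analogue, on the stochastic side, of Scheme~2 in the deterministic analysis, and it is why the leftover per-step error drops from $O(\beta_t^{1.5})$ (DDPM) to $O(\beta_t^{2})$-ish, so that $\sum_t \beta_t^{?} \asymp \mathrm{polylog}(T)/T$ rather than $\mathrm{polylog}(T)/\sqrt{T}$. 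The score-error term $\sqrt{d}\,\varepsilon_{\score}\log^{1.5}T$ enters additively from replacing $s_t^\star$ by $s_t$ in each mean, aggregated via Assumption~\ref{assumption:score-estimate} and Cauchy–Schwarz over the $T$ steps.

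The main obstacle I anticipate is the third step: rigorously establishing the $O(\beta_t^2)$ per-step bound on the mean discrepancy \emph{without} smoothness assumptions on $\Pdata$ or on $q_t$. Unlike the deterministic case, here I cannot lean on Assumption~\ref{assumption:score-estimate-Jacobi}, so I must control the higher-order terms of the posterior mean expansion using only the boundedness of $X_0$ (Assumption~\ref{assump:assumption-data-bounded}) and the structure of Gaussian convolutions. Concretely, one must show that the relevant ``curvature'' of the posterior mean — quantities like $\mathbb{E}[\|X_0 - \mathbb{E}[X_0\mid X_t]\|^2 \mid X_t]$ and related conditional higher moments — is $O(\mathrm{polylog}(T))$ with overwhelming probability over $X_t \sim q_t$, and handle the low-probability bad region where $X_t$ is atypically large by the crude bound $R = T^{c_R}$ together with the Gaussian tail. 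Managing this ``good set vs.\ bad set'' argument step by step, and checking that the accumulated error over all $T$ steps telescopes to the claimed $d^3 \log^{4.5}T / T$ rate (tracking the $d$-dependence carefully through each conditional-moment bound), is the technical heart of the proof; the KL chain rule and Gaussian comparisons are comparatively routine.
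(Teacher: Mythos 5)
Your high-level skeleton (Pinsker, KL chain rule over the reverse steps, negligible $\mathsf{KL}(q_T\,\|\,p_T)$, per-step conditional comparisons on a good set with a crude bound on the bad set) matches the paper's Steps 1--4. The genuine gap is in where you locate the acceleration. You reduce the per-step KL to ``the squared discrepancy between the true posterior mean $\mathbb{E}[X_{t-1}\mid X_t]$ and the sampler's mean,'' treating the two conditionals as ``Gaussians with nearly identical covariance,'' and you claim the two-stage noise is engineered so that the sampler's \emph{mean} picks up a $\beta_t^2$ correction. That is not the mechanism, and the reduction would fail to beat the DDPM rate. To the accuracy needed, the true conditional $p_{X_{t-1}\mid X_t}$ is Gaussian with the \emph{plain} DDPM mean $\widehat{\mu}_t^{\star}(x_t)=\frac{1}{\sqrt{\alpha_t}}(x_t+(1-\alpha_t)s_t^{\star}(x_t))$ but with a \emph{deformed covariance} $\frac{1-\alpha_t}{\alpha_t}\big(I-\frac{1-\alpha_t}{2(1-\overline{\alpha}_t)}J_t(x_t)\big)^2$ (the paper's Lemmas on $p_{X_{t-1}\mid X_t}$ and the auxiliary process $H_t$); the $1/\sqrt{T}$ bottleneck of DDPM is precisely this covariance (Jacobian) mismatch, whose per-step KL contribution $\asymp d\big(\frac{1-\alpha_t}{1-\overline{\alpha}_t}\|J_t\|\big)^2$ sums to $\Theta(1/T)$ in KL no matter how accurately you match the means. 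The role of evaluating $s_t$ at the noised point $Y_t^+=Y_t+\sqrt{\tfrac{1-\alpha_t}{2}}Z_t$ is that, to first order, it injects the correlated term $\sqrt{\tfrac{1-\alpha_t}{2}}J_{s_t^{\star}}(Y_t)Z_t$, so that the \emph{conditional covariance} of $Y_{t-1}$ given $Y_t$ matches the deformed covariance above; the paper formalizes this by introducing $H_{t-1}=\widehat{\mu}_t^{\star}(H_t)+\sqrt{\tfrac{1-\alpha_t}{2\alpha_t}}\big(Z_t-\tfrac{1-\alpha_t}{1-\overline{\alpha}_t}J_t(H_t)Z_t+Z_t^+\big)$ and comparing $p_{X_{t-1}\mid X_t}$ with $p_{H_{t-1}\mid H_t}$, obtaining per-step KL $O\big((d^3\log^{4.5}T/T^{3/2})^2\big)$. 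Your plan never verifies any covariance matching, so as written it collapses back to the DDPM analysis.

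A second, related omission is the treatment of the score-estimation error without Assumption~\ref{assumption:score-estimate-Jacobi}. If you marginalize over $Z_t$ before comparing kernels, the discrepancy between the sampler and the idealized process involves $J_{s_t}-J_{s_t^{\star}}$, which is not controlled by Assumption~\ref{assumption:score-estimate}. The paper avoids this by comparing $p_{H_{t-1}\mid H_t}$ and $p_{Y_{t-1}\mid Y_t}$ \emph{conditionally on $Z_t$} (both are then isotropic Gaussians), reducing the error to $\frac{1}{1-\alpha_t}\|\mu_t-\mu_t^{\star}\|_2^2\lesssim \frac{\log T}{T}\,\varepsilon_{\score,t}\big(x_t+\sqrt{\tfrac{1-\alpha_t}{2}}z_t\big)^2+\frac{d^5\log^7T}{T^3}$, together with a change-of-measure step $p_{\Phi_t(X_t,Z_t)}\asymp p_{X_t}$ so that the score error at the noised point can be aggregated under $q_t$ (plus a cross-term bounded via Cauchy--Schwarz and the density-ratio estimate). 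Your proposal's ``score errors enter additively via Cauchy--Schwarz'' glosses over exactly this conditioning and change-of-measure argument, which is where the absence of a Jacobian-accuracy assumption is actually earned.
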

Theorem~\ref{thm:main-SDE} provides non-asymptotic characterizations for the data generation quality of the accelerated stochastic sampler. 
In comparison with the convergence theory for the DDPM-type sampler --- which has a convergence rate proportional to $1/\sqrt{T}$ \citep{chen2022sampling,chen2023improved,li2023towards,benton2023linear} --- 
Theorem~\ref{thm:main-SDE} asserts that the proposed accelerated sampler achieves a faster convergence rate proportional to $1/T$. 
In contrast to Theorem~\ref{thm:main} for the ODE-based sampler, 
the SDE-based sampler does not require continuity of the Jacobian matrix (i.e., Assumption~\ref{assumption:score-estimate-Jacobi}). 
As before, the total-variation distance between $X_1$ and $Y_1$ is proportional to the $\ell_2$ score estimation error when $T$ is sufficiently large, 
which covers a broad range of target data distributions with no requirement on the smoothness or log-concavity of the data distribution.

\paragraph{Interpretation via higher-order approximation.} 
Now we provide some insights into the motivation of the proposed sampler. We start with the characterizations of conditional density $p_{X_{t-1} \mid X_t}$. 
Denoting $\mu_t^{\star}\left(x_t\right):=\frac{1}{\sqrt{\alpha_t}}\left(x_t+\left(1-\alpha_t\right) s_t^{\star}\left(x_t\right)\right)$ and $J_t(x_t)=-(1-\overline{\alpha}_t){J_{s^{*}_t}(x_t)}$, we can approximate $p_{X_{t-1} \mid X_t}$ by
\begin{align}
	&p_{X_{t-1} \mymid X_t}(x_{t-1}\mymid x_t) \approx \exp\Big(-\frac{\alpha_t}{2(1-\alpha_t)}  \cdot\Big\|\Big(I - \frac{1-\alpha_t}{2(\alpha_t-\overline{\alpha}_{t})}J_{t}(x_t)\Big)^{-1}(x_{t-1} - \mu_t^{\star}(x_t))\Big\|^2\Big).\label{sde-approximation}
\end{align}
which is tighter than the one used in analysis of the original SDE-based sampler~\citep{li2023towards} by adopting a higher-order expansion. 
This in turn motivates us to consider the following sequence
\begin{align}
Y_{t-1} &= \frac{1}{\sqrt{\alpha_{t}}}\Big(\underbrace{Y_t +  \sqrt{\frac{1 - \alpha_t}{2}} Z_t}_{\Phi(Y_t,Z_t)} +\sqrt{\frac{1 - \alpha_t}{2}} Z_t^+ +\underbrace{(1-\alpha_{t})s_t^{\star}(Y_t)  - \frac{(1-\alpha_{t})^{3/2}}{\sqrt{2}(\alpha_t-\overline{\alpha}_{t})}J_t(Y_t)Z_t}_{\approx s_t^{*}\big(\Phi(Y_t,Z_t)\big)}\Big)\notag
\end{align}
with $Z_t, Z_t^{+}\stackrel{\mathrm{i.i.d.}}{\sim }\mathcal{N}(0,I_d)$, and $p_{Y_{t-1} \mid Y_t}(x_{t-1}\mid x_t)$ follows 
\begin{align*}
	\mathcal{N}\left(\mu_t^{\star}(x_t), \frac{1-\alpha_t}{\alpha_t}\left(I-\frac{1-\alpha_t}{2\left(1-\bar{\alpha}_t\right)} J_t\left(x_t\right)\right)\left(I-\frac{1-\alpha_t}{2\left(1-\bar{\alpha}_t\right)} J_t\left(x_t\right)\right)^{\top}\right)
\end{align*}
which aligns with \eqref{sde-approximation}. In addition, if we further utilize
$(1-\alpha_t)s_t^{\star}(Y_t) - \frac{(1-\alpha_{t})^{3/2}}{\sqrt{2}(\alpha_t-\overline{\alpha}_{t})}J_t(Y_t)Z_t$ as a first-order approximation of $s_t^{\star}\big(Y_t+\sqrt{\frac{1 - \alpha_t}{2}} Z_t \big)$, we can then arrive at the update rule of the proposed sampler in \eqref{eqn:sde-extragradient}.

\newcommand{\VanillaODE}[1]{\includegraphics[width=0.28\textwidth]{VisualResults/Bedroom/VanillaODE_1000steps/#1}}
\newcommand{\NewODE}[1]{\includegraphics[width=0.28\textwidth]{VisualResults/Bedroom/NewODE_1000stepps/#1}}

\newcommand{\VanillaODECeleba}[1]{\includegraphics[width=0.28\textwidth]{VisualResults/CelebaHQ/#1}}

\newcommand{\VanillaODEChurches}[1]{\includegraphics[width=0.28\textwidth]{VisualResults/Churches/VanillaODE_1000steps/#1}}
\newcommand{\NewODEChurches}[1]{\includegraphics[width=0.28\textwidth]{VisualResults/Churches/NewODE_1000/#1}}

\newcommand{\FIDSampler}[1]{%
  \includegraphics[width=.33\linewidth]{VisualResults/FID/#1}%
}

    \section{Experiments}
    In this section, we illustrate the performance of the proposed accelerated samplers, focusing on highlighting the  relative comparisons with respect to the original DDIM/DDPM ones using the same pre-trained score functions. As an initial step, we focus on reporting result for the deterministic samplers, leaving the stochastic samplers to future work. %Due to space limits, we only report the results for the deterministic (DDIM) setting, leaving the stochastic setting to the appendix.
    
    \subsection{Practical implementation} 

In practice, the pre-trained score functions are often available in the form of noise-prediction networks $\epsilon_t(\cdot)$, which are connected via the following relationship in view of \eqref{eq:defn-score-true}:
\begin{align}\label{eq:score_noise}
s_t^\star(X) : =  - \frac{1}{\sqrt{1-\overline{\alpha}_t}}\epsilon_t^{\star}(X),
\end{align}
and $\epsilon_t(\cdot)$ is the estimate of $\epsilon_t^{\star}(\cdot)$. To better align with the empirical practice, it is judicious that the integration in \eqref{eq:X-alphat-int} be approximated in terms of $\epsilon_t^{\star}(X)$, leading to an equivalent rewrite as
\begin{align*}
	X(\overline{\alpha}_{t-1})=
\frac{1}{\sqrt{\alpha_{t}}}X(\overline{\alpha}_{t})-\frac{\sqrt{\overline{\alpha}_{t-1}}}{2}{\displaystyle \int}_{\overline{\alpha}_{t}}^{\overline{\alpha}_{t-1}}\frac{1}{\sqrt{\gamma^{3}} \sqrt{1- \gamma }}\epsilon_{\gamma}^{\star}\big(X(\gamma)\big)\mathrm{d}\gamma. 
\end{align*}
Following similar discussions in Section~\ref{sec:accelerated_ODE}, we discuss its first-order and second-order approximations in discrete time. 
\begin{itemize}
		%[leftmargin=*, topsep=1pt, itemsep=1pt]
	\item {\em Scheme 1: } If we approximate $\epsilon_{\gamma}^{\star}\big(X(\gamma)\big)$ for $\gamma\in [\overline{\alpha}_{t},\overline{\alpha}_{t-1}]$  by $\epsilon_{\gamma}^{\star}\big(X(\gamma)\big)\approx \epsilon_{\overline{\alpha}_t}^{\star}\big(X(\overline{\alpha}_t)\big) \approx \epsilon_{t}(X_{t})$, then we arrive at
		% \begin{align*}
		% 	%X(\overline{\alpha}_{t-1})&\approx\frac{1}{\sqrt{\alpha_{t}}}X(\overline{\alpha}_{t})+\left(\sqrt{1-\alpha_{t-1}}-\sqrt{1-\alpha_{t}}\right) \\
		% 	%\left(\frac{\sqrt{\overline{\alpha}_{t-1}}}{\sqrt{\overline{\alpha}_{t}}}-1\right)s_{t}^{\star}(X_{t})\\
		\begin{align} \label{eq:ddim_real}
		X(\overline{\alpha}_{t-1}) & \approx \frac{1}{\sqrt{\alpha_{t}}}X(\overline{\alpha}_{t})  + \left(\sqrt{1-\overline{\alpha}_{t-1}}-\frac{\sqrt{1-\overline{\alpha}_{t}}}{\sqrt{\alpha}_{t}}\right)\epsilon_{t}(X_{t}),
		\end{align}
		which matches exactly with the DDIM sampler in \citet{song2020denoising}.
		\item {\em Scheme 2: } If we invoke the refined approximation \eqref{eq:fine_approx} in terms of $\epsilon_{\gamma}^{\star}\big(X(\gamma)\big)$, we have
		\begin{align}
X(\overline{\alpha}_{t-1})
	& \approx\frac{1}{\sqrt{\alpha_{t}}}X(\overline{\alpha}_{t})-\frac{\sqrt{\overline{\alpha}_{t-1}}\epsilon_{t}\big(X_{t}\big)}{2}{\displaystyle \int}_{\overline{\alpha}_{t}}^{\overline{\alpha}_{t-1}}\frac{1}{\sqrt{\gamma^{3}(1-\gamma)}}\mathrm{d}\gamma \nonumber \\ 
	& \qquad -
	\frac{\sqrt{\overline{\alpha}_{t-1}}\left(\epsilon_{t}\big(X_{t}\big)-\epsilon_{t+1}\big(X_{t+1}\big)\right)}{2(\overline{\alpha}_{t}-\overline{\alpha}_{t+1})}{\displaystyle \int}_{\overline{\alpha}_{t}}^{\overline{\alpha}_{t-1}}\frac{(\gamma-\overline{\alpha}_{t})}{\sqrt{\gamma^{3}(1-\gamma)}}\mathrm{d}\gamma \nonumber ,
			\end{align}
which after integration becomes:
		\begin{align}
	& X(\overline{\alpha}_{t-1}) \approx \frac{1}{\sqrt{\alpha_{t}}}X(\overline{\alpha}_{t})  + \left(\sqrt{1-\overline{\alpha}_{t-1}}-\frac{\sqrt{1-\overline{\alpha}_{t}}}{\sqrt{\alpha}_{t}}\right)\epsilon_{t}(X_{t}) \notag\\
			& + \left(\frac{\sqrt{\overline{\alpha}_{t-1}}}{\overline{\alpha}_{t} - \overline{\alpha}_{t+1}}\right)\left(\overline{\alpha}_{t}\frac{\sqrt{1-\overline{\alpha}_{t-1}}}{\sqrt{\overline{\alpha}_{t-1}}} + \arcsin\sqrt{\overline{\alpha}_{t-1}}  - \overline{\alpha}_{t}\frac{\sqrt{1-\overline{\alpha}_{t}}}{\sqrt{\overline{\alpha}_{t}}} - \arcsin\sqrt{\overline{\alpha}_{t}}\right)(\epsilon_{t+1}(X_{t+1})-\epsilon_{t}(X_{t})). \label{eq:ode-imple-real}
		\end{align}
This is our new sampler for implementation.

		% Also below is sampling scheme using the second derivative of the score function:

		% \begin{align}
		% 	&X(\overline{\alpha}_{t-1}) \approx \frac{1}{\sqrt{\alpha_{t}}}X(\overline{\alpha}_{t}) + \notag\\
		% 	&\quad + \left(-\sqrt{1-\overline{\alpha}_{t}}\frac{\sqrt{1-\overline{\alpha}_{t-1}}}{\sqrt{\overline{\alpha}_{t}}}+\sqrt{1-\overline{\alpha}_{t-1}}\right)\epsilon_{t}^\star(X_{t}) + \notag\\
		% 	&\quad + \left(\frac{\sqrt{\overline{\alpha}_{t-1}}}{\overline{\alpha}_{t} - \overline{\alpha}_{t+1}}\right)\left(\overline{\alpha}_{t}\frac{\sqrt{1-\overline{\alpha}_{t-1}}}{\sqrt{\overline{\alpha}_{t-1}}} + \arcsin\sqrt{\overline{\alpha}_{t-1}} \right. \notag\\
		% 	&\quad \left. - \overline{\alpha}_{t}\frac{\sqrt{1-\overline{\alpha}_{t}}}{\sqrt{\overline{\alpha}_{t}}} - \arcsin\sqrt{\overline{\alpha}_{t}}\right)(\epsilon_{t}^\star(X_{t})-\epsilon_{t+1}^\star(X_{t+1})) + \notag\\
		% 	&\quad + \frac{\epsilon_{t}^\star(X_{t})-2\epsilon_{t}^\star(X_{t+1})+\epsilon_{t}^\star(X_{t+2})}{2(\overline{\alpha}_{t}-\overline{\alpha}_{t+1})^2}\sqrt{\overline{\alpha}_{t-1}} *\notag\\
		% 	&\quad *\left( (1-4\overline{\alpha}_{t})\arcsin\sqrt{\overline{\alpha}_{t-1}} - \frac{\sqrt{1-\overline{\alpha}_{t-1}}(\overline{\alpha}_{t-1}+2\overline{\alpha}_{t}^2)}{\sqrt{\overline{\alpha}_{t-1}}} \right. \notag\\
		% 	&\quad \left.-(1-4\overline{\alpha}_{t})\arcsin\sqrt{\overline{\alpha}_{t}}+ \frac{\sqrt{1-\overline{\alpha}_{t}}(\overline{\alpha}_{t}+2\overline{\alpha}_{t}^2)}{\sqrt{\overline{\alpha}_{t}}} \right) \notag
		% \end{align}

		\end{itemize}

    \subsection{Experimental results}
    We use pre-trained score functions from Huggingface \citep{von-platen-etal-2022-diffusers} for three datasets: CelebA-HQ, LSUN-Bedroom and LSUN-Churches. The same score functions are used in all the samplers. Note that we have not attempted to optimize the speed nor the performance using additional tricks, e.g., employing better score functions, but aim to corroborate our theoretical findings regarding the acceleration of the new samplers without training additional functions when the implementations are otherwise kept the same. 
    
    %In this section, we are going to cover the experimental details of the acceleration for DDPM-type ODE and SDE samplers. The acceleration is training-free and does not require 
    %the computation of additional terms other than the score function. To compute the score function we chose the DDPM-based model trained by Google and diffusers libary: https://huggingface.co/google, 
    %with pre-trained Cifar10-32, Celeba-EMA, LSUN Churches-EMA, LSUN Bedroom-EMA models. The models' architecture and hyperparameters for training were identical to the original DDPM(cite DDPM paper). 

    We first compare the vanilla DDIM-type sampler (cf.~\eqref{eq:ddim_real}) and the accelerated DDIM-type sampler (cf.~\eqref{eq:ode-imple-real}%{eqn:ode-extragradient}
    ). To begin, Figure~\ref{fig:progCeleba} illustrates the progress of the generated samples over different numbers of function evaluation (NFEs) (between 4 and 50) from the same random seed, using pre-trained scores from the LSUN-Churches dataset. Here, the NFE is the same as the number of diffusion steps since each step takes one score evaluation. 
    
 \begin{figure*}[ht]
    \begin{center}
    \includegraphics[width=1.00\textwidth]{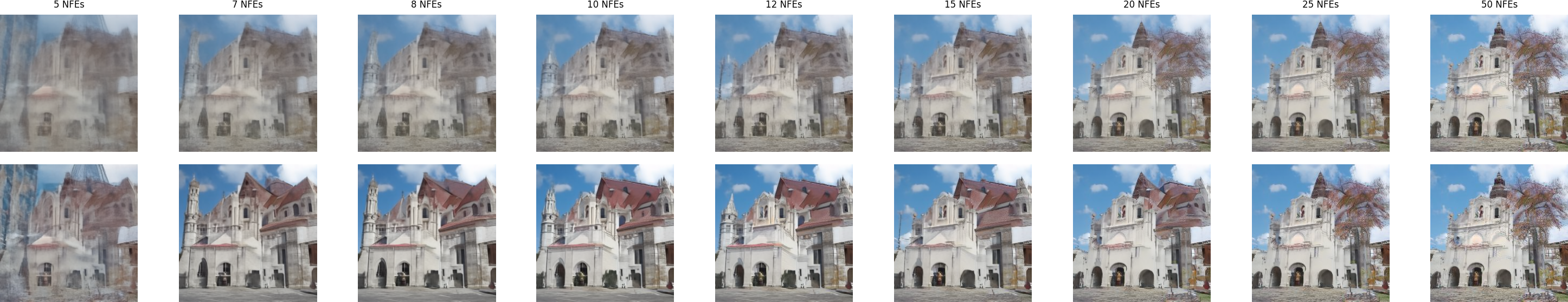} 
    \end{center}
    \caption{The progress of the generated samples over different numbers of NFEs (from 4 to 50), using pre-trained scores from the LSUN-Churches dataset. Top row: the vanilla DDIM-type sampler. Bottom row: the accelerated DDIM-type sampler (ours).}
    \label{fig:progCeleba}
\end{figure*}

To further demonstrate the quality of the sampled images, Figure~\ref{fig:ode_samplers} provides examples of sampled images from the DDIM-type samplers, using pre-trained scores from CelebA-HQ, LSUN-Bedroom and LSUN-Churches datasets, respectively. It can be seen that the sampled images are crisper and less noisy from the accelerated DDIM-type sampler, compared with from the original one, indicating the effectiveness of our method.  
\begin{figure*}[ht]
  \begin{center}
    \begin{minipage}{0.3\textwidth}
      \centering
      \includegraphics[width=\textwidth]{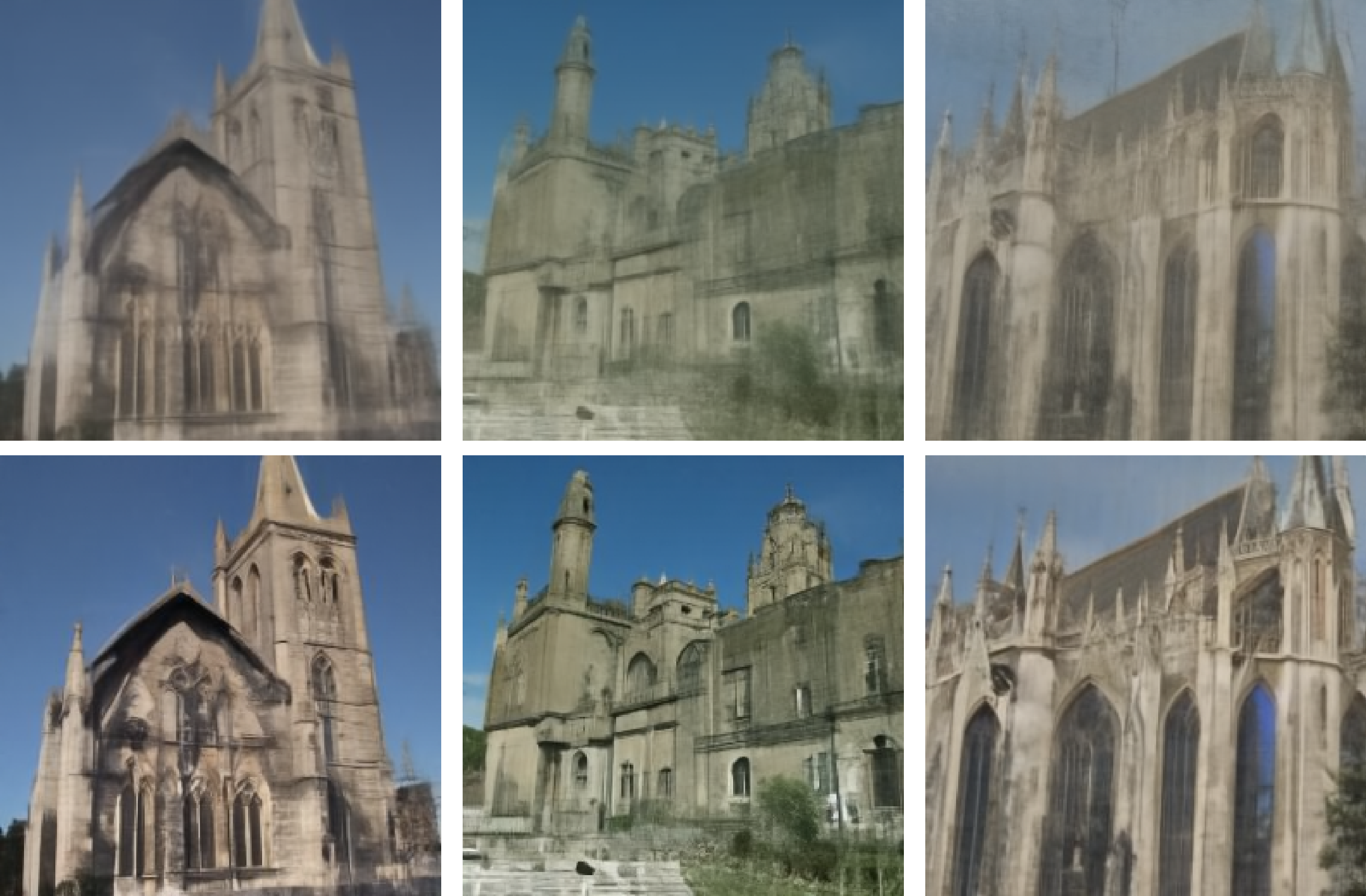} \\
  {\small  (a)  LSUN-Churches }
      \label{fig:compChurches}
    \end{minipage} \hspace{0.1in}
    \begin{minipage}{0.3\textwidth}
      \centering
      \includegraphics[width=\textwidth]{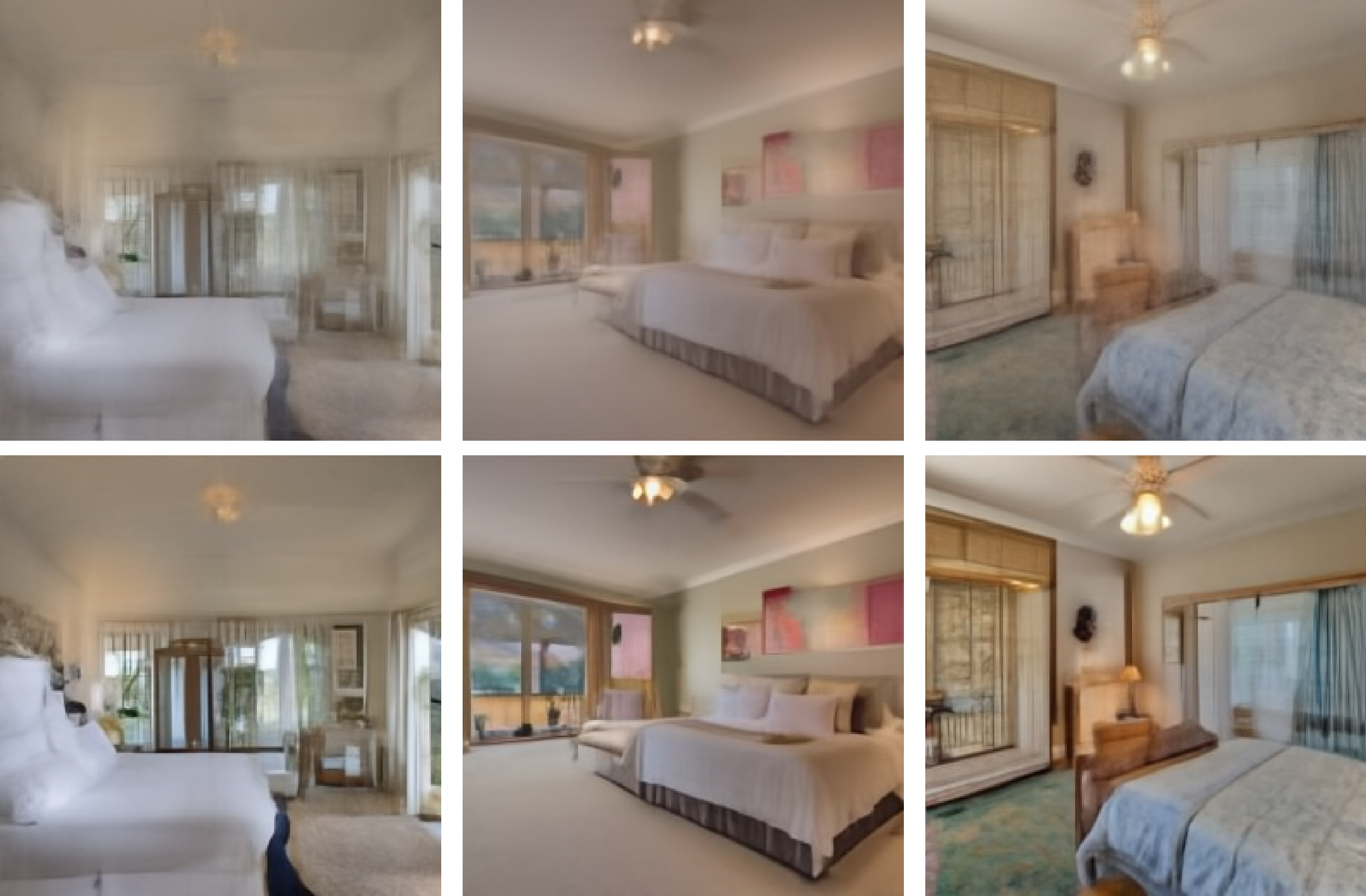}
    {\small  (b) LSUN-Bedroom }
      \label{fig:compBedroom}
    \end{minipage} \hspace{0.1in}
   % This breaks the line, putting the next figure on a new line
    %\vspace{1cm} % Adds vertical space before the third figure
    \begin{minipage}{0.3\textwidth}
      \centering
      \includegraphics[width=\textwidth]{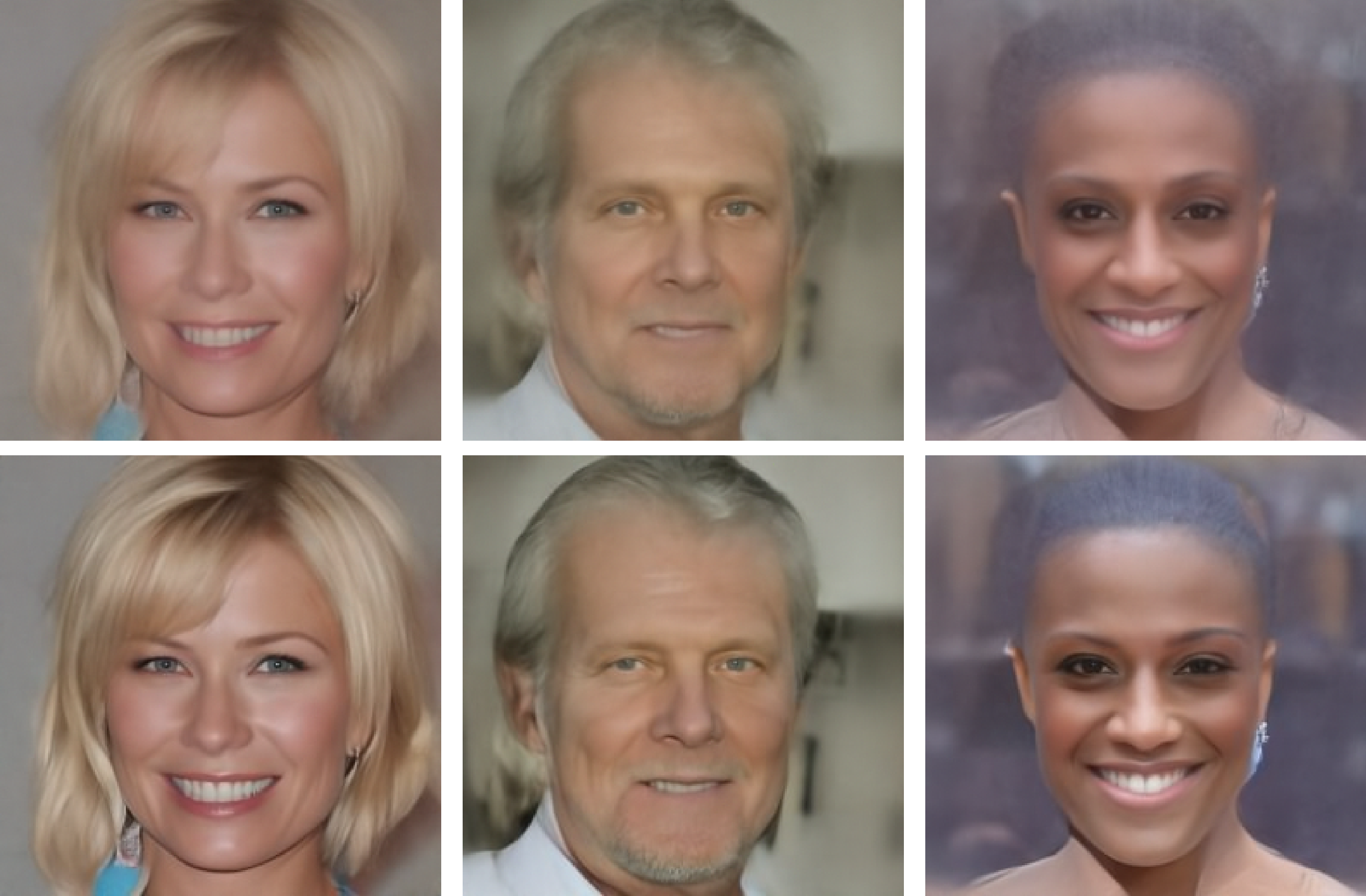} % Change this to your actual file path
    {\small  (c) CelebA-HQ }
      \label{fig:compCeleba} % Change the label for your third figure
    \end{minipage}
  \end{center}
  \caption{Examples of sampled images from the DDIM-type samplers with 5 NFEs, using pre-trained scores from the LSUN-Churches, LSUN-Bedroom, and CelebA-HQ datasets. For each dataset, the top image is the original DDIM-type sampler, and the bottom image is the accelerated DDIM-type sampler (ours). }  \label{fig:ode_samplers}% Add your caption for the third image here
\end{figure*}

\section{Discussion}
\label{sec:discussion}

In this paper, we have developed novel strategies to achieve provable acceleration in score-based generative modeling. 
The proposed deterministic sampler achieves a convergence rate $1/T^2$ that substantially improves upon prior theory for the probability flow ODE approach, 
whereas the proposed stochastic sampler enjoys a converge rate $1/T$ that also significantly outperforms the convergence theory for the DDPM-type sampler.  
We have demonstrated the stability of these samplers, 
establishing non-asymptotic theoretical guarantees that hold in the presence of $\ell_2$-accurate score estimates. 
Our algorithm development for the deterministic case draws inspiration from higher-order ODE approximations in discrete time, 
which might shed light on understanding popular ODE-based samplers like the DPM-Solver. 
In comparison, the accelerated stochastic sampler is designed based on higher-order expansions of the conditional density.

Our findings further suggest multiple directions that are worthy of future exploration. 
For instance, 
our convergence theory remains sub-optimal in terms of the dependency on the problem dimension $d$, 
which calls for a more refined theory to sharpen dimension dependency. 
Additionally, given the conceptual similarity between our accelerated deterministic sampler and second-order ODE, 
it would be interesting to extend the algorithm and theory using ideas arising from third-order or even higher-order ODE. 
In particular, third-order ODE has been implemented in DPM-Solver-3, which is among the most effective DPM-Solvers in practice. 
Finally, it would be important to design higher-order solvers for SDE-based samplers, in order to unveil the degree of acceleration that can be achieved through high-order SDE.

% Extend the current analysis for second order DPM-Solver to general DPM-Solver, particular for the third order case.

% Design and analyze higher order solver for SDE-based sampler.

% Extend the experiments to other score-based models and test the improvement of the acceleration upon the score function improvement. 

% \yxc{add future directions here}

% Extend the experiments to other score-based models and test the improvement of the acceleration upon the score function impovement. 

\section*{Acknowledgements}

Y.~Wei is supported in part by the NSF grants DMS-2147546/2015447, CAREER award DMS-2143215, CCF-2106778, and the Google Research Scholar Award. The work of T. Efimov and Y.~Chi  is supported in part by the grants ONR N00014-19-1-2404, NSF DMS-2134080, ECCS-2126634 and FHWA 693JJ321C000013. Y.~Chen is supported in part by the Alfred P.~Sloan Research Fellowship, the Google Research Scholar Award, the AFOSR grant FA9550-22-1-0198, 
the ONR grant N00014-22-1-2354,  and the NSF grants CCF-2221009 and CCF-1907661.

\bibliographystyle{apalike}
\bibliography{reference-diffusion}

\begin{thebibliography}{}

\bibitem[Anderson, 1982]{anderson1982reverse}
Anderson, B.~D. (1982).
\newblock Reverse-time diffusion equation models.
\newblock {\em Stochastic Processes and their Applications}, 12(3):313--326.

\bibitem[Benton et~al., 2023a]{benton2023linear}
Benton, J., De~Bortoli, V., Doucet, A., and Deligiannidis, G. (2023a).
\newblock Linear convergence bounds for diffusion models via stochastic
  localization.
\newblock {\em arXiv preprint arXiv:2308.03686}.

\bibitem[Benton et~al., 2023b]{benton2023error}
Benton, J., Deligiannidis, G., and Doucet, A. (2023b).
\newblock Error bounds for flow matching methods.
\newblock {\em arXiv preprint arXiv:2305.16860}.

\bibitem[Block et~al., 2020]{block2020generative}
Block, A., Mroueh, Y., and Rakhlin, A. (2020).
\newblock Generative modeling with denoising auto-encoders and {L}angevin
  sampling.
\newblock {\em arXiv preprint arXiv:2002.00107}.

\bibitem[Chen et~al., 2023a]{chen2023improved}
Chen, H., Lee, H., and Lu, J. (2023a).
\newblock Improved analysis of score-based generative modeling: User-friendly
  bounds under minimal smoothness assumptions.
\newblock In {\em International Conference on Machine Learning}, pages
  4735--4763.

\bibitem[Chen and Ying, 2024]{chen2024convergence}
Chen, H. and Ying, L. (2024).
\newblock Convergence analysis of discrete diffusion model: Exact
  implementation through uniformization.
\newblock {\em arXiv preprint arXiv:2402.08095}.

\bibitem[Chen et~al., 2023b]{chen2023probability}
Chen, S., Chewi, S., Lee, H., Li, Y., Lu, J., and Salim, A. (2023b).
\newblock The probability flow {ODE} is provably fast.
\newblock {\em Neural Information Processing Systems}.

\bibitem[Chen et~al., 2022]{chen2022sampling}
Chen, S., Chewi, S., Li, J., Li, Y., Salim, A., and Zhang, A.~R. (2022).
\newblock Sampling is as easy as learning the score: theory for diffusion
  models with minimal data assumptions.
\newblock {\em arXiv preprint arXiv:2209.11215}.

\bibitem[Chen et~al., 2023c]{chen2023restoration}
Chen, S., Daras, G., and Dimakis, A. (2023c).
\newblock Restoration-degradation beyond linear diffusions: A non-asymptotic
  analysis for {DDIM}-type samplers.
\newblock In {\em International Conference on Machine Learning}, pages
  4462--4484.

\bibitem[Croitoru et~al., 2023]{croitoru2023diffusion}
Croitoru, F.-A., Hondru, V., Ionescu, R.~T., and Shah, M. (2023).
\newblock Diffusion models in vision: A survey.
\newblock {\em IEEE Transactions on Pattern Analysis and Machine Intelligence}.

\bibitem[De~Bortoli, 2022]{de2022convergence}
De~Bortoli, V. (2022).
\newblock Convergence of denoising diffusion models under the manifold
  hypothesis.
\newblock {\em arXiv preprint arXiv:2208.05314}.

\bibitem[De~Bortoli et~al., 2021]{de2021diffusion}
De~Bortoli, V., Thornton, J., Heng, J., and Doucet, A. (2021).
\newblock Diffusion {S}chr{\"o}dinger bridge with applications to score-based
  generative modeling.
\newblock {\em Advances in Neural Information Processing Systems},
  34:17695--17709.

\bibitem[Dhariwal and Nichol, 2021]{dhariwal2021diffusion}
Dhariwal, P. and Nichol, A. (2021).
\newblock Diffusion models beat {GAN}s on image synthesis.
\newblock {\em Advances in neural information processing systems},
  34:8780--8794.

\bibitem[Gao et~al., 2023]{gao2023wasserstein}
Gao, X., Nguyen, H.~M., and Zhu, L. (2023).
\newblock Wasserstein convergence guarantees for a general class of score-based
  generative models.
\newblock {\em arXiv preprint arXiv:2311.11003}.

\bibitem[Gao and Zhu, 2024]{gao2024convergence}
Gao, X. and Zhu, L. (2024).
\newblock Convergence analysis for general probability flow {ODE}s of diffusion
  models in {W}asserstein distances.
\newblock {\em arXiv preprint arXiv:2401.17958}.

\bibitem[Goodfellow et~al., 2020]{goodfellow2020generative}
Goodfellow, I., Pouget-Abadie, J., Mirza, M., Xu, B., Warde-Farley, D., Ozair,
  S., Courville, A., and Bengio, Y. (2020).
\newblock Generative adversarial networks.
\newblock {\em Communications of the ACM}, 63(11):139--144.

\bibitem[Guo et~al., 2023]{guo2023diffusion}
Guo, Z., Liu, J., Wang, Y., Chen, M., Wang, D., Xu, D., and Cheng, J. (2023).
\newblock Diffusion models in bioinformatics and computational biology.
\newblock {\em Nature Reviews Bioengineering}, pages 1--19.

\bibitem[Haussmann and Pardoux, 1986]{haussmann1986time}
Haussmann, U.~G. and Pardoux, E. (1986).
\newblock Time reversal of diffusions.
\newblock {\em The Annals of Probability}, pages 1188--1205.

\bibitem[Ho et~al., 2020]{ho2020denoising}
Ho, J., Jain, A., and Abbeel, P. (2020).
\newblock Denoising diffusion probabilistic models.
\newblock {\em Advances in Neural Information Processing Systems},
  33:6840--6851.

\bibitem[Hyv{\"a}rinen, 2005]{hyvarinen2005estimation}
Hyv{\"a}rinen, A. (2005).
\newblock Estimation of non-normalized statistical models by score matching.
\newblock {\em Journal of Machine Learning Research}, 6(4).

\bibitem[Hyv{\"a}rinen, 2007]{hyvarinen2007some}
Hyv{\"a}rinen, A. (2007).
\newblock Some extensions of score matching.
\newblock {\em Computational statistics \& data analysis}, 51(5):2499--2512.

\bibitem[Kazerouni et~al., 2023]{kazerouni2023diffusion}
Kazerouni, A., Aghdam, E.~K., Heidari, M., Azad, R., Fayyaz, M., Hacihaliloglu,
  I., and Merhof, D. (2023).
\newblock Diffusion models in medical imaging: A comprehensive survey.
\newblock {\em Medical Image Analysis}, page 102846.

\bibitem[Kingma and Welling, 2014]{kingma2013auto}
Kingma, D.~P. and Welling, M. (2014).
\newblock Auto-encoding variational {B}ayes.
\newblock {\em International Conference on Learning Representations}.

\bibitem[Lee et~al., 2022]{lee2022convergence}
Lee, H., Lu, J., and Tan, Y. (2022).
\newblock Convergence for score-based generative modeling with polynomial
  complexity.
\newblock In {\em Advances in Neural Information Processing Systems}.

\bibitem[Lee et~al., 2023]{lee2023convergence}
Lee, H., Lu, J., and Tan, Y. (2023).
\newblock Convergence of score-based generative modeling for general data
  distributions.
\newblock In {\em International Conference on Algorithmic Learning Theory},
  pages 946--985.

\bibitem[Li et~al., 2024a]{li2024consistency}
Li, G., Huang, Z., and Wei, Y. (2024a).
\newblock Towards a mathematical theory for consistency training in diffusion
  models.
\newblock {\em arXiv preprint arXiv:2402.07802}.

\bibitem[Li et~al., 2023]{li2023towards}
Li, G., Wei, Y., Chen, Y., and Chi, Y. (2023).
\newblock Towards faster non-asymptotic convergence for diffusion-based
  generative models.
\newblock {\em arXiv preprint arXiv:2306.09251}.

\bibitem[Li et~al., 2024b]{li2024good}
Li, S., Chen, S., and Li, Q. (2024b).
\newblock A good score does not lead to a good generative model.
\newblock {\em arXiv preprint arXiv:2401.04856}.

\bibitem[Liang et~al., 2024]{liang2024non}
Liang, Y., Ju, P., Liang, Y., and Shroff, N. (2024).
\newblock Non-asymptotic convergence of discrete-time diffusion models: New
  approach and improved rate.
\newblock {\em arXiv preprint arXiv:2402.13901}.

\bibitem[Liu et~al., 2022a]{liu2022pseudo}
Liu, L., Ren, Y., Lin, Z., and Zhao, Z. (2022a).
\newblock Pseudo numerical methods for diffusion models on manifolds.
\newblock {\em arXiv preprint arXiv:2202.09778}.

\bibitem[Liu et~al., 2022b]{liu2022let}
Liu, X., Wu, L., Ye, M., and Liu, Q. (2022b).
\newblock Let us build bridges: Understanding and extending diffusion
  generative models.
\newblock {\em arXiv preprint arXiv:2208.14699}.

\bibitem[Lu et~al., 2022a]{lu2022dpm}
Lu, C., Zhou, Y., Bao, F., Chen, J., Li, C., and Zhu, J. (2022a).
\newblock {DPM-Solver}: A fast {ODE} solver for diffusion probabilistic model
  sampling in around 10 steps.
\newblock {\em Advances in Neural Information Processing Systems},
  35:5775--5787.

\bibitem[Lu et~al., 2022b]{lu2022dpmv2}
Lu, C., Zhou, Y., Bao, F., Chen, J., Li, C., and Zhu, J. (2022b).
\newblock {DPM-Solver++}: Fast solver for guided sampling of diffusion
  probabilistic models.
\newblock {\em arXiv preprint arXiv:2211.01095}.

\bibitem[Luhman and Luhman, 2021]{luhman2021knowledge}
Luhman, E. and Luhman, T. (2021).
\newblock Knowledge distillation in iterative generative models for improved
  sampling speed.
\newblock {\em arXiv preprint arXiv:2101.02388}.

\bibitem[Meng et~al., 2023]{meng2023distillation}
Meng, C., Rombach, R., Gao, R., Kingma, D., Ermon, S., Ho, J., and Salimans, T.
  (2023).
\newblock On distillation of guided diffusion models.
\newblock In {\em IEEE/CVF Conference on Computer Vision and Pattern
  Recognition}, pages 14297--14306.

\bibitem[Pang et~al., 2020]{pang2020efficient}
Pang, T., Xu, K., Li, C., Song, Y., Ermon, S., and Zhu, J. (2020).
\newblock Efficient learning of generative models via finite-difference score
  matching.
\newblock {\em Advances in Neural Information Processing Systems},
  33:19175--19188.

\bibitem[Pidstrigach, 2022]{pidstrigach2022score}
Pidstrigach, J. (2022).
\newblock Score-based generative models detect manifolds.
\newblock {\em arXiv preprint arXiv:2206.01018}.

\bibitem[Salimans and Ho, 2021]{salimans2021progressive}
Salimans, T. and Ho, J. (2021).
\newblock Progressive distillation for fast sampling of diffusion models.
\newblock In {\em International Conference on Learning Representations}.

\bibitem[Sohl-Dickstein et~al., 2015]{sohl2015deep}
Sohl-Dickstein, J., Weiss, E., Maheswaranathan, N., and Ganguli, S. (2015).
\newblock Deep unsupervised learning using nonequilibrium thermodynamics.
\newblock In {\em International Conference on Machine Learning}, pages
  2256--2265.

\bibitem[Song et~al., 2020]{song2020denoising}
Song, J., Meng, C., and Ermon, S. (2020).
\newblock Denoising diffusion implicit models.
\newblock {\em arXiv preprint arXiv:2010.02502}.

\bibitem[Song et~al., 2023]{song2023consistency}
Song, Y., Dhariwal, P., Chen, M., and Sutskever, I. (2023).
\newblock Consistency models.
\newblock In {\em International Conference on Machine Learning}.

\bibitem[Song and Ermon, 2019]{song2019generative}
Song, Y. and Ermon, S. (2019).
\newblock Generative modeling by estimating gradients of the data distribution.
\newblock {\em Advances in neural information processing systems}, 32.

\bibitem[Song et~al., 2021]{song2020score}
Song, Y., Sohl-Dickstein, J., Kingma, D.~P., Kumar, A., Ermon, S., and Poole,
  B. (2021).
\newblock Score-based generative modeling through stochastic differential
  equations.
\newblock {\em International Conference on Learning Representations}.

\bibitem[Tang and Zhao, 2024a]{tang2024contractive}
Tang, W. and Zhao, H. (2024a).
\newblock Contractive diffusion probabilistic models.
\newblock {\em arXiv preprint arXiv:2401.13115}.

\bibitem[Tang and Zhao, 2024b]{tang2024score}
Tang, W. and Zhao, H. (2024b).
\newblock Score-based diffusion models via stochastic differential equations--a
  technical tutorial.
\newblock {\em arXiv preprint arXiv:2402.07487}.

\bibitem[Vincent, 2011]{vincent2011connection}
Vincent, P. (2011).
\newblock A connection between score matching and denoising autoencoders.
\newblock {\em Neural computation}, 23(7):1661--1674.

\bibitem[von Platen et~al., 2022]{von-platen-etal-2022-diffusers}
von Platen, P., Patil, S., Lozhkov, A., Cuenca, P., Lambert, N., Rasul, K.,
  Davaadorj, M., and Wolf, T. (2022).
\newblock Diffusers: State-of-the-art diffusion models.
\newblock \url{https://github.com/huggingface/diffusers}.

\bibitem[Wang et~al., 2022]{wang2022diffusion}
Wang, Z., Zheng, H., He, P., Chen, W., and Zhou, M. (2022).
\newblock Diffusion-gan: Training gans with diffusion.
\newblock {\em arXiv preprint arXiv:2206.02262}.

\bibitem[Wibisono and Yang, 2022]{wibisono2022convergence}
Wibisono, A. and Yang, K.~Y. (2022).
\newblock Convergence in {KL} divergence of the inexact {L}angevin algorithm
  with application to score-based generative models.
\newblock {\em arXiv preprint arXiv:2211.01512}.

\bibitem[Wu et~al., 2024]{wu2024theoretical}
Wu, Y., Chen, M., Li, Z., Wang, M., and Wei, Y. (2024).
\newblock Theoretical insights for diffusion guidance: A case study for
  gaussian mixture models.
\newblock {\em arXiv preprint arXiv:arXiv:2403.01639}.

\bibitem[Xiao et~al., 2021]{xiao2021tackling}
Xiao, Z., Kreis, K., and Vahdat, A. (2021).
\newblock Tackling the generative learning trilemma with denoising diffusion
  gans.
\newblock {\em arXiv preprint arXiv:2112.07804}.

\bibitem[Xue et~al., 2023]{xue2023sa}
Xue, S., Yi, M., Luo, W., Zhang, S., Sun, J., Li, Z., and Ma, Z.-M. (2023).
\newblock {SA-Solver}: Stochastic {Adams} solver for fast sampling of diffusion
  models.
\newblock {\em arXiv preprint arXiv:2309.05019}.

\bibitem[Yang et~al., 2023]{yang2023diffusion}
Yang, L., Zhang, Z., Song, Y., Hong, S., Xu, R., Zhao, Y., Zhang, W., Cui, B.,
  and Yang, M.-H. (2023).
\newblock Diffusion models: A comprehensive survey of methods and applications.
\newblock {\em ACM Computing Surveys}, 56(4):1--39.

\bibitem[Zhang and Chen, 2022]{zhang2022fast}
Zhang, Q. and Chen, Y. (2022).
\newblock Fast sampling of diffusion models with exponential integrator.
\newblock {\em arXiv preprint arXiv:2204.13902}.

\bibitem[Zhang et~al., 2022]{zhang2022gddim}
Zhang, Q., Tao, M., and Chen, Y. (2022).
\newblock gddim: Generalized denoising diffusion implicit models.
\newblock {\em arXiv preprint arXiv:2206.05564}.

\bibitem[Zhao et~al., 2023]{zhao2023unipc}
Zhao, W., Bai, L., Rao, Y., Zhou, J., and Lu, J. (2023).
\newblock {UniPC}: A unified predictor-corrector framework for fast sampling of
  diffusion models.
\newblock {\em arXiv preprint arXiv:2302.04867}.

\end{thebibliography}

%\clearpage 
\appendix

%\bigskip
%\begin{center}
%{\Huge
%{\bf 
%\textsc{Appendix}
%}
%}
%\end{center}
%\bigskip
%\input{numerical_appendix.tex}

\section{Preliminaries}
\label{sec:preliminary-facts}
Before delving into the proof, we make note of a couple of 
preliminary facts, primarily from \citet{li2023towards}.

\subsection{Basic facts}
\paragraph{Score functions.} We first give some characterizations of the score function, 
which follow from \citet[properties (38)]{li2023towards}. 
\begin{lemma}
	The true score function $s_t^{\star}$ is given by the conditional expectation below: 
 \begin{align}
	s_{t}^{\star}(x) & =\mathbb{E}\left[-\frac{1}{\sqrt{1-\overline{\alpha_{t}}}}W\,\bigg|\,\sqrt{\overline{\alpha_{t}}}X_{0}+\sqrt{1-\overline{\alpha}_{t}}W=x\right]=\frac{1}{1-\overline{\alpha}_{t}}\mathbb{E}\left[\sqrt{\overline{\alpha_{t}}}X_{0}-x\,\big|\,\sqrt{\overline{\alpha_{t}}}X_{0}+\sqrt{1-\overline{\alpha}_{t}}W=x\right] \notag\\
		& = - \frac{1}{1-\overline{\alpha}_{t}} \underset{\eqqcolon \, g_t(x) }{\underbrace{ {\displaystyle \int}_{x_{0}}\big(x-\sqrt{\overline{\alpha_{t}}}x_{0}\big)p_{X_{0}\mid X_{t}}(x_{0}\,|\, x)\mathrm{d}x_{0} }}.
		%= - \frac{1}{1-\overline{\alpha}_{t}} g_t(x),
		\label{eq:st-MMSE-expression}
	\end{align}
	Also, the Jacobian matrix  
\begin{equation}
J_{t}(x) \coloneqq 
	\frac{\partial g_t(x)}{\partial x} \label{eq:Jacobian-Thm4}
\end{equation}
associated with the function $g_t(\cdot)$ (defined in \eqref{eq:st-MMSE-expression}) satisfies
%
%\begin{subequations}
\begin{align}
J_{t}(x) & =I_{d}+\frac{1}{1-\overline{\alpha}_{t}}\bigg\{\mathbb{E}\big[X_{t}-\sqrt{\overline{\alpha}_{t}}X_{0}\mid X_{t}=x\big]\Big(\mathbb{E}\big[X_{t}-\sqrt{\overline{\alpha}_{t}}X_{0}\mid X_{t}=x\big]\Big)^{\top}\notag\\
 & \qquad\qquad\qquad-\mathbb{E}\Big[\big(X_{t}-\sqrt{\overline{\alpha}_{t}}X_{0}\big)\big(X_{t}-\sqrt{\overline{\alpha}_{t}}X_{0}\big)^{\top}\mid X_{t}=x\Big]\bigg\}.
	\label{eq:Jt-x-expression-ij-23}
\end{align}
\end{lemma}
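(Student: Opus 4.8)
The plan is to establish both identities by direct computation starting from the Gaussian structure of the forward process encapsulated in \eqref{eqn:Xt-X0}, namely $X_t = \sqrt{\overline\alpha_t}\,X_0 + \sqrt{1-\overline\alpha_t}\,\overline W_t$ with $\overline W_t \sim \mathcal N(0,I_d)$. First I would write the marginal density of $X_t$ as a mixture of Gaussians, $q_t(x) = \int_{x_0} \phi_{\overline\alpha_t}(x \mid x_0)\,p_{X_0}(x_0)\,\mathrm dx_0$, where $\phi_{\overline\alpha_t}(x\mid x_0) \propto \exp\!\big(-\frac{\|x - \sqrt{\overline\alpha_t}x_0\|^2}{2(1-\overline\alpha_t)}\big)$ is the conditional density of $X_t$ given $X_0 = x_0$. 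Differentiating $\log q_t$ then gives $s_t^\star(x) = \nabla q_t(x)/q_t(x)$, and since $\nabla_x \phi_{\overline\alpha_t}(x\mid x_0) = -\frac{1}{1-\overline\alpha_t}(x - \sqrt{\overline\alpha_t}x_0)\phi_{\overline\alpha_t}(x\mid x_0)$, dividing by $q_t(x)$ and recognizing $\phi_{\overline\alpha_t}(x\mid x_0)p_{X_0}(x_0)/q_t(x) = p_{X_0\mid X_t}(x_0\mid x)$ (Bayes' rule) yields exactly the integral expression $s_t^\star(x) = -\frac{1}{1-\overline\alpha_t}g_t(x)$ with $g_t(x) = \int_{x_0}(x - \sqrt{\overline\alpha_t}x_0)p_{X_0\mid X_t}(x_0\mid x)\,\mathrm dx_0 = \mathbb E[X_t - \sqrt{\overline\alpha_t}X_0 \mid X_t = x]$; the leftmost conditional-expectation form in \eqref{eq:st-MMSE-expression} then follows by noting $X_t - \sqrt{\overline\alpha_t}X_0 = \sqrt{1-\overline\alpha_t}W$ along the forward coupling.

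For the Jacobian identity, I would differentiate $g_t(x) = \int_{x_0}(x - \sqrt{\overline\alpha_t}x_0)\,p_{X_0\mid X_t}(x_0\mid x)\,\mathrm dx_0$ with respect to $x$ using the product rule: one term from differentiating the factor $(x - \sqrt{\overline\alpha_t}x_0)$, contributing $I_d$ (since $\int p_{X_0\mid X_t} = 1$), and one term from differentiating the posterior density $p_{X_0\mid X_t}(x_0\mid x)$ in $x$. The key computation is $\nabla_x \log p_{X_0\mid X_t}(x_0\mid x) = \nabla_x\big[\log\phi_{\overline\alpha_t}(x\mid x_0) + \log p_{X_0}(x_0) - \log q_t(x)\big] = -\frac{1}{1-\overline\alpha_t}(x - \sqrt{\overline\alpha_t}x_0) - s_t^\star(x) = -\frac{1}{1-\overline\alpha_t}\big[(x-\sqrt{\overline\alpha_t}x_0) - g_t(x)\big]$. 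Substituting this back, the second term of $J_t(x)$ becomes $\frac{1}{1-\overline\alpha_t}\int_{x_0}(x - \sqrt{\overline\alpha_t}x_0)\big[g_t(x) - (x-\sqrt{\overline\alpha_t}x_0)\big]^\top p_{X_0\mid X_t}(x_0\mid x)\,\mathrm dx_0$, which after writing $g_t(x) = \mathbb E[X_t - \sqrt{\overline\alpha_t}X_0\mid X_t=x]$ is precisely $\frac{1}{1-\overline\alpha_t}\big\{\mathbb E[\cdot]\mathbb E[\cdot]^\top - \mathbb E[(\cdot)(\cdot)^\top]\big\}$ with $(\cdot) = X_t - \sqrt{\overline\alpha_t}X_0$, matching \eqref{eq:Jt-x-expression-ij-23}.

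The main obstacle is not conceptual but a matter of justifying the interchange of differentiation and integration (differentiating under the integral sign for both $\nabla q_t$ and $\nabla_x g_t$), which requires mild integrability/dominated-convergence arguments; since Assumption~\ref{assump:assumption-data-bounded} guarantees $\|X_0\|_2 \le R$ almost surely, the Gaussian tails of $\phi_{\overline\alpha_t}$ make all the relevant integrands and their $x$-derivatives uniformly dominated on compacts, so this is routine. Given that the statement cites \citet[properties (38)]{li2023towards}, I would in practice simply invoke that reference for the detailed verification and present the computation above as the self-contained derivation.
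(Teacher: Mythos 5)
Your proposal is correct: the score identity follows exactly as you describe from differentiating the Gaussian-mixture form of $q_t$ and applying Bayes' rule, and your Jacobian computation (product rule plus $\nabla_x \log p_{X_0\mid X_t}(x_0\mid x) = -\frac{1}{1-\overline{\alpha}_t}\big[(x-\sqrt{\overline{\alpha}_t}x_0)-g_t(x)\big]$) reproduces \eqref{eq:Jt-x-expression-ij-23} term by term. The paper itself gives no proof of this lemma, simply importing it from \citet[properties (38)]{li2023towards}; your self-contained derivation is the standard argument behind that citation, so there is nothing to reconcile.
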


\paragraph{Learning rates.}
The learning rates $\{\alpha_t\}$  as specified in \eqref{eqn:alpha-t} enjoy several properties that will be used multiple times in the analysis. 
We record several of these properties below, which have been proven in \citet[properties (39)]{li2023towards}.
\begin{lemma}
	\begin{subequations}
		\label{eqn:properties-alpha-proof}
		The learning rates specified in \eqref{eqn:alpha-t} obey
		\begin{align}
			\alpha_t &\geq1-\frac{c_{1}\log T}{T}  \ge \frac{1}{2},\qquad\qquad\quad~~ 1\leq t\leq T \label{eqn:properties-alpha-proof-00}\\
			\frac{1}{2}\frac{1-\alpha_{t}}{1-\overline{\alpha}_{t}} \leq \frac{1}{2}\frac{1-\alpha_{t}}{\alpha_t-\overline{\alpha}_{t}}
			&\leq \frac{1-\alpha_{t}}{1-\overline{\alpha}_{t-1}}  \le \frac{4c_1\log T}{T},\qquad\quad~~ 2\leq t\leq T  \label{eqn:properties-alpha-proof-1}\\
			1&\leq\frac{1-\overline{\alpha}_{t}}{1-\overline{\alpha}_{t-1}} \leq1+\frac{4c_{1}\log T}{T} ,\qquad2\leq t\leq T  \label{eqn:properties-alpha-proof-3} \\
			\overline{\alpha}_{T} & \le \frac{1}{T^{c_2}}, \label{eqn:properties-alpha-proof-alphaT} 
		\end{align}
		provided that $T$ is large enough. 
		Here, $c_1$ is defined in \eqref{eqn:alpha-t}, and $c_2\geq 1000$ is some large numerical constant. 
		In addition, if $\frac{d(1-\alpha_{t})}{\alpha_{t}-\overline{\alpha}_{t}}\lesssim 1$, then one has
		\begin{align}
		\Big(\frac{1-\overline{\alpha}_{t}}{\alpha_{t}-\overline{\alpha}_{t}}\Big)^{d/2} 
			& =1+\frac{d(1-\alpha_{t})}{2(\alpha_{t}-\overline{\alpha}_{t})}+\frac{d(d-2)(1-\alpha_{t})^{2}}{8(\alpha_{t}-\overline{\alpha}_{t})^{2}}+O\bigg(d^{3}\Big(\frac{1-\alpha_{t}}{\alpha_{t}-\overline{\alpha}_{t}}\Big)^{3}\bigg).
			\label{eq:expansion-ratio-1-alpha}	
		\end{align}
		\end{subequations}
\end{lemma}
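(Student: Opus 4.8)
The plan is to derive all six displays directly from the definition \eqref{eqn:alpha-t}, using only elementary inequalities and no external input; throughout I would take $T$ large enough that $\frac{c_1\log T}{T}\le\frac12$ and assume the numerical constants are ordered so that $c_1\ge 2c_0+c_2$. Two bookkeeping devices are set up first. Let $x\defn 1+\frac{c_1\log T}{T}$, so that the pre-clamp weights read $\beta_k=(x-1)\beta_1 x^k$ for $k\ge 2$; a telescoping sum then yields the clean identity $\sum_{k=1}^{m}\beta_k=\beta_1 x^{m+1}-\beta_1(x^2-1)$, valid as long as no clamping has occurred up to index $m$. Let $t^\star$ be the largest index with $\beta_1 x^{t}<1$; using $\beta_1=T^{-c_0}$ and $\ln x\ge\frac12\cdot\frac{c_1\log T}{T}$ (from $\ln(1+u)\ge u-u^2/2\ge u/2$ on $[0,1]$) I would show $t^\star\le\frac{2c_0T}{c_1}<T$, and that $\beta_t=\frac{c_1\log T}{T}$ for every $t>t^\star$. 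I will also repeatedly use $\overline\alpha_m=\prod_{k\le m}(1-\beta_k)\le e^{-\sum_{k\le m}\beta_k}$ and $1-e^{-s}\ge s/2$ for $s\in[0,1]$.

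Given these, \eqref{eqn:properties-alpha-proof-00} is immediate from $1-\alpha_t=\beta_t\le\frac{c_1\log T}{T}$ (treating $t=1$, where $1-\alpha_1=T^{-c_0}$, separately). For the chain \eqref{eqn:properties-alpha-proof-1}, I would dispatch the first two inequalities algebraically using $\alpha_t-\overline\alpha_t=\alpha_t(1-\overline\alpha_{t-1})$ together with $\tfrac12\le\alpha_t\le 1$. The third inequality $\frac{1-\alpha_t}{1-\overline\alpha_{t-1}}\le\frac{4c_1\log T}{T}$ is the heart of the matter: writing $1-\alpha_t=\frac{c_1\log T}{T}\min\{\beta_1 x^t,1\}$, it reduces to $1-\overline\alpha_{t-1}\ge\frac14\min\{\beta_1 x^t,1\}$, which I would prove by splitting at $t^\star$. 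For $t\le t^\star$ the telescoping identity gives $S\defn\sum_{k\le t-1}\beta_k=\beta_1 x^t-\beta_1(x^2-1)\in(0,1)$ with $S\ge\frac12\beta_1 x^t$ (since $\beta_1(x^2-1)$ is negligible against $\beta_1 x^t\ge\beta_1 x^2$), whence $1-\overline\alpha_{t-1}\ge 1-e^{-S}\ge S/2\ge\frac14\beta_1 x^t$. For $t>t^\star$ one has $\overline\alpha_{t-1}\le\overline\alpha_{t^\star}\le e^{-\sum_{k\le t^\star}\beta_k}$ and, because $\beta_1 x^{t^\star+1}\ge 1$, the identity forces $\sum_{k\le t^\star}\beta_k\ge 1-\beta_1(x^2-1)\ge\frac12$, so $1-\overline\alpha_{t-1}\ge 1-e^{-1/2}>\frac14$.

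Display \eqref{eqn:properties-alpha-proof-3} then follows by writing $\frac{1-\overline\alpha_t}{1-\overline\alpha_{t-1}}=1+\frac{\overline\alpha_{t-1}(1-\alpha_t)}{1-\overline\alpha_{t-1}}\le 1+\frac{1-\alpha_t}{1-\overline\alpha_{t-1}}$ and invoking \eqref{eqn:properties-alpha-proof-1}, the lower bound being just $\overline\alpha_t\le\overline\alpha_{t-1}$. For \eqref{eqn:properties-alpha-proof-alphaT} I would discard all pre-clamp terms and use the clamped tail: $\overline\alpha_T\le e^{-\sum_{k=1}^T\beta_k}\le\exp\!\big(-(T-t^\star)\tfrac{c_1\log T}{T}\big)\le\exp\!\big(-(c_1-2c_0)\log T\big)=T^{-(c_1-2c_0)}\le T^{-c_2}$. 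Finally, for the expansion \eqref{eq:expansion-ratio-1-alpha}, set $u\defn\frac{1-\alpha_t}{\alpha_t-\overline\alpha_t}$ so that $\frac{1-\overline\alpha_t}{\alpha_t-\overline\alpha_t}=1+u$, note $u\in(0,1)$ by \eqref{eqn:properties-alpha-proof-1} and $du\lesssim 1$ by hypothesis, and apply Taylor's theorem to $(1+u)^{d/2}$ about $0$: the first two nontrivial coefficients $\binom{d/2}{1}=\frac d2$ and $\binom{d/2}{2}=\frac{d(d-2)}{8}$ reproduce the claimed terms exactly, while the Lagrange remainder $\frac{(d/2)(d/2-1)(d/2-2)}{6}(1+\xi)^{d/2-3}u^3$ is $O(d^3u^3)$ after bounding $|(d/2)(d/2-1)(d/2-2)|\le d^3$ and $(1+\xi)^{d/2-3}\le\max\{1,e^{du/2}\}=O(1)$ (distinguishing $d\ge 6$ from $d<6$).

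I expect the third inequality in \eqref{eqn:properties-alpha-proof-1} to be the main obstacle: it is the only place that needs a genuinely two-sided control of $1-\overline\alpha_{t-1}$, and it hinges on matching the telescoped pre-clamp partial sums against $\min\{\beta_1 x^t,1\}$ and on choosing the constants in the right order ($c_1$ large relative to $c_0$). Every other item then follows from this inequality together with one-line manipulations.
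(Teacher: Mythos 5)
Your proposal is correct. Note, however, that this paper never proves the lemma itself: it simply imports it as ``properties (39)'' of \citet{li2023towards}, so there is no in-paper argument to match against, and what you have written is a self-contained substitute for that citation. Your route is the natural one and, as far as I can tell, sound at every step: the telescoping identity $\sum_{k\le m}\beta_k=\beta_1x^{m+1}-\beta_1(x^2-1)$ in the pre-clamp phase, the split at the clamping index $t^\star$ (with $1-\overline{\alpha}_{t-1}\ge \tfrac14\beta_1x^t$ via $1-e^{-S}\ge S/2$ before the clamp, and $1-\overline{\alpha}_{t-1}\ge 1-e^{-1/2}>\tfrac14$ after it) is exactly what is needed for the third inequality of \eqref{eqn:properties-alpha-proof-1}, the first two inequalities and \eqref{eqn:properties-alpha-proof-3} are the one-line manipulations you describe, and the binomial/Taylor expansion with Lagrange remainder gives \eqref{eq:expansion-ratio-1-alpha} with the coefficients $\binom{d/2}{1}=d/2$, $\binom{d/2}{2}=d(d-2)/8$ and an $O(d^3u^3)$ remainder under $du\lesssim 1$. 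The only point worth flagging is your explicit ordering assumption $c_1\ge 2c_0+c_2$ (equivalently, your argument delivers $\overline{\alpha}_T\le T^{-(c_1-2c_0)}$, so $c_2=c_1-2c_0$): the paper only says $c_0,c_1$ are ``large enough'' and treats $c_2\ge 1000$ as a resulting constant, so your reading is consistent, but you should state it as a consequence of choosing $c_1$ sufficiently large relative to $c_0$ rather than as an extra hypothesis. With that phrasing adjustment, your derivation could stand in place of the external citation.
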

\paragraph{The forward process.} 
Next, we gather several conditional tail bounds for the random vector $X_0$ of the forward process, which have been established in \citet[Lemmas~1 and 2]{li2023towards}.

% Recall that the forward process satisfies $X_t \overset{\mathrm{d}}{=} \sqrt{\overline{\alpha}_t} X_0 + \sqrt{1-\overline{\alpha}_t} W$ with $W\sim \mathcal{N}(0,I_d)$.  
% We have the following tail bounds concerning the random vector $X_0$ conditional on $X_t$, 
% whose proof follows from \citet[Lemmas 1 and 2]{li2023towards}.

\begin{lemma} \label{lem:x0}
	Suppose that there exists some numerical constant $c_R>0$ obeying 
	\begin{equation}
		\mathbb{P}\big( \|X_0 \|_2 \leq R\big) = 1
		\qquad \text{and} \qquad 
		R = T^{c_R}.  
		\label{eq:cR-defn-lem}
	\end{equation}
	Consider any $y \in \real$, and let
	\begin{align}
	\label{eqn:choice-y-prelim}
		\theta(y) \coloneqq \max\bigg\{ \frac{-\log {p_{X_t}(y)}}{d\log T} , c_6 \bigg\}
	\end{align}
	for some large enough constant $c_6\geq 2c_R+c_0$. Then for any quantity $c_5 \ge 2$, 
	conditioned on $X_t=y$ one has
	\begin{align}
		\big\|\sqrt{\overline{\alpha}_{t}}X_0 - y \big\|_2 \leq 5c_5\sqrt{\theta(y) d(1-\overline{\alpha}_{t})\log T} 
		\label{eq:P-xt-X0-124}
	\end{align} 
	with probability at least $1 - \exp\big(-c_5^2\theta(y) d\log T \big)$.
	In addition, it holds that
	\begin{subequations}
	\begin{align}
		\mathbb{E}\left[\big\| \sqrt{\overline{\alpha}_{t}}X_{0} - y \big\|_{2}\,\big|\,X_{t}=y\right] &\leq 12\sqrt{\theta(y) d(1-\overline{\alpha}_{t})\log T},\label{eq:E-xt-X0} \\
		\mathbb{E}\left[\big\| \sqrt{\overline{\alpha}_{t}}X_{0} - y \big\|^2_{2}\,\big|\,X_{t}=y\right] &\leq 120\theta(y) d(1-\overline{\alpha}_{t})\log T,\label{eq:E2-xt-X0} \\
		\mathbb{E}\left[\big\| \sqrt{\overline{\alpha}_{t}}X_{0} - y \big\|^3_{2}\,\big|\,X_{t}=y\right] &\leq 1040\big(\theta(y) d(1-\overline{\alpha}_{t})\log T\big)^{3/2},\label{eq:E3-xt-X0}\\
		\mathbb{E}\left[\big\| \sqrt{\overline{\alpha}_{t}}X_{0} - y \big\|^4_{2}\,\big|\,X_{t}=y\right] &\leq 10080\big(\theta(y) d(1-\overline{\alpha}_{t})\log T\big)^{2}.\label{eq:E4-xt-X0}
	\end{align}
	\end{subequations}
	\end{lemma}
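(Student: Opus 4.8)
The plan is to reduce everything to a single tail estimate for the posterior law of $X_0$ given $X_t=y$, and then read off the moment bounds \eqref{eq:E-xt-X0}--\eqref{eq:E4-xt-X0} by integrating that tail. Set $Z\coloneqq\|\sqrt{\overline{\alpha}_t}X_0-y\|_2$. Since $X_t=\sqrt{\overline{\alpha}_t}X_0+\sqrt{1-\overline{\alpha}_t}\,\overline{W}_t$ with $\overline{W}_t\sim\mathcal{N}(0,I_d)$, Bayes' rule shows that, conditionally on $X_t=y$, the density of $X_0$ is proportional to $p_{X_0}(x_0)\exp\!\big(-\tfrac{1}{2(1-\overline{\alpha}_t)}\|y-\sqrt{\overline{\alpha}_t}x_0\|_2^2\big)$; hence, for every $\tau>0$,
\[
\mathbb{P}\big(Z>\tau\mid X_t=y\big)=\frac{\displaystyle\int_{\|\sqrt{\overline{\alpha}_t}x_0-y\|_2>\tau}p_{X_0}(x_0)\exp\!\big(-\tfrac{\|y-\sqrt{\overline{\alpha}_t}x_0\|_2^2}{2(1-\overline{\alpha}_t)}\big)\,\mathrm{d}x_0}{\displaystyle\int p_{X_0}(x_0)\exp\!\big(-\tfrac{\|y-\sqrt{\overline{\alpha}_t}x_0\|_2^2}{2(1-\overline{\alpha}_t)}\big)\,\mathrm{d}x_0},
\]
where the Gaussian normalizing constant $(2\pi(1-\overline{\alpha}_t))^{-d/2}$ has cancelled.

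To bound this ratio I would control numerator and denominator separately. On the region of integration in the numerator the exponent is at most $\exp(-\tau^2/(2(1-\overline{\alpha}_t)))$, so the numerator is $\le\exp(-\tau^2/(2(1-\overline{\alpha}_t)))$. The denominator equals $(2\pi(1-\overline{\alpha}_t))^{d/2}p_{X_t}(y)$; the learning-rate lower bound $1-\overline{\alpha}_t\ge 1-\overline{\alpha}_1=\beta_1=T^{-c_0}$ yields $(2\pi(1-\overline{\alpha}_t))^{d/2}\ge T^{-c_0d/2}$, while the definition of $\theta(y)$ forces $p_{X_t}(y)\ge\exp(-\theta(y)d\log T)$. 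Combining,
\[
\mathbb{P}\big(Z>\tau\mid X_t=y\big)\le\exp\!\Big(-\tfrac{\tau^2}{2(1-\overline{\alpha}_t)}+\tfrac{c_0d}{2}\log T+\theta(y)\,d\log T\Big).
\]
Plugging in $\tau=5c_5\sqrt{\theta(y)d(1-\overline{\alpha}_t)\log T}$ makes the first term equal to $\tfrac{25}{2}c_5^2\,\theta(y)d\log T$; since $\theta(y)\ge c_6$ and $c_5\ge 2$, taking $c_6$ large relative to $c_0$ (the standing choice $c_6\ge 2c_R+c_0$ is more than enough) ensures the two remaining terms are swamped, leaving an exponent $\le-c_5^2\theta(y)d\log T$. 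This is precisely \eqref{eq:P-xt-X0-124}.

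For the moments, I would use the layer-cake identity $\mathbb{E}[Z^p\mid X_t=y]=\int_0^\infty p\tau^{p-1}\mathbb{P}(Z>\tau\mid X_t=y)\,\mathrm{d}\tau$ (finite because $\|X_0\|_2\le R$ a.s.) and split at $\tau_0\coloneqq 10\sqrt{\theta(y)d(1-\overline{\alpha}_t)\log T}$ (the value of $\tau$ corresponding to $c_5=2$). On $[0,\tau_0]$ I bound the probability by $1$, contributing $\tau_0^{\,p}=10^{p}\big(\theta(y)d(1-\overline{\alpha}_t)\log T\big)^{p/2}$. On $[\tau_0,\infty)$ the estimate above (with $c_5=\tau/(5\sqrt{\theta(y)d(1-\overline{\alpha}_t)\log T})\ge 2$) gives $\mathbb{P}(Z>\tau\mid X_t=y)\le\exp(-\tau^2/(25(1-\overline{\alpha}_t)))$, and because $\tau_0$ already sits deep in this Gaussian tail — $\tau_0^2/(25(1-\overline{\alpha}_t))=4\theta(y)d\log T\ge 4c_6$ — the integral $\int_{\tau_0}^\infty p\tau^{p-1}\exp(-\tau^2/(25(1-\overline{\alpha}_t)))\,\mathrm{d}\tau$ contributes only a small multiple of $(\theta(y)d(1-\overline{\alpha}_t)\log T)^{p/2}$. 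Adding the two pieces and evaluating the elementary Gamma-type integrals for $p=1,2,3,4$ produces constants that are checked to be at most $12,\,120,\,1040,\,10080$.

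The only real obstacle is bookkeeping the polynomial-in-$T$ prefactors: the factor $(1-\overline{\alpha}_t)^{-d/2}\le T^{c_0d/2}$ sitting in the denominator, and — if one prefers to lower-bound $p_{X_t}(y)$ directly by restricting the mixing integral to the support $\{\|x_0\|_2\le R\}$ rather than invoking the definition of $\theta(y)$ — an analogous factor built from $R=T^{c_R}$. Both are the reason the threshold $c_6$ in the definition of $\theta(y)$ must be taken large relative to $c_0$ and $c_R$; once $c_6$ is fixed that way, each prefactor is absorbed into $\theta(y)d\log T$ and the clean exponents above come out. Everything else is routine Gaussian calculus.
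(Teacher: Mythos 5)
Your argument is correct: the Bayes-rule representation of the posterior tail, the crude sup bound on the numerator, the lower bound on the denominator via the definition of $\theta(y)$ together with $1-\overline{\alpha}_t \geq \beta_1 = T^{-c_0}$, and the layer-cake integration split at the $c_5=2$ threshold indeed yield both the high-probability bound and the moment bounds \eqref{eq:E-xt-X0}--\eqref{eq:E4-xt-X0}, with the exponentially small tail contribution comfortably absorbed by the stated constants. The paper does not prove this lemma itself but imports it from \citet{li2023towards} (Lemmas 1--2), and your derivation is essentially that standard argument, so nothing further is needed.
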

	% \begin{lems}
	% 	\label{lem:river}
	% 	Consider any two points $x_t, x_{t-1}\in \mathbb{R}^d$ obeying
	% %
	% \begin{align}
	% 	-\log p_{X_t}(x_t) \leq \frac{1}{2} c_6 d\log T, 
	% 	\quad\text{and}\quad \bigg\|x_{t-1} -   \frac{x_t}{ \sqrt{\alpha_t}} \bigg\|_2 \leq c_3 \sqrt{d(1 - \alpha_t)\log T}
	% 	\label{eq:assumption-lem:river}
	% \end{align}
	% %
	% for some large constants $c_6, c_3>0$. 
	% If we define $x_t(\gamma) \coloneqq \gamma x_{t-1} + (1-\gamma) x_t / \sqrt{\alpha_t}$ for any $\gamma \in [0,1]$, then
	% %
	% \begin{align}
	% \label{eqn:river}
	% 	-\log p_{X_{t-1}}\big(x_{t}(\gamma)\big) & \leq  c_6 d\log T, \qquad \forall \gamma \in [0,1].
	% \end{align}
	% %
	% \end{lems}
% \paragraph{Properties of the backward process.}

% We now turn to the backward process. Let $$\Phi_{t}^{\star}(x) = \sqrt{\alpha_{t+1}}\bigg(x-\frac{1-\alpha_{t+1}}{2}s_{t}^{\star}(x)\bigg)$$
% \begin{align*}
% w &= \Phi_{t}^{\star}(x) = \sqrt{\alpha_{t+1}}\bigg(x-\frac{1-\alpha_{t+1}}{2}s_{t}^{\star}(x)\bigg); \\
% z &= -(1 - \overline{\alpha}_{t})s_{t}^{\star}(x) = \int_{x_0} p_{X_0 \mymid X_{t}}(x_0 \mymid x)\big(x - \sqrt{\overline{\alpha}_{t}}x_0\big) \mathrm{d} x_0,
% \end{align*}
\begin{lemma} \label{lem:y-t}
	For some $\theta > c_6$,
	assume that $\|y - x\|_2 \lesssim (1 - \alpha_{t+1})\sqrt{\frac{\theta d\log T}{1-\overline{\alpha}_{t}}}$ and $\log {p_{X_t}(x)} \ge -\theta d\log T$. 
	Then we have
	\begin{align}
	&p_{X_0 \mymid X_{t+1}}(x_0 \mymid \sqrt{\alpha_{t+1}}y)= \notag
	%\\
	%&\notag 
	p_{X_0 \mymid X_t}(x_0 \mymid x)
	\bigg\{1+\frac{(1 - \alpha_{t+1})\big\|x - \sqrt{\overline{\alpha}_{t}}x_0\big\|_2^2}{2(1-\overline{\alpha}_{t})^2} 
	-\frac{(x - \sqrt{\overline{\alpha}_{t}}x_0)^{\top}(y - x)}{1-\overline{\alpha}_{t}}- \notag\\
	& \int_{x_0} \bigg(\frac{(1 - \alpha_{t+1})\big\|x - \sqrt{\overline{\alpha}_{t}}x_0\big\|_2^2}{2(1-\overline{\alpha}_{t})^2} - \frac{(x - \sqrt{\overline{\alpha}_{t}}x_0)^{\top}(y - x)}{1-\overline{\alpha}_{t}}\bigg)p_{X_0 \mymid X_t}(x_0 \mymid x)  \mathrm{d} x_0 + O\Big(\theta d\Big(\frac{1-\alpha_{t+1}}{1-\overline{\alpha}_{t}}\Big)^{3/2}\log T\Big)\bigg\}. 
	\end{align}
	\end{lemma}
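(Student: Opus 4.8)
The goal is to prove Lemma~\ref{lem:y-t}, which compares the conditional density $p_{X_0 \mymid X_{t+1}}(x_0 \mymid \sqrt{\alpha_{t+1}}y)$ with $p_{X_0 \mymid X_t}(x_0 \mymid x)$ up to a first-order correction and a cubic remainder. The natural starting point is Bayes' rule: writing both conditional densities through the joint density of $(X_0, X_{t})$ resp.\ $(X_0, X_{t+1})$, and using the Gaussian transition structure $X_{t+1} = \sqrt{\alpha_{t+1}} X_t + \sqrt{1 - \alpha_{t+1}} W_{t+1}$ from \eqref{eq:forward-process}, we have for any fixed $x_0$
\[
p_{X_0 \mymid X_{t+1}}(x_0 \mymid \sqrt{\alpha_{t+1}}y) = \frac{p_{X_0}(x_0)\, p_{X_{t+1} \mymid X_0}(\sqrt{\alpha_{t+1}}y \mymid x_0)}{p_{X_{t+1}}(\sqrt{\alpha_{t+1}}y)},
\]
and similarly $p_{X_0 \mymid X_t}(x_0 \mymid x) = p_{X_0}(x_0)\, p_{X_t \mymid X_0}(x \mymid x_0)/p_{X_t}(x)$. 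Taking the ratio of these two expressions, the $p_{X_0}(x_0)$ factors cancel, so the plan is to expand separately the ratio of the likelihood terms $p_{X_{t+1} \mymid X_0}(\sqrt{\alpha_{t+1}}y \mymid x_0) / p_{X_t \mymid X_0}(x \mymid x_0)$ and the ratio of the normalizing constants $p_{X_t}(x)/p_{X_{t+1}}(\sqrt{\alpha_{t+1}}y)$.

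For the likelihood ratio, both numerator and denominator are explicit Gaussians in view of \eqref{eqn:Xt-X0}: $p_{X_t \mymid X_0}(x \mymid x_0) \propto \exp\!\big(-\|x - \sqrt{\overline\alpha_t}x_0\|_2^2 / (2(1-\overline\alpha_t))\big)$ and $p_{X_{t+1} \mymid X_0}(\sqrt{\alpha_{t+1}}y \mymid x_0) \propto \exp\!\big(-\|\sqrt{\alpha_{t+1}}y - \sqrt{\overline\alpha_{t+1}}x_0\|_2^2 / (2(1-\overline\alpha_{t+1}))\big)$. Using $\overline\alpha_{t+1} = \alpha_{t+1}\overline\alpha_t$, the second exponent becomes $-\alpha_{t+1}\|y - \sqrt{\overline\alpha_t}x_0\|_2^2 / (2(1-\overline\alpha_{t+1}))$. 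I would then write $y = x + (y-x)$, expand $\|y - \sqrt{\overline\alpha_t}x_0\|_2^2 = \|x - \sqrt{\overline\alpha_t}x_0\|_2^2 + 2(x - \sqrt{\overline\alpha_t}x_0)^\top(y-x) + \|y-x\|_2^2$, and use the bound $\|y-x\|_2 \lesssim (1-\alpha_{t+1})\sqrt{\theta d\log T/(1-\overline\alpha_t)}$ to control the $\|y-x\|_2^2$ term as a cubic remainder $O(\theta d ((1-\alpha_{t+1})/(1-\overline\alpha_t))^{3/2}\log T)$ relative to the scale $1/(1-\overline\alpha_t)$. The ratio of prefactors $1 - \overline\alpha_t$ and $1 - \overline\alpha_{t+1}$, together with $\alpha_{t+1}/(1-\overline\alpha_{t+1}) - 1/(1-\overline\alpha_t) = (\alpha_{t+1}-1)/((1-\overline\alpha_{t+1})(1-\overline\alpha_t))\cdot\text{(something)}$, will produce the $(1-\alpha_{t+1})/(1-\overline\alpha_t)^2$ coefficient in front of $\|x - \sqrt{\overline\alpha_t}x_0\|_2^2$; the normalization constant $((1-\overline\alpha_t)/(1-\overline\alpha_{t+1}))^{d/2}$ is precisely handled by the expansion \eqref{eq:expansion-ratio-1-alpha} from Lemma~\ref{eqn:properties-alpha-proof}, contributing a $d$-dependent term that, crucially, does not depend on $x_0$. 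After exponentiating, using $e^u = 1 + u + O(u^2)$ on the small quantities, I get $p_{X_0 \mymid X_{t+1}}(x_0 \mymid \sqrt{\alpha_{t+1}}y)$ equal to $p_{X_0 \mymid X_t}(x_0 \mymid x)$ times a bracket of the form $1 + (\text{terms depending on }x_0) + (\text{terms not depending on }x_0) + O(\text{cubic})$.

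The final step is to pin down the $x_0$-independent terms. Since both $p_{X_0 \mymid X_{t+1}}(\cdot \mymid \sqrt{\alpha_{t+1}}y)$ and $p_{X_0 \mymid X_t}(\cdot \mymid x)$ integrate to $1$ over $x_0$, integrating the expansion identity against $\mathrm{d}x_0$ forces the $x_0$-independent part of the bracket to equal $-\int_{x_0}(\text{the }x_0\text{-dependent part})\, p_{X_0 \mymid X_t}(x_0 \mymid x)\,\mathrm{d}x_0$ up to the cubic remainder --- this is exactly the subtracted integral appearing in the statement, which is how the ratio of normalization constants $p_{X_t}(x)/p_{X_{t+1}}(\sqrt{\alpha_{t+1}}y)$ gets absorbed without ever being estimated directly. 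I expect the main obstacle to be bookkeeping the remainder terms uniformly: I must verify that the second-order terms generated by $e^u = 1 + u + O(u^2)$, the cross terms between the $\|x - \sqrt{\overline\alpha_t}x_0\|_2^2/(1-\overline\alpha_t)$ factor and the $(y-x)$ factor, and the tails of the $x_0$-integral all collapse into $O(\theta d((1-\alpha_{t+1})/(1-\overline\alpha_t))^{3/2}\log T)$. This requires the moment bounds \eqref{eq:E-xt-X0}--\eqref{eq:E4-xt-X0} from Lemma~\ref{lem:x0} (to show e.g.\ $\mathbb{E}[\|x-\sqrt{\overline\alpha_t}x_0\|_2^2 \mymid X_t = x] \lesssim \theta d(1-\overline\alpha_t)\log T$, so that $(1-\alpha_{t+1})\|x-\sqrt{\overline\alpha_t}x_0\|_2^2/(1-\overline\alpha_t)^2$ is typically of order $\theta d(1-\alpha_{t+1})\log T/(1-\overline\alpha_t) \lesssim 1$) together with a truncation argument discarding the low-probability event where $\|x - \sqrt{\overline\alpha_t}x_0\|_2$ is atypically large, using the hypothesis $\log p_{X_t}(x) \ge -\theta d\log T$ to control $\theta(x) \le \theta$ in \eqref{eq:P-xt-X0-124}.
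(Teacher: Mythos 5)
Your proposal is correct and follows essentially the same route as the paper: Bayes' rule with the rescaling $X_{t+1}/\sqrt{\alpha_{t+1}} = \sqrt{\overline{\alpha}_t}X_0 + \sqrt{\alpha_{t+1}^{-1}-\overline{\alpha}_t}\,W$, expansion of the Gaussian exponent around $x$ using the assumed bound on $\|y-x\|_2$, first-order Taylor expansion of the exponential, and control of the quadratic/cross terms by the moment bounds of Lemma~\ref{lem:x0} so that they fall into the $O\big(\theta d((1-\alpha_{t+1})/(1-\overline{\alpha}_t))^{3/2}\log T\big)$ remainder. Your handling of the normalizer by integrating the expansion against $\mathrm{d}x_0$ and invoking that both conditional densities integrate to one is just an equivalent repackaging of the paper's step of expanding the denominator integral $\int p_{X_0}(x_0)\exp(\cdot)(1+\cdots)\,\mathrm{d}x_0$, and yields the same subtracted-integral term.
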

	\begin{proof} See Section~\ref{sec:proof-lem:y-t}. \end{proof}
%The proof can be found in \citet[Lemma 2]{li2023towards}.

\paragraph{Proximity of $p_{X_T}$ and $q_{Y_T}$.} 
When the number of steps $T$ is sufficiently large, the distribution of $p_{X_T}$ and that of $p_{Y_T}$ become exceedingly close, as asserted by the following lemma 
 (see \citet[Lemma 3]{li2023towards}).
\begin{lemma}
	\label{lem:KL-T}
	For any large enough $T$, one has
	\begin{align}
		\mathsf{TV}(p_{X_{T}}\parallel p_{Y_{T}})^2 \leq \frac{1}{2}\mathsf{KL}(p_{X_{T}}\parallel p_{Y_{T}}) \lesssim \frac{1}{T^{200}}. 
	\end{align}
\end{lemma}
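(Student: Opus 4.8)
The plan is to bound $\mathsf{KL}(p_{X_T}\,\|\,p_{Y_T})$ directly and then invoke Pinsker's inequality for the TV bound. Recall that $Y_T\sim\mathcal{N}(0,I_d)$ and, by the forward construction \eqref{eqn:Xt-X0}, $X_T = \sqrt{\overline{\alpha}_T}\,X_0 + \sqrt{1-\overline{\alpha}_T}\,\overline{W}_T$ with $\overline{W}_T\sim\mathcal{N}(0,I_d)$ independent of $X_0$. So $p_{X_T}$ is a Gaussian mixture: conditionally on $X_0=x_0$, $X_T\sim\mathcal{N}(\sqrt{\overline{\alpha}_T}\,x_0,\,(1-\overline{\alpha}_T)I_d)$. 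The key smallness comes from property \eqref{eqn:properties-alpha-proof-alphaT}, namely $\overline{\alpha}_T\le T^{-c_2}$ with $c_2\ge 1000$, together with Assumption~\ref{assump:assumption-data-bounded} giving $\|X_0\|_2\le R=T^{c_R}$ almost surely. Thus each mixture component is a Gaussian whose mean has norm at most $\sqrt{\overline{\alpha}_T}\,R\le T^{c_R-c_2/2}$ — extremely small — and whose covariance $(1-\overline{\alpha}_T)I_d$ is within $T^{-c_2}$ of the identity in operator norm.

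The main step is to compare this mixture to $\mathcal{N}(0,I_d)$. I would use convexity of KL divergence in its first argument:
\[
\mathsf{KL}(p_{X_T}\,\|\,\mathcal{N}(0,I_d)) \;=\; \mathsf{KL}\Big(\textstyle\int p_{X_T\mid X_0}(\cdot\mid x_0)\,\Pdata(\diff x_0)\,\Big\|\,\mathcal{N}(0,I_d)\Big) \;\le\; \int \mathsf{KL}\big(\mathcal{N}(\sqrt{\overline{\alpha}_T}\,x_0,(1-\overline{\alpha}_T)I_d)\,\big\|\,\mathcal{N}(0,I_d)\big)\,\Pdata(\diff x_0).
\]
For two Gaussians with the same "shape" one has the closed form $\mathsf{KL}(\mathcal{N}(\mu,\sigma^2 I_d)\,\|\,\mathcal{N}(0,I_d)) = \tfrac12\big(\|\mu\|_2^2 + d\sigma^2 - d - d\log\sigma^2\big)$. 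Plugging $\mu=\sqrt{\overline{\alpha}_T}\,x_0$ and $\sigma^2 = 1-\overline{\alpha}_T$, the first term is at most $\tfrac12\overline{\alpha}_T R^2 \le \tfrac12 T^{2c_R-c_2}$, and the remaining terms combine to $\tfrac{d}{2}\big((1-\overline{\alpha}_T) - 1 - \log(1-\overline{\alpha}_T)\big) = \tfrac{d}{2}\big(-\overline{\alpha}_T - \log(1-\overline{\alpha}_T)\big) = O(d\,\overline{\alpha}_T^2) = O(d\,T^{-2c_2})$, using $-\log(1-u) = u + O(u^2)$ for small $u$. Both contributions are bounded by $T^{-200}$ once $c_2\ge 1000$ and $T$ is large enough (absorbing the polynomial factors $R^2=T^{2c_R}$ and $d$, which by Assumption~\ref{assump:assumption-data-bounded} and the standing convention grow only polynomially in $T$). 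This gives $\mathsf{KL}(p_{X_T}\,\|\,p_{Y_T})\lesssim T^{-200}$, and Pinsker's inequality $\mathsf{TV}^2\le\tfrac12\mathsf{KL}$ finishes it.

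I do not anticipate a genuine obstacle here — this is a short, essentially mechanical estimate. The only points requiring a little care are: (i) making sure the convexity-of-KL step is applied in the correct argument (KL is convex in the first slot, and $p_{X_T}$ sits in the first slot, so this is fine); and (ii) checking that the polynomial-in-$T$ prefactors ($R^2$, $d$, and any $\log T$ factors hidden in "$d$ grows polynomially") are genuinely swamped by the $T^{-c_2}$ with $c_2\ge 1000$, which is immediate since $200 + 2c_R + (\text{degree of }d) \ll 1000$ for the constant $c_2$ as chosen. Since this lemma is quoted verbatim from \citet[Lemma~3]{li2023towards}, one may alternatively just cite it; I include the sketch above for completeness.
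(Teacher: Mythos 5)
Your argument is correct and is essentially the same Gaussian-mixture computation behind the result the paper simply imports from \citet[Lemma 3]{li2023towards}: condition on $X_0$, use convexity of KL in its first argument (or a direct density-ratio bound) together with the closed-form KL between Gaussians, and let $\overline{\alpha}_T\le T^{-c_2}$ make both the mean term $\overline{\alpha}_T\|x_0\|_2^2$ and the variance term $O(d\,\overline{\alpha}_T^2)$ negligible, finishing with Pinsker. The one justification to restate is your final constant check: since $c_R$ in Assumption~\ref{assump:assumption-data-bounded} is an \emph{arbitrarily large} constant, the inequality ``$200+2c_R\ll 1000$'' is not automatic; rather, the learning-rate constant $c_1$ (hence the exponent $c_2$ in $\overline{\alpha}_T\le T^{-c_2}$) is chosen large enough relative to $c_R$, exactly as in the cited lemma, so the conclusion stands.
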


\paragraph{Score estimation errors.}  
Consider any vector $x\in \mathbb{R}^d$. For any $1 < t\leq T$,  define
\begin{align}
	\varepsilon_{\score, t}(x) \coloneqq \big\|s_t(x) - s_t^{\star}(x) \big\|_2
\qquad\text{and}\qquad
	\varepsilon_{\Jacobi, t}(x) \coloneqq \big\| J_{s_t}(x)  -  J_{s_t^{\star}} (x) \big\|,
	\label{eq:pointwise-epsilon-score-J}
\end{align} 
where we use $J_{s_t}$ and $J_{s_t^{\star}}$ to represent the Jacobian matrices of $s_t(\cdot)$ and $s_t^{\star}(\cdot)$, respectively.
We also have, under Assumptions~\ref{assumption:score-estimate} and \ref{assumption:score-estimate-Jacobi}, 
 that
\begin{subequations}
	\label{eq:score-assumptions-equiv}
\begin{align}
	\frac{1}{T}\sum_{t=1}^{T} \mathop{\mathbb{E}}\limits_{X\sim q_{t}}\big[\varepsilon_{\score,t}(X)\big] & \leq\bigg(\frac{1}{T}\sum_{t=1}^{T}\mathop{\mathbb{E}}\limits_{X\sim q_{t}}\left[\varepsilon_{\score,t}(X)^{2}\right]\bigg)^{1/2}\leq\varepsilon_{\score}, \\
%\end{align}
%
%Also, Assumption~\ref{assumption:score-estimate-Jacobi} implies that
%
%\begin{align}
	\frac{1}{T}\sum_{t=1}^{T}\mathop{\mathbb{E}}\limits_{X\sim q_{t}}\big[\varepsilon_{\Jacobi,t}(X)\big] & \leq\varepsilon_{\Jacobi}.
\end{align}
\end{subequations}

%\subsection{Proofs of auxiliary results}
%
\subsection{Proof of Lemma~\ref{lem:y-t}}
\label{sec:proof-lem:y-t}
To establish this lemma, we observe that
	\begin{align*}
	&p_{X_0 \mymid X_{t+1}}(x_0 \mymid \sqrt{\alpha_{t+1}}y) 
	= \frac{p_{X_0}(x_0)p_{X_{t+1} \mymid X_0}(y \mymid x_0)}{\int_{x_0} p_{X_0}(x_0)p_{X_{t+1} \mymid X_0}(y \mymid x_0) \mathrm{d} x_0} \\
	&= \frac{p_{X_0}(x_0)\exp\Big(-\frac{\|y - \sqrt{\overline{\alpha}_{t}}x_0\|_2^2}{2(\alpha_{t+1}^{-1}-\overline{\alpha}_{t})}\Big)}{\int_{x_0} p_{X_0}(x_0)\exp\Big(-\frac{\|y - \sqrt{\overline{\alpha}_{t}}x_0\|_2^2}{2(\alpha_{t+1}^{-1}-\overline{\alpha}_{t})}\Big) \mathrm{d} x_0} \\
	&= \frac{p_{X_0}(x_0)\exp\Big(-\frac{\|x - \sqrt{\overline{\alpha}_{t}}x_0\|_2^2}{2(1-\overline{\alpha}_{t})}\Big)\Big(1-\frac{(1 - \alpha_{t+1}^{-1})\|x - \sqrt{\overline{\alpha}_{t}}x_0\|_2^2}{2(1-\overline{\alpha}_{t})^2} - \frac{(x - \sqrt{\overline{\alpha}_{t}}x_0)^{\top}(y - x)}{1-\overline{\alpha}_{t}} + O\Big(\theta d\Big(\frac{1-\alpha_{t+1}}{1-\overline{\alpha}_{t}}\Big)^{3/2}\log T\Big)\Big)}
	{\int_{x_0} p_{X_0}(x_0)\exp\Big(-\frac{\|x - \sqrt{\overline{\alpha}_{t}}x_0\|_2^2}{2(1-\overline{\alpha}_{t})}\Big)\Big(1-\frac{(1 - \alpha_{t+1}^{-1})\|x - \sqrt{\overline{\alpha}_{t}}x_0\|_2^2}{2(1-\overline{\alpha}_{t})^2} - \frac{(x - \sqrt{\overline{\alpha}_{t}}x_0)^{\top}(y - x)}{1-\overline{\alpha}_{t}} + O\Big(\theta d\Big(\frac{1-\alpha_{t+1}}{1-\overline{\alpha}_{t}}\Big)^{3/2}\log T\Big)\Big) \mathrm{d} x_0} \\
	&= p_{X_0 \mymid X_t}(x_0 \mymid x)
	\bigg\{1+\frac{(1 - \alpha_{t+1})\big\|x - \sqrt{\overline{\alpha}_{t}}x_0\big\|_2^2}{2(1-\overline{\alpha}_{t})^2} 
	-\frac{(x - \sqrt{\overline{\alpha}_{t}}x_0)^{\top}(y - x)}{1-\overline{\alpha}_{t}} \\
	&\qquad- \int_{x_0} \bigg(\frac{(1 - \alpha_{t+1})\big\|x - \sqrt{\overline{\alpha}_{t}}x_0\big\|_2^2}{2(1-\overline{\alpha}_{t})^2} - \frac{(x - \sqrt{\overline{\alpha}_{t}}x_0)^{\top}(y - x)}{1-\overline{\alpha}_{t}}\bigg)p_{X_0 \mymid X_t}(x_0 \mymid x)  \mathrm{d} x_0 \\
	&\qquad+ O\Big(\theta d\Big(\frac{1-\alpha_{t+1}}{1-\overline{\alpha}_{t}}\Big)^{3/2}\log T\Big)\bigg\}, 
	\end{align*}
	which follows from the following property: 
	\begin{align*}
	&\frac{\|y - \sqrt{\overline{\alpha}_{t}}x_0\|_2^2}{2(\alpha_{t+1}^{-1}-\overline{\alpha}_{t})} 
	- \frac{\|x - \sqrt{\overline{\alpha}_{t}}x_0\|_2^2}{2(1-\overline{\alpha}_{t})} \\
	&~~= \frac{(1 - \alpha_{t+1}^{-1})\|y - \sqrt{\overline{\alpha}_{t}}x_0\|_2^2}{2(1-\overline{\alpha}_{t})(\alpha_{t+1}^{-1}-\overline{\alpha}_{t})} 
	+ \frac{\|y - \sqrt{\overline{\alpha}_{t}}x_0\|_2^2 - \|x - \sqrt{\overline{\alpha}_{t}}x_0\|_2^2}{2(1-\overline{\alpha}_{t})} \\
	&~~= -\frac{(1 - \alpha_{t+1})\|x - \sqrt{\overline{\alpha}_{t}}x_0\|_2^2}{2(1-\overline{\alpha}_{t})^2} 
	+ \frac{2(x - \sqrt{\overline{\alpha}_{t}}x_0)^{\top}(y - x)}{2(1-\overline{\alpha}_{t})} + O\Big(\theta d\Big(\frac{1-\alpha_{t+1}}{1-\overline{\alpha}_{t}}\Big)^{3/2}\log T\Big).
	\end{align*}

%\section{The probability flow ODE with extragradient}

\section{Analysis for the accelerated ODE sampler (proof of \Cref{thm:main})}	
In this section, we present our non-asymptotic analysis for the accelerated ODE sampler.   Considering the total variation distance is always upper bounded by \(1\), we can reasonably assume the following conditions throughout the proof, which are necessary for the claimed result \cref{eq:ratio-ODE} to be non-trivial.
\begin{subequations}
	\label{eq:assumption-T-score-Jacob}
	\begin{align}
		T & \geq \sqrt{C_1} d^3 \log ^3 T, \\
\varepsilon_{\text {score }} & \leq \frac{1}{C_1 \sqrt{d} \log^2  T},  \\
\varepsilon_{\text {Jacobi }} & \leq \frac{1}{C_1 d \log^2  T}. 
	\end{align}
\end{subequations}

\subsection{Main steps of the proof}
%We now present the proof for our main result stated in \Cref{thm:main}. 

% Given that the total variation distance is always bounded above by 1, it suffices to assume 

% \begin{subequations}
% 	\label{eq:assumption-T-score-Jacob}
% \begin{align}
% 	T &\geq  \sqrt{C_1} d^3 \log^3 T \\
% 	\varepsilon_{\mathsf{score}} & \leq\frac{1}{C_{1}\sqrt{d}\log^{2}T} \label{eq:assumption-T-score-Jacob-score}\\
% 	\varepsilon_{\mathsf{Jacobi}} & \leq\frac{1}{C_{1}d\log^{2}T} \label{eq:assumption-T-score-Jacob-Jacobi}
% \end{align}
% \end{subequations}
% %
% throughout the proof; otherwise the claimed result \cref{eq:ratio-ODE} becomes trivial.

\paragraph{Preparation.}
To begin with, let us introduce the following functions that help ease presentation:  
\begin{subequations}
	\label{defn:phit-x}
\begin{align}
	\phi^{\star}_t(x) & \coloneqq x-\frac{1-\alpha_{t+1}}{2}s^{\star}_{t}(x),\\
	\phi_t(x) &\coloneqq x-\frac{1-\alpha_{t+1}}{2}s_{t}(x),\\
	\psi^{\star}_t(x) &\coloneqq x+\frac{1-\alpha_{t}}{2}s^{\star}_{t}(x) + \frac{(1-\alpha_{t})^2}{4(1-\alpha_{t+1})}\Big(s^{\star}_{t}(x) - \sqrt{\alpha_{t+1}}s^{\star}_{t+1}\big(\phi^{\star}_t(x)\big)\Big),\\
	\psi_t(x) &\coloneqq x+\frac{1-\alpha_{t}}{2}s_{t}(x) + \frac{(1-\alpha_{t})^2}{4(1-\alpha_{t+1})}\Big(s_{t}(x) - \sqrt{\alpha_{t+1}}s_{t+1}\big(\phi_t(x)\big)\Big).
	\end{align}
	Armed with these functions, one can equivalently rewrite our update rule \eqref{eqn:ode-extragradient} as follows:
	\begin{align}
		Y_{t-1}=\Psi_t\big(Y_t, \Phi_t(Y_t)\big)=\frac{1}{\sqrt{\alpha_t}} \psi_t\left(Y_t\right). \label{eq:Yt-psi}
	\end{align}
\end{subequations}
Additionally, for any point $y_T \in \mathbb{R}^d$, %(resp. $y_T^{\prime} \in \mathbb{R}^d$ ), 
we introduce the corresponding sequence
\begin{equation}
y_{t-1}=\frac{1}{\sqrt{\alpha_t}} \psi_t\left(y_t\right), \quad y_{t}^{-}=\sqrt{\alpha_{t+1}}\phi_t(y_t)%y_{t-1}^{\prime}=\frac{1}{\sqrt{\alpha_t}} \psi_t\left(y_t^{\prime}\right)
, \quad t=T, T-1, \cdots
	\label{eq:yt-sequence-deterministic}
\end{equation}
%
% \paragraph{Score errors.} For any vector $x \in \mathbb{R}^d$ and any $1<t \leq T$, let us define
% $$
% \varepsilon_{\text {score }, t}(x):=\left\|s_t(x)-s_t^{\star}(x)\right\|_2 \quad \text { and } \quad \varepsilon_{\text {Jacobi }, t}(x):=\left\|J_{s_t}(x)-J_{s_t^{\star}}(x)\right\|,
% $$
%
Furthermore, 
it is also useful to single out the following error-related quantities  
for any point $y_T \in \mathbb{R}^d$ and its associated sequence $\{y_t\}_{t=1}^{T-1}$ and $\{y_t^-\}_{t=2}^{T}$:
\begin{subequations}
	\label{eq:defn-xik-Stk-proof}
\begin{align}
	\xi_t(y_T) &\coloneqq \frac{\log T}{T}\left\{d\big(\varepsilon_{\Jacobi, t}(y_t) + \varepsilon_{\Jacobi, t+1}(y_t^-)\big) 
	+ \sqrt{d\log T}\big(\varepsilon_{\score, t}(y_t) + \varepsilon_{\score, t+1}(y_t^-) \big)\right\}; \\ 
	S_{t}(y_T) &\coloneqq \sum_{1 < k \le t} \xi_k(y_k), \quad \text{for }t\geq 2,
	\qquad \text{ and } \qquad
	S_{1}(y_T) = 0,
\end{align}
\end{subequations}
where we recall the definitions of $\varepsilon_{\Jacobi, t}(\cdot)$ and $\varepsilon_{\score, t}$ in \eqref{eq:pointwise-epsilon-score-J}. 
To understand these quantities, note that if we start from a point $y_T$, 
then 
$\xi_t(y_t)$ reflects a certain weighted score estimation error in the $t$-th step, 
while $S_{t}(y_T)$ aggregates these weighted score estimation errors from the very beginning to the  $t$-th iteration.

In addition, there are several objects that play crucial 
roles in the subsequent analysis, which we single out as follows; here and throughout, we suppress their dependency on $x$ to streamline presentation.  
\begin{subequations}
	\label{eqn:ODE-constant}
	\begin{align}
	A_{t} & \defn\frac{1}{1-\overline{\alpha}_{t}}\int p_{X_{0}\mymid X_{t}}(x_{0}\mymid x)\big\| x-\sqrt{\overline{\alpha}_{t}}x_{0}\big\|_{2}^{2}\mathrm{d}x_{0};\\
	B_{t} & \defn\frac{1}{1-\overline{\alpha}_{t}}\bigg\|\int p_{X_{0}\mymid X_{t}}(x_{0}\mymid x)\big(x-\sqrt{\overline{\alpha}_{t}}x_{0}\big)\mathrm{d}x_{0}\bigg\|_{2}^{2};\\
	C_{t} & \defn\frac{1}{(1-\overline{\alpha}_{t})^{2}}\int p_{X_{0}\mymid X_{t}}(x_{0}\mymid x)\big\| x-\sqrt{\overline{\alpha}_{t}}x_{0}\big\|_{2}^{4}\mathrm{d}x_{0};\\
	D_{t} & \defn\frac{1}{(1-\overline{\alpha}_{t})^{2}}\int p_{X_{0}\mymid X_{t}}(x_{0}\mymid x)\Big(\big\langle g_{t}(x),\,x-\sqrt{\overline{\alpha}_{t}}x_{0}\big\rangle\Big)^{2}\mathrm{d}x_{0};\\
	E_{t} & \defn\frac{1}{(1-\overline{\alpha}_{t})^{2}}\int p_{X_{0}\mymid X_{t}}(x_{0}\mymid x)\big\| x-\sqrt{\overline{\alpha}_{t}}x_{0}\big\|_{2}^{2}\big\langle g_{t}(x),\,x-\sqrt{\overline{\alpha}_{t}}x_{0}\big\rangle\mathrm{d}x_{0}.
	\end{align}
	\end{subequations}
With the above preparation in place, we can now readily proceed to our proof.

\paragraph{Step 1: bounding the density ratios.}
To begin with, we make the observation that
\begin{align}
\frac{p_{Y_{t-1}}\left(y_{t-1}\right)}{p_{X_{t-1}}\left(y_{t-1}\right)} & =\frac{p_{\sqrt{\alpha_t} Y_{t-1}}\left(\sqrt{\alpha_t} y_{t-1}\right)}{p_{\sqrt{\alpha_t} X_{t-1}}\left(\sqrt{\alpha_t} y_{t-1}\right)}\notag \\
% & =\frac{p_{\sqrt{\alpha_t} Y_{t-1}}\left(\sqrt{\alpha_t} y_{t-1}\right)}{p_{Y_t}\left(y_t\right)} \cdot\left(\frac{p_{\sqrt{\alpha_t} X_{t-1}}\left(\sqrt{\alpha_t} y_{t-1}\right)}{p_{X_t}\left(y_t\right)}\right)^{-1} \cdot \frac{p_{Y_t}\left(y_t\right)}{p_{X_t}\left(y_t\right)}\\
& =\frac{p_{\psi_t(Y_{t})}\left(\psi_t(y_{t})\right)}{p_{Y_t}\left(y_t\right)} \cdot\left(\frac{p_{\sqrt{\alpha_t} X_{t-1}}\left(\psi_t(y_{t})\right)}{p_{X_t}\left(y_t\right)}\right)^{-1} \cdot \frac{p_{Y_t}\left(y_t\right)}{p_{X_t}\left(y_t\right)},\label{eq:recursion}
\end{align}
which is an elementary identity that allows one to link 
the target density ratio $\frac{p_{Y_{t-1}}}{p_{X_{t-1}}}$ at the $(t-1)$-th step with the density ratio $\frac{p_{Y_{t}}}{p_{X_{t}}}$ at the $t$-th step. 
This relation reveals the importance of bounding 
$\frac{p_{\psi_t(Y_{t})}\left(\psi_t(y_{t})\right)}{p_{Y_t}\left(y_t\right)}$ and 
$\frac{p_{\sqrt{\alpha_t} X_{t-1}}\left(\psi_t(y_{t})\right)}{p_{X_t}\left(y_t\right)}$, 
towards which we resort to the following lemma.

%motivating us to look at the density changes within adjacent steps in both the forward and the reverse processes (i.e., $p_{X_{t-1}}$ vs.~$p_{X_{t}}$ and $p_{Y_{t-1}}$ vs.~$p_{Y_{t}}$). 
%In light of this expression, we develop a key lemma related to some of these density ratios,  
% which plays a central role in establishing \Cref{thm:main}. 

\begin{lemma} 
	\label{lem:main-ODE}
	For any $x \in \real^d$, suppose that 
	\begin{align}
	\label{eqn:choice-y}
		-\frac{\log {p_{X_t}(x)}}{d\log T} \le c_6
	\end{align}
	for some large enough constant $c_6 \geq 2c_R+c_0$, and that 
	$
		\frac{40 c_1\varepsilon_{\score,t}(x)\log^{\frac{3}{2}}T}{T}\leq\sqrt{d}
	$.
	Then one has 
	\begin{align} 
		\frac{p_{X_{t+1}/\sqrt{\alpha_{t+1}}}\big(\phi_{t}(x)\big)}{p_{X_{t}}(x)}&\geq\exp\bigg(-\Big(\varepsilon_{\score, t}(x)\sqrt{d\log T}+ d\log T\Big)\frac{c_7\log T}{T}\bigg) \label{eq:xt_lb}
	\end{align}
	for some universal constant $c_7>0$. 
	If, in addition, we have 
	$
	C_{10}\frac{d\log^{2}T+(\varepsilon_{\score, t}(x) + \varepsilon_{\score, t+1}(\Phi_t(x)))\sqrt{d\log^{3}T}}{T}\leq1
	$
	for some large enough constant $C_{10}>0$, then it holds that 
	\begin{subequations}
	\label{eq:ODE}
	\begin{align} 
	&\frac{p_{\sqrt{\alpha_{t}}X_{t-1}}(\psi_{t}(x))}{p_{X_{t}}(x)} =1 + \frac{(1-\alpha_{t})(d+B_t-A_t)}{2(1-\overline{\alpha}_{t})} \notag\\
	&\qquad+ \frac{(1-\alpha_{t})^2}{8(1-\overline{\alpha}_{t})^2}\big[d(d+2) + (4+2d)(B_t-A_t) - B_t^2 + C_t + 2D_t - 3E_t + A_tB_t\big]
	 \notag\\
	 & \qquad
	+O\bigg(\frac{d^{3}\log^{6}T}{T^3} + \frac{\sqrt{d\log^3 T}}{T}\Big(\varepsilon_{\score, t}(x) + \varepsilon_{\score, t+1}\big(\Phi_t(x)\big)\Big)\bigg).
		\label{eq:xt}
	\end{align}

	Moreover, for any random vector $Y$, one has 
	\begin{align} 
	 & \frac{p_{\psi_{t}(Y)}(\psi_{t}(x))}{p_{Y}(x)} =1 + \frac{(1-\alpha_{t})(d+B_t-A_t)}{2(1-\overline{\alpha}_{t})} \notag\\
	&\qquad+ \frac{(1-\alpha_{t})^2}{8(1-\overline{\alpha}_{t})^2}\big[d(d+2) + (4+2d)(B_t-A_t) - B_t^2 + C_t + 2D_t - 3E_t + A_tB_t\big]
	 \notag\\
	 & \qquad
	+O\bigg(\frac{d^{6}\log^{6}T}{T^3} + \frac{\sqrt{d\log^3 T}}{T}\varepsilon_{\score, t}(x) + \frac{d\log T}{T}\Big(\varepsilon_{\Jacobi, t}(x) + \varepsilon_{\Jacobi, t+1}\big(\Phi_t(x)\big)\Big)\bigg),
	\label{eq:yt}
	\end{align}
	\end{subequations}
	provided that 
	$
		C_{11}\frac{d^{2}\log^{2}T+\varepsilon_{\score, t}(x)\sqrt{d\log^3 T} + d(\varepsilon_{\Jacobi, t}(x) + \varepsilon_{\Jacobi, t+1}(\Phi_t(x)))\log T}{T}  \leq 1
	$
	holds for some large enough constant $C_{11}>0$. 
	
	Additionally, if $\frac{d^{2}\log^{2}T+\sqrt{d\log T}\varepsilon_{\score, t}(x)+d\varepsilon_{\Jacobi,t}(x)\log T}{T} \lesssim 1$, 
	then we have
	\begin{align}
	\frac{p_{\Phi_{t}(X_{t})}(\Phi_{t}(x))}{p_{X_{t+1}}(\Phi_{t}(x))} = 1+ O\bigg(\frac{d^2\log^4T}{T^2} + \frac{d^6\log^6T}{T^3} + \frac{\sqrt{d\log T}\varepsilon_{\score, t}(x)+d\varepsilon_{\Jacobi,t}(x)\log T}{T}\bigg). \label{eq:phit}
	\end{align}
	\end{lemma}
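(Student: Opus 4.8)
The plan is to establish Lemma~\ref{lem:main-ODE} by a careful Taylor expansion of the relevant Gaussian mixture densities in powers of the small parameter $\frac{1-\alpha_t}{1-\overline{\alpha}_t}\asymp\frac{\log T}{T}$, tracking terms up to second order and controlling the remainder. First I would reduce each density ratio to an integral against $p_{X_0\mid X_t}(\cdot\mid x)$ using the Gaussian transition kernel $p_{X_t\mid X_0}$, exactly as in the proof of Lemma~\ref{lem:y-t}: since $X_{t-1}=\sqrt{\alpha_t}^{-1}(X_t-\sqrt{\beta_t}W_t)$ conditionally, the density $p_{\sqrt{\alpha_t}X_{t-1}}$ evaluated at $\psi_t(x)$ can be written as a Gaussian convolution, and dividing by $p_{X_t}(x)=\int p_{X_0}(x_0)p_{X_t\mid X_0}(x\mid x_0)\,\mathrm{d}x_0$ turns the ratio into $\mathbb{E}\big[\exp(\text{quadratic in }x-\sqrt{\overline{\alpha}_t}x_0)\big]$ under the posterior $p_{X_0\mid X_t}(\cdot\mid x)$. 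The key move for \eqref{eq:xt} is that $\psi_t(x)-\frac{1}{\sqrt{\alpha_t}}x = \frac{1-\alpha_t}{2\sqrt{\alpha_t}}s_t^\star(x)+(\text{higher order})$, and $s_t^\star(x)=-\frac{1}{1-\overline{\alpha}_t}g_t(x)$ with $g_t(x)=\int(x-\sqrt{\overline{\alpha}_t}x_0)p_{X_0\mid X_t}$, so the exponent, after completing the square, becomes a polynomial in the centered quantities $x-\sqrt{\overline{\alpha}_t}x_0$ whose moments are exactly the quantities $A_t,B_t,C_t,D_t,E_t$ defined in \eqref{eqn:ODE-constant}.

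Concretely, for \eqref{eq:xt} I would expand $\exp(-\frac{\|\psi_t(x)-\sqrt{\overline{\alpha}_{t-1}}x_0\|^2}{2(1-\overline{\alpha}_{t-1})}+\frac{\|x-\sqrt{\overline{\alpha}_t}x_0\|^2}{2(1-\overline{\alpha}_t)})$ up to $O((\frac{1-\alpha_t}{1-\overline{\alpha}_t})^3)$; collecting the zeroth-, first-, and second-order pieces under $\mathbb{E}[\cdot\mid X_t=x]$ gives $1$, then the first-order term $\frac{(1-\alpha_t)(d+B_t-A_t)}{2(1-\overline{\alpha}_t)}$ (the $d$ from the Jacobian/normalization of the Gaussian, i.e.\ the $(\frac{1-\overline{\alpha}_t}{1-\overline{\alpha}_{t-1}})^{d/2}$ factor expanded via \eqref{eq:expansion-ratio-1-alpha}), and then the second-order coefficient $d(d+2)+(4+2d)(B_t-A_t)-B_t^2+C_t+2D_t-3E_t+A_tB_t$. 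The moment bounds in Lemma~\ref{lem:x0} (equations \eqref{eq:E-xt-X0}--\eqref{eq:E4-xt-X0}) are what let me bound $A_t,C_t,D_t,E_t\lesssim d\log T$ (up to the $\theta(y)$ factor, which is $O(1)$ under the hypothesis \eqref{eqn:choice-y}), hence controlling the $O(d^3\log^6 T/T^3)$ remainder; the score-error terms $\varepsilon_{\score,t}(x)$ and $\varepsilon_{\score,t+1}(\Phi_t(x))$ enter because $\psi_t$ and $\phi_t$ use the estimated scores $s_t,s_{t+1}$ rather than $s_t^\star,s_{t+1}^\star$, and the discrepancy $\psi_t(x)-\psi_t^\star(x)$ is linear in these pointwise errors with prefactor $O(\frac{1-\alpha_t}{1-\overline{\alpha}_t})=O(\log T/T)$, possibly amplified by a $\sqrt{d\log T}$ from the quadratic-in-noise exponent.

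For the inequality \eqref{eq:xt_lb} I would argue more crudely: the ratio $p_{X_{t+1}/\sqrt{\alpha_{t+1}}}(\phi_t(x))/p_{X_t}(x)$ is again a posterior expectation of an exponential whose exponent is $O(\frac{1-\alpha_{t+1}}{1-\overline{\alpha}_t}(\|x-\sqrt{\overline{\alpha}_t}x_0\|^2/(1-\overline{\alpha}_t)+\ldots))$; bounding this below via Jensen and the second-moment estimate \eqref{eq:E2-xt-X0}, together with the shift $\phi_t(x)-x=-\frac{1-\alpha_{t+1}}{2}s_t(x)$ whose $s_t^\star$ part has norm $\lesssim\sqrt{d\log T/(1-\overline{\alpha}_t)}$ and whose error part contributes $\varepsilon_{\score,t}(x)$, gives the stated $\exp(-(\varepsilon_{\score,t}(x)\sqrt{d\log T}+d\log T)\frac{c_7\log T}{T})$ lower bound. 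The identity \eqref{eq:yt} is handled by the same expansion but now pushing a general random vector $Y$ through $\psi_t$ instead of $X_t$ through $\sqrt{\alpha_t}X_{t-1}$; the difference is that we must compare $p_{\psi_t(Y)}(\psi_t(x))/p_Y(x)$ with the Jacobian-determinant factor $|\det J_{\psi_t}(x)|^{-1}$, which is where the Jacobian errors $\varepsilon_{\Jacobi,t}(x)$ and $\varepsilon_{\Jacobi,t+1}(\Phi_t(x))$ enter (through $J_{\psi_t}-J_{\psi_t^\star}$, which involves $J_{s_t}-J_{s_t^\star}$ and $J_{s_{t+1}}-J_{s_{t+1}^\star}$ composed with $\phi_t$), and also why the remainder degrades from $d^3\log^6 T/T^3$ to $d^6\log^6 T/T^3$ — the chain-rule composition $s_{t+1}\circ\phi_t$ brings in an extra factor of the Jacobian whose operator norm we can only bound by a power of $d$. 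Finally \eqref{eq:phit} follows by combining \eqref{eq:xt_lb}-type lower bounds with matching upper bounds of the same flavor, since $\Phi_t(x)=\sqrt{\alpha_{t+1}}\phi_t(x)$ and $p_{\Phi_t(X_t)}(\Phi_t(x))/p_{X_{t+1}}(\Phi_t(x))$ is a ratio that telescopes once we insert $p_{X_t}(x)$, yielding $1+O(\cdot)$ after the first- and second-order terms cancel up to the claimed precision.

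The main obstacle I anticipate is the bookkeeping of the second-order term in \eqref{eq:xt}/\eqref{eq:yt}: there are many competing contributions — the expansion of the Gaussian normalizing constant $(\frac{1-\overline{\alpha}_t}{1-\overline{\alpha}_{t-1}})^{d/2}$ to second order (giving the $d(d+2)$ and cross terms via \eqref{eq:expansion-ratio-1-alpha}), the quadratic expansion of the exponential in the shift, the cross term between the linear shift and the $\log$-density perturbation, and the reuse of the refined conditional-density expansion (Lemma~\ref{lem:y-t}) needed to handle $s_{t+1}$ evaluated at $\phi_t(x)$ rather than at $x$ — and getting the exact algebraic combination $d(d+2)+(4+2d)(B_t-A_t)-B_t^2+C_t+2D_t-3E_t+A_tB_t$ requires scrupulous tracking of every $O((1-\alpha_t)^2/(1-\overline{\alpha}_t)^2)$ contribution without dropping any. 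The second subtlety is ensuring all the ``with probability $1-\exp(-\cdot)$'' tail events from Lemma~\ref{lem:x0} are folded into the $O(\cdot)$ remainders uniformly over the region $\{-\log p_{X_t}(x)\le c_6 d\log T\}$, so that the deterministic-looking bounds in the lemma statement are actually valid pointwise there.
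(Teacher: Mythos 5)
Your proposal follows essentially the same route as the paper's proof: the Gaussian-posterior representation of each density ratio followed by a second-order Taylor expansion whose conditional moments produce exactly $A_t,\dots,E_t$, the typical-set and moment bounds of Lemma~\ref{lem:x0}, the refined conditional-density expansion of Lemma~\ref{lem:y-t} to handle the score evaluated at the shifted point $\Phi_t(x)$, the change-of-variables determinant expansion (where the Jacobian errors and the $d^6$ remainder arise) for \eqref{eq:yt}, and the splitting of \eqref{eq:phit} through $p_{X_t}(x)$ with cancellation of the leading terms. The only deviations are minor (e.g., a Jensen-type lower bound in place of the paper's pointwise bound over the typical set for \eqref{eq:xt_lb}), and the second-order bookkeeping you flag as the main obstacle is indeed the bulk of the paper's computation, but your plan matches it.
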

The proof of this lemma is postponed to Section~\ref{sec:proof-lemma-main-ODE}. 
Notheworthily, the main terms in \eqref{eq:xt} and \eqref{eq:yt} coincide, a crucial fact that allows one to focus on the lower-order term later on. 
Moreover, the relation \eqref{eq:phit} captures the effect of performing one iteration of the probability flow ODE sampler (as captured by the mapping $\Phi_t(\cdot)$).

\paragraph{Step 2: decomposing the TV distance of interest.}
We now move on to look at the TV distance of interest. Akin to \citet{li2023towards}, 
we first single out the following set: 
\begin{align}
	\mathcal{E} &\coloneqq \Big\{y : q_{1}(y) > \max\big\{ p_{1}(y),\, \exp\big(- c_{6} d\log T \big) \big\} \Big\},
\end{align}
where $c_{6}>0$ is some large enough numerical constant introduced in \Cref{lem:main-ODE}. 
The points in $\mathcal{E}$ satisfy two properties: (i) $q_1(y)>p_1(y)$, and (ii) $q_1(y)$ is not too small, so that $y$ falls within a more normal range (w.r.t.~$p_{X_1}(\cdot)$).

Following similar calculations as in \citet[Step 2 of Section 5.2]{li2023towards} and invoking the properties of the forward process in \Cref{lem:x0}, we can demonstrate that
\begin{align}
\mathsf{TV}\big(q_{1},p_{1}\big) 
	&\le \mathop{\mathbb{E}}\limits_{Y_{1}\sim p_{1}}\bigg[\Big(\frac{q_{1}(Y_{1})}{p_{1}(Y_{1})}-1\Big)\ind\left\{ Y_{1}\in\mathcal{E}\right\} \bigg] + \exp\big(-c_{6}d\log T\big),
	\label{eqn:ode-tv-10}
\end{align}
and hence it suffices to focus attention on what happens on the set $\mathcal{E}$. 
To proceed, for any point $y_T$, we define
\begin{equation}
	\tau(y_T)
	\coloneqq
	\max\Big\{2\le t\le T+1:S_{t-1}\big(y_{T}\big)\leq c_{14}\text{ and }-\log q_k(y_k)\leq c_{\tau} d\log T,\text{ for all }k < t\Big\}\label{eq:defn-tao-i}
\end{equation}
for some universal constant $c_{\tau}>0$. We shall often abbreviate $\tau(y_T)$ as $\tau$ as long as it is clear from the context. 
Taking $\{y_t\}_{t=1}^{T-1}$ to be the associated sequence of our deterministic sampler initialized at $y_T$ (cf.~\eqref{eq:yt-sequence-deterministic}), 
we can further define 
\begin{subequations}
	\label{eq:defn-I2-I3-I4-ode}
\begin{align}
\mathcal{I}_0 &\coloneqq \Big\{y_T : \tau(y_T) = T+1\Big\}, \label{eq:defn-I2-I3-I4-ode-I0} \\
	\mathcal{I}_1 &\coloneqq \Big\{y_T : S_{\tau-1}(y_{T})\leq c_{14}\text{ and }-\log q_{\tau}(y_{\tau}) > c_{\tau} d\log T\Big\}, \label{eq:defn-I2-I3-I4-ode-I1}\\
\mathcal{I}_{2} & \coloneqq\Big\{ y_T : c_{14}\leq S_{\tau}\big(y_{T}\big)\leq2c_{14}\Big\} \cap \mathcal{I}_1^{\mathrm{c}},
	\label{eq:defn-I2-I3-I4-ode-I2}\\
\mathcal{I}_{3} & \coloneqq\bigg\{ y_T : S_{\tau-1}\big(y_{T}\big)\leq c_{14},\xi_{\tau}\big(y_T\big)\geq c_{14},\frac{q_{\tau-1}(y_{\tau-1})}{p_{\tau-1}(y_{\tau-1})}\leq\frac{8q_{\tau}(y_{\tau})}{p_{\tau}(y_{\tau})}\bigg\} \cap \mathcal{I}_1^{\mathrm{c}},
	\label{eq:defn-I2-I3-I4-ode-I3}\\
	\mathcal{I}_{4} & \coloneqq\bigg\{ y_T : S_{\tau-1}\big(y_{T}\big)\leq c_{14},\xi_{\tau}\big(y_T\big)\geq c_{14},\frac{q_{\tau-1}(y_{\tau-1})}{p_{\tau-1}(y_{\tau-1})}>\frac{8q_{\tau}(y_{\tau})}{p_{\tau}(y_{\tau})}\bigg\} \cap \mathcal{I}_1^{\mathrm{c}}.
\label{eq:defn-I2-I3-I4-ode-I4}
\end{align}
\end{subequations}
As an immediate consequence of the above definitions, one has
$$\mathcal{I}_{0} \cup\mathcal{I}_{1} \cup \mathcal{I}_{2} \cup \mathcal{I}_{3} \cup \mathcal{I}_{4} = \mathbb{R}^d.$$ 
In the following, we shall look at each of these sets separately, and combine the respective bounds to control the first term  on the right-hand side of \eqref{eqn:ode-tv-10}.

\paragraph{Step 3: coping with the set $\mathcal{I}_0$.} 
In order to obtain a useful bound when restricting attention to $\mathcal{I}_0$ (cf.~\eqref{eq:defn-I2-I3-I4-ode-I0}), 
we resort to the following key lemma, whose proof is provided in Section~\ref{sec:proof-lem:density-ratio-tau}. 
\begin{lemma}
	\label{lem:density-ratio-tau}
	Consider any $y_T$, along with the deterministic sequences $\{y_{T-1},\cdots,y_1\}$ and $\{y_{T}^{-},\cdots,y_2^{-}\}$. 
	Set $\tau=\tau(y_T)$  (cf.~\eqref{eq:defn-tao-i}). Then one has 
\begin{subequations}
	\label{eq:pt-qt-equiv-ODE-St}
\begin{align}
	\frac{q_{1}(y_{1})}{p_{1}(y_{1})}  = &\left\{ 1+O\Bigg(\frac{d^{6}\log^{6}T}{T^{2}}+S_{\tau-1}(y_{\tau-1})\Bigg)\right\} \frac{q_{\tau-1}(y_{\tau-1})}{p_{\tau-1}(y_{\tau-1})},	
	\label{eq:pt-qt-equiv-ODE-St-taui} \\
	&\text{and}
	\qquad \frac{q_{k}(y_{k})}{2p_{k}(y_{k})} \leq \frac{q_{1}(y_{1})}{p_{1}(y_{1})} \leq 2 \frac{q_{k}(y_{k})}{p_{k}(y_{k})}, \qquad \forall k < \tau. 
	\label{eq:pt-qt-equiv-ODE-St-k}
\end{align}
\end{subequations}
\end{lemma}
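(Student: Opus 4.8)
The plan is to iterate the one-step density-ratio identity \eqref{eq:recursion} and apply \Cref{lem:main-ODE} at each step, crucially using that the explicit (non-remainder) parts of \eqref{eq:xt} and \eqref{eq:yt} coincide. Write $r_t \defn q_t(y_t)/p_t(y_t) = p_{X_t}(y_t)/p_{Y_t}(y_t)$. Since $\sqrt{\alpha_t}\,y_{t-1} = \psi_t(y_t)$ and $\sqrt{\alpha_t}\,Y_{t-1} = \psi_t(Y_t)$ by \eqref{eq:Yt-psi} and \eqref{eq:yt-sequence-deterministic}, taking reciprocals in \eqref{eq:recursion} gives the recursion $r_{t-1} = \rho_t\, r_t$ with
\[
\rho_t \defn \frac{p_{Y_t}(y_t)}{p_{\psi_t(Y_t)}\big(\psi_t(y_t)\big)}\cdot\frac{p_{\sqrt{\alpha_t}X_{t-1}}\big(\psi_t(y_t)\big)}{p_{X_t}(y_t)}.
\]
Chaining yields $r_1/r_{\tau-1} = \prod_{t=2}^{\tau-1}\rho_t$ and, for every $k<\tau$, $r_1/r_k = \prod_{t=2}^{k}\rho_t$ (empty products when $\tau=2$, in which case both claims are trivial with $S_1=0$). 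Thus everything reduces to controlling these finite products; here I also use that $S_{\tau-1}(y_{\tau-1})=S_{\tau-1}(y_T)$, since the sequence emanating from $y_{\tau-1}$ is the tail of the one from $y_T$.

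Next I would bound each $\rho_t$, $2\le t\le\tau-1$. By the definition \eqref{eq:defn-tao-i} of $\tau$, every such $t$ satisfies $-\log q_t(y_t)\le c_\tau d\log T$ and $\xi_t(y_t)\le S_{\tau-1}(y_T)\le c_{14}$; recalling $y_t^- = \Phi_t(y_t)$, the latter controls each pointwise score/Jacobian error appearing in \Cref{lem:main-ODE} (at both $y_t$ and $\Phi_t(y_t)$), which together with \eqref{eq:assumption-T-score-Jacob} and a consistent choice of numerical constants verifies all regularity hypotheses of that lemma at $x=y_t$. Applying \eqref{eq:xt} to the second factor of $\rho_t$ and \eqref{eq:yt} to the first, their leading parts are identical; call the common main term $M_t$, which is $O\big(d\log^2 T/T + d^2\log^4 T/T^2\big)=o(1)$ under \eqref{eq:assumption-T-score-Jacob} (bounding $A_t,B_t,C_t,D_t,E_t$ via \Cref{lem:x0} and using the learning-rate estimates \eqref{eqn:properties-alpha-proof}). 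Hence
\[
\rho_t = \frac{1 + M_t + O(E_t^X)}{1 + M_t + O(E_t^Y)} = 1 + O\big(E_t^X + E_t^Y\big),\qquad E_t^X + E_t^Y \ \lesssim\ \frac{d^6\log^6 T}{T^3} + \xi_t(y_t),
\]
where $E_t^X$ and $E_t^Y$ are the remainders of \eqref{eq:xt} and \eqref{eq:yt} and the last bound rewrites $\sqrt{d\log^3 T}/T = (\log T/T)\sqrt{d\log T}$ to match the definition \eqref{eq:defn-xik-Stk-proof} of $\xi_t$.

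Finally I would accumulate the errors. Since $\tau-2<T$ and $\sum_{1<t\le m}\xi_t(y_t)=S_m(y_T)$, summing the last display gives $\sum_{t=2}^{m}(E_t^X+E_t^Y)\lesssim d^6\log^6 T/T^2 + S_m(y_T)$ for every $m\le\tau-1$, which is $O(1)$ because $S_m(y_T)\le S_{\tau-1}(y_T)\le c_{14}$ and $d^6\log^6 T/T^2\le 1/C_1$ by \eqref{eq:assumption-T-score-Jacob}. The elementary inequality $\big|\prod_t(1+a_t)-1\big|\le e^{\sum|a_t|}-1$ then turns the product into $1+O\big(\sum_{t=2}^m(E_t^X+E_t^Y)\big)$; taking $m=\tau-1$ gives \eqref{eq:pt-qt-equiv-ODE-St-taui}, and for $m=k<\tau$ the two-sided bound $e^{-\sum|a_t|}\le\prod_t(1+a_t)\le e^{\sum|a_t|}$ with $\sum|a_t|\le\ln 2$ (guaranteed once $C_1$ is large and $c_{14}$ small) gives $\tfrac12\le r_1/r_k\le 2$, i.e.\ \eqref{eq:pt-qt-equiv-ODE-St-k}.

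The main obstacle is not the telescoping, which is routine once \Cref{lem:main-ODE} is in hand, but rather the bookkeeping in the middle step: verifying that the deterministic trajectory $\{y_t\}_{t<\tau}$ never leaves the region where \Cref{lem:main-ODE} applies --- i.e.\ translating the two defining inequalities of $\tau$ into the several pointwise smallness conditions the lemma imposes at $y_t$ and $\Phi_t(y_t)$ --- and confirming that all the numerical constants ($c_6,c_\tau,c_{14},C_{10},C_{11},C_1,\dots$) can be chosen in a mutually consistent way.
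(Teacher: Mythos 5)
Your proposal is correct and follows essentially the same route as the paper: iterate the one-step identity \eqref{eq:recursion}, apply the two matching expansions \eqref{eq:xt} and \eqref{eq:yt} of Lemma~\ref{lem:main-ODE} at $x=y_t$ (with $A_t,\dots,E_t$ bounded via Lemma~\ref{lem:x0} and the definition of $\tau$ supplying the regularity hypotheses), so that the main terms cancel and each one-step ratio is $1+O(d^6\log^6T/T^3+\xi_t(y_t))$, and then telescope, using $\sum_t\xi_t = S$ bounded by $c_{14}$ to control the accumulated product for both \eqref{eq:pt-qt-equiv-ODE-St-taui} and the two-sided bound \eqref{eq:pt-qt-equiv-ODE-St-k}. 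The only differences are presentational (your explicit $e^{\sum|a_t|}$ product bound and the constant-consistency caveat), which match the paper's implicit treatment.
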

With this lemma in mind, we are ready to cope with the set $\mathcal{I}_0$.  Taking $\tau(y_T)=T+1$  in Lemma~\ref{lem:density-ratio-tau} yields
\begin{align}
	&  \mathop{\mathbb{E}}_{Y_{T}\sim p_{T}}\bigg[\Big(\frac{q_{1}(Y_{1})}{p_{1}(Y_{1})}-1\Big)\ind\left\{ Y_{1}\in\mathcal{E},Y_{T}\in\mathcal{I}_0 \right\} \bigg]\nonumber\\
	& \overset{\text{(i)}}{=} \mathop{\mathbb{E}}_{Y_{T}\sim p_{T}}\left[\left(\left\{ 1+O\Bigg(\frac{d^{6}\log^{6}T}{T^{2}}+S_{T}(y_{T})\Bigg)\right\} \frac{q_{T}(Y_{T})}{p_{T}(Y_{T})}-1\right)\ind\left\{ Y_{1}\in\mathcal{E},Y_{T}\in\mathcal{I}_0 \right\} \right]\nonumber\\
	& = {\displaystyle \int}\left\{ \left(1+O\Bigg(\frac{d^{6}\log^{6}T}{T^{2}}+S_{T}(y_{T})\Bigg)\right)q_{T}(y_{T})-p_{T}(y_{T})\right\} \ind\left\{ y_{1}\in\mathcal{E},y_{T}\in\mathcal{I}_0 \right\} \mathrm{d}y_{T}\nonumber\\
	& \overset{\text{(ii)}}{\leq} 
	   {\displaystyle \int}\big|q_{T}(y_{T})-p_{T}(y_{T})\big|\mathrm{d}y_{T}
	   + O\Bigg(\frac{d^{6}\log^{6}T}{T^{2}}\Bigg){\displaystyle \int}q_{T}(y_{T})\mathrm{d}y_{T}+O\left(\sqrt{d\log^{3}T}\varepsilon_{\score}+(d\log T)\varepsilon_{\Jacobi}\right)\nonumber\\
	& \overset{\text{(iii)}}{\lesssim} \frac{d^{6}\log^{6}T}{T^{2}}+\sqrt{d\log^{3}T}\varepsilon_{\score}+(d\log T)\varepsilon_{\Jacobi}. 
	   \label{eq:I0-expectation-UB-ode}
   \end{align}
   Here, (i) invokes Lemma~\ref{lem:density-ratio-tau}, whereas
   (iii) holds since $\mathsf{TV}(p_T,q_T)\lesssim T^{-100}$ (according to Lemma~\ref{lem:main-ODE}).   
   To see why (ii) is valid, it suffices to make the following observation: %\yh{do we also need the condition for $\varepsilon_{\Jacobi,t+1}(Y_{t}^-)$ }
   \begin{align*}
 & {\displaystyle \int}S_{T}(y_{T})q_{T}(y_{T})\ind\left\{ y_{1}\in\mathcal{E},y_{T}\in\mathcal{I}_{0}\right\} \mathrm{d}y_{T}=\\
 & \quad=\frac{\log T}{T}\sum_{t=1}^{T}{\displaystyle \int}\left\{ d\big(\varepsilon_{\Jacobi,t}(y_{t})+\varepsilon_{\Jacobi,t+1}(y_{t}^{-})\big)+\sqrt{d\log T}\big(\varepsilon_{\score,t}(y_{t})+\varepsilon_{\score,t+1}(y_{t}^{-})\big)\right\} \\
 & \qquad\qquad\qquad\cdot q_{T}(y_{T})\ind\left\{ y_{1}\in\mathcal{E},y_{T}\in\mathcal{I}_{0}\right\} \mathrm{d}y_{T}\\
 & \quad\stackrel{\text{(iv)}}{\leq}\frac{4\log T}{T}\sum_{t=1}^{T}{\displaystyle \int}\left\{ d\big(\varepsilon_{\Jacobi,t}(y_{t})+\varepsilon_{\Jacobi,t+1}(y_{t}^{-})\big)+\sqrt{d\log T}\big(\varepsilon_{\score,t}(y_{t})+\varepsilon_{\score,t+1}(y_{t}^{-})\big)\right\} \\
 & \qquad\qquad\qquad\qquad\cdot\frac{q_{t}(y_{t})}{p_{t}(y_{t})}p_{T}(y_{T})\ind\left\{ y_{1}\in\mathcal{E},y_{T}\in\mathcal{I}_{0}\right\} \mathrm{d}y_{T}\\
 & \quad\leq\frac{4\log T}{T}\sum_{t=1}^{T}\mathop{\mathbb{E}}\limits _{Y_{T}\sim p_{T}}\bigg[\Big\{ d\big(\varepsilon_{\Jacobi,t}(Y_{t})+\varepsilon_{\Jacobi,t+1}(Y_{t}^{-})\big)+\sqrt{d\log T}\big(\varepsilon_{\score,t}(Y_{t})+\varepsilon_{\score,t+1}(Y_{t}^{-})\big)\Big\}\\
 & \qquad\qquad\qquad\qquad\left.\cdot\frac{q_{t}(Y_{t})}{p_{t}(Y_{t})}\ind\left\{ \sqrt{d\log T}\varepsilon_{\score,t}(Y_{t})+d\varepsilon_{\Jacobi,t}(Y_{t})\log T\lesssim T\right\} \right]\\
 & \quad=\frac{4\log T}{T}\sum_{t=1}^{T}\mathop{\mathbb{E}}\limits _{Y_{t}\sim p_{t}}\bigg[\Big\{ d\big(\varepsilon_{\Jacobi,t}(Y_{t})+\varepsilon_{\Jacobi,t+1}(Y_{t}^{-})\big)+\sqrt{d\log T}\big(\varepsilon_{\score,t}(Y_{t})+\varepsilon_{\score,t+1}(Y_{t}^{-})\big)\Big\}\\
 & \qquad\qquad\qquad\qquad\left.\cdot\frac{q_{t}(Y_{t})}{p_{t}(Y_{t})}\ind\left\{ \sqrt{d\log T}\varepsilon_{\score,t}(Y_{t})+d\varepsilon_{\Jacobi,t}(Y_{t})\log T\lesssim T\right\} \right]\\
 & \quad=\frac{4\log T}{T}\sum_{t=1}^{T}\mathop{\mathbb{E}}\limits _{Y_{t}\sim q_{t}}\bigg[\Big\{ d\big(\varepsilon_{\Jacobi,t}(Y_{t})+\varepsilon_{\Jacobi,t+1}(Y_{t}^{-})\big)+\sqrt{d\log T}\big(\varepsilon_{\score,t}(Y_{t})+\varepsilon_{\score,t+1}(Y_{t}^{-})\big)\Big\}\\
 & \qquad\qquad\qquad\qquad\cdot\ind\left\{ \sqrt{d\log T}\varepsilon_{\score,t}(Y_{t})+d\varepsilon_{\Jacobi,t}(Y_{t})\log T\lesssim T\right\} \bigg]\\
 & \quad\stackrel{\text{(v)}}{\lesssim}\frac{\log T}{T}\sum_{t=1}^{T}\mathop{\mathbb{E}}\limits _{Y_{t}\sim q_{t}}\left[d\varepsilon_{\Jacobi,t}(Y_{t})+\sqrt{d\log T}\varepsilon_{\score,t}(Y_{t})\right]\\
 & \quad\stackrel{\text{(vi)}}{\lesssim}(d\log T)\varepsilon_{\Jacobi}+\sqrt{d\log^{3}T}\varepsilon_{\score}. 	
   \end{align*}
   Here, (iv) follows from \Cref{lem:density-ratio-tau}, 
   while (vi) comes from \eqref{eq:score-assumptions-equiv}. To understand why (v) is valid, let us denote the probability density of $\Phi(X_t)$ by $q_t^{-}$ and, by referring to \eqref{eq:phit}, we derive that
   %To see why (v) is valid, let us represent the probability density of $\Phi(X_t)$ by $q_t^{-}$,  %let us recall,  \eqref{eq:phit},    
   %    %invoking \eqref{eq:phit}, we can obtain 
      \begin{subequations}\label{eq:yminus-control}
	\begin{align}
		&\mathop{\mathbb{E}}\limits_{Y_{t}\sim q_{t}}\left[\varepsilon_{\score,t+1}(Y_{t}^-)\ind\Big\{\sqrt{d\log T}\varepsilon_{\score, t}(Y_{t})+d\varepsilon_{\Jacobi,t}(Y_{t})\log T \lesssim T\Big \}\right]\notag\\
		&~~~~=\mathop{\mathbb{E}}\limits_{Y_{t}^{-}\sim q^{-}_{t}}\left[\varepsilon_{\score,t+1}(Y_{t}^-)\ind\Big\{\sqrt{d\log T}\varepsilon_{\score, t}(Y_{t})+d\varepsilon_{\Jacobi,t}(Y_{t})\log T \lesssim T\Big \}\right]\notag\\
		% &~~~~= \int \limits_{Y_{t}\sim q_{t}}\varepsilon_{\score,t+1}(y_{t}^-)\ind\Big\{\sqrt{d\log T}\varepsilon_{\score, t}(y_{t})+d\varepsilon_{\Jacobi,t}(y_{t})\log T \lesssim T\Big \}  \mathrm{d} y_T\\
		&~~~~\lesssim \mathop{\mathbb{E}}\limits_{Y_{t}^-\sim q_{t+1}}\left[\varepsilon_{\score,t+1}(Y_{t}^-)\right]. 
		\end{align}
		Similarly, we have 
			\begin{align}
		\mathop{\mathbb{E}}\limits_{Y_{t}\sim q_{t}}\left[\varepsilon_{\Jacobi,t+1}(Y_{t}^-)\ind\Big\{\sqrt{d\log T}\varepsilon_{\score, t}(Y_{t})+d\varepsilon_{\Jacobi,t}(Y_{t})\log T \lesssim T\Big\}\right]
		&\lesssim \mathop{\mathbb{E}}\limits_{Y_{t}^-\sim q_{t+1}}\left[\varepsilon_{\Jacobi,t+1}(Y_{t}^-)\right]. 
		\end{align}
   \end{subequations}
   %
%    \begin{subequations}\label{eq:yminus-control}
% 	\begin{align}
% 		\mathop{\mathbb{E}}\limits_{Y_{t}\sim q_{t}}\left[\varepsilon_{\score,t+1}(Y_{t}^-)\ind\Big\{\sqrt{d\log T}\varepsilon_{\score, t}(Y_{t})+d\varepsilon_{\Jacobi,t}(Y_{t})\log T \lesssim T\Big \}\right]
% 		&\lesssim \mathop{\mathbb{E}}\limits_{Y_{t}^-\sim q_{t+1}}\left[\varepsilon_{\score,t+1}(Y_{t}^-)\right],\\
% 		\mathop{\mathbb{E}}\limits_{Y_{t}\sim q_{t}}\left[\varepsilon_{\Jacobi,t+1}(Y_{t}^-)\ind\Big\{\sqrt{d\log T}\varepsilon_{\score, t}(Y_{t})+d\varepsilon_{\Jacobi,t}(Y_{t})\log T \lesssim T\Big\}\right]
% 		&\lesssim \mathop{\mathbb{E}}\limits_{Y_{t}^-\sim q_{t+1}}\left[\varepsilon_{\Jacobi,t+1}(Y_{t}^-)\right]. 
% 		\end{align}
%    \end{subequations}
   %
   %and (vi) comes from \eqref{eq:score-assumptions-equiv}. 

\paragraph{Step 4: coping with the set $\mathcal{I}_1$.} In view of \Cref{lem:density-ratio-tau}, the condition $S_{\tau-1}(y_{\tau-1})\leq c_{14}$ implies that
\begin{align*}
\frac{q_{1}(y_{1})}{p_{1}(y_{1})} \leq \frac{2q_{\tau-1}(y_{\tau-1})}{p_{\tau-1}(y_{\tau-1})}.
\end{align*}
This in turn allows one to obtain
\begin{align}
\mathop{\mathbb{E}}\limits_{Y_{T}\sim p_{T}}\bigg[\frac{q_{1}(Y_{1})}{p_{1}(Y_{1})}\ind\left\{ Y_{1}\in\mathcal{E},Y_{T}\in\mathcal{I}_{1}\right\} \bigg] 
	& \leq 2 \mathop{\mathbb{E}}\limits_{Y_{T}\sim p_{T}}\bigg[\frac{q_{\tau-1}(Y_{1})}{p_{\tau-1}(Y_{1})}\ind\left\{ Y_{1}\in\mathcal{E},Y_{T}\in\mathcal{I}_{1}\right\} \bigg]\notag\\
&= 2 \sum_{t=2}^{T}\mathop{\mathbb{E}}\limits_{Y_{T}\sim p_{T}}\bigg[\frac{q_{t-1}(Y_{1})}{p_{t-1}(Y_{1})}\ind\left\{ Y_{1}\in\mathcal{E},Y_{T}\in\mathcal{I}_{1}\right\} \ind\{\tau=t\}\bigg]\notag\\
	& \leq 2\sum_{t=2}^{T}\mathop{\mathbb{E}}\limits_{Y_{T}\sim p_{T}}\bigg[\frac{q_{t-1}(Y_{t-1})}{p_{t-1}(Y_{t-1})}\ind\left\{Y_{1}\in\mathcal{E}, Y_{t}\in \mathcal{J}_{t}\right\} \bigg], 
	\label{eq:sum-I4-UB-13579}
\end{align}
where the last line comes from the definition of $\mathcal{I}_1$ (cf.~\eqref{eq:defn-I2-I3-I4-ode-I1}), with $\mathcal{J}_t$ defined as 
\begin{align}
	\mathcal{J}_t &\coloneqq \Big\{y_t : -\log q_{t}(y_{t})\geq c_{\tau} d\log T\Big\}.
\end{align}
%\yhfix{$c_{\tau}$ may be $c_6$}
%
%for some universal constant $c_{20}>0$. 

% Notice that 
% \begin{align*}
% \mathop{\mathbb{E}}_{Y_{T}\sim p_{T}}\bigg[\frac{q_{t}(Y_{t})}{p_{t}(Y_{t})} \bigg]
% 		% =\mathop{\mathbb{E}}_{Y_{T}\sim p_{T}}\bigg[\frac{q_{t}(Y_{t})}{p_{t}(Y_{t})} \bigg] 
% 		=\mathop{\mathbb{E}}_{Y_{t}\sim p_{t}}\bigg[\frac{q_{t}(Y_{t})}{p_{t}(Y_{t})} \bigg]
% 		= 1, \qquad  2\leq t\leq T.
% 	% \\
% 	 % & =\mathop{\mathbb{E}}_{Y_{t}\sim p_{t}}\bigg[\frac{q_{t}(Y_{t})}{p_{t}(Y_{t})}\ind\left\{ Y_{1}\in\mathcal{E}\right\} \bigg]=  q_{t}(\mathcal{E}) .
% 	\end{align*}
To bound the right-hand side of \eqref{eq:sum-I4-UB-13579}, we make note of the following identities: 
\begin{subequations}
\begin{align}
	1 	
	&= \mathop{\mathbb{E}}\limits_{Y_{t}\sim p_{t}}\bigg[\frac{q_{t}(Y_{t})}{p_{t}(Y_{t})}\bigg]
	= \mathop{\mathbb{E}}\limits_{Y_{T}\sim p_{T}}\bigg[\frac{q_{t}(Y_{t})}{p_{t}(Y_{t})}\bigg]
	= \mathop{\mathbb{E}}\limits_{Y_{T}\sim p_{T}}\bigg[\frac{q_{t}(Y_{t})}{p_{t}(Y_{t})}\big(\ind\left\{Y_{t}\in \mathcal{J}_{t}\right\}+\ind\left\{Y_{t}\in \mathcal{J}^c_{t}\right\}\big) \bigg], \\
	1 	
	&
	%= \mathop{\mathbb{E}}\limits_{Y_{t-1}\sim p_{t-1}}\bigg[\frac{q_{t-1}(Y_{t-1})}{p_{t-1}(Y_{t-1})}\bigg]
	= \mathop{\mathbb{E}}\limits_{Y_{T}\sim p_{T}}\bigg[\frac{q_{t-1}(Y_{t-1})}{p_{t-1}(Y_{t-1})}\bigg]
	= \mathop{\mathbb{E}}\limits_{Y_{T}\sim p_{T}}\bigg[\frac{q_{t-1}(Y_{t-1})}{p_{t-1}(Y_{t-1})}\big(\ind\left\{Y_{t}\in \mathcal{J}_{t}\right\}+\ind\left\{Y_{t}\in \mathcal{J}^c_{t}\right\}\big) \bigg],
\end{align}
\end{subequations}
which in turn imply that
\begin{align*}
	&\mathop{\mathbb{E}}\limits_{Y_{T}\sim p_{T}}\bigg[\frac{q_{t-1}(Y_{t-1})}{p_{t-1}(Y_{t-1})}\ind\left\{Y_{t}\in \mathcal{J}_{t}\right\} \bigg]\\
	&\qquad = 1-\mathop{\mathbb{E}}\limits_{Y_{T}\sim p_{T}}\bigg[\frac{q_{t-1}(Y_{t-1})}{p_{t-1}(Y_{t-1})}\ind\left\{Y_{t}\in \mathcal{J}^c_{t}\right\}\bigg]
	%+\mathop{\mathbb{E}}\limits_{Y_{T}\sim p_{T}}\bigg[\frac{q_{t}(Y_{t})}{p_{t}(Y_{t})}\ind\left\{Y_{t}\in \mathcal{J}_{t}\right\} \bigg] 
	\\
	&\qquad = \mathop{\mathbb{E}}\limits_{Y_{T}\sim p_{T}}\bigg[\left(\frac{q_{t}(Y_{t})}{p_{t}(Y_{t})}-\frac{q_{t-1}(Y_{t-1})}{p_{t-1}(Y_{t-1})}\right)\ind\left\{Y_{t}\in \mathcal{J}^c_{t}\right\}\bigg]+\mathop{\mathbb{E}}\limits_{Y_{T}\sim p_{T}}\bigg[\frac{q_{t}(Y_{t})}{p_{t}(Y_{t})}\ind\left\{Y_{t}\in \mathcal{J}_{t}\right\} \bigg] \\
	&\qquad \leq  \mathop{\mathbb{E}}\limits_{Y_{T}\sim p_{T}}\bigg[\left(\frac{q_{t}(Y_{t})}{p_{t}(Y_{t})}-\frac{q_{t-1}(Y_{t-1})}{p_{t-1}(Y_{t-1})}\right)\ind\left\{Y_{t}\in \mathcal{J}^c_{t}\right\}\bigg]+ O\Big( \exp\big(-c_{6}d\log T\big) \Big).
\end{align*}
Here, the last line follows since
%, according to \Cref{lem:x0}, 
%
\begin{align*}
\mathop{\mathbb{E}}\limits _{Y_{T}\sim p_{T}}\bigg[\frac{q_{t}(Y_{t})}{p_{t}(Y_{t})}\ind\left\{ Y_{t}\in\mathcal{J}_{t}\right\} \bigg] & =\mathop{\mathbb{E}}\limits _{Y_{t}\sim p_{t}}\bigg[\frac{q_{t}(Y_{t})}{p_{t}(Y_{t})}\ind\left\{ Y_{t}\in\mathcal{J}_{t}\right\} \bigg]=\mathop{\mathbb{E}}\limits _{Y_{t}\sim q_{t}}\big[\ind\left\{ Y_{t}\in\mathcal{J}_{t}\right\} \big]\\
 & \lesssim\exp\big(-c_{6}d\log T\big),
\end{align*}
provided that $c_{20}>0$ is large enough.  
% from the relation for the forward process in \Cref{lem:x0}. 
As a consequence, 
the right-hand side of \eqref{eq:sum-I4-UB-13579} can be bounded by 
\begin{align}
&\sum_{t=2}^{T}\mathop{\mathbb{E}}\limits_{Y_{T}\sim p_{T}}\bigg[\frac{q_{t-1}(Y_{t-1})}{p_{t-1}(Y_{t-1})}\ind\left\{Y_{1}\in\mathcal{E}, Y_{t}\in \mathcal{J}_{t}\right\} \bigg]\notag\\
% &\lesssim  \mathbb{E}_{Y_{T}\sim p_{T}}\bigg[\frac{q_{t-1}(Y_{t-1})}{p_{t-1}(Y_{t-1})}\ind\left\{Y_{1}\in\mathcal{E}, Y_{t}\in \mathcal{J}_{t}\right\} \bigg] - \mathbb{E}_{Y_{T}\sim p_{T}}\bigg[\frac{q_{t}(Y_{t})}{p_{t}(Y_{t})}\ind\left\{Y_{1}\in\mathcal{E}, Y_{t}\in \mathcal{J}_{t}\right\} \bigg]\\
% &=\mathbb{E}_{Y_{T}\sim p_{T}}\bigg[\frac{q_{t-1}(Y_{t-1})}{p_{t-1}(Y_{t-1})}\ind\left\{Y_{1}\in\mathcal{E}\right\} \bigg] - \mathbb{E}_{Y_{T}\sim p_{T}}\bigg[\frac{q_{t-1}(Y_{t-1})}{p_{t-1}(Y_{t-1})}\ind\left\{Y_{1}\in\mathcal{E}, Y_{t}\in \mathcal{J}^c_{t}\right\} \bigg]\\
% & - \left(\mathbb{E}_{Y_{T}\sim p_{T}}\bigg[\frac{q_{t}(Y_{t})}{p_{t}(Y_{t})}\ind\left\{Y_{1}\in\mathcal{E}\right\} \bigg]-\mathbb{E}_{Y_{T}\sim p_{T}}\bigg[\frac{q_{t}(Y_{t})}{p_{t}(Y_{t})}\ind\left\{Y_{1}\in\mathcal{E}, Y_{t}\in \mathcal{J}^c_{t}\right\} \bigg]\right)\\
&\qquad \leq \sum_{t=2}^{T} \mathop{\mathbb{E}}\limits_{Y_{T}\sim p_{T}}\bigg[\bigg(\frac{q_{t}(Y_{t})}{p_{t}(Y_{t})} - \frac{q_{t-1}(Y_{t-1})}{p_{t-1}(Y_{t-1})}\bigg)\ind\left\{Y_{t}\in \mathcal{J}_{t}^{\mathrm{c}}\right\} \bigg] +
	O\Big( T\exp\big(-c_{6}d\log T\big) \Big) .
	\label{eq:sum-I4-UB-2468}
%\\
%&\lesssim \frac{d^{6}\log^{6}T}{T^2} + \sqrt{d\log^3 T}\varepsilon_{\score} + {d\log T}\varepsilon_{\Jacobi}
\end{align}

Moreover, the first term in the last line \eqref{eq:sum-I4-UB-2468} can be decomposed as follows
\begin{align}
	&\mathop{\mathbb{E}}\limits_{Y_{T}\sim p_{T}}\bigg[\bigg(\frac{q_{t}(Y_{t})}{p_{t}(Y_{t})} - \frac{q_{t-1}(Y_{t-1})}{p_{t-1}(Y_{t-1})}\bigg)\ind\left\{Y_{t}\in \mathcal{J}_{t}^{\mathrm{c}}\right\} \bigg] \notag\\
&\qquad = \mathop{\mathbb{E}}\limits_{Y_{T}\sim p_{T}}\bigg[\bigg(\frac{q_{t}(Y_{t})}{p_{t}(Y_{t})} - \frac{q_{t-1}(Y_{t-1})}{p_{t-1}(Y_{t-1})}\bigg)\ind\left\{Y_{t}\in \mathcal{J}_{t}^{\mathrm{c}}, \xi_t(Y_{t}) < c_{14}\right\} \bigg] \notag\\
&\qquad\qquad~+ \mathop{\mathbb{E}}\limits_{Y_{T}\sim p_{T}}\bigg[\bigg(\frac{q_{t}(Y_{t})}{p_{t}(Y_{t})} - \frac{q_{t-1}(Y_{t-1})}{p_{t-1}(Y_{t-1})}\bigg)\ind\left\{Y_{t}\in \mathcal{J}_{t}^{\mathrm{c}}, \xi_t(Y_{t}) \ge c_{14}\right\} \bigg],
	\label{eq:E-sum-Jtc-135}
\end{align}
leaving us with two terms to control. 
\begin{itemize}
	\item 
With regards to the first term on the right-hand side of \eqref{eq:E-sum-Jtc-135}, for $Y_t$ satisfies $\log q_t(Y_t)\leq -c_{\tau}d \log T$ and $\xi_t(Y_{t}) < c_{14}$,  we can directly apply \Cref{lem:main-ODE} to obtain a one-step version of \Cref{lem:density-ratio-tau} to control the difference between the density ratio $\frac{q_{t}(Y_{t})}{p_{t}(Y_{t})}$ and $\frac{q_{t-1}(Y_{t-1})}{p_{t-1}(Y_{t-1})}$  as follows 
\begin{align*}
	\frac{p_{t-1}(y_{t-1})}{q_{t-1}(y_{t-1})} & =\frac{p_{t}(y_{t})}{q_{t}(y_{t})} \cdot \Bigg\{ 1+ O\Bigg(\frac{d^{6}\log^{6}T}{T^{3}}\Bigg)+\\
	&~~~~O\Bigg(\frac{\big(\varepsilon_{\score,t}(y_{t})+\varepsilon_{\score,t}(\Phi_t(y_t)) \big)\sqrt{d\log^{3}T}}{T}+\frac{d\log T\big(\varepsilon_{\Jacobi,t}(y_{t})+\varepsilon_{\Jacobi,t}(\Phi_t(y_{t}))\big)}{T}\Bigg) \Bigg\} 
	\end{align*}
which is an intermediate step in the proof of \Cref{lem:density-ratio-tau} (referring \eqref{eq:relation-ratioy-one-step} in Section~\ref{sec:proof-lem:density-ratio-tau} for details). The above relation yields:
\begin{align*}
	& \sum_{t=2}^{T}\mathop{\mathbb{E}}\limits_{Y_{T}\sim p_{T}}\bigg[\bigg(\frac{q_{t}(Y_{t})}{p_{t}(Y_{t})} - \frac{q_{t-1}(Y_{t-1})}{p_{t-1}(Y_{t-1})}\bigg)\ind\left\{Y_{t}\in \mathcal{J}_{t}^{\mathrm{c}}, \xi_t(Y_{t}) < c_{14}\right\} \bigg] \\
	% &\lesssim  \mathbb{E}_{Y_{T}\sim p_{T}}\bigg[\frac{q_{t-1}(Y_{t-1})}{p_{t-1}(Y_{t-1})}\ind\left\{Y_{1}\in\mathcal{E}, Y_{t}\in \mathcal{J}_{t}\right\} \bigg] - \mathbb{E}_{Y_{T}\sim p_{T}}\bigg[\frac{q_{t}(Y_{t})}{p_{t}(Y_{t})}\ind\left\{Y_{1}\in\mathcal{E}, Y_{t}\in \mathcal{J}_{t}\right\} \bigg]\\
	% &=\mathbb{E}_{Y_{T}\sim p_{T}}\bigg[\frac{q_{t-1}(Y_{t-1})}{p_{t-1}(Y_{t-1})}\ind\left\{Y_{1}\in\mathcal{E}\right\} \bigg] - \mathbb{E}_{Y_{T}\sim p_{T}}\bigg[\frac{q_{t-1}(Y_{t-1})}{p_{t-1}(Y_{t-1})}\ind\left\{Y_{1}\in\mathcal{E}, Y_{t}\in \mathcal{J}^c_{t}\right\} \bigg]\\
	% & - \left(\mathbb{E}_{Y_{T}\sim p_{T}}\bigg[\frac{q_{t}(Y_{t})}{p_{t}(Y_{t})}\ind\left\{Y_{1}\in\mathcal{E}\right\} \bigg]-\mathbb{E}_{Y_{T}\sim p_{T}}\bigg[\frac{q_{t}(Y_{t})}{p_{t}(Y_{t})}\ind\left\{Y_{1}\in\mathcal{E}, Y_{t}\in \mathcal{J}^c_{t}\right\} \bigg]\right)\\
	% &\lesssim \sum_{t=2}^{T} \left(\mathbb{E}_{Y_{T}\sim p_{T}}\bigg[\bigg(\frac{q_{t}(Y_{t})}{p_{t}(Y_{t})} - \frac{q_{t-1}(Y_{t-1})}{p_{t-1}(Y_{t-1})}\bigg)\ind\left\{Y_{t}\in \mathcal{J}_{t}^{\mathrm{c}}\right\} \bigg] +\exp\big(-c_{6}d\log T\big)\right) 
	&\qquad\qquad \lesssim \frac{d^{6}\log^{6}T}{T^2} + \sqrt{d\log^3 T}\varepsilon_{\score} + {d\log T}\varepsilon_{\Jacobi} .
	\end{align*}

\item When it comes to the second term on the right-hand side of \eqref{eq:E-sum-Jtc-135}, we  can invoke  similar arguments as in \citet{li2023towards} (i.e., the arguments therein to bound  $\mathcal{I}_3$), as well as the relation \eqref{eq:yminus-control} for the score error of ${Y}_t^{-}$,  to obtain
\begin{align*}
	\sum_{t=2}^{T}\mathop{\mathbb{E}}\limits_{Y_{T}\sim p_{T}}
	%\mathop{\mathbb{E}}\limits_{Y_{T}\sim p_{T}}
	\bigg[\frac{q_{t}(Y_{t})}{p_{t}(Y_{t})} \ind\big\{Y_{t}\in \mathcal{J}_{t}^{\mathrm{c}}, \xi_t(Y_{t}) \ge c_{14}\big\} \bigg]\lesssim  \sqrt{d\log^3 T}\varepsilon_{\score} + {d\log T}\varepsilon_{\Jacobi}.
\end{align*}
\end{itemize}
Putting all this together, we arrive at
	\begin{align}
		\mathop{\mathbb{E}}\limits_{Y_{T}\sim p_{T}}\bigg[\frac{q_{1}(Y_{1})}{p_{1}(Y_{1})}\ind\big\{ Y_{1}\in\mathcal{E},Y_{T}\in\mathcal{I}_{1}\big\} \bigg] 
			\lesssim \frac{d^{6}\log^{6}T}{T^2} + \sqrt{d\log^3 T}\varepsilon_{\score} + {d\log T}\varepsilon_{\Jacobi}. 	   \label{eq:I1-expectation-UB-ode}
		\end{align}
% where the last line holds true since $\frac{q_{t}(Y_{t})}{p_{t}(Y_{t})} - \frac{q_{t-1}(Y_{t-1})}{p_{t-1}(Y_{t-1})}$ can be bounded by invoking \eqref{eq:xt} and \eqref{eq:yt} in \Cref{lem:main-ODE} for $Y_{t}\in \mathcal{J}_{t}^{\mathrm{c}}$.
% where the last line holds true since $\mathbb{E}_{Y_{T}\sim p_{T}}\bigg[\frac{q_{t}(Y_{t})}{p_{t}(Y_{t})}\bigg]=\mathbb{E}_{Y_{T}\sim p_{T}}\bigg[\frac{q_{t-1}(Y_{t-1})}{p_{t-1}(Y_{t-1})}\bigg]=1$ and the relation for the forward process in \Cref{lem:x0}. 
%\yh{need to check}
%which can be controlled in a similar way as above.

\paragraph{Step 5: coping with the remaining sets.}
%Notice that according to~\eqref{eq:phit}, \yh{$\sqrt{d\log^3 T}\varepsilon_{\score, t}(Y_{t})$}
% \begin{align*}
% \mathbb{E}_{Y_{t}\sim q_{t}}\left[\varepsilon_{\score,t+1}(Y_{t}^-)\ind\big(\sqrt{d\log T}\varepsilon_{\score, t}(Y_{t})+d\varepsilon_{\Jacobi,t}(Y_{t})\log T \lesssim T\big)\right]
% \lesssim \mathbb{E}_{Y_{t}^-\sim q_{t+1}}\left[\varepsilon_{\score,t+1}(Y_{t}^-)\right].
% \end{align*}
% Then we have
% \begin{align}
%  \mathbb{E}_{Y_{T}\sim p_{T}}\bigg[\Big(\frac{q_{1}(Y_{1})}{p_{1}(Y_{1})}-1\Big)\ind\left\{ Y_{1}\in\mathcal{E},Y_{T}\in\mathcal{I}_0 \right\} \bigg]
%  & \lesssim\frac{d^{6}\log^{6}T}{T^{2}}+\sqrt{d\log^{3}T}\varepsilon_{\score}+(d\log T)\varepsilon_{\Jacobi}. 
% 	\label{eq:I1-expectation-UB-ode}
% \end{align}
The analyses for $\mathcal{I}_2, \mathcal{I}_3$ and $\mathcal{I}_4$ are similar to~\citet{li2023towards}. 
For the sake of brevity, 
we state the combined result in the lemma below and omit the proof. 
%, we will mainly focus on $\mathcal{I}_1$.

\begin{lemma}
	\label{lem:I2-I3-I4-bound}
	It holds that
	\begin{align}
		% \sum_{i\in \mathcal{I}_2\cup \mathcal{I}_3 \cup \mathcal{I}_4}
		\mathop{\mathbb{E}}_{Y_{T}\sim p_{T}}\bigg[\frac{q_{1}(Y_{1})}{p_{1}(Y_{1})}\ind\left\{ Y_{1}\in\mathcal{E},Y_{T}\in\mathcal{I}_2\cup \mathcal{I}_3 \cup \mathcal{I}_4\right\} \bigg]
		&\lesssim \frac{d^{6}\log^{6}T}{T^{2}}+\sqrt{d\log^{3}T}\varepsilon_{\score}+(d\log T)\varepsilon_{\Jacobi}. 
		\label{eq:I2-I3-I4-bound}
	\end{align}
\end{lemma}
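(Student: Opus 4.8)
The plan is to replay the treatment of the analogous ``level sets'' in \citet{li2023towards} almost verbatim, feeding in the two ingredients that are specific to our accelerated sampler: the multiplicative bias $1+O(d^6\log^6T/T^2)$ produced by \Cref{lem:density-ratio-tau}, and the device \eqref{eq:yminus-control}, which transfers any error evaluated at a mid-point $Y_t^-$ into an error evaluated under $q_{t+1}$. A feature shared by all three sets is that $S_{\tau-1}(y_T)\le c_{14}$, so \eqref{eq:pt-qt-equiv-ODE-St-k} gives $\frac{1}{2}\frac{q_k(y_k)}{p_k(y_k)}\le\frac{q_1(y_1)}{p_1(y_1)}\le 2\frac{q_k(y_k)}{p_k(y_k)}$ for every $k<\tau$; moreover, whenever $\xi_\tau(y_\tau)=O(1)$, the one-step recursion \eqref{eq:relation-ratioy-one-step} at step $\tau$ extends this equivalence to index $\tau$, i.e.\ $\frac{q_1(y_1)}{p_1(y_1)}\lesssim\frac{q_\tau(y_\tau)}{p_\tau(y_\tau)}$ up to an absolute constant. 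These two observations are the backbone of all three estimates.

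For $\mathcal{I}_2$: here $\xi_\tau=S_\tau-S_{\tau-1}\le 2c_{14}=O(1)$, so the equivalence above holds down to index $\tau$. I would then pay the defining inequality $S_\tau\ge c_{14}$ through $\ind\{Y_T\in\mathcal{I}_2\}\le c_{14}^{-1}S_\tau(Y_T)$, expand $S_\tau=\sum_{1<k\le\tau}\xi_k(Y_k)$ and exchange sum with expectation; on $\{k\le\tau\}$ one has $S_{k-1}\le c_{14}$, whence $\frac{q_\tau(y_\tau)}{p_\tau(y_\tau)}\lesssim\frac{q_k(y_k)}{p_k(y_k)}$, and a change of measure $p_T\to q_T$ reduces each summand to $\mathbb{E}_{Y_k\sim q_k}[\xi_k(Y_k)\ind\{\cdots\}]$. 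Bounding the $Y_k^-$ pieces of $\xi_k$ via \eqref{eq:yminus-control} and summing via \eqref{eq:score-assumptions-equiv} yields $\lesssim(d\log T)\varepsilon_{\Jacobi}+\sqrt{d\log^3T}\varepsilon_{\score}$, while the residual $O(d^6\log^6T/T^3)$ terms carried along from \eqref{eq:relation-ratioy-one-step}, together with the $O(d^6\log^6T/T^2)$ bias of \eqref{eq:pt-qt-equiv-ODE-St-taui}, contribute $\lesssim d^6\log^6T/T^2$. This is exactly the bookkeeping of \eqref{eq:I0-expectation-UB-ode}--\eqref{eq:I1-expectation-UB-ode}.

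For $\mathcal{I}_3$: the prefix equivalence gives $\frac{q_1(y_1)}{p_1(y_1)}\lesssim\frac{q_{\tau-1}(y_{\tau-1})}{p_{\tau-1}(y_{\tau-1})}$, and the set definition supplies $\frac{q_{\tau-1}(y_{\tau-1})}{p_{\tau-1}(y_{\tau-1})}\le 8\frac{q_\tau(y_\tau)}{p_\tau(y_\tau)}$, so $\frac{q_1(y_1)}{p_1(y_1)}\lesssim\frac{q_\tau(y_\tau)}{p_\tau(y_\tau)}$. I would then use $\ind\{\xi_\tau\ge c_{14}\}\le c_{14}^{-1}\xi_\tau(Y_\tau)$, decompose over the \emph{disjoint} events $\{\tau=t\}$, change measure $p_T\to q_T$, and bound $\sum_{t}\mathbb{E}_{Y_t\sim q_t}[\xi_t(Y_t)\ind\{\tau=t,\cdots\}]\le\sum_t\mathbb{E}_{Y_t\sim q_t}[\xi_t(Y_t)]\lesssim(d\log T)\varepsilon_{\Jacobi}+\sqrt{d\log^3T}\varepsilon_{\score}$, again via \eqref{eq:yminus-control} and \eqref{eq:score-assumptions-equiv}; no factor of $T$ is lost precisely because the indicator $\ind\{\tau=t\}$ keeps the events disjoint. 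This is the ``$\mathcal{I}_3$'' estimate already invoked in the text just above \eqref{eq:I1-expectation-UB-ode}.

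For $\mathcal{I}_4$ --- the delicate case, and the one I expect to be the main obstacle --- the ratio genuinely jumps ($\frac{q_{\tau-1}(y_{\tau-1})}{p_{\tau-1}(y_{\tau-1})}>8\frac{q_\tau(y_\tau)}{p_\tau(y_\tau)}$), so $\frac{q_1}{p_1}$ can no longer be dominated by $\frac{q_\tau}{p_\tau}$. Following \citet{li2023towards}, I would argue that \eqref{eq:relation-ratioy-one-step} forces the ratio to change by at most a factor $1+O(\xi_\tau+d^6\log^6T/T^3)$, so an eightfold jump requires $\xi_\tau(y_\tau)\gtrsim1$, i.e.\ $\ind\{Y_T\in\mathcal{I}_4\}\lesssim\xi_\tau(Y_\tau)$; then, using the restriction to $\mathcal{E}$ (which supplies the pointwise bounds $q_1(y_1)>\exp(-c_6 d\log T)$ and $q_1>p_1$), a change of measure $p_T\to q_T$ --- legitimate up to an additive $T^{-100}$ by \Cref{lem:KL-T} --- and Markov's inequality applied to $\xi_\tau$, exactly as in the treatment of the corresponding set in \citet{li2023towards}, with the mid-point errors once more absorbed via \eqref{eq:yminus-control}; the outcome is again $\lesssim(d\log T)\varepsilon_{\Jacobi}+\sqrt{d\log^3T}\varepsilon_{\score}$. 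Summing the three contributions and recalling \eqref{eq:assumption-T-score-Jacob} (under which $d^6\log^6T/T^2$ dominates the leftover bias) gives \eqref{eq:I2-I3-I4-bound}. The genuinely non-routine point is making this jump-set argument go through with only $\ell_2$-accurate scores and the extra mid-point terms; the remainder is bookkeeping transported wholesale from \citet{li2023towards}.
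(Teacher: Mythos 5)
Your proposal takes essentially the same route as the paper: the paper itself omits the proof of this lemma, saying only that $\mathcal{I}_2,\mathcal{I}_3,\mathcal{I}_4$ are handled as in \citet{li2023towards}, and your plan reproduces precisely that adaptation --- the prefix equivalence from \Cref{lem:density-ratio-tau} and the one-step relation \eqref{eq:relation-ratioy-one-step}, Markov-type indicator bounds on $S_\tau$ or $\xi_\tau$ combined with a change of measure to $q_t$ over the disjoint events $\{\tau=t\}$, the mid-point errors transferred via \eqref{eq:yminus-control}, and the residual bias collected as $d^{6}\log^{6}T/T^{2}$ --- which is exactly the machinery the paper deploys for $\mathcal{I}_0$ and $\mathcal{I}_1$ in Steps 3--4. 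So the proposal is consistent with the paper's (omitted) argument and adds no genuinely different ingredient.
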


\paragraph{Step 6: putting all pieces together.} Given the preceding results on $\mathcal{I}_0$ to $\mathcal{I}_4$, we can substitute the upper bounds derived in \eqref{eq:I0-expectation-UB-ode},\eqref{eq:I1-expectation-UB-ode} and \eqref{eq:I2-I3-I4-bound} back into \eqref{eqn:ode-tv-10}, which leads to the following conclusion:
$$
\begin{aligned}
\operatorname{TV}\left(p_1, q_1\right) &\leq  \underset{Y_T \sim p_T}{\mathbb{E}}\left[\left(\frac{q_1\left(Y_1\right)}{p_1\left(Y_1\right)}-1\right) \ind\left\{Y_1 \in \mathcal{E}, Y_T \in \mathcal{I}_0\right\}\right]+\underset{Y_T \sim p_T}{\mathbb{E}}\left[\frac{q_1\left(Y_1\right)}{p_1\left(Y_1\right)} \ind\left\{Y_1 \in \mathcal{E}, Y_T \in \mathcal{I}_1\right\}\right] \\
	&~~~~ +\underset{Y_T \sim p_T}{\mathbb{E}}\left[\frac{q_1\left(Y_1\right)}{p_1\left(Y_1\right)} \ind\left\{Y_1 \in \mathcal{E}, Y_T \in \mathcal{I}_2 \cup \mathcal{I}_3 \cup \mathcal{I}_4\right\}\right]+ O\Big( \exp \left(-c_6 d \log T\right) \Big) \\
&\lesssim \frac{d^{6}\log^{6}T}{T^{2}}+\sqrt{d\log^{3}T}\varepsilon_{\score}+(d\log T)\varepsilon_{\Jacobi}.
\end{aligned}
$$
% \begin{align}
% 	\mathsf{TV}\big(q_{1},p_{1}\big) 
% 		&\le \mathop{\mathbb{E}}\limits_{Y_{1}\sim p_{1}}\bigg[\Big(\frac{q_{1}(Y_{1})}{p_{1}(Y_{1})}-1\Big)\ind\left\{ Y_{1}\in\mathcal{E}\right\} \bigg] + \exp\big(-c_{6}d\log T\big),
% 	\end{align}
%\yxc{Can you include some sentences to conclude the proof?} 

\subsection{Proof of Lemma~\ref{lem:main-ODE}}
\label{sec:proof-lemma-main-ODE}

\subsubsection{Proof of property~\eqref{eq:xt_lb}}\label{sec-proof-xt_lb}
This property can be established in a similar way to \citet[Lemma 4]{li2023towards}. Before proceeding, let us introduce the following vector:
$$
\begin{aligned}
	v_t(x) & \coloneqq x-\phi_t(x)=x-\phi_t^{\star}(x)+\phi_t^{\star}(x)-\phi_t(x) \\
	& =-\frac{1-\alpha_{t+1}}{2\left(1-\overline{\alpha}_t\right)} \int_{x_0}\left(x-\sqrt{\overline{\alpha}_t} x_0\right) p_{X_0 \mid X_t}\left(x_0 \mid x\right) \mathrm{d} x_0+\frac{1-\alpha_{t+1}}{2}\big(s_t(x)-s_t^{\star}(x)\big) ,
	\end{aligned}
$$
where we have invoked the definitions of $\phi_t^{\star}$ and $\phi_t$, as well as the property \eqref{eq:st-MMSE-expression}. 
For notational simplicity, we shall abbreviate  $v=v_t(x)$ in the following analysis.

Recognizing that 
$$
X_t \stackrel{\mathrm{d}}{=} \sqrt{\overline{\alpha}_t} X_0+\sqrt{1-\overline{\alpha}_t} W \quad \text { with } W \sim \mathcal{N}\left(0, I_d\right)
$$
and making use of the Bayes rule, we can express the conditional distribution $p_{X_0 \mid X_t}$ as
$$
p_{X_0 \mid X_t}\left(x_0 \mid x\right)=\frac{p_{X_0}\left(x_0\right)}{p_{X_t}(x)} p_{X_t \mid X_0}\left(x \mid x_0\right)=\frac{p_{X_0}\left(x_0\right)}{p_{X_t}(x)} \cdot \frac{1}{\left(2 \pi\left(1-\overline{\alpha}_t\right)\right)^{d / 2}} \exp \left(-\frac{\left\|x-\sqrt{\overline{\alpha}_t} x_0\right\|_2^2}{2\left(1-\overline{\alpha}_t\right)}\right). 
$$
Additionally, recalling that
$$
\frac{1}{\sqrt{\alpha_{t+1}}}X_{t+1}\stackrel{\mathrm{d}}{=} \frac{1}{\sqrt{\alpha_{t+1}}}\left(\sqrt{\overline{\alpha}_{t+1}} X_0+\sqrt{1-\overline{\alpha}_{t+1}} W\right)
= \sqrt{\overline{\alpha}_t} X_0+\sqrt{\frac{1}{\alpha_{t+1}}-\overline{\alpha}_t} W ,
$$
one can  demonstrate that
\begin{align*}
\frac{p_{X_{t+1}/\sqrt{\alpha_{t+1}}}\big(\phi_{t}(x)\big)}{p_{X_{t}}(x)} &=\frac{1}{p_{X_t}(x)} \int_{x_0} p_{X_0}\left(x_0\right) \frac{1}{\left(2 \pi\left(\alpha_{t+1}^{-1}-\overline{\alpha}_{t}\right)\right)^{d / 2}} \exp \left(-\frac{\left\|\phi_t(x)-\sqrt{\overline{\alpha}_t} x_0\right\|_2^2}{2\left(\alpha_{t+1}^{-1}-\overline{\alpha}_{t}\right)}\right) \mathrm{d} x_0\\
&=\frac{1}{p_{X_t}(x)} \int_{x_0} p_{X_0}\left(x_0\right) \frac{1}{\left(2 \pi\left(\alpha_{t+1}^{-1}-\overline{\alpha}_t\right)\right)^{d / 2}} \exp \left(-\frac{\left\|x-\sqrt{\overline{\alpha}_t} x_0\right\|_2^2}{2\left(1-\overline{\alpha}_t\right)}\right)\\
&\qquad~~~~~~~~ \cdot \exp\bigg(-\frac{(1-\alpha_{t+1}^{-1})\big\| x-\sqrt{\overline{\alpha}_{t}}x_{0}\big\|_{2}^{2}}{2(\alpha_{t+1}^{-1}-\overline{\alpha}_{t})(1-\overline{\alpha}_{t})}-\frac{\|v\|_{2}^{2}-2v^{\top}\big(x-\sqrt{\overline{\alpha}_{t}}x_{0}\big)}{2(\alpha_{t+1}^{-1}-\overline{\alpha}_{t})}\bigg)\mathrm{d}x_{0}\\
&= \Big(\frac{1-\overline{\alpha}_{t}}{\alpha_{t+1}^{-1}-\overline{\alpha}_{t}}\Big)^{d/2}\cdot\int_{x_{0}}p_{X_{0}\mymid X_{t}}(x_{0}\mymid x)\cdot\notag\\
 & \qquad\qquad\qquad~~~~~~~~\exp\bigg(\frac{(\alpha_{t+1}^{-1}-1)\big\| x-\sqrt{\overline{\alpha}_{t}}x_{0}\big\|_{2}^{2}}{2(\alpha_{t+1}^{-1}-\overline{\alpha}_{t})(1-\overline{\alpha}_{t})}-\frac{\|v\|_{2}^{2}-2v^{\top}\big(x-\sqrt{\overline{\alpha}_{t}}x_{0}\big)}{2(\alpha_{t+1}^{-1}-\overline{\alpha}_{t})}\bigg)\mathrm{d}x_{0}. %\\
 %&\ge \exp\bigg(\Big(\varepsilon_{\score, t}(x)\sqrt{d\log T}+d\log T\Big)\frac{c\log T}{T} + \frac{c\varepsilon_{\score, t}(x)^2\log^2 T}{T^2}\bigg),
\end{align*}
To lower bound the above expression, we can focus on controlling the second term within the exponential part of the integral over the set $\mathcal{G}_c^{\text {typical }}$ defined as follows, since the first term is always non-negative:
\begin{align}
	\mathcal{G}_c^{\text {typical }}
\coloneq \left\{x_0:\left\|x-\sqrt{\overline{\alpha}_t} x_0\right\|_2 \leq  5c\sqrt{c_6 d\left(1-\overline{\alpha}_t\right) \log T}\right\}. \label{eqn:G-set}
\end{align}
%In order to control the integral in the  decomposition above, we first make the following observations: 
%
Furthermore, we make the following observations:
\begin{subequations}\label{eqn: lemma1-1}
	\begin{itemize}
		\item  When \eqref{eqn:choice-y} holds, \Cref{lem:x0}  implies that 
		\begin{align}
			\mathbb{P}\left(\left\|\sqrt{\overline{\alpha}_t} X_0-x\right\|_2>5 c_5 \sqrt{c_6 d\left(1-\overline{\alpha}_t\right) \log T} \mid X_t=x\right) \leq \exp \left(-c_5^2 c_6 d \log T\right) 
		\end{align}
		for any quantity $c_5 \geq 2$, provided that $c_6 \geq 2 c_R+c_0$.
		\item The we can makes use of  the above relation   to bound $v$ as follows: 
		\begin{align}
		\|v\|_2 & \leq \frac{1-\alpha_{t+1}}{2} \varepsilon_{\text {score }, t}(x)+\frac{1-\alpha_{t+1}}{2\left(1-\overline{\alpha}_t\right)} \mathbb{E}\left[\left\|\sqrt{\overline{\alpha}_t} X_0-x\right\|_2 \mid X_t=x\right] \nonumber\\
		& \leq \frac{1-\alpha_{t+1}}{2} \varepsilon_{\text {score }, t}(x)+\frac{6\left(1-\alpha_{t+1}\right)}{1-\overline{\alpha}_t} \sqrt{c_6 d\left(1-\overline{\alpha}_t\right) \log T}.
		\end{align}
	\end{itemize}
\end{subequations}
Therefore, %if we define
% \begin{align}
% 	\mathcal{G}%_c^{\text {typical }}
% \coloneq \left\{x_0:\left\|x-\sqrt{\overline{\alpha}_t} x_0\right\|_2 \lesssim \sqrt{c_6 d\left(1-\overline{\alpha}_t\right) \log T}\right\}, \label{eqn:G-set}
% \end{align}
%for any quantity $c>0$, 
%then it is clearly seen from 
%by combining \eqref{eqn: lemma1-1} and the properties stated in \eqref{eqn:properties-alpha-proof} about the learning rates,  
we obtain that: 
for any %$x_0 \in \mathcal{E}_c^{\text {typical }}$
$x_0\in\mathcal{G}$, 
$$
\begin{aligned}
	% \frac{(1-\alpha_{t+1}^{-1})\big\| x-\sqrt{\overline{\alpha}_{t}}x_{0}\big\|_{2}^{2}}{2(\alpha_{t+1}^{-1}-\overline{\alpha}_{t})(1-\overline{\alpha}_{t})}&\leq \frac{25c^{2}}{2}\frac{(1-\alpha_{t})c_6d\log T}{\alpha_{t}-\overline{\alpha}_{t}}\leq\frac{100c_{1}c^{2}c_6d\log^{2}T}{T} \\
% \frac{\left(1-\alpha_{t+1}^{-1}\right)\left\|x-\sqrt{\overline{\alpha}_t} x_0\right\|_2^2}{2\left(\alpha_{t+1}^{-1}-\overline{\alpha}_t\right)\left(1-\overline{\alpha}_t\right)} & \leq \frac{25 c^2}{2} \frac{\left(1-\alpha_t\right) \theta_t(x) d \log T}{\alpha_t-\overline{\alpha}_t} \leq \frac{100 c_1 c^2 \theta_t(x) d \log ^2 T}{T} \\
\frac{\|v\|_2^2}{2\left(\alpha_{t+1}^{-1}-\overline{\alpha}_t\right)} & \stackrel{(i)}{\leq} \frac{\left(1-\alpha_{t+1}\right)^2}{4\left(\alpha_{t+1}^{-1}-\overline{\alpha}_{t}\right)} \varepsilon_{\text {score }, t}(x)^2+\frac{36\left(1-\alpha_{t+1}\right)^2}{\left(1-\overline{\alpha}_t\right)^2\left(\alpha_{t+1}^{-1}-\overline{\alpha}_t\right)} c_6 d \log T \\
& \stackrel{(ii)}{\leq} \frac{2 c_1^2 \log ^2 T}{T^2} \varepsilon_{\text {score }, t}(x)^2+\frac{2304 c_1^2}{T^2} c_6 d \log ^3 T; \\
%&\leq \frac{\big(\varepsilon_{\score, t}(x)^2 + d\log T\big)\log^2 T}{T^2}; \\
\bigg|\frac{v^{\top}\big(x-\sqrt{\overline{\alpha}_{t}}x_{0}\big)}{\alpha_{t+1}^{-1}-\overline{\alpha}_{t}}\bigg|  & \stackrel{(iii)}{\leq} \frac{\|v\|_2\left\|x-\sqrt{\overline{\alpha}_t} x_0\right\|_2}{\alpha^{-1}_{t+1}-\overline{\alpha}_t}\\
% &\leq \frac{5c(1-\alpha_{t})}{2(\alpha_{t+1}^{-1}-\overline{\alpha}_t)}\varepsilon_{\score,t}(x)\sqrt{c_6d(1-\overline{\alpha}_{t})\log T}+\frac{30c(1-\alpha_{t})c_6d\log T}{\alpha_{t+1}^{-1}-\overline{\alpha}_t}\\
% &\leq \frac{5c(1-\alpha_{t})}{2(\alpha_{t+1}^{-1}-\overline{\alpha}_t)}\varepsilon_{\score,t}(x)\sqrt{c_6d(1-\overline{\alpha}_{t})\log T}+\frac{30c(1-\alpha_{t})c_6d\log T}{\alpha_{t+1}^{-1}-\overline{\alpha}_t}\\
%&\leq \frac{5 c\left(1-\alpha_{t+1}\right)}{2\left(\alpha^{-1}_{t+1}-\overline{\alpha}_t\right)} \varepsilon_{\text {score }, t}(x) \sqrt{c_6 d\left(1-\bar{\alpha}_t\right) \log T}+\frac{30 c\left(1-\bar{\alpha}_t\right)\left(1-\alpha_{t+1}\right) c_6 d \log T}{\alpha^{-1}_{t+1}-\overline{\alpha}_t} \\
&\stackrel{(iv)}{\leq} \frac{20 c c_1}{T} \varepsilon_{\text {score }, t}(x) \sqrt{c_6 d\left(1-\bar{\alpha}_t\right) \log ^3 T}+ \frac{240 c c_1 c_6 d \log ^2 T}{T} %\frac{\big(\varepsilon_{\score,t}(x)\sqrt{d(1-\overline{\alpha}_{t})\log T} + d\log T\big)\log T}{T}.
% &\stackrel{(iv)}{\lesssim} \frac{\big(\varepsilon_{\score,t}(x)\sqrt{d(1-\overline{\alpha}_{t})\log T} + d\log T\big)\log T}{T}.
\end{aligned}
$$
Here, (i) is due to \eqref{eqn: lemma1-1}; (ii) holds by the choice of learning rates in \eqref{eqn:properties-alpha-proof}; (iii) follows from the Cauchy-Schwarz inequality; and (iv) comes from the definition of $\mathcal{G}$ and \eqref{eqn:properties-alpha-proof}. Moreover,  \eqref{eqn:properties-alpha-proof} also guarantees that
\begin{align*}
	\Big(\frac{1-\overline{\alpha}_{t}}{\alpha_{t+1}^{-1}-\overline{\alpha}_{t}}\Big)^{d/2}=
	\left(1-\frac{1-\alpha_{t+1}}{1-\overline{\alpha}_{t+1}}\right)^{\frac{d}{2}} &\ge \exp\bigg(-\frac{4c_1d\log T}{T}\bigg).
\end{align*}
%\yh{what is $c$ here? $8c_1$}
% by noticing that
% \begin{align*}
% \Big(\frac{1-\overline{\alpha}_{t}}{\alpha_{t+1}^{-1}-\overline{\alpha}_{t}}\Big)^{d/2} &\ge \exp\bigg(-\frac{cd\log T}{2T}\bigg); \\
% \frac{\|u\|_{2}^{2}}{\alpha_{t+1}^{-1}-\overline{\alpha}_{t}} &\lesssim \frac{\big(\varepsilon_{\score, t}(x)^2 + d\log T\big)\log^2 T}{T^2}; \\
% \bigg|\frac{u^{\top}\big(x-\sqrt{\overline{\alpha}_{t}}x_{0}\big)}{\alpha_{t+1}^{-1}-\overline{\alpha}_{t}}\bigg| &\lesssim \frac{\big(\varepsilon_{\score,t}(x)\sqrt{d(1-\overline{\alpha}_{t})\log T} + d\log T\big)\log T}{T}.
% \end{align*}
%
Combine the above relations to yield 
\begin{align*}
	&\frac{p_{X_{t+1}/\sqrt{\alpha_{t+1}}}\big(\phi_{t}(x)\big)}{p_{X_{t}}(x)} \\	
	&~~\geq  \Big(\frac{1-\overline{\alpha}_{t}}{\alpha_{t+1}^{-1}-\overline{\alpha}_{t}}\Big)^{d/2}\cdot\int_{x_{0}\in\mathcal{G}}p_{X_{0}\mymid X_{t}}(x_{0}\mymid x)\cdot\notag\exp\bigg(-\frac{\|v\|_{2}^{2}-2v^{\top}\big(x-\sqrt{\overline{\alpha}_{t}}x_{0}\big)}{2(\alpha_{t+1}^{-1}-\overline{\alpha}_{t})}\bigg)\mathrm{d}x_{0} 
	 \\&~~\ge \exp\bigg(-\Big(20c_1\varepsilon_{\score, t}(x)\sqrt{c_6d\log T}+240c_1d\log T\Big)\frac{c\log T}{T} \bigg)\\
	 &~~~~\cdot \exp\bigg(-\frac{4c_1d\log T}{T}-\frac{2304 c_1^2}{T^2} c_6 d \log ^3 T- \frac{2c_1\varepsilon_{\score, t}(x)^2\log^2 T}{T^2}\bigg)\\
	 &\ge \exp\bigg(-\Big(20\varepsilon_{\score, t}(x)\sqrt{d\log T}+300d\log T\Big)\frac{cc_1\log T}{T} %- \frac{2c_1\varepsilon_{\score, t}(x)^2\log^2 T}{T^2}
	 \bigg)
\end{align*}
provided that $T\geq \frac{386}{5}c_1c_6\log T$ and $\frac{40 c_1 \varepsilon_{\text {score }, t}(x) \log ^{\frac{3}{2}} T}{T} \leq \sqrt{d}$. Taking any fixed $c\geq 2$ we obtain the desired result.

\subsubsection{Proof of property~\eqref{eq:xt}}
Before embarking on the proof, %let us introduce additional notations and single out some useful properties about %$s_t^{\star}$ and 
%To prove this relation, 
we first single out some useful properties about $\psi_t^{\star}$ and  $s_t^{*}$. %For
% %
\begin{lemma}	\label{lem:psi-prop}
	Under
	 the same conditions as \Cref{lem:main-ODE}, 
	 it holds that  
	\begin{subequations}
		\begin{align}
			&\|\psi_t(x) -\psi_t^{\star}(x)\|_2= O\bigg(\frac{\log T}{T}\Big\{\varepsilon_{\score, t}(x) + \varepsilon_{\score, t+1}\big(\Phi_t(x)\big)\Big\}\bigg),  
			%\qquad \text{and}
			\label{eq:psi-error}
			% 
			% &\Bigg\|\frac{\partial \psi_t(x, \Phi_t(x))}{\partial x} 
			% 	%&= I + \frac{1-\alpha_{t}}{2}J_{s_t}(x) + \frac{(1-\alpha_{t})^2}{4(1-\alpha_{t+1})}\bigg(J_{s_t}(x) - \sqrt{\alpha_{t+1}}J_{s_{t+1}}(\Phi_t(x))\bigg(I-\frac{1-\alpha_{t+1}}{2}J_{s_t}(x)\bigg)\bigg) \notag\\
			% 	-\Bigg(I - \frac{1-\alpha_{t}}{2(1-\overline{\alpha}_t)}J_{t}(x) + \frac{(1-\alpha_{t})^2}{4(1-\alpha_{t+1})}\frac{\partial \big(s_{t}^{\star}(x) - \sqrt{\alpha_{t+1}}s_{t+1}^{\star}(\Phi_t^{\star}(x))\big)}{\partial x} \Bigg)\Bigg\|\notag\\
			% 	&\quad\leq  O\bigg(\frac{\log T}{T}\Big\{\varepsilon_{\score, t}(x)/\sqrt{d} + \varepsilon_{\Jacobi, t}(x) + \varepsilon_{\Jacobi, t+1}\big(\Phi_t(x)\big)\Big\}\bigg). 
			% 	\label{eq:partial-psi-error}
				\end{align}
				and 
				\begin{align}
					\frac{\partial \psi_t(x, \Phi_t(x))}{\partial x}&= \frac{\partial \psi^{\star}_t(x, \Phi^{\star}_t(x))}{\partial x} +\widetilde{\zeta_{t}}\notag\\  &=\Bigg(I - \frac{1-\alpha_{t}}{2(1-\overline{\alpha}_t)}J_{t}(x) + \frac{(1-\alpha_{t})^2}{4(1-\alpha_{t+1})}\frac{\partial \big(s_{t}^{\star}(x) - \sqrt{\alpha_{t+1}}s_{t+1}^{\star}(\Phi_t^{\star}(x))\big)}{\partial x} \Bigg)+\widetilde{\zeta_{t}}%\\
					%&\quad\leq  O\bigg(\frac{\log T}{T}\Big\{\varepsilon_{\score, t}(x)/\sqrt{d} + \varepsilon_{\Jacobi, t}(x) + \varepsilon_{\Jacobi, t+1}\big(\Phi_t(x)\big)\Big\}\bigg). 
					\label{eq:partial-psi-error}
				\end{align}
				where the residual term $\widetilde{\zeta_{t}}$ satisfies $$\|\widetilde{\zeta_{t}}\| = O\bigg(\frac{\log T}{T}\Big\{\varepsilon_{\score, t}(x)/\sqrt{d} + \varepsilon_{\Jacobi, t}(x) + \varepsilon_{\Jacobi, t+1}\big(\Phi_t(x)\big)\Big\}\bigg).$$
				% \begin{align}
				% 	&\Bigg\|\frac{\partial \psi_t(x, \Phi_t(x))}{\partial x} 
				% 	%&= I + \frac{1-\alpha_{t}}{2}J_{s_t}(x) + \frac{(1-\alpha_{t})^2}{4(1-\alpha_{t+1})}\bigg(J_{s_t}(x) - \sqrt{\alpha_{t+1}}J_{s_{t+1}}(\Phi_t(x))\bigg(I-\frac{1-\alpha_{t+1}}{2}J_{s_t}(x)\bigg)\bigg) \notag\\
				% 	-\Bigg(I - \frac{1-\alpha_{t}}{2(1-\overline{\alpha}_t)}J_{t}(x) + \frac{(1-\alpha_{t})^2}{4(1-\alpha_{t+1})}\frac{\partial \big(s_{t}^{\star}(x) - \sqrt{\alpha_{t+1}}s_{t+1}^{\star}(\Phi_t^{\star}(x))\big)}{\partial x} \Bigg)\Bigg\|\notag\\
				% 	&\quad\leq  O\bigg(\frac{\log T}{T}\Big\{\varepsilon_{\score, t}(x)/\sqrt{d} + \varepsilon_{\Jacobi, t}(x) + \varepsilon_{\Jacobi, t+1}\big(\Phi_t(x)\big)\Big\}\bigg). 
				% 	\label{eq:partial-psi-error}
				% \end{align}
	\end{subequations}
\end{lemma}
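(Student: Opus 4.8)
The plan is to establish both displays of \Cref{lem:psi-prop} by a term‑by‑term comparison of the formulas defining $\psi_t,\psi^{\star}_t$ (and, for \eqref{eq:partial-psi-error}, by differentiating the two‑argument map $\Psi_t$ of \eqref{eqn:ode-extragradient-Psi} composed with $\Phi_t$ of \eqref{eqn:ode-extragradient-Phi}). Three ingredients recur. (i) The learning‑rate facts \eqref{eqn:properties-alpha-proof-00}--\eqref{eqn:properties-alpha-proof-1}: $1-\alpha_t\le\tfrac{c_1\log T}{T}$, $\tfrac{1-\alpha_t}{1-\overline{\alpha}_t}\lesssim\tfrac{\log T}{T}$, $\tfrac{1-\alpha_{t+1}}{1-\overline{\alpha}_t}\lesssim\tfrac{\log T}{T}$, and $\tfrac{(1-\alpha_t)^2}{1-\alpha_{t+1}}\le 1-\alpha_t\lesssim\tfrac{\log T}{T}$ (the last holds because the schedule \eqref{eqn:alpha-t} makes $\beta_t$ nondecreasing on $t\ge 2$, with the $t=1$ boundary case checked directly). (ii) The pointwise errors $\varepsilon_{\score,t}(x)=\|s_t(x)-s_t^{\star}(x)\|_2$ and $\varepsilon_{\Jacobi,t}(x)=\|J_{s_t}(x)-J_{s_t^{\star}}(x)\|$ of \eqref{eq:pointwise-epsilon-score-J}. (iii) Bounds on the \emph{true} score Jacobian (here \eqref{eqn:choice-y} forces $\theta\asymp 1$): $\|J_{s_t^{\star}}(x)\|=\tfrac{1}{1-\overline{\alpha}_t}\|J_t(x)\|\lesssim\tfrac{d\log T}{1-\overline{\alpha}_t}$ and a Lipschitz estimate $\|J_{s_t^{\star}}(x)-J_{s_t^{\star}}(x')\|\lesssim\tfrac{(d\log T)^{3/2}}{(1-\overline{\alpha}_t)^{3/2}}\|x-x'\|$, both read off from \eqref{eq:Jt-x-expression-ij-23} together with the conditional‑moment bounds \eqref{eq:E2-xt-X0}--\eqref{eq:E3-xt-X0} of \Cref{lem:x0} (the Jacobian of $g_t$ is a conditional second moment of $\sqrt{\overline{\alpha}_t}X_0-x$, its $x$‑gradient a conditional third moment, each differentiation of $p_{X_0\mid X_t}$ contributing an extra factor $(1-\overline{\alpha}_t)^{-1}$).

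For \eqref{eq:psi-error} I would write $\psi_t(x)-\psi^{\star}_t(x)=\tfrac{1-\alpha_t}{2}\delta_t+\tfrac{(1-\alpha_t)^2}{4(1-\alpha_{t+1})}\delta_t-\tfrac{(1-\alpha_t)^2\sqrt{\alpha_{t+1}}}{4(1-\alpha_{t+1})}\big(s_{t+1}(\Phi_t(x))-s_{t+1}^{\star}(\Phi_t^{\star}(x))\big)$ with $\delta_t\defn s_t(x)-s_t^{\star}(x)$. The first two summands are $O\!\big(\tfrac{\log T}{T}\varepsilon_{\score,t}(x)\big)$ by (i)--(ii). For the third, split $s_{t+1}(\Phi_t(x))-s_{t+1}^{\star}(\Phi_t^{\star}(x))=\big[s_{t+1}(\Phi_t(x))-s_{t+1}^{\star}(\Phi_t(x))\big]+\big[s_{t+1}^{\star}(\Phi_t(x))-s_{t+1}^{\star}(\Phi_t^{\star}(x))\big]$: the first bracket is exactly $\varepsilon_{\score,t+1}(\Phi_t(x))$, and the second, by the mean‑value inequality, the operator‑norm bound in (iii), and $\Phi_t(x)-\Phi_t^{\star}(x)=-\tfrac{\sqrt{\alpha_{t+1}}(1-\alpha_{t+1})}{2}\delta_t$, is $\lesssim\tfrac{d\log T}{1-\overline{\alpha}_{t+1}}(1-\alpha_{t+1})\varepsilon_{\score,t}(x)$. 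Multiplying by the prefactor $\tfrac{(1-\alpha_t)^2}{1-\alpha_{t+1}}$ and using $\tfrac{(1-\alpha_t)^2}{1-\overline{\alpha}_{t+1}}\le\big(\tfrac{1-\alpha_t}{1-\overline{\alpha}_t}\big)^2\lesssim\tfrac{\log^2 T}{T^2}$, the second‑bracket contribution is $O\!\big(\tfrac{d\log^3 T}{T^2}\varepsilon_{\score,t}(x)\big)=O\!\big(\tfrac{\log T}{T}\varepsilon_{\score,t}(x)\big)$ thanks to the standing assumption $T\gtrsim d^3\log^3 T$ of \eqref{eq:assumption-T-score-Jacob}; the first‑bracket contribution is $O\!\big(\tfrac{\log T}{T}\varepsilon_{\score,t+1}(\Phi_t(x))\big)$. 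Summing yields \eqref{eq:psi-error}.

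For \eqref{eq:partial-psi-error}, the chain rule applied to $\Psi_t(x,\Phi_t(x))$ gives $\tfrac{\partial\psi_t}{\partial x}=I+\tfrac{1-\alpha_t}{2}J_{s_t}(x)+\tfrac{(1-\alpha_t)^2}{4(1-\alpha_{t+1})}\big(J_{s_t}(x)-\sqrt{\alpha_{t+1}}J_{s_{t+1}}(\Phi_t(x))\tfrac{\partial\Phi_t}{\partial x}(x)\big)$ with $\tfrac{\partial\Phi_t}{\partial x}(x)=\sqrt{\alpha_{t+1}}\big(I-\tfrac{1-\alpha_{t+1}}{2}J_{s_t}(x)\big)$, and identically for the starred version. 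Subtracting, $\widetilde{\zeta_t}$ splits into four contributions: (a) $\tfrac{1-\alpha_t}{2}\big(J_{s_t}(x)-J_{s_t^{\star}}(x)\big)$, of norm $\lesssim\tfrac{\log T}{T}\varepsilon_{\Jacobi,t}(x)$ (while its complement $\tfrac{1-\alpha_t}{2}J_{s_t^{\star}}(x)=-\tfrac{1-\alpha_t}{2(1-\overline{\alpha}_t)}J_t(x)$ supplies one of the displayed main terms via $J_{s_t^{\star}}=-(1-\overline{\alpha}_t)^{-1}J_t$); (b) $\tfrac{(1-\alpha_t)^2}{4(1-\alpha_{t+1})}$ times $\big(J_{s_t}(x)-J_{s_t^{\star}}(x)\big)$ and times $\sqrt{\alpha_{t+1}}\big(J_{s_{t+1}}(\Phi_t(x))-J_{s_{t+1}^{\star}}(\Phi_t(x))\big)\tfrac{\partial\Phi_t}{\partial x}$, which is $\lesssim\tfrac{\log T}{T}\big(\varepsilon_{\Jacobi,t}(x)+\varepsilon_{\Jacobi,t+1}(\Phi_t(x))\big)$ using $\|\tfrac{\partial\Phi_t}{\partial x}\|\lesssim 1$; (c) the same prefactor times $\sqrt{\alpha_{t+1}}J_{s_{t+1}^{\star}}(\Phi_t^{\star}(x))\big(\tfrac{\partial\Phi_t}{\partial x}-\tfrac{\partial\Phi_t^{\star}}{\partial x}\big)$ with $\tfrac{\partial\Phi_t}{\partial x}-\tfrac{\partial\Phi_t^{\star}}{\partial x}=-\tfrac{\sqrt{\alpha_{t+1}}(1-\alpha_{t+1})}{2}\big(J_{s_t}(x)-J_{s_t^{\star}}(x)\big)$, of norm $\lesssim\tfrac{(1-\alpha_t)^2}{1-\overline{\alpha}_{t+1}}d\log T\,\varepsilon_{\Jacobi,t}(x)\lesssim\tfrac{\log T}{T}\varepsilon_{\Jacobi,t}(x)$; and (d) the same prefactor times $\sqrt{\alpha_{t+1}}\big(J_{s_{t+1}^{\star}}(\Phi_t(x))-J_{s_{t+1}^{\star}}(\Phi_t^{\star}(x))\big)\tfrac{\partial\Phi_t}{\partial x}$, which by the Lipschitz estimate in (iii) and $\|\Phi_t(x)-\Phi_t^{\star}(x)\|\lesssim(1-\alpha_{t+1})\varepsilon_{\score,t}(x)$ is $\lesssim(1-\alpha_t)^2\tfrac{(d\log T)^{3/2}}{(1-\overline{\alpha}_{t+1})^{3/2}}\varepsilon_{\score,t}(x)\lesssim\tfrac{d^{3/2}\log^{3.5}T}{T^2}\varepsilon_{\score,t}(x)$, which under $T\gtrsim d^3\log^3 T$ is $\lesssim\tfrac{\log T}{T}\tfrac{\varepsilon_{\score,t}(x)}{\sqrt d}$ --- precisely the origin of the $\varepsilon_{\score,t}(x)/\sqrt d$ term. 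Collecting (a)--(d) gives both the displayed identity and the claimed bound on $\|\widetilde{\zeta_t}\|$; the various side conditions needed (e.g.\ $\tfrac{1-\alpha_{t+1}}{1-\overline{\alpha}_t}\,d\log T\lesssim 1$, $\tfrac{\log T}{T}\varepsilon_{\Jacobi,t}(x)\lesssim 1$) are exactly the "provided that" hypotheses of \Cref{lem:main-ODE}.

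The step I expect to be the main obstacle is ingredient (iii), namely the Lipschitz bound for $J_{s_{t+1}^{\star}}$ with the correct powers of $d$, $\log T$ and $(1-\overline{\alpha}_{t+1})$: this is the gradient of a conditional covariance, i.e.\ a conditional third moment of $\sqrt{\overline{\alpha}_t}X_0-x$ given $X_t=x$, and extracting the clean bound $\tfrac{(d\log T)^{3/2}}{(1-\overline{\alpha}_t)^{3/2}}$ requires restricting to the high‑probability set $\{\|\sqrt{\overline{\alpha}_t}X_0-x\|_2\lesssim\sqrt{d(1-\overline{\alpha}_t)\log T}\}$ of \Cref{lem:x0}, verifying the complementary tail contributes only $\exp(-\Omega(d\log T))$ (negligible against all displayed terms), and checking that the $d^{3/2}$ from the third moment is outweighed by the $(1-\alpha_t)^2/T^2$‑type prefactor --- which is where $T\gtrsim d^3\log^3 T$ from \eqref{eq:assumption-T-score-Jacob} enters and produces the $1/\sqrt d$ gain in the residual. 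Everything else is routine bookkeeping with \eqref{eqn:properties-alpha-proof-00}--\eqref{eqn:properties-alpha-proof-1}.
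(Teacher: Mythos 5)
Your proposal is correct and follows essentially the same route as the paper's proof: the identical decomposition of $\psi_t-\psi_t^{\star}$ into score-error terms plus a mean-value correction controlled by the norm of $J_{s_{t+1}^{\star}}$, and, for the Jacobian identity, the chain rule followed by a telescoping split bounded via $\varepsilon_{\Jacobi,t}$, $\varepsilon_{\Jacobi,t+1}(\Phi_t(x))$, and the Lipschitz property of $J_{s_{t+1}^{\star}}$ (the paper's property \eqref{eq:prop-norm-partJ}), with the same use of $T\gtrsim d^3\log^3T$ to absorb the $d^{3/2}\log^{3.5}T/T^2$ term into $\frac{\log T}{T}\varepsilon_{\score,t}(x)/\sqrt{d}$. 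Your bookkeeping even keeps the $(1-\overline{\alpha}_t)$ normalizations explicit where the paper's stated norm bounds suppress them, but this is a cosmetic refinement rather than a different argument.
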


Additionally, we introduce the following notations for simplicity: %by taking
\begin{subequations}
	\label{eq:defn-w-z-lem}
\begin{align}
w &= \Phi_{t}^{\star}(x) = \sqrt{\alpha_{t+1}}\bigg(x-\frac{1-\alpha_{t+1}}{2}s_{t}^{\star}(x)\bigg), \\
z &= -(1 - \overline{\alpha}_{t})s_{t}^{\star}(x) = g_t(x).%\int_{x_0} p_{X_0 \mymid X_{t}}(x_0 \mymid x)\big(x - \sqrt{\overline{\alpha}_{t}}x_0\big) \mathrm{d} x_0. 
\end{align}
\end{subequations}
%
%In the following analysis, we abbreviate $w=w_t(x)$ %and $z=z_t(x)$  
%for notational simplicity. 
Then the characterization of $s_{t}^{*}$ is summarized in the following lemma.  %Akin to the calculations in Appendix B.2.1, we can obtain
%we have the following properties about $s^{\star}_{t} $.
%
\begin{lemma}
\label{lem:relation-yt}
Under
	 the same conditions as \Cref{lem:main-ODE}, equipped with the notation \eqref{eq:defn-w-z-lem}, we can write
\begin{align}
s_{t}^{\star}(x) - \sqrt{\alpha_{t+1}}s_{t+1}^{\star}(w) &= - \bigg(\frac{1-\alpha_{t+1}}{2(1-\overline{\alpha}_{t})^2} - \frac{1-\alpha_{t+1}}{2(1-\overline{\alpha}_{t})^3}\|z\|_2^2\bigg) z 
	%+ O\bigg(\frac{(d(1-\alpha_{t+1})\log T)^{3/2}}{(1 - \overline{\alpha}_{t})^2}\bigg)
	 \notag\\
&- \frac{1-\alpha_{t+1}}{2(1-\overline{\alpha}_{t})^3}\int_{x_0} p_{X_0 \mymid X_{t}}(x_0 \mymid x)\big(x - \sqrt{\overline{\alpha}_{t}}x_0\big)\big(x - \sqrt{\overline{\alpha}_{t}}x_0\big)^{\top}z \mathrm{d} x_0 \notag\\
&+ \frac{1-\alpha_{t+1}}{2(1-\overline{\alpha}_{t})^3}\int_{x_0} p_{X_0 \mymid X_{t}}(x_0 \mymid x)\big\|x - \sqrt{\overline{\alpha}_{t}}x_0\big\|_2^2\big(x - \sqrt{\overline{\alpha}_{t}}x_0 - z\big) \mathrm{d} x_0 + \zeta_{s^{*}_{t}}
	\label{eq:lem-relation-yt-1st}
\end{align}
where $\|\zeta_{s^{*}_{t}}\|_2 = O\bigg(\frac{(d(1-\alpha_{t+1})\log T)^{3/2}}{(1 - \overline{\alpha}_{t})^2}\bigg)$,
and
\begin{align}
&\frac{\partial \big(s_{t}^{\star}(x) - \sqrt{\alpha_{t+1}}s_{t+1}^{\star}(w)\big)}{\partial x} =-\frac{1-\alpha_{t+1}}{2(1-\overline{\alpha}_{t})^2}J_t(x) + \frac{1-\alpha_{t+1}}{2(1-\overline{\alpha}_{t})^3}(H_1 + H_4 + H_2 - H_3) + \zeta_{J_{t}}
\label{eq:lem-relation-yt-2nd}
\end{align}
where $\|\zeta_{J_{t}}\| =  O\Big(d^2\frac{(1-\alpha_{t+1})^{3/2}}{(1-\overline{\alpha}_{t+1})^{5/2}}\log^2 T\Big)$. Here, we denote $J_t = \frac{\partial z}{\partial x}$, and  
\begin{subequations}
	\begin{align*}
		H_1  &\coloneqq\frac{\partial}{\partial x} \|z\|_2^2z, %= \|z\|_2^2J_t + 2zz^{\top}J_t 
		\\
		H_2  &\coloneqq\frac{\partial}{\partial x} \int_{x_0} p_{X_0 \mymid X_{t}}(x_0 \mymid x)\big\|x - \sqrt{\overline{\alpha}_{t}}x_0\big\|_2^2\big(x - \sqrt{\overline{\alpha}_{t}}x_{0}\big) \mathrm{d} x_0, \\
	H_3&\coloneqq\frac{\partial}{\partial x} \int_{x_0} p_{X_0 \mymid X_{t}}(x_0 \mymid x)\big\|x - \sqrt{\overline{\alpha}_{t}}x_0\big\|_2^2z ,\\
	H_4&\coloneqq\frac{\partial}{\partial x} \int_{x_0} p_{X_0 \mymid X_{t}}(x_0 \mymid x)\big(x - \sqrt{\overline{\alpha}_{t}}x_0\big)\big(x - \sqrt{\overline{\alpha}_{t}}x_0\big)^{\top}z \mathrm{d} x_0 .
	\end{align*}
	\end{subequations}
\end{lemma}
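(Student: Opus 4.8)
The plan is to transport everything to the conditional law $p_{X_0\mid X_t}(\cdot\mid x)$ at step $t$, by combining the MMSE form of the score in \eqref{eq:st-MMSE-expression} with the change-of-conditioning expansion of \Cref{lem:y-t}, and then to regroup the outcome by powers of $1-\alpha_{t+1}$. Set $y\coloneqq\phi_t^{\star}(x)=x-\tfrac{1-\alpha_{t+1}}{2}s_t^{\star}(x)$, so that $w=\Phi_t^{\star}(x)=\sqrt{\alpha_{t+1}}\,y$ and $y-x=\tfrac{1-\alpha_{t+1}}{2(1-\overline{\alpha}_t)}z$ with $z=g_t(x)$. First I would check the hypotheses of \Cref{lem:y-t}: since $\|z\|_2=\|\mathbb{E}[x-\sqrt{\overline{\alpha}_t}X_0\mid X_t=x]\|_2\le 12\sqrt{c_6 d(1-\overline{\alpha}_t)\log T}$ by \eqref{eq:E-xt-X0} (with $\theta(x)=c_6$ thanks to \eqref{eqn:choice-y}), we get $\|y-x\|_2\lesssim(1-\alpha_{t+1})\sqrt{c_6 d\log T/(1-\overline{\alpha}_t)}$ and $\log p_{X_t}(x)\ge-c_6 d\log T$, which is exactly the regime in which \Cref{lem:y-t} applies.

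\textbf{Step 1: the first-order identity \eqref{eq:lem-relation-yt-1st}.} Starting from $s_{t+1}^{\star}(w)=-\tfrac{1}{1-\overline{\alpha}_{t+1}}\int_{x_0}(w-\sqrt{\overline{\alpha}_{t+1}}x_0)p_{X_0\mid X_{t+1}}(x_0\mid w)\mathrm{d}x_0$ and using $w-\sqrt{\overline{\alpha}_{t+1}}x_0=\sqrt{\alpha_{t+1}}(y-\sqrt{\overline{\alpha}_t}x_0)$, I would write $\sqrt{\alpha_{t+1}}s_{t+1}^{\star}(w)=-\tfrac{\alpha_{t+1}}{1-\overline{\alpha}_{t+1}}\int_{x_0}(y-\sqrt{\overline{\alpha}_t}x_0)p_{X_0\mid X_{t+1}}(x_0\mid\sqrt{\alpha_{t+1}}y)\mathrm{d}x_0$, then substitute the \Cref{lem:y-t} expansion of $p_{X_0\mid X_{t+1}}(x_0\mid\sqrt{\alpha_{t+1}}y)$ (whose mean-zero first-order correction I will call $\delta(x_0)$, with an additive $O(c_6 d((1-\alpha_{t+1})/(1-\overline{\alpha}_t))^{3/2}\log T)$ remainder), subtract $s_t^{\star}(x)=-\tfrac{1}{1-\overline{\alpha}_t}z$, and use the elementary identity $\tfrac{\alpha_{t+1}}{1-\overline{\alpha}_{t+1}}=\tfrac{1}{1-\overline{\alpha}_t}-\tfrac{1-\alpha_{t+1}}{(1-\overline{\alpha}_{t+1})(1-\overline{\alpha}_t)}$ together with $y-\sqrt{\overline{\alpha}_t}x_0=(x-\sqrt{\overline{\alpha}_t}x_0)+\tfrac{1-\alpha_{t+1}}{2(1-\overline{\alpha}_t)}z$. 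The $O(1)$ contributions cancel, and the surviving terms at order $1-\alpha_{t+1}$ regroup into exactly the four explicit terms of \eqref{eq:lem-relation-yt-1st}: the $\|z\|_2^2 z$ term from mean-centering $\delta$, the quadratic-form integral from pairing $x-\sqrt{\overline{\alpha}_t}x_0$ with the $(x-\sqrt{\overline{\alpha}_t}x_0)^\top(y-x)$ part of $\delta$, and the $\|x-\sqrt{\overline{\alpha}_t}x_0\|_2^2(\cdot)$ integral from the other part of $\delta$. The leftovers are the $O((1-\alpha_{t+1})^2/(1-\overline{\alpha}_t)^3)$ correction in the coefficient of $z$ and (after multiplying by the $\approx z$ prefactor) the \Cref{lem:y-t} remainder; bounding these via $\|z\|_2\lesssim\sqrt{d(1-\overline{\alpha}_t)\log T}$ and the learning-rate estimates \eqref{eqn:properties-alpha-proof-1}--\eqref{eqn:properties-alpha-proof-3} yields $\|\zeta_{s_t^{\star}}\|_2=O((d(1-\alpha_{t+1})\log T)^{3/2}/(1-\overline{\alpha}_t)^2)$, the third- and fourth-moment integrals that appear staying bounded by \eqref{eq:E3-xt-X0}--\eqref{eq:E4-xt-X0}.

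\textbf{Step 2: the Jacobian identity \eqref{eq:lem-relation-yt-2nd}.} Because the right-hand side of \eqref{eq:lem-relation-yt-1st} depends on $x$ only (through $p_{X_0\mid X_t}(\cdot\mid x)$, $z=g_t(x)$, and $x-\sqrt{\overline{\alpha}_t}x_0$), differentiating it in $x$ yields the total derivative $\partial_x[s_t^{\star}(x)-\sqrt{\alpha_{t+1}}s_{t+1}^{\star}(w)]$; this is consistent with the chain-rule form $J_{s_t^{\star}}(x)-\alpha_{t+1}J_{s_{t+1}^{\star}}(w)(I-\tfrac{1-\alpha_{t+1}}{2}J_{s_t^{\star}}(x))$ combined with $J_{s_{t+1}^{\star}}=-\tfrac{1}{1-\overline{\alpha}_{t+1}}J_{t+1}$ and the covariance formula \eqref{eq:Jt-x-expression-ij-23}. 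Differentiating the leading $z$-term gives $-\tfrac{1-\alpha_{t+1}}{2(1-\overline{\alpha}_t)^2}J_t(x)$ with $J_t=\partial z/\partial x$, and differentiating the three integral terms produces $H_1,H_2,H_3,H_4$, each with prefactor $\tfrac{1-\alpha_{t+1}}{2(1-\overline{\alpha}_t)^3}$. The delicate point is that the $O(\cdot)$ remainder $\zeta_{s_t^{\star}}$ cannot be differentiated directly; instead I would re-run Step 1 at the level of Jacobians, i.e.\ differentiate the integrand identity underlying \Cref{lem:y-t} (or push that expansion one order further) and then differentiate under the integral sign. Each such differentiation brings down a factor $(x-\sqrt{\overline{\alpha}_t}x_0)/(1-\overline{\alpha}_t)$ of typical size $\sqrt{d\log T/(1-\overline{\alpha}_t)}$, so the Step-1 residual gets multiplied by one such factor, upgrading $d^{3/2}\mapsto d^2$ and $(1-\overline{\alpha}_t)^{-2}\mapsto(1-\overline{\alpha}_{t+1})^{-5/2}$ (the two denominators being interchangeable up to $1+O(\log T/T)$ by \eqref{eqn:properties-alpha-proof-3}); bounding the resulting moment integrals with \eqref{eq:E4-xt-X0} and \eqref{eqn:properties-alpha-proof} gives $\|\zeta_{J_t}\|=O(d^2(1-\alpha_{t+1})^{3/2}(1-\overline{\alpha}_{t+1})^{-5/2}\log^2 T)$.

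\textbf{Main obstacle.} The genuinely hard part is the Jacobian expansion: one must differentiate, not merely bound, the expansions, which forces a re-derivation of the \Cref{lem:y-t}-type estimates at the level of first derivatives, and the differentiation raises the order of the conditional moments that enter; tracking the sharp $d^2$ and $(1-\overline{\alpha}_{t+1})^{-5/2}$ dependence in $\zeta_{J_t}$, and verifying that $H_1,\dots,H_4$ really are the derivatives of the matching terms in \eqref{eq:lem-relation-yt-1st} with the correct signs, is the most error-prone bookkeeping. Step 1, by contrast, is a careful but essentially mechanical Taylor expansion once \Cref{lem:y-t} and the moment bounds of \Cref{lem:x0} are in hand.
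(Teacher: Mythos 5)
Your Step 1 is essentially the paper's own argument: apply \Cref{lem:y-t} at the shifted point $\widehat{w}=w/\sqrt{\alpha_{t+1}}$ (using $\widehat{w}-x=\tfrac{1-\alpha_{t+1}}{2(1-\overline{\alpha}_t)}z$ and the moment bounds of \Cref{lem:x0}), expand the conditional mean of $x-\sqrt{\overline{\alpha}_t}X_0$, and regroup; that part is sound and matches the paper.

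The gap is in Step 2. Your plan is to obtain \eqref{eq:lem-relation-yt-2nd} by differentiating \eqref{eq:lem-relation-yt-1st} (equivalently, by differentiating the expansion of \Cref{lem:y-t} under the integral), but the remainder there is only an $O(\cdot)$ magnitude bound, and a bound on a function gives no control whatsoever on its derivative; your heuristic that "each differentiation brings down a factor $(x-\sqrt{\overline{\alpha}_t}x_0)/(1-\overline{\alpha}_t)$" is an assertion, not an argument, and it is exactly the step you leave unexecuted. Worse, the differentiated remainder cannot simply be declared negligible: termwise differentiation of the displayed terms of \eqref{eq:lem-relation-yt-1st} produces the combination $H_1-H_4+H_2-H_3$ (note the sign of $H_4$), whereas the statement has $H_1+H_4+H_2-H_3$; the discrepancy $\tfrac{1-\alpha_{t+1}}{(1-\overline{\alpha}_t)^3}H_4$ has size of order $d^2(1-\alpha_{t+1})\log^2 T/(1-\overline{\alpha}_t)^2$, which dwarfs the allowed residual $\|\zeta_{J_t}\|\lesssim d^2(1-\alpha_{t+1})^{3/2}\log^2 T/(1-\overline{\alpha}_{t+1})^{5/2}$ (a one-dimensional Gaussian check shows $H_1-H_4+H_2-H_3$ is the combination consistent with the exact derivative, so the derivative of $\zeta_{s_t^{\star}}$, or a sign in the statement, must be confronted head on rather than absorbed). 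The paper sidesteps all of this by never differentiating an expansion: it writes the exact identity $\partial_x\big(s_t^{\star}(x)-\sqrt{\alpha_{t+1}}s_{t+1}^{\star}(w)\big)=\tfrac{\sqrt{\alpha_{t+1}}}{1-\overline{\alpha}_{t+1}}J_{t+1}(w)\tfrac{\partial w}{\partial x}-\tfrac{1}{1-\overline{\alpha}_t}J_t(x)$, uses the covariance representation \eqref{eq:Jt-x-expression-ij-23} at time $t+1$ to express $J_{t+1}(w)=I+\tfrac{\alpha_{t+1}}{1-\overline{\alpha}_{t+1}}\big(K_1K_1^{\top}-K_2\big)$ through the conditional first and second moments $K_1,K_2$ of $\widehat{w}-\sqrt{\overline{\alpha}_t}X_0$ given $X_{t+1}=w$, together with $\tfrac{\partial w}{\partial x}=\sqrt{\alpha_{t+1}}\big(I+\tfrac{1-\alpha_{t+1}}{2(1-\overline{\alpha}_t)}J_t(x)\big)$, and then expands $K_1$ and $K_2$ with \Cref{lem:y-t} exactly as in Step 1 — only function-level expansions of moments, no derivative of any remainder and no "derivative version" of \Cref{lem:y-t} is ever needed. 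To make your route rigorous you would have to prove such a differentiated analogue of \Cref{lem:y-t} with quantitative error bounds (and then resolve the $H_4$ bookkeeping), which your proposal only gestures at; as written, the Jacobian half of the lemma is not established.
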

\noindent 
To streamline presentation, we leave the proofs of \Cref{lem:psi-prop} and \Cref{lem:relation-yt} to Section~\ref{sec:proof-additional-lems-ODE}.   

Equipped with the  relations in \Cref{lem:relation-yt}, we can  derive that 
\begin{subequations}\label{eqn-relation-ss}
	\begin{align}
		\big\|s_{t}^{\star}(x) - \sqrt{\alpha_{t+1}}s_{t+1}^{\star}(w)\big\|_2 &\lesssim (1-\alpha_{t+1})\Big(\frac{d\log T}{1-\overline{\alpha}_{t}}\Big)^{3/2}, \\
		\Big\|\frac{\partial \big(s_{t}^{\star}(x) - \sqrt{\alpha_{t+1}}s_{t+1}^{\star}(w)\big)}{\partial x}\Big\| &\lesssim \frac{d^2(1-\alpha_{t+1})\log^2 T}{(1-\overline{\alpha}_{t})^2}.
		\end{align}
\end{subequations}
 The proof of \eqref{eqn-relation-ss} is also deferred to Section~\ref{sec:proof-additional-lems-ODE}.
%
%In addition,
%\begin{align*}
%\Big\|\frac{J_{t}(y_{t})}{1-\overline{\alpha}_{t}} - \frac{J_{t-1}(y_{t-1})}{1-\overline{\alpha}_{t-1}}\Big\| &\lesssim \frac{d^2(1-\alpha_{t})\log^2 T}{(1-\overline{\alpha}_{t})^2}.
%\end{align*}

% \paragraph{Proof of~\eqref{eq:xt}.}
% Denote:
% $$
% \begin{aligned}
% 	u & :=x-\psi_t(x)=x-\psi_t^{\star}(x)+\psi_t^{\star}(x)-\psi_t(x) %\\
% 	%& =\frac{1-\alpha_t}{2\left(1-\overline{\alpha}_t\right)} \int_{x_0}\left(x-\sqrt{\overline{\alpha}_t} x_0\right) p_{X_0 \mid X_t}\left(x_0 \mid x\right) \mathrm{d} x_0-\frac{1-\alpha_t}{2}\left(s_t(x)-s_t^{\star}(x)\right) .
% 	\end{aligned}
% 	$$

Next, let us introduce the following vectors:
$$
\begin{aligned}
	u_t(x) & :=x-\psi_t(x),\\
	u_t^{\star}(x)& :=x-\psi^{\star}_t(x). 
	%=x-\psi_t^{\star}(x)+\psi_t^{\star}(x)-\phi_t(x) \\
	%& =-\frac{1-\alpha_{t+1}}{2\left(1-\overline{\alpha}_t\right)} \int_{x_0}\left(x-\sqrt{\overline{\alpha}_t} x_0\right) p_{X_0 \mid X_t}\left(x_0 \mid x\right) \mathrm{d} x_0+\frac{1-\alpha_{t+1}}{2}\left(s_t(x)-s_t^{\star}(x)\right) .
	\end{aligned}
$$
For notational simplicity, we shall abbreviate  $u=u_t(x)$ and $u^{\star}=u^{\star}_t(x)$ in the following analysis.  
Akin to the calculations in \Cref{sec-proof-xt_lb}, we can obtain
\begin{align}
	&p_{\sqrt{\alpha_t}X_{t-1}}\big(\psi_t(x)\big) 
= p_{X_{t}}(x)\bigg(\frac{1-\overline{\alpha}_{t}}{\alpha_t-\overline{\alpha}_t}\bigg)^{d/2} \notag\\
	&\qquad\qquad \cdot\int_{x_0}  p_{X_0 \mymid X_{t}}(x_0 \mymid x)\exp\bigg(-\frac{(1-\alpha_{t})\big\|x - \sqrt{\overline{\alpha}_{t}}x_0\big\|_2^2}{2(\alpha_t-\overline{\alpha}_{t})(1-\overline{\alpha}_{t})}-\frac{\|u\|_2^2 -2u^{\top}\big(x - \sqrt{\overline{\alpha}_{t}}x_0\big)}{2(\alpha_t-\overline{\alpha}_{t})}\bigg)\mathrm{d} x_0. 
\end{align}
% Similar to \citet[Lemma 4]{li2023towards},
% \begin{align}
% 	&p_{\sqrt{\alpha_t}X_{t-1}}\big(\psi_t(x)\big) 
% = p_{X_{t}}(x)\bigg(\frac{1-\overline{\alpha}_{t}}{\alpha_t-\overline{\alpha}_t}\bigg)^{d/2} \notag\\
% 	&\qquad\qquad \cdot\int_{x_0}  p_{X_0 \mymid X_{t}}(x_0 \mymid x)\exp\bigg(-\frac{(1-\alpha_{t})\big\|x - \sqrt{\overline{\alpha}_{t}}x_0\big\|_2^2}{2(\alpha_t-\overline{\alpha}_{t})(1-\overline{\alpha}_{t})}-\frac{\|u\|_2^2 -2u^{\top}\big(x - \sqrt{\overline{\alpha}_{t}}x_0\big)}{2(\alpha_t-\overline{\alpha}_{t})}\bigg)\mathrm{d} x_0. 
% \end{align}
%
Similarly, by focusing mainly on the following set given $x$:
\begin{align}
	\mathcal{G} \defn \big\{x_0 : \big\|x - \sqrt{\overline{\alpha}_{t}}x_0\big\|_2 \lesssim \sqrt{d(1 - \overline{\alpha}_{t})\log T}\big\}, 
	\label{eq:defn-E-mathcal-lemma6}
\end{align}
we can derive 
\begin{align}
&\int_{x_0}  p_{X_0 \mymid X_{t}}(x_0 \mymid x)\exp\bigg(-\frac{(1-\alpha_{t})\big\|x - \sqrt{\overline{\alpha}_{t}}x_0\big\|_2^2}{2(\alpha_t-\overline{\alpha}_{t})(1-\overline{\alpha}_{t})}-\frac{\|u\|_2^2 -2u^{\top}\big(x - \sqrt{\overline{\alpha}_{t}}x_0\big)}{2(\alpha_t-\overline{\alpha}_{t})}\bigg)\mathrm{d} x_0=O\big( \exp(- c_8 d\log T ) \big) \nonumber\\
&~~+\int_{x_0 \in \mathcal{E}} p_{X_0 \mid X_t}\left(x_0 \mid x\right) \exp \left(-\frac{\left(1-\alpha_t\right)\left\|x-\sqrt{\overline{\alpha}_t} x_0\right\|_2^2}{2\left(\alpha_t-\overline{\alpha}_t\right)\left(1-\overline{\alpha}_t\right)}-\frac{\|u\|_2^2-2 u^{\top}\left(x-\sqrt{\overline{\alpha}_t} x_0\right)}{2\left(\alpha_t-\overline{\alpha}_t\right)}\right) \mathrm{d} x_0\eqqcolon \text{RHS}
\end{align}
for some numerical constant $c_8>0$. 
To further control the right-hand side above, recall that the learning rates are selected such that $\frac{1-\alpha_{t}}{1-\overline{\alpha}_{t-1}} \le \frac{4c_1\log T}{T}$ for $1 < t\leq T$ (see \eqref{eqn:properties-alpha-proof-1}).
 In view of the Taylor expansion $e^{-x}=1-x+\frac{1}{2} x^2+O\left(x^3\right)$ for $x \leq 1 / 2$, we can derive
 %
%O\Big(\frac{\log T}{T}\Big(1 + \frac{\theta_{t+1} d\log^2 T}{T}\Big)\varepsilon_{\score, t}(x) + \frac{\log T}{T}\varepsilon_{\score, t+1}(\Phi_t(x))\Big)
%and it can be calculated that
 %
\begin{align}
\label{eqn:ode-lemma-fast-rhs}
&\text{RHS}= O\big( \exp(- c_8 d\log T ) \big) + O\bigg(\frac{d^3\log^6 T}{T^3} + \frac{\sqrt{d\log^3 T}}{T}\varepsilon_{\score, t}(x) + \frac{\sqrt{d\log^3 T}}{T}\varepsilon_{\score, t+1}\big(\Phi_t(x)\big)\bigg) \notag\\
	&\quad+\int_{x_0\in \mathcal{E}} p_{X_0 \mymid X_{t}}(x_0 \mymid x)\bigg\{
1-\frac{(1-\alpha_{t})\big\|x - \sqrt{\overline{\alpha}_{t}}x_0\big\|_2^2}{2(\alpha_t-\overline{\alpha}_{t})(1-\overline{\alpha}_{t})}-\frac{\frac{(1-\alpha_{t})^2}{4(1-\overline{\alpha}_{t})^2}\|z\|_2^2 -2u^{\star\top}\big(x - \sqrt{\overline{\alpha}_{t}}x_0\big)}{2(\alpha_t-\overline{\alpha}_{t})} \notag\\
&\quad+\frac{(1-\alpha_{t})^2}{8(\alpha_t-\overline{\alpha}_{t})^2(1-\overline{\alpha}_{t})^2}\Big(\big\|x - \sqrt{\overline{\alpha}_{t}}x_0\big\|_2^2-z^{\top}\big(x - \sqrt{\overline{\alpha}_{t}}x_0\big)\Big)^2 
	\bigg\} \mathrm{d} x_0. 
% \notag\\x
\end{align}
%
% Recall \begin{align*}
% 	y &= \sqrt{\alpha_{t+1}}\bigg(x-\frac{1-\alpha_{t+1}}{2}s_{t}^{\star}(x)\bigg); \\
% 	z &= -(1 - \overline{\alpha}_{t})s_{t}^{\star}(x) = \int_{x_0} p_{X_0 \mymid X_{t}}(x_0 \mymid x)\big(x - \sqrt{\overline{\alpha}_{t}}x_0\big) \mathrm{d} x_0\\
% 	u^{\star} &= x - \psi^{\star}_t(x) 
% 	\end{align*}
%
Here, we have made use of the following facts:
	\begin{align}
		\left\| u - \frac{1-\alpha_{t}}{2(1-\overline{\alpha}_{t})}z \right\|_2 
		&= 	\left\| x-\psi_t(x) + \frac{1-\alpha_{t}}{2}s_t^{*}(x) \right\|_2\nonumber\\
		&\stackrel{\text{(i)}}{\leq} \frac{(1-\alpha_{t})^2}{4(1-\alpha_{t+1})} \big\|s_{t}^{\star}(x) - \sqrt{\alpha_{t+1}}s_{t+1}^{\star}\big(\Phi_t^{\star}(x)\big) \big\|_2 
		+ O\bigg(\frac{\log T}{T}\Big\{\varepsilon_{\score, t}(x) + \varepsilon_{\score, t+1}\big(\Phi_t(x)\big)\Big\}\bigg)\nonumber\\
		&\stackrel{\text{(ii)}}{\lesssim} (1-\alpha_{t})^2\Big(\frac{d\log T}{1-\overline{\alpha}_{t}}\Big)^{3/2} + \frac{\log T}{T}\varepsilon_{\score, t}(x) + \frac{\log T}{T}\varepsilon_{\score, t+1}\big(\Phi_t(x)\big),
		\label{eqn-relation-ut-zt}
	\end{align}
where (i) follows from \eqref{eq:psi-error} in \Cref{lem:psi-prop} and (ii) follows from \eqref{eqn-relation-ss}.

Moreover, for any $x_0\in\mathcal{E}$, using the definition of $\mathcal{E}$ (cf.~\eqref{eq:defn-E-mathcal-lemma6}) and combining it with the properties \eqref{eqn:properties-alpha-proof} of the learning rates, we reach
\begin{align*}
% \left\| u - \frac{1-\alpha_{t}}{2(1-\overline{\alpha}_{t})}z \right\|_2 &\lesssim (1-\alpha_{t})^2\Big(\frac{d\log T}{1-\overline{\alpha}_{t}}\Big)^{3/2} + \frac{\log T}{T}\varepsilon_{\score, t}(x) + \frac{\log T}{T}\varepsilon_{\score, t+1}(\Phi_t(x)); \\
\frac{(1-\alpha_{t})\big\|x - \sqrt{\overline{\alpha}_{t}}x_0\big\|_2^2}{2(\alpha_t-\overline{\alpha}_{t})(1-\overline{\alpha}_{t})} &= O\Big(\frac{d\log^2 T}{T}\Big).
\end{align*}
As a result, we can derive  %\yh{need to check}
\begin{align*}
&\frac{\|u\|_2^2 -2u^{\top}\big(x - \sqrt{\overline{\alpha}_{t}}x_0\big)}{2(\alpha_t-\overline{\alpha}_{t})} \\
	&\stackrel{\text{(i)}}{=} \frac{\frac{(1-\alpha_{t})^2}{4(1-\overline{\alpha}_{t})^2}\|z\|_2^2 -2u^{\star\top}\big(x - \sqrt{\overline{\alpha}_{t}}x_0\big)}{2(\alpha_t-\overline{\alpha}_{t})} + O\bigg(\frac{d^2\log^5 T}{T^3} + \frac{\sqrt{d\log^3 T}}{T}\varepsilon_{\score, t}(x) + \frac{\sqrt{d\log^3 T}}{T}\varepsilon_{\score, t+1}\big(\Phi_t(x)\big)\bigg) \\
&= \frac{z^{\top}\big(x - \sqrt{\overline{\alpha}_{t}}x_0\big)}{2(\alpha_t-\overline{\alpha}_{t})(1-\overline{\alpha}_{t})} + O\bigg(\frac{d^2\log^4 T}{T^2} + \frac{\sqrt{d\log^3 T}}{T}\varepsilon_{\score, t}(x) + \frac{\sqrt{d\log^3 T}}{T}\varepsilon_{\score, t+1}\big(\Phi_t(x)\big)\bigg) \\
&= O\bigg(\frac{d\log^2 T}{T} + \frac{\sqrt{d\log^3 T}}{T}\varepsilon_{\score, t}(x) + \frac{\sqrt{d\log^3 T}}{T}\varepsilon_{\score, t+1}\big(\Phi_t(x)\big)\bigg),
\end{align*}
where (i) follows from \eqref{eqn-relation-ut-zt} and Lemma~\ref{lem:relation-yt}. Taking the above results together and using the following basic properties regarding quantities $A_{t},\ldots, E_{t}$
(defined in \eqref{eqn:ODE-constant}) 
 %Then the desired result comes from the following relations that
\begin{align*}
	\int p_{X_0 \mymid X_{t}}(x_0 \mymid x)\big\|x - \sqrt{\overline{\alpha}_{t}}x_0\big\|_2^2 \mathrm{d} x_0 &= (1-\overline{\alpha}_{t})A_t, \\
	\int p_{X_0 \mymid X_{t}}(x_0 \mymid x)\|z\|_2^2 \mathrm{d} x_0 &= (1-\overline{\alpha}_{t})B_t, \\
	\int p_{X_0 \mymid X_{t}}(x_0 \mymid x){u^{\star}}^{\top}\big(x - \sqrt{\overline{\alpha}_{t}}x_0\big) \mathrm{d} x_0 &= \frac{1-\alpha_{t}}{2}B_t + \frac{(1-\alpha_{t})^2}{8(1-\overline{\alpha}_{t})}\big[B_t - B_t^2 + D_t - E_t + A_tB_t\big], \\
	\int p_{X_0 \mymid X_{t}}(x_0 \mymid x)\Big(\big\|x - \sqrt{\overline{\alpha}_{t}}x_0\big\|_2^2-z^{\top}\big(x - \sqrt{\overline{\alpha}_{t}}x_0\big)\Big)^2 \mathrm{d} x_0 &= (1-\overline{\alpha}_{t})^2\big[C_t + D_t - 2E_t\big],
\end{align*}
we arrive at
\begin{align*}
\eqref{eqn:ode-lemma-fast-rhs}
&= 1 - \frac{(1-\alpha_{t})(A_t-B_t)}{2(\alpha_t-\overline{\alpha}_{t})}
+ \frac{(1-\alpha_{t})^2}{8(1-\overline{\alpha}_{t})^2}\big[- B_t^2 + C_t + 2D_t - 3E_t + A_tB_t\big] 
+ O\bigg(\frac{d^3\log^6 T}{T^3}\bigg).
\end{align*}
Once again, we note that integrating over the set $\mathcal{E}$ and over all possible $x_0$ only incurs a difference at most as large as $ O\big( \exp(- c_8d\log T ) \big)$.

Putting the preceding results together establishes the claimed property~\eqref{eq:xt}.

\subsubsection{Proof of property~\eqref{eq:yt}}
Consider any random vector $Y$, and recall the basic transformation
\begin{align*}
	p_{\psi_t(Y)}(\psi_t(x)) &= \mathsf{det}\Big(\frac{\partial \psi_t(x)}{\partial x}\Big)^{-1}p_{Y}(x), 
\end{align*}
where $\frac{\partial \psi_t(x)}{\partial x}$ denotes the Jacobian matrix. 
It then comes down to controlling the quantity $\operatorname{det}\left(\frac{\partial \psi_t(x)}{\partial x}\right)^{-1}$.

Towards this end, note that the determinant of a matrix obeys
\begin{align*}
\mathsf{det}(I + A + \Delta)^{-1} = 1 - \mathsf{Tr}(A) + \frac{1}{2}\big[\mathsf{Tr}(A)^2 + \|A\|_{\mathrm{F}}^2\big] + O\big(d^3\|A\|^3 + d\|\Delta\|\big),
\end{align*}
with the proviso that $d\|A\| \lesssim 1$.
 This relation taken together  with $\frac{\partial \psi_t(x)}{\partial x}=I-\frac{\partial u^{\star}}{\partial x}+\frac{\partial (u^{\star}-u)}{\partial x}$ leads to 
\begin{align}
\label{eqn:jude}
p_{\psi_t(Y)}(\psi_t(x)) 
&= \mathsf{det}\Big(\frac{\partial \psi_t(x)}{\partial x}\Big)^{-1}p_{Y}(x) \notag\\
&= \bigg\{1 + \mathsf{Tr}\Big(\frac{\partial u^{\star}}{\partial x}\Big) + \frac{1}{2}\Big[\mathsf{Tr}\Big(\frac{\partial u^{\star}}{\partial x}\Big)^2 + \Big\|\frac{\partial u^{\star}}{\partial x}\Big\|_{\mathrm{F}}^2\Big] 
+ O\Big(d^3 \Big\|\frac{\partial u^{\star}}{\partial x}\Big\| + d\Big\|\frac{\partial (u-u^{\star})}{\partial x}\Big\|\Big)\bigg\} p_{Y}(x). 
\end{align}
To further control the right-hand side of the above display, let us first make note of several identities introduced in \Cref{lem:relation-yt}:
\begin{subequations}\label{eq:H1-H4}
	\begin{align}
	J_t 
	&= I+\frac{1}{1-\overline{\alpha}_{t}}\bigg\{\Big(\int_{x_0} p_{X_0 \mymid X_{t}}(x_0 \mymid x)\big(x - \sqrt{\overline{\alpha}_{t}}x_{0}\big) \mathrm{d} x_0\Big)\Big(\int_{x_0} p_{X_0 \mymid X_{t}}(x_0 \mymid x)\big(x - \sqrt{\overline{\alpha}_{t}}x_{0}\big) \mathrm{d} x_0\Big)^{\top} \notag \\
	&\qquad\qquad-\int_{x_0} p_{X_0 \mymid X_{t}}(x_0 \mymid x)\big(x - \sqrt{\overline{\alpha}_{t}}x_{0}\big)\big(x - \sqrt{\overline{\alpha}_{t}}x_{0}\big)^{\top} \mathrm{d} x_0\bigg\};
	\end{align}
	\begin{align}
H_1&= \|z\|_2^2J_t + 2zz^{\top}J_t;\\
H_2	&= \int_{x_0} p_{X_0 \mymid X_{t}}(x_0 \mymid x)\big\|x - \sqrt{\overline{\alpha}_{t}}x_0\big\|_2^2 \mathrm{d} x_0 I + 2\int_{x_0} p_{X_0 \mymid X_{t}}(x_0 \mymid x)\big(x - \sqrt{\overline{\alpha}_{t}}x_0\big)\big(x - \sqrt{\overline{\alpha}_{t}}x_0\big)^{\top} \mathrm{d} x_0 \notag\\
	&\qquad+ \frac{1}{1-\overline{\alpha}_{t}}\Big(\Big(\int_{x_0} p_{X_0 \mymid X_{t}}(x_0 \mymid x)\big\|x - \sqrt{\overline{\alpha}_{t}}x_0\big\|_2^2\big(x - \sqrt{\overline{\alpha}_{t}}x_{0}\big)\Big)\Big(\int_{x_0} p_{X_0 \mymid X_{t}}(x_0 \mymid x)\big(x - \sqrt{\overline{\alpha}_{t}}x_{0}\big) \mathrm{d} x_0\Big)^{\top} \notag \\
	&\qquad-\int_{x_0} p_{X_0 \mymid X_{t}}(x_0 \mymid x)\big\|x - \sqrt{\overline{\alpha}_{t}}x_0\big\|_2^2\big(x - \sqrt{\overline{\alpha}_{t}}x_0\big)\big(x - \sqrt{\overline{\alpha}_{t}}x_0\big)^{\top} \mathrm{d} x_0\Big); \\
H_3	&
	= \int_{x_0} p_{X_0 \mymid X_{t}}(x_0 \mymid x)\big\|x - \sqrt{\overline{\alpha}_{t}}x_0\big\|_2^2 \mathrm{d} x_0 J_t + 2zz^{\top} \notag+ \frac{1}{1-\overline{\alpha}_{t}}\bigg(\Big(\int_{x_0} p_{X_0 \mymid X_{t}}(x_0 \mymid x)\big\|x - \sqrt{\overline{\alpha}_{t}}x_0\big\|_2^2\Big)zz^{\top}\notag \\
	&\quad~~~ -z\Big(\int_{x_0} p_{X_0 \mymid X_{t}}(x_0 \mymid x)\big\|x - \sqrt{\overline{\alpha}_{t}}x_0\big\|_2^2\big(x - \sqrt{\overline{\alpha}_{t}}x_{0}\big) \mathrm{d} x_0\Big)^{\top}\bigg) ;\\
	H_4&= \|z\|_2^2I + zz^{\top} \notag\\
	&\qquad+\int_{x_0} p_{X_0 \mymid X_{t}}(x_0 \mymid x)\big(x - \sqrt{\overline{\alpha}_{t}}x_{0}\big)\big(x - \sqrt{\overline{\alpha}_{t}}x_{0}\big)^{\top}J_t \mathrm{d} x_0 \notag\\
	&\qquad+\frac{1}{1-\overline{\alpha}_{t}}\int_{x_0} p_{X_0 \mymid X_{t}}(x_0 \mymid x)\big(z^{\top}\big(x - \sqrt{\overline{\alpha}_{t}}x_{0}\big)\big)\big(x - \sqrt{\overline{\alpha}_{t}}x_{0}\big)z^{\top} \mathrm{d} x_0 \notag\\
	&\qquad-\frac{1}{1-\overline{\alpha}_{t}}\int_{x_0} p_{X_0 \mymid X_{t}}(x_0 \mymid x)\big(z^{\top}\big(x - \sqrt{\overline{\alpha}_{t}}x_{0}\big)\big)\big(x - \sqrt{\overline{\alpha}_{t}}x_{0}\big)\big(x - \sqrt{\overline{\alpha}_{t}}x_{0}\big)^{\top} \mathrm{d} x_0.
	\end{align}
	\end{subequations}
	The above identities can be directly verified through elementary calculation involving Gaussian integration and derivatives, which are omitted here for the sake of brevity. 

Recall the definition of $u^{\star}$ that
\begin{align*}
	\frac{\partial u^{\star}}{\partial x}&=-\frac{1-\alpha_{t}}{2}J_{s^{\star}_t}(x) - \frac{(1-\alpha_{t})^2}{4(1-\alpha_{t+1})}\frac{\partial \big(s_{t}^{\star}(x) - \sqrt{\alpha_{t+1}}s_{t+1}^{\star}(w)\big)}{\partial x} \\
	&\stackrel{\text{(i)}}{=}\frac{1-\alpha_{t}}{2(1-\overline{\alpha}_{t})}J_t(x)- \frac{(1-\alpha_{t})^2}{4(1-\alpha_{t+1})}\cdot\\
	&\qquad \left(-\frac{1-\alpha_{t+1}}{2(1-\overline{\alpha}_{t})^2}J_t(x) + \frac{1-\alpha_{t+1}}{2(1-\overline{\alpha}_{t})^3}(H_1 + H_4 + H_2 - H_3) + \zeta_{J_t}\right),
\end{align*}
where $\|\zeta_{J_t} \| \lesssim d^2\frac{(1-\alpha_{t+1})^{3/2}}{(1-\overline{\alpha}_{t+1})^{5/2}}\log^2 T$. 
Here, (i) follows from \Cref{lem:relation-yt}.
Then, invoking \eqref{eq:H1-H4} and the definitions of $A_t$ to $E_t$ gives
\begin{subequations}
	\begin{align}
	\Big\|\frac{\partial u^{\star}}{\partial x}\Big\| &\lesssim \frac{d(1-\alpha_{t})\log T}{1-\overline{\alpha}_{t}},\\
	\mathsf{Tr}\Big(\frac{\partial u^{\star}}{\partial x}\Big) &= \frac{(1-\alpha_t)\big(d + B_t - A_t\big)}{2(1-\overline{\alpha}_{t})} \notag\\
	&\qquad+ \frac{(1-\alpha_t)^2}{8(1-\overline{\alpha}_{t})^2}\big(d -2A_t - A_t^2 + 3A_tB_t + 2B_t - 3B_t^2 + C_t + 4D_t - 3E_t - F_t\big), \\
	\Big\|\frac{\partial u^{\star}}{\partial x}\Big\|_{\mathrm{F}}^2 &= \frac{(1-\alpha_t)^2}{4(1-\overline{\alpha}_{t})^2}\Big\|\frac{\partial z}{\partial x}\Big\|_{\mathrm{F}}^2 + O\Big(d^5\Big(\frac{1-\alpha_{t}}{\alpha_{t}-\overline{\alpha}_{t}}\Big)^3\log^3 T\Big) \notag\\
	&= \frac{(1-\alpha_t)^2}{4(1-\overline{\alpha}_{t})^2}\big(d + 2(B_t-A_t) + B_t^2 + F_t - 2D_t\big) + O\Big(d^5\Big(\frac{1-\alpha_{t}}{\alpha_{t}-\overline{\alpha}_{t}}\Big)^3\log^3 T\Big),
	\end{align}
	as long as $d^2\big(\frac{1-\alpha_{t}}{\alpha_{t}-\overline{\alpha}_{t}}\big)\log T \lesssim 1$.  
	Here, %we recall the definition of the quantities $A_{t}$ to $E_{t}$ in \eqref{eqn:ODE-constant}, and
	\begin{align}
	F_t(x) &\coloneqq \Big\|\frac{1}{1-\overline{\alpha}_{t}}\int_{x_0} p_{X_0 \mymid X_{t}}(x_0 \mymid x)\big(x - \sqrt{\overline{\alpha}_{t}}x_{0}\big)\big(x - \sqrt{\overline{\alpha}_{t}}x_{0}\big)^{\top} \mathrm{d} x_0\Big\|_{\mathrm{F}}^2.
	\end{align}
	Further note that $\frac{\partial (u^{\star}-u)}{\partial x}=\widetilde{\zeta_t}$, where $\widetilde{\zeta_t}$ is the residual term defined in \eqref{eq:partial-psi-error} from \Cref{lem:psi-prop}, and satisfies
	\begin{align}
		\|\widetilde{\zeta_{t}}\| = O\bigg(\frac{\log T}{T}\Big\{\varepsilon_{\score, t}(x)/\sqrt{d} + \varepsilon_{\Jacobi, t}(x) + \varepsilon_{\Jacobi, t+1}\big(\Phi_t(x)\big)\Big\}\bigg).
	\end{align}
\end{subequations}
	Substituting these results into inequality~\eqref{eqn:jude} leads to  
	\begin{align}
	% \label{eqn:jude}
	p_{\psi_t(Y)}(\psi_t(x)) 
	&= p_{Y}(x)\bigg\{1 + \frac{(1-\alpha_{t})(d+B_t-A_t)}{2(1-\overline{\alpha}_{t})} + O\Big(d^6\Big(\frac{1-\alpha_{t}}{\alpha_{t}-\overline{\alpha}_{t}}\Big)^3\log^3 T\Big) \notag\\
	&\qquad+O\bigg(\frac{\sqrt{d \log ^3 T}}{T} \varepsilon_{\score, t}(x)+\frac{d \log T}{T}\Big(\varepsilon_{\Jacobi, t}(x)+\varepsilon_{\Jacobi, t+1}\big(\Phi_t(x)\big)\Big)\bigg)\\
	&\qquad+ \frac{(1-\alpha_{t})^2}{8(1-\overline{\alpha}_{t})^2}\big[d(d+2) + (4+2d)(B_t-A_t) - B_t^2 + C_t + 2D_t - 3E_t + A_tB_t\big] \bigg\}. 
	\end{align}
	%

% \begin{align*}
% 	\mathsf{Tr}\Big(\frac{\partial u^{\star}}{\partial x}\Big)=& \frac{1-\alpha_{t}}{2(1-\overline{\alpha}_{t})}(d+B_t-A_t)\\
% 	&+\frac{\left(1-\alpha_t\right)^2}{8\left(1-\overline{\alpha}_t\right)^2}\left(d+B_t-A_t\right)
% \end{align*}
%which leads to our desired result together with~\eqref{eq:partial-psi-error} and Lemma~\ref{lem:relation-yt}.
%\yh{TBD}

\subsubsection{Proof of property~\eqref{eq:phit}}
% Let $\widetilde{\phi}_{t}(x)=x + \frac{1-\alpha_{t}}{2}s_t(x)$. 
% We first note that the following identity holds:
% \begin{align*}
% 	\frac{p_{\sqrt{\alpha_{t}}X_{t-1}}\big(\widetilde{\phi}_{t}(x)\big)}{p_{X_{t}}(x)} & =1+\frac{d(1-\alpha_{t})}{2(\alpha_{t}-\overline{\alpha}_{t})}+O\bigg(\theta_t(x)^2d^{2}\Big(\frac{1-\alpha_{t}}{\alpha_{t}-\overline{\alpha}_{t}}\Big)^{2}\log^{2}T + \varepsilon_{\score, t}(x)\sqrt{\theta_t(x) d\log T}\Big(\frac{1-\alpha_{t}}{\alpha_{t}-\overline{\alpha}_{t}}\Big)\bigg)-\notag\\
% 	 & \quad\frac{(1-\alpha_{t})\big(\int_{x_{0}}p_{X_{0}\mymid X_{t}}(x_{0}\mymid x)\big\| x-\sqrt{\overline{\alpha}_{t}}x_{0}\big\|_{2}^{2}\mathrm{d}x_{0}-\big\|\int_{x_{0}}p_{X_{0}\mymid X_{t}}(x_{0}\mymid x)\big(x-\sqrt{\overline{\alpha}_{t}}x_{0}\big)\mathrm{d}x_{0}\big\|_{2}^{2}\big)}{2(\alpha_{t}-\overline{\alpha}_{t})(1-\overline{\alpha}_{t})}
% 	\end{align*}
% 	by  \citet[relation (58a)]{li2023towards}.
Following similar arguments as in \citet[relation (58a)]{li2023towards}, we can obtain
\begin{align}
	&\frac{p_{X_{t+1}/\sqrt{\alpha_{t+1}}}\big(\phi_{t}(x)\big)}{p_{X_{t}}(x)}\notag\\ &~~=
	\Big(\frac{\alpha_{t+1}-\overline{\alpha}_{t+1}}{1-\overline{\alpha}_{t+1}}\Big)^{d/2}\cdot\int_{x_{0}}p_{X_{0}\mymid X_{t}}(x_{0}\mymid x)\cdot\notag\\
	& \qquad\qquad\qquad\exp\bigg(-\frac{(1-\alpha_{t+1}^{-1})\big\| x-\sqrt{\overline{\alpha}_{t}}x_{0}\big\|_{2}^{2}}{2(\alpha_{t+1}^{-1}-\overline{\alpha}_{t})(1-\overline{\alpha}_{t})}-\frac{\|v\|_{2}^{2}-2v^{\top}\big(x-\sqrt{\overline{\alpha}_{t}}x_{0}\big)}{2(\alpha_{t+1}^{-1}-\overline{\alpha}_{t})}\bigg)\mathrm{d}x_{0}\notag \\
	&~~=
	\Big(1-\frac{d(1-\alpha_{t+1})}{2(1-\overline{\alpha}_{t+1})}+O(\frac{d^2(1-\alpha_{t+1})^2}{(1-\overline{\alpha}_{t+1})^2}) \Big)\cdot\int_{x_{0}}p_{X_{0}\mymid X_{t}}(x_{0}\mymid x)\cdot\notag\\
	& \qquad\qquad\qquad\exp\bigg(\frac{(1-\alpha_{t+1})\big\| x-\sqrt{\overline{\alpha}_{t}}x_{0}\big\|_{2}^{2}}{2(1-\overline{\alpha}_{t+1})(1-\overline{\alpha}_{t})}-\frac{\|v\|_{2}^{2}-2v^{\top}\big(x-\sqrt{\overline{\alpha}_{t}}x_{0}\big)}{2(\alpha_{t+1}^{-1}-\overline{\alpha}_{t})}\bigg)\mathrm{d}x_{0}\notag \\
	& ~~=1-\frac{d(1-\alpha_{t+1})}{2(1-\overline{\alpha}_{t+1})}+O\bigg(c_6^2d^{2}\Big(\frac{1-\alpha_{t+1}}{1-\overline{\alpha}_{t+1}}\Big)^{2}\log^{2}T + \varepsilon_{\score, t}(x)\sqrt{c_6d\log T}\Big(\frac{1-\alpha_{t+1}}{1-\overline{\alpha}_{t+1}}\Big)\bigg)+\notag\\
	& ~~~~\quad\frac{(1-\alpha_{t+1})\big(\int_{x_{0}}p_{X_{0}\mymid X_{t}}(x_{0}\mymid x)\big\| x-\sqrt{\overline{\alpha}_{t}}x_{0}\big\|_{2}^{2}\mathrm{d}x_{0}+\alpha_{t+1}\big\|\int_{x_{0}}p_{X_{0}\mymid X_{t}}(x_{0}\mymid x)\big(x-\sqrt{\overline{\alpha}_{t}}x_{0}\big)\mathrm{d}x_{0}\big\|_{2}^{2}\big)}{2(1-\overline{\alpha}_{t+1})(1-\overline{\alpha}_{t})}.
	\end{align}
		Similarly, using the arguments as in \citet[relation (58b)]{li2023towards}, we can deduce that
		\begin{align}
			&\frac{p_{\phi_t(X_t)}\big(\phi_{t}(x)\big)}{p_{X_{t}}(x)} \notag =1-\frac{d(1-\alpha_{t+1})}{2(1-\overline{\alpha}_{t+1})}+\\
			&~~\frac{(1-\alpha_{t+1})\big(\int_{x_{0}}p_{X_{0}\mymid X_{t}}(x_{0}\mymid x)\big\| x-\sqrt{\overline{\alpha}_{t}}x_{0}\big\|_{2}^{2}\mathrm{d}x_{0}+\alpha_{t+1}\big\|\int_{x_{0}}p_{X_{0}\mymid X_{t}}(x_{0}\mymid x)\big(x-\sqrt{\overline{\alpha}_{t}}x_{0}\big)\mathrm{d}x_{0}\big\|_{2}^{2}\big)}{2(1-\overline{\alpha}_{t+1})(1-\overline{\alpha}_{t})}+\notag\\
			& ~~\quad O\bigg(c_6^2d^{2}\Big(\frac{1-\alpha_{t+1}}{1-\overline{\alpha}_{t+1}}\Big)^{2}\log^{2}T +c_6^3d^{6}\log^{3}T\Big(\frac{1-\alpha_{t+1}}{1-\overline{\alpha}_{t+1}}\Big)^{3} + (1-\alpha_{t+1})d\varepsilon_{\Jacobi, t}(x) \bigg).
			\end{align}
		Consequently, it is readily seen that
		\begin{align*}
			\frac{p_{\Phi_{t}(X_{t})}(\Phi_{t}(x))}{p_{X_{t+1}}(\Phi_{t}(x))} &=\frac{p_{\phi_t(X_t)}\big(\phi_{t}(x)\big)}{p_{X_{t+1}/\sqrt{\alpha_{t+1}}}\big(\phi_{t}(x)\big)}= \frac{p_{\phi_t(X_t)}\big(\phi_{t}(x)\big)}{p_{X_{t}}(x)}\cdot\left(\frac{p_{X_{t+1}/\sqrt{\alpha_{t+1}}}\big(\phi_{t}(x)\big)}{p_{X_{t}}(x)} \right)^{-1}\\
			&= 1+ O\bigg(\frac{d^2\log^4T}{T^2} + \frac{d^6\log^6T}{T^3} + \frac{\sqrt{d\log T}\varepsilon_{\score, t}(x)+d\varepsilon_{\Jacobi,t}(x)\log T}{T}\bigg), 
			\end{align*}
			thus completing the proof of \Cref{lem:main-ODE}.
%\yh{TBD}
%  Through direct calculations, we can obtain
%  \begin{align}
% 	\frac{p_{\Phi_{t}(X_{t})}(\Phi_{t}(x))}{p_{X_{t+1}}(\Phi_{t}(x))} = %1+ O\bigg(\frac{d^2\log^4T}{T^2} + \frac{d^6\log^6T}{T^3} + \frac{\sqrt{d\log T}\varepsilon_{\score, t}(x)+d\varepsilon_{\Jacobi,t}(x)\log T}{T}\bigg).
% 	\end{align}

\subsubsection{Proof of additional lemmas}
\label{sec:proof-additional-lems-ODE}
 To establish \Cref{lem:psi-prop} and \Cref{lem:relation-yt}, making use of \Cref{lem:x0}, we first summarize the following norm properties of the score function $s_t^{\star}$ and the Jacobian matrix $J_{s^{\star}_t}$ for $x$ statisfying $\log p_{X_t}(x)\geq -c_6d\log T$:
\begin{subequations}\label{eq:prop-norm}
	\begin{align}
		&\left\|s^{*}_t\left(x_t\right)\right\|_2 \leq \frac{1}{(1-\bar{\alpha}_{t})}\mathbb{E}\left[\left\|X_t-\sqrt{\bar{\alpha}_t} X_0\right\|_2 \mid X_t=x_t\right] \lesssim \sqrt{\frac{d\log T}{1-\bar{\alpha}_{t}}},\label{eq:prop-norm-s}\\
		&\left\|J_{s^{\star}_t}\left(x\right)\right\| \lesssim \frac{1}{(1-\bar{\alpha}_{t})^2} \mathbb{E}\left[\left\|x-\sqrt{\bar{\alpha}_{t}} X_0\right\|_2^2 \mid X_{t}=x\right] \asymp d \log T,\label{eq:prop-norm-J}\\
		&\|\nabla_x u^{\top}J_{s^{\star}_t}(x)u\|_2  \lesssim d^{3/2}\log^{3/2} T,\quad \text{ for } u \in \mathbb{S}^{d-1}.\label{eq:prop-norm-partJ}
	\end{align}
\end{subequations}
The detailed calculation for the second property is presented in \citet[Lemma 8]{li2023towards}. The third property follows a rationale akin to that for \(J_{s^{\star}_t} = \frac{\partial s^{\star}_t(x)}{\partial x}\), and is therefore omitted here for the sake of brevity.
% The detailed calculation for the second property can be found in \citet[Lemma 8]{li2023towards}; and the third property is  similar to the arguments for $J_{s^{\star}_t}=\frac{\partial s^{\star}_t(x)}{\partial x}$, and hence is omitted here for simplicity. 
\paragraph{Proof of \Cref{lem:psi-prop}.}
To start with, it follows from the definitions of $\psi_t$ and $\psi_t^{\star}$ (cf.~\eqref{defn:phit-x}) that
\begin{align}
\psi_t(x)-\psi_t^{\star}(x) 
&=  \left(\frac{1-\alpha_{t}}{2}+\frac{(1-\alpha_{t})^2}{4(1-\alpha_{t+1})}\right) \big(s_{t}(x)- s_{t}^{\star}(x)\big) \notag\\
&\qquad - \frac{(1-\alpha_{t})^2 \sqrt{\alpha_{t+1}}}{4(1-\alpha_{t+1})}\Big( s_{t+1}\big(\Phi_t(x)\big)-s^{\star}_{t+1}\big(\Phi_t(x)\big)
	+s^{\star}_{t+1}\big(\Phi_t(x)\big)-s_{t+1}^{\star}\big(\Phi_t^{\star}(x)\big)\Big) . \notag
\end{align}
Armed with this relation, to derive \eqref{eq:psi-error}, we only need to control the following term:   
\begin{align*}
(1-\alpha_{t+1})\Big\|s_{t+1}^{\star}\big(\Phi_t(x)\big) - s_{t+1}^{\star}\big(\Phi_t^{\star}(x)\big)\Big\|_2 
	&\stackrel{(i)}{\lesssim} \frac{\log T}{T}\Big\|s_{t+1}^{\star}\big(\Phi_t(x)\big) - s_{t+1}^{\star}\big(\Phi_t^{\star}(x)\big)\Big\|_2 \\
	&\stackrel{(ii)}{\lesssim} \frac{\log T}{T} d (\log T) \big\|\Phi_t(x) - \Phi_t^{\star}(x) \big\|_2 \\
&\stackrel{(iii)}{\lesssim}\frac{ d\log^3 T}{T^2}\varepsilon_{\score, t}(x). 
\end{align*}
Here, (i) follows directly from the choice of learning rate in \eqref{eqn:properties-alpha-proof};  (ii) holds by observing that both $\Phi_t(x)$ and $\Phi_t^{\star}(x)$ remain within the typical set with $\log p_{X_{t+1}}\Phi_t(x), \log p_{X_{t+1}}\Phi_t^{\star}(x)\geq  -c_6d\log T$ due to  \eqref{eq:xt_lb}, and then invoking \eqref{eq:prop-norm-J}; (iii) is due to the definition of $\varepsilon_{\score, t}(x)$ (cf. \eqref{eq:pointwise-epsilon-score-J}). Combining the above bound with \eqref{eq:pointwise-epsilon-score-J} and \eqref{eqn:properties-alpha-proof}, we arrive at   
\begin{align*}
	\|\psi_t(x)-\psi_t^{\star}(x)\| 
	&\lesssim   \frac{\log T}{T}\varepsilon_{\score, t}(x) + \frac{\log T}{T}\varepsilon_{\score, t+1}\big(\Phi(x)\big)+
	\frac{ d\log^3 T}{T^2}\varepsilon_{\score, t}(x)  \notag\\
	&\lesssim \frac{\log T}{T}\Big(\varepsilon_{\score, t}(x) +\varepsilon_{\score, t+1}\big(\Phi(x)\big)\Big). 
	\end{align*}

%
% \begin{align*}
% \theta_{t+1} \coloneqq c_6 + \sup_{0 \le \gamma \le 1}-\frac{\log q_{t+1}\big((1-\gamma)\Phi_t^{\star}(x) + \gamma\Phi_t(x)\big)}{d\log T} 
% \lesssim 1 + \frac{\sqrt{d\log^3 T}}{T}\varepsilon_{\score, t}(x).
% \end{align*}
%
%In addition, it is observed that
For \eqref{eq:partial-psi-error}, by direct calculation, we have 
\begin{align*}
	&\frac{\partial \psi_t(x, \Phi_t(x))}{\partial x} -\frac{\partial \psi^{\star}_t(x, \Phi^{\star}_t(x))}{\partial x}= \Big(\frac{1-\alpha_{t}}{2}+\frac{(1-\alpha_{t})^2}{4(1-\alpha_{t+1})}\Big)(J_{s_t}(x)-J_{s^{\star}_t}(x))\\
	& ~~+ \frac{\sqrt{\alpha_{t+1}}(1-\alpha_{t})^2}{4(1-\alpha_{t+1})}\bigg(J_{s_{t+1}}\big(\Phi_t(x)\big)\Big(I-\frac{1-\alpha_{t+1}}{2}J_{s_t}(x)\Big)-J_{s^{\star}_{t+1}}\big(\Phi^{\star}_t(x)\big)\Big(I-\frac{1-\alpha_{t+1}}{2}J_{s^{\star}_t}(x)\Big)\bigg). 
	\end{align*}
	The term in the first line can be directly bounded by the definition of $\varepsilon_{\Jacobi, t}(x)$ and \eqref{eqn:properties-alpha-proof} as follows
	\begin{align}
		\bigg\|\Big(\frac{1-\alpha_{t}}{2}+\frac{(1-\alpha_{t})^2}{4(1-\alpha_{t+1})}\Big)(J_{s_t}(x)-J_{s^{\star}_t}(x))\bigg\|\lesssim \frac{d\log T}{T}\varepsilon_{\Jacobi, t}(x).\label{eq:partial-psi-error-I1}
	\end{align}
	Turning to the second line, we have 
	\begin{align}
		%\frac{\sqrt{\alpha_{t+1}}(1-\alpha_{t})^2}{4(1-\alpha_{t+1})}\bigg(
			&J_{s_{t+1}}\big(\Phi_t(x)\big)\Big(I-\frac{1-\alpha_{t+1}}{2}J_{s_t}(x)\Big)-J_{s^{\star}_{t+1}}\big(\Phi^{\star}_t(x)\big)\Big(I-\frac{1-\alpha_{t+1}}{2}J_{s^{\star}_t}(x)\Big)\notag
			%\bigg) 
			\\
			&=\Big(J_{s_{t+1}}\big(\Phi_t(x)\big)-J_{s^{\star}_{t+1}}\big(\Phi^{\star}_t(x)\big)\Big)\Big(I-\frac{1-\alpha_{t+1}}{2}J_{s^{\star}_t}(x)\Big)+ \frac{1-\alpha_{t+1}}{2} J_{s_{t+1}}\big(\Phi_t(x)\big)\Big(J_{s^{\star}_{t}}\big(x\big)-J_{s_{t}}\big(x\big)\Big).\label{eq:partial-psi-error-I2}
	\end{align}
	%Further noticing that $\|\|$
	%Conseqeuntly, to prove \eqref{eq:partial-psi-error}, we focous on the term :
% \begin{align}
% 	\frac{\partial \psi_t(x, \Phi_t(x))}{\partial x} 
% 	&= I + \frac{1-\alpha_{t}}{2}J_{s_t}(x) + \frac{(1-\alpha_{t})^2}{4(1-\alpha_{t+1})}\bigg(J_{s_t}(x) - \sqrt{\alpha_{t+1}}J_{s_{t+1}}\big(\Phi_t(x)\big)\Big(I-\frac{1-\alpha_{t+1}}{2}J_{s_t}(x)\Big)\bigg). \notag
% 	\end{align}
To proceed, we further observe that
\begin{align*}
	%(1-\alpha_{t+1})
	\big\|J_{s_{t+1}^{\star}}\big(\Phi_t(x)\big) - J_{s_{t+1}^{\star}}\big(\Phi_t^{\star}(x)\big)\big\| 
	\lesssim %\frac{\log T}{T}
	( d\log T)^{\frac32}\|\Phi_t(x) - \Phi_t^{\star}(x)\|_2 
	\lesssim \frac{d^{\frac32}\log^{\frac52} T}{T}\varepsilon_{\score, t}(x), 
	\end{align*}
	which is obtained by invoking \eqref{eq:prop-norm-partJ}, \eqref{eqn:properties-alpha-proof}  and \eqref{eq:pointwise-epsilon-score-J}. This bound together with \eqref{eq:prop-norm-J} allows us to control the first term in \eqref{eq:partial-psi-error-I2} as follows
\begin{align}
	&\Bigg\|\Big(J_{s_{t+1}}\big(\Phi_t(x)\big)-J_{s^{\star}_{t+1}}\big(\Phi^{\star}_t(x)\big)\Big)\Big(I-\frac{1-\alpha_{t+1}}{2}J_{s^{\star}_t}(x)\Big)\Bigg\|\notag\\
	&~~\lesssim \Bigg\|\Big(J_{s_{t+1}}\big(\Phi_t(x)\big)-J_{s^{\star}_{t+1}}\big(\Phi_t(x)\big)\Big)\Bigg\|+\Bigg\|\Big(J_{s^{\star}_{t+1}}\big(\Phi_t(x)\big)-J_{s^{\star}_{t+1}}\big(\Phi^{\star}_t(x)\big)\Big)\Bigg\|\notag\\
	&~~\lesssim \frac{d\log T}{T}\varepsilon_{\Jacobi, t+1}\big(\Phi_t(x)\big)+\frac{d^{\frac32}\log^{\frac52} T}{T}\varepsilon_{\score, t}(x).\label{eq:control-I2-1}
\end{align}
The second term in \eqref{eq:partial-psi-error-I2} can be controlled by \eqref{eq:prop-norm-J} and \eqref{eq:pointwise-epsilon-score-J} as follows
\begin{align}
	\bigg\|\frac{1-\alpha_{t+1}}{2} J_{s_{t+1}}\big(\Phi_t(x)\big)\Big(J_{s^{\star}_{t}}\big(x\big)-J_{s_{t}}\big(x\big)\Big)\bigg\|\lesssim \frac{\log T}{T}\Big(\varepsilon_{\Jacobi, t+1}\big(\Phi_t(x)\big)+d\log T\Big)\varepsilon_{\Jacobi, t}(x).\label{eq:control-I2-2}
\end{align}
Substituting \eqref{eq:control-I2-1} and \eqref{eq:control-I2-2} into \eqref{eq:partial-psi-error-I2}, together with \eqref{eq:partial-psi-error-I1}, we obtain
\begin{align*}
	\Big\|\frac{\partial \psi_t(x, \Phi_t(x))}{\partial x} -\frac{\partial \psi^{\star}_t(x, \Phi^{\star}_t(x))}{\partial x}\Big\|& \lesssim \frac{d\log T}{T}\varepsilon_{\Jacobi, t}(x)+\frac{d\log  T}{T}\varepsilon_{\score, t+1}\big(\Phi_t(x)\big)+\frac{d^{\frac32}\log^{\frac72} T}{T^2}\varepsilon_{\score, t}(x)\\
	&~~+\frac{\log^2 T}{T^2}\Big(\varepsilon_{\Jacobi, t+1}\big(\Phi_t(x)\big)+d\log T\Big)\varepsilon_{\Jacobi, t}(x)\\
	& \lesssim \frac{d\log T}{T}\varepsilon_{\Jacobi, t}(x)+\frac{d\log  T}{T}\varepsilon_{\score, t+1}\big(\Phi_t(x)\big)+\frac{ \log T}{d^{\frac12}T}\varepsilon_{\score, t}(x) 
\end{align*}
where the last inequality invokes conditions for $d$ and $T$ in  \eqref{eq:assumption-T-score-Jacob}. 
 %and further invoking \eqref{eqn:properties-alpha-proof}, and the property of learning rate in \eqref{eq:assumption-T-score-Jacob} leads to conditions for $d$ and $T$ in  \eqref{eq:partial-psi-error}.
	%we can readily complete the proof. 
	\hfill\qedsymbol{}

\paragraph{Proof of \Cref{lem:relation-yt}.} 
To begin with, applying \eqref{eq:prop-norm-s} to $w$ leads to %\eqref{eq:defn-w-z-lem}  that 
\begin{align}
	\left\| \frac{1}{\sqrt{\alpha_{t+1}}} w - x\right\|_2 = (1-\alpha_{t+1})\|s_t^{*}(x)\|_2\lesssim (1-\alpha_{t+1})\sqrt{\frac{d\log T}{1-\overline{\alpha}_{t}}}.
\end{align}
Then denoting $\widehat{w} \coloneqq w/\sqrt{\alpha_{t+1}}$, we can apply \Cref{lem:y-t} to $(\widehat{w},x)$ to obtain
\begin{align*}
K_1 &\coloneqq \int_{x_0} p_{X_0 \mymid X_{t+1}}(x_0 \mymid w)\big(w/\sqrt{\alpha_{t+1}} - \sqrt{\overline{\alpha}_{t}}x_0\big) \mathrm{d} x_0 \\
&= \widehat{w} - x + \int_{x_0} p_{X_0 \mymid X_{t}}(x_0 \mymid x)
\bigg\{1+\frac{(1 - \alpha_{t+1})\big\|x - \sqrt{\overline{\alpha}_{t}}x_0\big\|_2^2}{2\alpha_{t+1}(1-\overline{\alpha}_{t})^2} 
-\frac{(x - \sqrt{\overline{\alpha}_{t}}x_0)^{\top}(\widehat{w} - x)}{1-\overline{\alpha}_{t}} \\
&\qquad- \int_{x_0} \bigg(\frac{(1 - \alpha_{t+1})\big\|x - \sqrt{\overline{\alpha}_{t}}x_0\big\|_2^2}{2\alpha_{t+1}(1-\overline{\alpha}_{t})^2} - \frac{(x - \sqrt{\overline{\alpha}_{t}}x_0)^{\top}(\widehat{w} - x)}{1-\overline{\alpha}_{t}}\bigg)p_{X_0 \mymid X_t}(x_0 \mymid x)  \mathrm{d} x_0 \\
&\qquad+ O\Big(d\Big(\frac{1-\alpha_{t+1}}{1-\overline{\alpha}_{t}}\Big)^{3/2}\log T\Big)\bigg\}
\big(x - \sqrt{\overline{\alpha}_{t}}x_0\big) \mathrm{d} x_0. 
% &=\frac{1-\alpha_{t+1}}{2(1-\overline{\alpha}_{t})}z+\int_{x_0} p_{X_0 \mymid X_{t}}(x_0 \mymid x) \\
% &= \Big(1 + \frac{1-\alpha_{t+1}}{2(1-\overline{\alpha}_{t})} + \frac{1-\alpha_{t+1}}{2(1-\overline{\alpha}_{t})^2}\|z\|_2^2\Big)z + \frac{1-\alpha_{t+1}}{2(1-\overline{\alpha}_{t})^2}\int_{x_0} p_{X_0 \mymid X_{t}}(x_0 \mymid x)
% \Big\{\big\|x - \sqrt{\overline{\alpha}_{t}}x_0\big\|_2^2 \\
% &\quad
% -(x - \sqrt{\overline{\alpha}_{t}}x_0)^{\top}z 
% - \int_{x_0} \big\|x - \sqrt{\overline{\alpha}_{t}}x_0\big\|_2^2 p_{X_0 \mymid X_t}(x_0 \mymid x)  \mathrm{d} x_0\Big\}
% 	\big(x - \sqrt{\overline{\alpha}_{t}}x_0\big) \mathrm{d} x_0 + \zeta_{K_1},
\end{align*}
Plugging $\widehat{w}-x=-\frac{1-\alpha_{t+1}}{2} s_t^{\star}(x)=\frac{1-\alpha_{t+1}}{2(1-\overline{\alpha}_{t})}z$ into the above equation and combining with the expression of $s_t^*(x)$ in \eqref{eq:st-MMSE-expression}, we can obtain
\begin{align*}
	K_1 
	&= \Big(1 + \frac{1-\alpha_{t+1}}{2(1-\overline{\alpha}_{t})} + \frac{1-\alpha_{t+1}}{2(1-\overline{\alpha}_{t})^2}\|z\|_2^2\Big)z + \frac{1-\alpha_{t+1}}{2(1-\overline{\alpha}_{t})^2}\int_{x_0} p_{X_0 \mymid X_{t}}(x_0 \mymid x)
	\Big\{\big\|x - \sqrt{\overline{\alpha}_{t}}x_0\big\|_2^2 \\
	&\quad
	-(x - \sqrt{\overline{\alpha}_{t}}x_0)^{\top}z 
	- \int_{x_0} \big\|x - \sqrt{\overline{\alpha}_{t}}x_0\big\|_2^2 p_{X_0 \mymid X_t}(x_0 \mymid x)  \mathrm{d} x_0\Big\}
		\big(x - \sqrt{\overline{\alpha}_{t}}x_0\big) \mathrm{d} x_0 + \zeta_{K_1},
	\end{align*}
where the residual term obeys
\[
	\|\zeta_{K_1}\|_2 \lesssim \frac{\big(d(1-\alpha_{t+1})\log T\big)^{3/2}}{1 - \overline{\alpha}_{t}}. 
\]

Then we can immediately establish the first claim \eqref{eq:lem-relation-yt-1st} by recognizing that
\begin{align*}
s_{t}^{\star}(x) - \sqrt{\alpha_{t+1}}s_{t+1}^{\star}(w) &= \frac{1}{\alpha_{t+1}^{-1} - \overline{\alpha}_{t}}K_1 - \frac{1}{1 - \overline{\alpha}_t}z.
\end{align*}

Similarly, one sees that
\begin{align*}
&K_2\\& \coloneqq \int_{x_0} p_{X_0 \mymid X_{t+1}}(x_0 \mymid w)\big(w/\sqrt{\alpha_{t+1}} - \sqrt{\overline{\alpha}_{t}}x_0\big)\big(w/\sqrt{\alpha_{t+1}} - \sqrt{\overline{\alpha}_{t}}x_0\big)^{\top} \mathrm{d} x_0 \\
&= \int_{x_0} p_{X_0 \mymid X_{t+1}/\sqrt{\alpha_{t+1}}}(x_0 \mymid \widehat{w})\Big[\big(\widehat{w} - x\big)\big(\widehat{w} - \sqrt{\overline{\alpha}_{t}}x_0\big)^{\top} + \big(\widehat{w} - \sqrt{\overline{\alpha}_{t}}x_0\big)\big(\widehat{w} - x\big)^{\top} - \big(\widehat{w} - x\big)\big(\widehat{w} - x\big)^{\top}\Big] \mathrm{d} x_0 \\
&\quad+ \int_{x_0} p_{X_0 \mymid X_{t}}(x_0 \mymid x)
\Bigg\{1+\frac{(1 - \alpha_{t+1})\big\|x - \sqrt{\overline{\alpha}_{t}}x_0\big\|_2^2}{2\alpha_{t+1}(1-\overline{\alpha}_{t})^2} 
-\frac{(x - \sqrt{\overline{\alpha}_{t}}x_0)^{\top}(\widehat{w} - x)}{1-\overline{\alpha}_{t}} + O\Big(d\Big(\frac{1-\alpha_{t+1}}{1-\overline{\alpha}_{t}}\Big)^{3/2}\log T\Big) \\
&\quad
~~~~~~- \int_{x_0} \bigg(\frac{(1 - \alpha_{t+1})\big\|x - \sqrt{\overline{\alpha}_{t}}x_0\big\|_2^2}{2\alpha_{t+1}(1-\overline{\alpha}_{t})^2} - \frac{(x - \sqrt{\overline{\alpha}_{t}}x_0)^{\top}(\widehat{w} - x)}{1-\overline{\alpha}_{t}}\bigg)p_{X_0 \mymid X_t}(x_0 \mymid x)  \mathrm{d} x_0\Bigg\}\\
&~~~~~~~~~~~\cdot\big(x - \sqrt{\overline{\alpha}_{t}}x_0\big)\big(x - \sqrt{\overline{\alpha}_{t}}x_0\big)^{\top} \mathrm{d} x_0 \\
&= \int_{x_0} p_{X_0 \mymid X_{t}}(x_0 \mymid x)\big(x - \sqrt{\overline{\alpha}_{t}}x_0\big)\big(x - \sqrt{\overline{\alpha}_{t}}x_0\big)^{\top} \mathrm{d} x_0 \\
&\quad+ \frac{1-\alpha_{t+1}}{1-\overline{\alpha}_{t}}zz^{\top} +\frac{1-\alpha_{t+1}}{2(1-\overline{\alpha}_{t})^2}\int_{x_0} p_{X_0 \mymid X_{t}}(x_0 \mymid x)
\Bigg\{\big\|x - \sqrt{\overline{\alpha}_{t}}x_0\big\|_2^2 -(x - \sqrt{\overline{\alpha}_{t}}x_0)^{\top}z \\
&\quad
~~~~-\int_{x_0} \big\|x - \sqrt{\overline{\alpha}_{t}}x_0\big\|_2^2 p_{X_0 \mymid X_t}(x_0 \mymid x)  \mathrm{d} x_0
- \|z\|_2^2\Bigg\}
\big(x - \sqrt{\overline{\alpha}_{t}}x_0\big)\big(x - \sqrt{\overline{\alpha}_{t}}x_0\big)^{\top} \mathrm{d} x_0 
	+ \zeta_{K_2},
\end{align*}
where the residual term $\zeta_{K_2}$ satisfies
\[
	\|\zeta_{K_2}\|\lesssim d^2\frac{(1-\alpha_{t+1})^{3/2}}{(1-\overline{\alpha}_{t})^{1/2}}\log^2 T.
\]
Then the second claim \eqref{eq:lem-relation-yt-2nd} immediately follows by recognizing
%
%\yh{where $\widetilde{\zeta}_{t}$ comes from?}
\begin{align*}
 & \frac{\partial\big(s_{t}^{\star}(x)-\sqrt{\alpha_{t+1}}s_{t+1}^{\star}(w)\big)}{\partial x}\notag\\
 & =\frac{\sqrt{\alpha_{t+1}}}{1-\overline{\alpha}_{t+1}}J_{t+1}(w)\frac{\partial w}{\partial x}-\frac{1}{1-\overline{\alpha}_{t}}J_{t}(x)\notag\\
 & =\frac{1}{\alpha_{t+1}^{-1}-\overline{\alpha}_{t}}\Big(I+\frac{1}{\alpha_{t+1}^{-1}-\overline{\alpha}_{t}}\Big(K_{1}K_{1}^{\top}-K_{2}\Big)\Big)\Big(I+\frac{1-\alpha_{t+1}}{2(1-\overline{\alpha}_{t})}J_{t}(x)%+\widetilde{\zeta}_{t}
 \Big)-\frac{1}{1-\overline{\alpha}_{t}}J_{t}(x)\notag\\
 & =-\frac{1-\alpha_{t+1}}{2(1-\overline{\alpha}_{t})^{2}}J_{t}(x)+\frac{1-\alpha_{t+1}}{2(1-\overline{\alpha}_{t})^{3}}(H_{1}+H_{4}+H_{2}-H_{3})+{\zeta}_{J_t},
\end{align*}
where the residual term ${\zeta}_{J_t}$ satisfies
\[
%\big\|\widetilde{\zeta}_{t}\big\|\lesssim d^{2}\Big(\frac{1-\alpha_{t+1}}{1-\overline{\alpha}_{t+1}}\Big)^{2}\log^{2}T\qquad\text{and}\qquad
\big\|{\zeta}_{J_t}\big\|\lesssim d^{2}\frac{(1-\alpha_{t+1})^{3/2}}{(1-\overline{\alpha}_{t+1})^{5/2}}\log^{2}T.
\]\hfill\qedsymbol{}
\paragraph{Proof of properties \eqref{eqn-relation-ss}.}  To prove these properties, we first note that \Cref{lem:x0} implies that 
\begin{subequations}
	\begin{align}
		& \left\|z\right\|_2 \leq \mathbb{E}\left[\left\|X_t-\sqrt{\bar{\alpha}_t} X_0\right\|_2 \mid X_t=x\right] \lesssim \sqrt{d\left(1-\bar{\alpha}_t\right) \log T},  \\
		& \left\|\int_{x_0} p_{X_0 \mid X_t}\left(x_0 \mid x\right)\left(x-\sqrt{\overline{\alpha_t}} x_0\right)\left(X_t-\sqrt{\overline{\alpha_t}} x_0\right)^{\top} z \mathrm{~d} x_0\right\|_2\notag\\
		&~
		\lesssim \left\|z\right\|_2\mathbb{E}\left[\left\|X_t-\sqrt{\bar{\alpha}_t} X_0\right\|_2^2 \mid X_t=x\right]\lesssim  \left(\frac{d \log T}{1-\bar{\alpha}_t}\right)^{3 / 2},\\
		&\left\|\int_{x_0} p_{X_0 \mid X_t}\left(x_0 \mid x\right)\left\|X_t-\sqrt{\bar{\alpha}_t} x_0\right\|_2^2\left(x-\sqrt{\bar{\alpha}_t} x_0-z\right) \mathrm{d} x_0\right\|_2\notag\\
		&~\leq \left\|\mathbb{E}\left[\left\|X_t-\sqrt{\bar{\alpha}_t} X_0\right\|_2^3 \mid X_t=x\right]\right\|_2+\left\|z\right\|_2\left\|\mathbb{E}\left[\left\|X_t-\sqrt{\bar{\alpha}_t} X_0\right\|_2^2 \mid X_t=x\right]\right\|_2\notag\\
		&~\lesssim \left(\frac{d \log T}{1-\bar{\alpha}_t}\right)^{3 / 2}.
		\end{align}
\end{subequations}
Substituting the above bounds into \eqref{eq:lem-relation-yt-1st} yields the first claim of \eqref{eqn-relation-ss}. Similarly, the second claim follows by applying \Cref{lem:x0} and utilizing \eqref{eq:H1-H4} in the context of   \eqref{eq:lem-relation-yt-2nd}. \hfill\qedsymbol{}

\subsection{Proof of \Cref{lem:density-ratio-tau} }
\label{sec:proof-lem:density-ratio-tau}

To begin with, it follows 
from the definition \eqref{eq:defn-tao-i} of $\tau(y_T)$ that 
$$
-\log q_t(y_t)\leq c_{\tau} d\log T,\qquad \forall t<\tau(y_T).
$$
Our proof is mainly built upon Lemma~\ref{lem:main-ODE}.   
Specifically, 
combining Lemma~\ref{lem:x0}, \eqref{eqn:properties-alpha-proof} and the definition \eqref{eq:defn-tao-i} of $\tau(y_T)$ gives 
% one has 
%
\begin{subequations}
	\label{eqn:ODE-constant-bounds}
	\begin{align}
	\big|B_{t}\big|\leq\big|A_{t}\big| & \lesssim\frac{1}{1-\overline{\alpha}_{t}}\cdot d(1-\overline{\alpha}_{t})\log T\asymp d\log T\\
	\big|C_{t}\big| & \lesssim\frac{1}{(1-\overline{\alpha}_{t})^{2}}d^{2}(1-\overline{\alpha}_{t})^{2}\log^{2}T\asymp d^{2}\log^{2}T\\
	\big|D_{t}\big| & \leq\frac{\|g_{t}(x)\|_{2}^{2}}{(1-\overline{\alpha}_{t})^{2}}\int p_{X_{0}\mymid X_{t}}(x_{0}\mymid x)\big\| x-\sqrt{\overline{\alpha}_{t}}x_{0}\big\|_{2}^{2}\mathrm{d}x_{0}\lesssim d^{2}\log^{2}T\\
	\big|E_{t}\big| & \leq\frac{\|g_{t}(x)\|_{2}^{2}}{(1-\overline{\alpha}_{t})^{2}}\int p_{X_{0}\mymid X_{t}}(x_{0}\mymid x)\big\| x-\sqrt{\overline{\alpha}_{t}}x_{0}\big\|_{2}^{3}\mathrm{d}x_{0}\lesssim d^{2}\log^{2}T
	\end{align}
	\end{subequations}
for all $t<\tau(y_T)$. 	
As a consequence, the properties~\eqref{eq:xt} and~\eqref{eq:yt} in Lemma~\ref{lem:main-ODE} tell us that
	\begin{align*}
	 & \frac{p_{\sqrt{\alpha_{t}}Y_{t-1}}\big(\psi_{t}(y_{t})\big)}{p_{Y_{t}}(y_{t})}\bigg(\frac{p_{\sqrt{\alpha_{t}}X_{t-1}}\big(\psi_{t}(y_{t})\big)}{p_{X_{t}}(y_{t})}\bigg)^{-1}=\frac{p_{\psi_{t}(Y_{t})}\big(\psi_{t}(y_{t})\big)}{p_{Y_{t}}(y_{t})}\bigg(\frac{p_{\sqrt{\alpha_{t}}X_{t-1}}\big(\psi_{t}(y_{t})\big)}{p_{X_{t}}(y_{t})}\bigg)^{-1}\\
	 & \qquad=1+O\Bigg(\frac{d^{6}\log^{6}T}{T^{3}}+\frac{\big(\varepsilon_{\score,t}(y_{t})+\varepsilon_{\score,t}(\Phi_t(y_t)) \big)\sqrt{d\log^{3}T}}{T}+\frac{d\log T\big(\varepsilon_{\Jacobi,t}(y_{t})+\varepsilon_{\Jacobi,t}(\Phi_t(y_{t}))\big)}{T}\Bigg)
	\end{align*}
	for all $t<\tau(y_T)$. 
	Give that $y_{t-1}=\frac{1}{\sqrt{\alpha_t}} \psi_t(y_t)$, one can make use of the relation \eqref{eq:recursion} and derive
	\begin{align}
	&\frac{p_{t-1}(y_{t-1})}{q_{t-1}(y_{t-1})}  =\frac{p_{t}(y_{t})}{q_{t}(y_{t})} \cdot \label{eq:relation-ratioy-one-step}\\
	&~~ \left\{ 1+O\Bigg(\frac{d^{6}\log^{6}T}{T^{3}}+\frac{\big(\varepsilon_{\score,t}(y_{t})+\varepsilon_{\score,t}(\Phi_t(y_t)) \big)\sqrt{d\log^{3}T}}{T}+\frac{d\log T\big(\varepsilon_{\Jacobi,t}(y_{t})+\varepsilon_{\Jacobi,t}(\Phi_t(y_{t}))\big)}{T}\Bigg) \right\} \notag
	\end{align}
	for any $t<\tau(y_T)$. 
	If we employ the shorthand notation $\tau=\tau(y_{T})$, then it can be seen that
	\begin{subequations}
		\label{eq:pt-qt-equiv-ODE-St-temp}
	\begin{align}
	\frac{q_{1}(y_{1})}{p_{1}(y_{1})} & =\left\{ 1+O\Bigg(\frac{d^{6}\log^{6}T}{T^{2}}+S_{\tau-1}(y_{\tau-1})\Bigg)\right\} \frac{q_{\tau-1}(y_{\tau-1})}{p_{\tau-1}(y_{\tau-1})}\notag\\
	 & \in\left[\frac{p_{\tau-1}(y_{\tau-1})}{2q_{\tau-1}(y_{\tau-1})},\frac{2p_{\tau-1}(y_{\tau-1})}{q_{\tau-1}(y_{\tau-1})}\right].	
		\label{eq:pt-qt-equiv-ODE-St-taui-temp}
	\end{align}
	Repeating this argument also yields 
	\begin{align}
		\frac{q_{t}(y_{t})}{2p_{t}(y_{t})} \leq \frac{q_{1}(y_{1})}{p_{1}(y_{1})} \leq  \frac{2q_{t}(y_{t})}{p_{t}(y_{t})}, \qquad \forall t < \tau. 
		\label{eq:pt-qt-equiv-ODE-St-k-temp}
	\end{align}
	\end{subequations}

\section{Analysis for the accelerated DDPM sampler (proof of Theorem~\ref{thm:main-SDE})}

In this section, we turn to the accelerated stochastic sampler and present the proof of Theorem~\ref{thm:main-SDE}. 
% Consider the following update rule:
% \begin{subequations}
% \label{eqn:sde-extragradient}
% \begin{align} 
% 	Y_T \sim \mathcal{N}(0, I_d),
% 	\qquad Y_{t}^+ = \Phi_t(Y_t,Z_t),
% 	\qquad Y_{t-1} = \Psi_t(Y_t^+, Z_t^+) \quad~ \text{for } t= T,\cdots,1
% \end{align}
% %
% where $Z_t, Z_t^+ \overset{\mathrm{i.i.d.}}{\sim} \mathcal{N}(0,I_d)$, and
% %
% \begin{align}
% \Phi_t(x,z) &= x + \sqrt{\frac{1 - \alpha_t}{2}} z, \label{eqn:sde-extragradient-Phi} \\
% \Psi_{t}(y,z) &=\frac{1}{\sqrt{\alpha_{t}}}\Big(y + (1-\alpha_{t})s_t(y) + \sqrt{\frac{1 - \alpha_t}{2}} z\Big).
% 	\label{eqn:sde-extragradient-Psi}
% \end{align}
% \end{subequations}

% \begin{theorem}\label{thm:main-SDE}
% Assume that Assumption~\ref{assumption:score-estimate} holds. Then the sampling process with the learning rate schedule \cref{eqn:alpha-t} satisfies
% \begin{align}\label{eq:ratio-SDE}
% \mathsf{TV}\big(q_1, p_1\big) \le \sqrt{\frac{1}{2}\mathsf{KL}\big(q_1 \parallel p_1)} \lesssim \frac{d^3\log^{4.5} T}{T} + \sqrt{d}\varepsilon_{\score}\log^{1.5} T.
% \end{align}
% for some universal constants $C_1>0$, where we recall that $p_1$ (resp.~$q_1$) represents the distribution of $Y_1$ (resp.~$X_1$).
% \end{theorem}

\subsection{Main steps of the proof}

\paragraph{Preparation.} First, we find it convenient to introduce the following mapping
\begin{align}\label{eq-mu-hat}
	\widehat{\mu}_t^{\star}(x_t) = \frac{1}{\sqrt{\alpha_t}}\big(x_t + (1-\alpha_t)s_{t}^{\star}(x_t)\big).
\end{align}
For any $t$,   introduce the following auxiliary sequences: $H_T \sim \mathcal{N}(0, I_d)$, and
\begin{align}
	H_{t-1} &= \frac{1}{\sqrt{\alpha_{t}}}\bigg\{H_t + \sqrt{\frac{1 - \alpha_t}{2}} Z_t + (1-\alpha_{t})s_t^{\star}(H_t) - \frac{(1-\alpha_{t})^{3/2}}{\sqrt{2}(1-\overline{\alpha}_{t})}J_t(H_t)Z_t + \sqrt{\frac{1 - \alpha_t}{2}} Z_t^+\bigg\}\\
&= \widehat{\mu}_t^{\star}(H_{t})+ \sqrt{\frac{1 - \alpha_t}{2\alpha_{t}}}\Big( Z_t - \frac{1-\alpha_{t}}{1-\overline{\alpha}_{t}}J_t(H_t)Z_t +  Z_t^+\Big)
\end{align}
 for $t= T,\cdots,1$. 
We shall also adopt the following notation throughout for notational convenience:  
 \begin{equation}
	 \widehat{x}_t \coloneqq \frac{1}{\sqrt{\alpha_t}} x_t. 
	 \label{eq:defn-xt-proof-thm3}
 \end{equation}%We can write the update rule as
% \begin{align}
% 	H_{t-1} &= \widehat{\mu}_t^{\star}(H{t})+\frac{}{}
% 	\frac{1}{\sqrt{\alpha_{t}}}\Big(H_t + \sqrt{\frac{1 - \alpha_t}{2}} Z_t + (1-\alpha_{t})s_t^{\star}(H_t) - \frac{(1-\alpha_{t})^{3/2}}{\sqrt{2}(1-\overline{\alpha}_{t})}J_t(H_t)Z_t + \sqrt{\frac{1 - \alpha_t}{2}} Z_t^+\Big),
% 	\end{align}

\paragraph{Step 1: decomposing the KL divergence of interest.} Applying Pinsker's inequality and repeating the arguments as in~\citet[Section~5.3]{li2023towards} 
lead to the following elementary decompositions:
\begin{align}
	\mathsf{TV}(p_{X_{1}},p_{Y_{1}}) &\leq\sqrt{\frac{1}{2}\mathsf{KL}(p_{X_{1}}\parallel p_{Y_{1}})},\label{eq:Pinsker-thm3} \\
\mathsf{KL}(p_{X_{1}}\parallel p_{Y_{1}}) 
	%& \le\mathsf{KL}(p_{X_{1},\ldots,X_{T}}\parallel p_{Y_{1},\ldots,Y_{T}})\notag\\
 & \leq \mathsf{KL}(p_{X_{T}}\parallel p_{Y_{T}})+
	\sum_{t=2}^{T}\mathop{\mathbb{E}}_{x\sim q_{t}}\Big[\mathsf{KL}\Big(p_{X_{t-1}\mymid X_{t}}(\cdot\mid x) \,\|\, p_{Y_{t-1}\mymid Y_{t}}(\cdot\mid x)\Big)\Big].  
	\label{eqn:kl-decomp}
\end{align} 
In particular, the term $\mathsf{KL}(p_{X_{T}}\parallel p_{Y_{T}})$ can be readily bounded by \Cref{lem:KL-T} as follows:  
 \[
	 \mathsf{KL}(p_{X_{T}}\parallel p_{Y_{T}}) \lesssim \frac{1}{T^{200}}. 
\]
As a result, it suffices to bound $\mathsf{KL}\big(p_{X_{t-1}\mymid X_{t}}(\cdot\mid x) \,\|\, p_{Y_{t-1}\mymid Y_{t}}(\cdot\mid x)\big)$ 
for each $1<t\leq T$ separately, which we shall accomplish next.

\paragraph{Step 2: bounding the conditional distributions $p_{X_{t-1}\mymid X_{t}}$ and $p_{H_{t-1}\mymid H_{t}}$.} 
We now compare the two conditional distributions $p_{X_{t-1}\mymid X_{t}}$ and $p_{H_{t-1}\mymid H_{t}}$. 

Towards this end, let us first introduce the  set below: 
\begin{align}
\label{eqn:eset}
	\mathcal{E} \defn \bigg\{(x_t, x_{t-1}) \mymid -\log p_{X_t}(x_t) \leq \frac{1}{2}c_6 d\log T, ~\|x_{t-1} - \widehat{x}_t\|_2 \leq c_3 \sqrt{d(1 - \alpha_t)\log T} \bigg\}
\end{align}
with $\widehat{x}_t$ defined in \eqref{eq:defn-xt-proof-thm3}, 
and we would like to evaluate both $p_{H_{t-1}\mymid H_{t}}$ and $p_{X_{t-1}\mymid X_{t}}$ over the set $\mathcal{E}$. 
Regarding $p_{H_{t-1}\mymid H_{t}}$, we have the following lemma. 
\begin{lemma}\label{lem:Ht}
	For every $(x_t, x_{t-1}) \in \mathcal{E}$ as defined in \eqref{eqn:eset}, we have  
	\begin{align}\label{eq:Ht}
		p_{H_{t-1} \mymid H_t}(x_{t-1}\mymid x_t) &\propto \exp\bigg\{-\frac{\alpha_t}{2(1-\alpha_t)}\Big\|\Big(I - \frac{1-\alpha_t}{2(1-\overline{\alpha}_{t})}J_{t}(x_t)\Big)^{-1}\big(x_{t-1} - \widehat{\mu}_t^{\star}(x_t)\big)\Big\|_2^2 + O\Big(\frac{d^3\log^{5} T}{T^{2}}\Big)\bigg\}. 
		\end{align}
\end{lemma}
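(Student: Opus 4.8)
\textbf{Proof plan for Lemma~\ref{lem:Ht}.}

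The plan is to compute the density of $H_{t-1}$ conditional on $H_t = x_t$ directly from the definition of the auxiliary sequence $H_{t-1} = \widehat{\mu}_t^{\star}(H_t) + \sqrt{\frac{1-\alpha_t}{2\alpha_t}}\big(Z_t - \frac{1-\alpha_t}{1-\overline{\alpha}_t}J_t(H_t)Z_t + Z_t^+\big)$, where $Z_t, Z_t^+ \overset{\mathrm{i.i.d.}}{\sim} \mathcal{N}(0,I_d)$. Conditional on $H_t = x_t$, the vector $H_{t-1} - \widehat{\mu}_t^{\star}(x_t)$ is a linear image of the Gaussian pair $(Z_t, Z_t^+)$, hence Gaussian with mean zero and covariance $\Sigma_t = \frac{1-\alpha_t}{2\alpha_t}\big[(I - \tfrac{1-\alpha_t}{1-\overline{\alpha}_t}J_t(x_t))(I - \tfrac{1-\alpha_t}{1-\overline{\alpha}_t}J_t(x_t))^\top + I\big]$. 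First I would record this exact Gaussian form, so that
\[
p_{H_{t-1}\mymid H_t}(x_{t-1}\mymid x_t) \propto \exp\Big(-\tfrac12 (x_{t-1} - \widehat{\mu}_t^{\star}(x_t))^\top \Sigma_t^{-1} (x_{t-1} - \widehat{\mu}_t^{\star}(x_t))\Big).
\]
The target expression in \eqref{eq:Ht} instead has the quadratic form with the matrix $M_t \defn \frac{1-\alpha_t}{\alpha_t}\big(I - \tfrac{1-\alpha_t}{2(1-\overline{\alpha}_t)}J_t(x_t)\big)\big(I - \tfrac{1-\alpha_t}{2(1-\overline{\alpha}_t)}J_t(x_t)\big)^\top$, so the crux is to show $\Sigma_t^{-1}$ and $M_t^{-1}$ agree up to an additive error of size $O(d^3\log^5 T/T^2)$ when evaluated against $x_{t-1} - \widehat{\mu}_t^{\star}(x_t)$ on the event $\mathcal{E}$.

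The main step is therefore a perturbative comparison of $\Sigma_t$ and $M_t$. Writing $\lambda \defn \frac{1-\alpha_t}{1-\overline{\alpha}_t}$ and $A \defn J_t(x_t)$, one has $\Sigma_t = \frac{1-\alpha_t}{2\alpha_t}\big(2I - 2\lambda A + \lambda^2 A A^\top\big)$ (using $A$ symmetric enough, or keeping $A A^\top$ and $A + A^\top$ terms carefully), whereas $M_t = \frac{1-\alpha_t}{\alpha_t}\big(I - \lambda A + \tfrac{\lambda^2}{4} A A^\top\big)$ after expanding the product $\big(I - \tfrac{\lambda}{2}A\big)\big(I - \tfrac{\lambda}{2}A\big)^\top = I - \tfrac{\lambda}{2}(A + A^\top) + \tfrac{\lambda^2}{4}AA^\top$. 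Hence $\Sigma_t = \frac{1-\alpha_t}{2\alpha_t}\cdot 2\big(I - \tfrac{\lambda}{2}(A+A^\top) + \tfrac{\lambda^2}{2}AA^\top\big) $, which matches $M_t$ exactly at first order in $\lambda$ and differs only in the $\lambda^2 AA^\top$ coefficient ($\tfrac{\lambda^2}{2}$ versus $\tfrac{\lambda^2}{4}$). To control this discrepancy I would invoke: (i) the learning-rate bound $\lambda = \frac{1-\alpha_t}{1-\overline{\alpha}_t} \lesssim \frac{\log T}{T}$ from \eqref{eqn:properties-alpha-proof-1}; (ii) the Jacobian norm bound $\|J_t(x_t)\| \asymp d\log T$ from \eqref{eq:prop-norm-J} (valid since $-\log p_{X_t}(x_t) \le \tfrac12 c_6 d\log T$ on $\mathcal{E}$, which puts $x_t$ in the typical set); and (iii) the displacement bound $\|x_{t-1} - \widehat{x}_t\|_2 \lesssim \sqrt{d(1-\alpha_t)\log T}$ on $\mathcal{E}$, together with $\|\widehat{x}_t - \widehat{\mu}_t^{\star}(x_t)\|_2 = (1-\alpha_t)\|s_t^{\star}(x_t)\|_2/\sqrt{\alpha_t} \lesssim \sqrt{(1-\alpha_t) d\log T}$ from \eqref{eq:prop-norm-s}, so that $\|x_{t-1} - \widehat{\mu}_t^{\star}(x_t)\|_2 \lesssim \sqrt{d(1-\alpha_t)\log T}$. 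The invertibility of $I - \tfrac{\lambda}{2}A$ and of $\Sigma_t$ follows since $\lambda\|A\| \lesssim d\log^2 T/T \ll 1$ under the standing assumption $T \gtrsim d^3\log^3 T$, so Neumann expansions are legitimate.

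Putting this together: with $r \defn x_{t-1} - \widehat{\mu}_t^{\star}(x_t)$, I expand $r^\top \Sigma_t^{-1} r$ and $r^\top M_t^{-1}r$ via $\Sigma_t^{-1} = \frac{\alpha_t}{1-\alpha_t}(I + \lambda A + O(\lambda^2\|A\|^2 + \lambda\|A\|^2))$-type expansions (being careful to keep the common first-order term), and observe that the leading term $\frac{\alpha_t}{1-\alpha_t}\|r\|_2^2$ and the first-order correction $\frac{\alpha_t}{1-\alpha_t}\lambda\, r^\top(A+A^\top)r$ are identical for both matrices; the difference enters only at order $\lambda^2$, contributing at most $\frac{\alpha_t}{1-\alpha_t}\cdot \lambda^2\|A\|^2 \|r\|_2^2 \lesssim \frac{1}{1-\alpha_t}\cdot \frac{\log^2 T}{T^2}\cdot d^2\log^2 T \cdot d(1-\alpha_t)\log T = \frac{d^3\log^5 T}{T^2}$, which is exactly the claimed $O(d^3\log^5 T/T^2)$ error. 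The prefactor $\frac{\alpha_t}{2(1-\alpha_t)}$ in \eqref{eq:Ht} is recovered from $\Sigma_t = \frac{1-\alpha_t}{2\alpha_t}(\cdots)$, accounting for the factor $2$; I would double-check that the $\tfrac{\lambda^2}{4}$ versus $\tfrac{\lambda^2}{2}$ mismatch is genuinely absorbed into the stated error term (and if the lemma intends an exact match at the $\lambda^2$ level, then one must also expand $\widehat{\mu}_t^{\star}$ and $J_t$ more carefully using \Cref{lem:relation-yt}-style identities, but the error bound as stated is already loose enough to swallow the discrepancy).

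\textbf{Main obstacle.} The delicate point is bookkeeping the matrix algebra so that the genuinely-matching pieces (leading order and first order in $\lambda$) cancel cleanly and only the second-order remainder survives, while simultaneously confirming via \eqref{eqn:properties-alpha-proof}, \eqref{eq:prop-norm-J}, \eqref{eq:prop-norm-s} and the definition of $\mathcal{E}$ that every neglected term is at most $O(d^3\log^5 T/T^2)$; in particular one must keep the $(A + A^\top)$ versus $AA^\top$ structure straight rather than treating $J_t$ as a scalar, and verify that the normalizing constants of the two Gaussians (the $\det(\Sigma_t)^{-1/2}$ factors) do not introduce a larger error — this last part is handled by the fact that $\mathcal{E}$ restricts to a bounded region and the $O(\cdot)$ in the exponent absorbs the log-determinant discrepancy, but it should be stated explicitly.
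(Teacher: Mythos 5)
Your proposal is correct and matches the paper's own argument: both compute the exact conditional Gaussian covariance $\tfrac{1-\alpha_t}{2\alpha_t}\big[(I-\tfrac{1-\alpha_t}{1-\overline{\alpha}_t}J_t)(I-\tfrac{1-\alpha_t}{1-\overline{\alpha}_t}J_t)^\top + I\big]$, note that it differs from $\tfrac{1-\alpha_t}{\alpha_t}\big(I-\tfrac{1-\alpha_t}{2(1-\overline{\alpha}_t)}J_t\big)\big(I-\tfrac{1-\alpha_t}{2(1-\overline{\alpha}_t)}J_t\big)^\top$ only by a term of order $\big(\tfrac{1-\alpha_t}{1-\overline{\alpha}_t}\big)^2 J_tJ_t^\top$, and bound the induced change in the quadratic form on $\mathcal{E}$ by $\tfrac{1-\alpha_t}{(1-\overline{\alpha}_t)^2}\|J_t(x_t)\|^2\|x_{t-1}-\widehat{\mu}_t^{\star}(x_t)\|_2^2\lesssim d^3\log^5 T/T^2$ using the same Jacobian, displacement, and step-size estimates. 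Your closing worry about the normalizing constants is moot, exactly as you suspected, since the proportionality in \eqref{eq:Ht} is with respect to $x_{t-1}$ at fixed $x_t$, so the determinant factor is absorbed into the constant of proportionality.
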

%This gives us that
% \begin{align}
% p_{H_{t-1} \mymid H_t}(x_{t-1}\mymid x_t) &\propto \exp\Big(-\frac{\alpha_t}{2(1-\alpha_t)}\Big\|\Big(I - \frac{1-\alpha_t}{2(1-\overline{\alpha}_{t})}J_{t}(x_t)\Big)^{-1}(x_{t-1} - \widehat{\mu}_t^{\star})\Big\|^2 + O\Big(\frac{d^3\log^{5} T}{T^{2}}\Big)\Big),
% \end{align}
% where
% \begin{align}
% \widehat{\mu}_t^{\star} = \frac{1}{\sqrt{\alpha_t}}\big(x_t + (1-\alpha_t)s_{t}^{\star}(x_t)\big).
% \end{align}
%
\noindent 
Turning to $p_{X_{t-1}\mymid X_{t}}$ over the set $\mathcal{E}$, we can invoke \citet[Lemma 12]{li2023towards} to derive the following result. 
\begin{lemma}
	\label{lem:sde-R} 
	There exists some large enough numerical constant $c_{\zeta}>0$ such that: 
	for every $(x_t, x_{t-1}) \in \mathcal{E}$,  
		\begin{align}
			&p_{X_{t-1}\mymid X_{t}}(x_{t-1}\mymid x_{t})  =\frac{1}{\big(2\pi\frac{1-\alpha_{t}}{\alpha_{t}}\big)^{d/2}\big|\det\big(I-\frac{1-\alpha_{t}}{2(1-\overline{\alpha}_{t})}J_{t}(x_{t})\big)\big|} \notag\\
	 & \qquad\qquad\qquad\cdot\exp\bigg(-\frac{\alpha_{t}}{2(1-\alpha_{t})}\bigg\|\bigg(I-\frac{1-\alpha_{t}}{2(1-\overline{\alpha}_{t})}J_{t}(x_{t})\bigg)^{-1}\big(x_{t-1}-\widehat{\mu}_t^{\star}(x_{t})\big)\bigg\|_{2}^{2}+\zeta_{t}(x_{t-1},x_{t})\bigg)		 
			\label{eq:cond-dist-crude-fast}
		\end{align}
	holds for some residual term $\zeta_{t}(x_{t-1},x_t)$ obeying 
	\begin{equation}
		\big|\zeta_{t}(x_{t-1},x_t) \big|\leq c_{\zeta} \frac{d^{3}\log^{4.5}T}{T^{3/2}}. 
		\label{eq:eq:cond-dist-crude-xit-fast}
	\end{equation}
	%
	%Here, we recall the definition of $\mu_t(\cdot)$ (resp.~$J_t(\cdot)$) in \eqref{eqn:nu-t-2} (resp.~\eqref{eq:Jacobian-Thm4}).
	%
	\end{lemma}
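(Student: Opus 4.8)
\textbf{Proof plan for Lemma~\ref{lem:sde-R}.} The plan is to establish \eqref{eq:cond-dist-crude-fast} by a third-order Laplace-type expansion of the exact conditional density, following the blueprint of \citet[Lemma~12]{li2023towards}. First I would write, via Bayes' rule together with the Gaussian forward transition $X_t = \sqrt{\alpha_t}X_{t-1}+\sqrt{1-\alpha_t}\,W_t$,
\begin{align*}
	p_{X_{t-1}\mymid X_{t}}(x_{t-1}\mymid x_t) = \frac{p_{X_{t-1}}(x_{t-1})}{p_{X_{t}}(x_t)}\cdot\frac{1}{\big(2\pi(1-\alpha_t)\big)^{d/2}}\exp\bigg(-\frac{\alpha_t\,\|x_{t-1}-\widehat{x}_t\|_2^2}{2(1-\alpha_t)}\bigg),
\end{align*}
where $\widehat{x}_t = x_t/\sqrt{\alpha_t}$ is exactly the base point appearing in the definition \eqref{eqn:eset} of $\mathcal{E}$. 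On $\mathcal{E}$ one has $\|x_{t-1}-\widehat{x}_t\|_2 \le c_3\sqrt{d(1-\alpha_t)\log T}$, and by \eqref{eqn:properties-alpha-proof} $1-\alpha_t \le c_1\log T/T$, so the displacement has size $\lesssim \sqrt{d\log^2 T/T}$; this is the small parameter driving the whole expansion.

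Next I would Taylor-expand $\log p_{X_{t-1}}$ around $\widehat{x}_t$ to second order with an explicit third-order remainder,
\begin{align*}
	\log\frac{p_{X_{t-1}}(x_{t-1})}{p_{X_{t-1}}(\widehat{x}_t)} = \big\langle s^\star_{t-1}(\widehat{x}_t),\,x_{t-1}-\widehat{x}_t\big\rangle + \tfrac12\big\langle x_{t-1}-\widehat{x}_t,\,J_{s^\star_{t-1}}(\widehat{x}_t)(x_{t-1}-\widehat{x}_t)\big\rangle + \mathcal{R}_t(x_{t-1},x_t),
\end{align*}
where $|\mathcal{R}_t| \lesssim \sup\big\|\nabla_x\big(u^\top J_{s^\star_{t-1}}(x)u\big)\big\|\cdot\|x_{t-1}-\widehat{x}_t\|_2^3$ over the short segment joining $\widehat{x}_t$ to $x_{t-1}$. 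To make this legitimate I would first verify that $\widehat{x}_t$ (and the whole segment) lies in the typical set for $X_{t-1}$, i.e. $-\log p_{X_{t-1}}(\widehat{x}_t)\lesssim d\log T$; this follows from $-\log p_{X_t}(x_t)\le \tfrac12 c_6 d\log T$ on $\mathcal{E}$ and the fact that $p_{X_t}$ is a Gaussian smoothing of $p_{X_{t-1}}$ with tiny bandwidth $\sqrt{\beta_t}$ (the same argument used in \citet{li2023towards}). Then the norm bounds \eqref{eq:prop-norm} at step $t-1$, in particular the third-derivative bound \eqref{eq:prop-norm-partJ}, give
\begin{align*}
	|\mathcal{R}_t| \;\lesssim\; d^{3/2}\log^{3/2}T\cdot\big(d(1-\alpha_t)\log T\big)^{3/2} \;\asymp\; d^3\log^3 T\,(1-\alpha_t)^{3/2} \;\lesssim\; \frac{d^3\log^{4.5}T}{T^{3/2}},
\end{align*}
which is exactly the residual order claimed in \eqref{eq:eq:cond-dist-crude-xit-fast}; this is the dominant contribution to $\zeta_t$.

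Combining the quadratic from the Gaussian kernel with the quadratic from the Taylor expansion and completing the square, I obtain a Gaussian in $x_{t-1}$ with precision matrix $\frac{\alpha_t}{1-\alpha_t}I - J_{s^\star_{t-1}}(\widehat{x}_t)$ and mean $\widehat{x}_t + \big(\frac{\alpha_t}{1-\alpha_t}I - J_{s^\star_{t-1}}(\widehat{x}_t)\big)^{-1}s^\star_{t-1}(\widehat{x}_t)$; the $x_{t-1}$-independent prefactor is then pinned down (up to the $\zeta_t$ error) by the requirement that $p_{X_{t-1}\mymid X_t}(\cdot\mymid x_t)$ integrate to one. It remains to rewrite these step-$(t-1)$ objects in terms of the step-$t$ quantities $s^\star_t(x_t)$ and $J_t(x_t) = -(1-\overline{\alpha}_t)J_{s^\star_t}(x_t)$. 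Using $p_{X_t}(x)=\int p_{X_{t-1}}(y)\,(2\pi\beta_t)^{-d/2}\exp\!\big(-\|x-\sqrt{\alpha_t}y\|_2^2/(2\beta_t)\big)\,\mathrm{d}y$ and expanding in the small bandwidth $\beta_t\lesssim\log T/T$ yields $s^\star_t(x_t)=\tfrac{1}{\sqrt{\alpha_t}}s^\star_{t-1}(\widehat{x}_t)+(\text{h.o.})$ and $J_{s^\star_t}(x_t)=\tfrac{1}{\alpha_t}J_{s^\star_{t-1}}(\widehat{x}_t)+(\text{h.o.})$, the errors again absorbed into $\zeta_t$ via \eqref{eq:prop-norm}. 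Since $J_t(x_t)$ is symmetric (cf.\ \eqref{eq:Jt-x-expression-ij-23}), writing $A\coloneqq\frac{1-\alpha_t}{2(1-\overline{\alpha}_t)}J_t(x_t)$ one has $(I-A)^{-2}=I+2A+O(\|A\|^2)$, so $\frac{\alpha_t}{1-\alpha_t}I - J_{s^\star_{t-1}}(\widehat{x}_t) = \frac{\alpha_t}{1-\alpha_t}(I-A)^{-2}$ up to terms of order $\|A\|^2\asymp d^2\log^4T/T^2$, which contribute only $O(d^3\log^5T/T^2)\lesssim d^3\log^{4.5}T/T^{3/2}$ to the exponent on $\mathcal{E}$. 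The induced normalizer becomes $\big(2\pi\tfrac{1-\alpha_t}{\alpha_t}\big)^{d/2}\big|\det(I-A)\big|$, and plugging $s^\star_t(x_t)=\tfrac{1}{\sqrt{\alpha_t}}s^\star_{t-1}(\widehat{x}_t)+(\text{h.o.})$ into the mean identifies it with $\widehat{\mu}_t^{\star}(x_t)=\tfrac{1}{\sqrt{\alpha_t}}\big(x_t+(1-\alpha_t)s^\star_t(x_t)\big)$ from \eqref{eq-mu-hat} (with $\widehat{x}_t$ as in \eqref{eq:defn-xt-proof-thm3}), giving \eqref{eq:cond-dist-crude-fast}; the determinant/ratio expansions needed along the way are supplied by \eqref{eq:expansion-ratio-1-alpha}.

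The main obstacle is the error bookkeeping: one must check that every correction — the third-order Taylor remainder $\mathcal{R}_t$, the $\beta_t$-expansion errors in passing from $s^\star_{t-1},J_{s^\star_{t-1}}$ at $\widehat{x}_t$ to $s^\star_t,J_t$ at $x_t$, the resummation gap between $(I-A)^{-2}$ and $I+2A$, and the determinant/normalization errors — collapses into a single residual of order $c_\zeta\,d^3\log^{4.5}T/T^{3/2}$, uniformly over $\mathcal{E}$ including near its boundary where $\|x_{t-1}-\widehat{x}_t\|_2\asymp\sqrt{d(1-\alpha_t)\log T}$; this requires fixing the constants $c_3,c_6$ (and the implicit constants in $\mathcal{E}$) in the correct order, and is where one must be most careful. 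Since all of these manipulations are identical in structure to \citet[Lemma~12]{li2023towards} once the set $\mathcal{E}$ is matched up, the bulk of the argument can be imported verbatim, the only genuinely new check being that the slightly different threshold ($\tfrac12 c_6 d\log T$) in \eqref{eqn:eset} still places $\widehat{x}_t$ inside the typical set for $X_{t-1}$.
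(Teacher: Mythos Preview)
Your proposal is correct and, like the paper, defers to \citet[Lemma~12]{li2023towards}; the paper gives no self-contained proof here. Your Taylor-expansion-of-$\log p_{X_{t-1}}$ route is valid and reaches the claimed $O(d^3\log^{4.5}T/T^{3/2})$ residual, but note that the approach in \citet{li2023towards} (and mirrored throughout this paper's own analysis, e.g.\ the derivation of \eqref{eq:xt} in Section~\ref{sec:proof-lemma-main-ODE}) proceeds differently: one writes $p_{X_{t-1}\mymid X_t}(x_{t-1}\mymid x_t)$ directly as an integral over $x_0$, namely $\int p_{X_0\mymid X_t}(x_0\mymid x_t)\,p_{X_{t-1}\mymid X_0,X_t}(x_{t-1}\mymid x_0,x_t)\,\mathrm{d}x_0$ with the inner conditional Gaussian, and then Taylor-expands the exponential inside the integral, restricting to the high-probability set \eqref{eqn:G-set} and invoking the conditional-moment bounds of Lemma~\ref{lem:x0}. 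That route works natively with the step-$t$ quantities $s^\star_t(x_t)$ and $J_t(x_t)$, so it avoids your separate conversion from $\big(s^\star_{t-1}(\widehat{x}_t),\,J_{s^\star_{t-1}}(\widehat{x}_t)\big)$ to $\big(s^\star_t(x_t),\,J_t(x_t)\big)$; in exchange it needs the moment estimates of Lemma~\ref{lem:x0} to control the $x_0$-integral. Your approach buys a cleaner analytic structure (a single Taylor remainder rather than an integral remainder), at the cost of that extra conversion step --- which, as you note, is handled by the same small-bandwidth expansion, though one should be careful that the third-derivative bound \eqref{eq:prop-norm-partJ} as stated suppresses the $(1-\overline{\alpha}_{t-1})^{-3/2}$ factor; the final bound is unaffected because it is always paired with $(1-\alpha_t)^{3/2}$ and $\frac{1-\alpha_t}{1-\overline{\alpha}_{t-1}}\lesssim\frac{\log T}{T}$.
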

%Then we can make use of the following lemma to bound the KL divergence between $p_{X_{t-1}\mymid X_{t}}$ and $p_{H_{t-1}\mymid H_{t}}$.

Moving beyond the set $\mathcal{E}$, 
it suffices to bound the log density ratio $\log \frac{p_{X_{t-1} \mymid X_t}}{p_{H_{t-1} \mymid H_t}}$ for all pairs $(x_t,x_{t-1})$,  
which can be accomplished in a way similar to \citet[Lemma 13]{li2023towards}.
\begin{lemma}
	\label{lem:sde-R-full}
	For all $(x_t, x_{t-1}) \in \real^d \times \real^d$, we have
	\begin{align}
	\log \frac{p_{X_{t-1} \mymid X_t}(x_{t-1}\mymid x_t)}{p_{H_{t-1} \mymid H_t}(x_{t-1}\mymid x_t)} 
	%&\lesssim \mathsf{poly}(T)(\|\widehat{x}_t\|_2^2 + \|x_{t-1} - \widehat{x}_t\|_2^2) + \mathsf{poly}(T). 
	& \leq T^{c_{0}+2c_{R}+2}\left\{ \big\| x_{t-1}-\widehat{x}_{t}\big\|_{2}^{2}+\|x_{t}\|_{2}^{2}+1\right\},
		\label{eq:SDE-ratio-crude-2}
	\end{align}
	where $c_0$ is defined in \eqref{eqn:alpha-t}. 
	\end{lemma}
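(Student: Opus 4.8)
The plan is to prove this crude pointwise bound by separately bounding $\log p_{X_{t-1}\mymid X_t}(x_{t-1}\mymid x_t)$ from above and $\log p_{H_{t-1}\mymid H_t}(x_{t-1}\mymid x_t)$ from below, using only the boundedness of $X_0$ (Assumption~\ref{assump:assumption-data-bounded}) and the coarse properties of the learning‑rate schedule \eqref{eqn:alpha-t}, and then collecting the resulting powers of $T$. Throughout I treat $d$ as at most polynomial in $T$, which is without loss of generality since otherwise the bound in Theorem~\ref{thm:main-SDE} is vacuous.

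For the numerator I would invoke Bayes' rule, $p_{X_{t-1}\mymid X_t}(x_{t-1}\mymid x_t)=p_{X_{t-1}}(x_{t-1})\,p_{X_t\mymid X_{t-1}}(x_t\mymid x_{t-1})/p_{X_t}(x_t)$, and bound each factor crudely: the two Gaussian‑convolution densities in the numerator satisfy $p_{X_{t-1}}(x_{t-1})\le(2\pi(1-\overline{\alpha}_{t-1}))^{-d/2}$ and $p_{X_t\mymid X_{t-1}}(x_t\mymid x_{t-1})\le(2\pi(1-\alpha_t))^{-d/2}$, while, bounding $\|x_t-\sqrt{\overline{\alpha}_t}x_0\|_2^2\le 2\|x_t\|_2^2+2R^2$ for $\|x_0\|_2\le R$ inside the mixture integral, $p_{X_t}(x_t)\ge(2\pi(1-\overline{\alpha}_t))^{-d/2}\exp\big(-(\|x_t\|_2^2+R^2)/(1-\overline{\alpha}_t)\big)$. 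Since the schedule gives $1-\overline{\alpha}_s\ge T^{-c_0}$ for all $s$ and $1-\alpha_t\gtrsim T^{-c_0-1}\log T$, and $R=T^{c_R}$, this yields
\[
	\log p_{X_{t-1}\mymid X_t}(x_{t-1}\mymid x_t)\le Cd\log T+T^{c_0}\big(\|x_t\|_2^2+T^{2c_R}\big).
\]

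For the denominator I would first note that, conditioned on $H_t=x_t$, the increment $H_{t-1}-\widehat{\mu}_t^{\star}(H_t)$ equals $\sqrt{(1-\alpha_t)/(2\alpha_t)}\,(M_tZ_t+Z_t^+)$ with $M_t\coloneqq I-\frac{1-\alpha_t}{1-\overline{\alpha}_t}J_t(x_t)$ and $Z_t,Z_t^+$ i.i.d.\ standard Gaussians, so $p_{H_{t-1}\mymid H_t}(\cdot\mymid x_t)=\mathcal{N}(\widehat{\mu}_t^{\star}(x_t),\Sigma_t(x_t))$ with $\Sigma_t(x_t)\coloneqq\frac{1-\alpha_t}{2\alpha_t}(M_tM_t^\top+I)$, and hence $-\log p_{H_{t-1}\mymid H_t}(x_{t-1}\mymid x_t)=\frac d2\log(2\pi)+\frac12\log\det\Sigma_t(x_t)+\frac12(x_{t-1}-\widehat{\mu}_t^{\star}(x_t))^\top\Sigma_t(x_t)^{-1}(x_{t-1}-\widehat{\mu}_t^{\star}(x_t))$. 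I would then bound the three pieces by the crude deterministic estimates valid for every $x_t$: from \eqref{eq:Jt-x-expression-ij-23} and $\|x_t-\sqrt{\overline{\alpha}_t}X_0\|_2\le\|x_t\|_2+R$ one gets $\|J_t(x_t)\|\le 1+\frac{2(\|x_t\|_2+R)^2}{1-\overline{\alpha}_t}$, and from \eqref{eq:st-MMSE-expression} one gets $\|(1-\alpha_t)s_t^{\star}(x_t)\|_2\le\frac{1-\alpha_t}{1-\overline{\alpha}_t}(\|x_t\|_2+R)\le\|x_t\|_2+R$, the last step being the crucial one, since $\frac{1-\alpha_t}{1-\overline{\alpha}_t}\le1$ (indeed $\lesssim\frac{\log T}{T}$ by \eqref{eqn:properties-alpha-proof-1}) kills a spurious factor $\frac1{1-\overline{\alpha}_t}\le T^{c_0}$. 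These give $\lambda_{\max}(\Sigma_t(x_t))\le CT^{2c_0}(1+(\|x_t\|_2+R)^4)$, hence $\log\det\Sigma_t(x_t)\le Cdc_R\log T+Cd\|x_t\|_2^2$; together with $\lambda_{\min}(\Sigma_t(x_t))\ge\frac{1-\alpha_t}{2\alpha_t}\ge\frac{1-\alpha_t}{2}$ and $\|x_{t-1}-\widehat{\mu}_t^{\star}(x_t)\|_2^2\le2\|x_{t-1}-\widehat{x}_t\|_2^2+4(\|x_t\|_2+R)^2$, the quadratic form is at most $CT^{c_0+1}(\|x_{t-1}-\widehat{x}_t\|_2^2+\|x_t\|_2^2+T^{2c_R})$, so
\[
	-\log p_{H_{t-1}\mymid H_t}(x_{t-1}\mymid x_t)\le CT^{c_0+1}\big(\|x_{t-1}-\widehat{x}_t\|_2^2+\|x_t\|_2^2+T^{2c_R}\big)+Cdc_R\log T.
\]

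Finally I would add the two displays, absorb $d$ and the universal constants into one additional power of $T$, and obtain $\log\frac{p_{X_{t-1}\mymid X_t}(x_{t-1}\mymid x_t)}{p_{H_{t-1}\mymid H_t}(x_{t-1}\mymid x_t)}\le T^{c_0+2c_R+2}\big(\|x_{t-1}-\widehat{x}_t\|_2^2+\|x_t\|_2^2+1\big)$. The one place requiring care — and the reason the proof is not entirely mechanical — is the power‑of‑$T$ bookkeeping that makes the final exponent land at exactly $c_0+2c_R+2$: a careless estimate of $s_t^{\star}(x_t)$ or of $\mathsf{Cov}(H_{t-1}\mymid H_t)$ introduces extra factors of $\frac1{1-\overline{\alpha}_t}\le T^{c_0}$, and suppressing these requires systematically exploiting the smallness of $\frac{1-\alpha_t}{1-\overline{\alpha}_t}$ and of $1-\alpha_t$ furnished by \eqref{eqn:alpha-t}–\eqref{eqn:properties-alpha-proof-1}; the remaining computations are routine and parallel \citet[Lemma 13]{li2023towards}.
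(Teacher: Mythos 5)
Your proposal is correct, and its core computation is the same as the paper's: you identify $p_{H_{t-1}\mymid H_t}(\cdot\mymid x_t)$ as the Gaussian $\mathcal{N}\big(\widehat{\mu}_t^{\star}(x_t),\Sigma_t(x_t)\big)$ (your $\frac{1-\alpha_t}{2\alpha_t}(M_tM_t^\top+I)$ agrees with the paper's expression for $\Sigma(\widehat{x}_t)$ after expanding the square), bound $\|J_t(x_t)\|$ and $\|(1-\alpha_t)s_t^{\star}(x_t)\|_2$ crudely via the support bound $\|X_0\|_2\le T^{c_R}$ in \eqref{eq:Jt-x-expression-ij-23} and \eqref{eq:st-MMSE-expression}, control $\lambda_{\max}(\Sigma_t)$ and $\lambda_{\min}(\Sigma_t)\gtrsim 1-\alpha_t$, and divide the shifted quadratic form by $1-\alpha_t\gtrsim T^{-(c_0+1)}$ exactly as in the paper's estimate culminating in \eqref{eq:dist-xt-mu-xt-UB-crude} (your direct $\Sigma_t\succeq\frac{1-\alpha_t}{2\alpha_t}I$ is a slightly cleaner route to the same lower bound than the paper's use of $J_t\preceq I_d$). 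The one genuine difference is the numerator: the paper simply upper-bounds $\log\frac{p_{X_{t-1}\mymid X_t}}{p_{H_{t-1}\mymid H_t}}$ by $-\log p_{H_{t-1}\mymid H_t}$, i.e., it implicitly treats $p_{X_{t-1}\mymid X_t}(x_{t-1}\mymid x_t)\le 1$, whereas you bound $\log p_{X_{t-1}\mymid X_t}$ explicitly through Bayes' rule, upper-bounding the two Gaussian-mixture densities by their peak values and lower-bounding $p_{X_t}(x_t)$ via the bounded support, paying only an extra $Cd\log T+T^{c_0}(\|x_t\|_2^2+T^{2c_R})$ that is comfortably absorbed by the final exponent $c_0+2c_R+2$. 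This extra step costs little and is in fact more careful than the paper's terse final sentence, since a conditional density with step size $1-\alpha_t\asymp T^{-(c_0+1)}$ can well exceed one pointwise; both your argument and the paper's also tacitly require $d$ (times $\log T$) to be dominated by the stated power of $T$ when absorbing the $\frac{d}{2}\log\det$ and normalization terms, which you flag explicitly and which is harmless in the regime where the main theorem is non-vacuous.
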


%\paragraph{Step 3: bounding the KL divergence between $p_{X_{t-1} \mymid X_t}$ and $p_{H_{t-1} \mymid H_t}$.} 
%
	Equipped with \Cref{lem:Ht,lem:sde-R,lem:sde-R-full}, 
	one can readily repeat similar arguments as in \citet[Step 3, Theorem 3]{li2023towards} to derive the following result: 
	%
% This, together with~\citet[Lemma 12]{li2023towards}, implies that
\begin{lemma}\label{lemma:KL-X-H}
	For any $1<t\leq T$, one has
	\begin{align}
		\mathop{\mathbb{E}}\limits_{x_{t}\sim q_{t}}\Big[\mathsf{KL}\Big(p_{X_{t-1}\mymid X_{t}}(\cdot\mymid x_{t})\parallel p_{H_{t-1}\mymid H_{t}}(\cdot\mymid x_{t})\Big)\Big]\lesssim\left(\frac{d^{3}\log^{4.5}T}{T^{3/2}}\right)^{2}. \label{eqn:each-kl}
	\end{align}
\end{lemma}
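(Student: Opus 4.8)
The plan is to bound the conditional KL divergence $\mathsf{KL}\big(p_{X_{t-1}\mymid X_{t}}(\cdot\mymid x_{t})\parallel p_{H_{t-1}\mymid H_{t}}(\cdot\mymid x_{t})\big)$ for each $1<t\le T$ by splitting the integration over $x_{t-1}$ into the ``typical'' region (where $(x_t,x_{t-1})\in\mathcal{E}$, with $\mathcal{E}$ as in \eqref{eqn:eset}) and its complement, and then averaging over $x_t\sim q_t$. On the typical set, both \Cref{lem:Ht} and \Cref{lem:sde-R} give explicit Gaussian-type expressions for $p_{H_{t-1}\mymid H_t}$ and $p_{X_{t-1}\mymid X_t}$ whose quadratic parts (the term $\frac{\alpha_t}{2(1-\alpha_t)}\|(I-\frac{1-\alpha_t}{2(1-\overline\alpha_t)}J_t(x_t))^{-1}(x_{t-1}-\widehat\mu_t^\star(x_t))\|_2^2$) and normalizing constants coincide exactly, so the log-ratio reduces to the difference of the two residual terms, namely $O\!\big(\frac{d^3\log^5 T}{T^2}\big)$ from \eqref{eq:Ht} plus $|\zeta_t(x_{t-1},x_t)|\le c_\zeta \frac{d^3\log^{4.5}T}{T^{3/2}}$ from \eqref{eq:eq:cond-dist-crude-xit-fast}. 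The dominant contribution is thus $O\!\big(\frac{d^3\log^{4.5}T}{T^{3/2}}\big)$ pointwise on $\mathcal{E}$.

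The KL divergence is $\int p_{X_{t-1}\mymid X_t}(x_{t-1}\mymid x_t)\,\log\frac{p_{X_{t-1}\mymid X_t}(x_{t-1}\mymid x_t)}{p_{H_{t-1}\mymid H_t}(x_{t-1}\mymid x_t)}\,\mathrm{d}x_{t-1}$. First I would show that $p_{X_{t-1}\mymid X_t}(\cdot\mymid x_t)$ places all but an exponentially small amount of mass on the slice $\{x_{t-1}:(x_t,x_{t-1})\in\mathcal{E}\}$ whenever $x_t$ itself is typical (i.e.\ $-\log p_{X_t}(x_t)\le \frac12 c_6 d\log T$); this follows from the conditional tail bound \eqref{eq:P-xt-X0-124} in \Cref{lem:x0} applied to $X_0$ given $X_t=x_t$, since $X_{t-1}$ given $X_t=x_t$ concentrates around $\widehat x_t$ at scale $\sqrt{d(1-\alpha_t)\log T}$. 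On that slice, the integrand is at most $O\!\big(\frac{d^3\log^{4.5}T}{T^{3/2}}\big)$ in absolute value by the previous paragraph, so this part contributes $O\!\big(\frac{d^3\log^{4.5}T}{T^{3/2}}\big)$ to the KL.

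For the complement region (either $x_{t-1}$ is atypical given a typical $x_t$, or $x_t$ itself is atypical), I would invoke the crude pointwise bound \eqref{eq:SDE-ratio-crude-2} of \Cref{lem:sde-R-full}: $\log\frac{p_{X_{t-1}\mymid X_t}}{p_{H_{t-1}\mymid H_t}}\le T^{c_0+2c_R+2}\{\|x_{t-1}-\widehat x_t\|_2^2+\|x_t\|_2^2+1\}$. The contribution of this region is then controlled by multiplying a polynomially large factor $T^{c_0+2c_R+2}$ against $\int_{\text{atypical}} p_{X_{t-1}\mymid X_t}(x_{t-1}\mymid x_t)\{\|x_{t-1}-\widehat x_t\|_2^2+\|x_t\|_2^2+1\}\,\mathrm{d}x_{t-1}$, which is super-polynomially small because of the Gaussian-type tail decay of $p_{X_{t-1}\mymid X_t}$ (again via \Cref{lem:x0}, using the moment bounds \eqref{eq:E2-xt-X0}--\eqref{eq:E4-xt-X0} together with the exponential tail probability to make this a $\exp(-\Theta(d\log T))$-type quantity); finally one must also take the expectation over $x_t\sim q_t$ restricted to atypical $x_t$, where $\mathbb{P}_{x_t\sim q_t}(-\log p_{X_t}(x_t)>\frac12 c_6 d\log T)$ is itself super-polynomially small. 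All told this region contributes at most something like $T^{O(1)}\exp(-\Theta(d\log T))$, which is negligible compared to $\big(\frac{d^3\log^{4.5}T}{T^{3/2}}\big)^2$. Combining both regions and squaring-accounting (note the target bound is $\big(\frac{d^3\log^{4.5}T}{T^{3/2}}\big)^2 = \frac{d^6\log^9 T}{T^3}$, whereas the typical-region bound above is only $O\big(\frac{d^3\log^{4.5}T}{T^{3/2}}\big)$ pointwise) means the decisive gain must come from the fact that KL is an \emph{expectation} of the log-ratio against $p_{X_{t-1}\mymid X_t}$ and that the pointwise log-ratio, being essentially the difference of residuals which integrate to the \emph{second moment} scale when weighted by the Gaussian density, actually yields the square; concretely, $\mathsf{KL}(p\parallel q)\le \frac12\mathbb{E}_p[(\log\frac{p}{q})^2]$-type reasoning or a direct second-order expansion around the matched Gaussian gives the extra factor of the residual size. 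The main obstacle I anticipate is precisely this last point: carefully tracking why the KL (rather than just the pointwise log-ratio) scales like the \emph{square} of the residual $\frac{d^3\log^{4.5}T}{T^{3/2}}$, which requires exploiting that the leading quadratic forms and normalizations in \Cref{lem:Ht} and \Cref{lem:sde-R} match exactly so that only the $o(1)$ residuals survive, and then using the near-Gaussianity to convert a first-order residual bound into the claimed second-order KL bound. This is exactly the step the paper defers to ``similar arguments as in \citet[Step 3, Theorem 3]{li2023towards},'' so I would follow that template.
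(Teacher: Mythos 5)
Your overall architecture matches the paper's: split the KL over the set $\mathcal{E}$ of \eqref{eqn:eset} and its complement, use \Cref{lem:Ht} and \Cref{lem:sde-R} on $\mathcal{E}$, and dispose of $\mathcal{E}^{\mathrm{c}}$ with the crude bound of \Cref{lem:sde-R-full} together with exponential tail estimates. The genuine gap is that the decisive step --- why the typical-region contribution is of order $\big(\tfrac{d^{3}\log^{4.5}T}{T^{3/2}}\big)^{2}$ rather than the pointwise $\tfrac{d^{3}\log^{4.5}T}{T^{3/2}}$ you first write down --- is only gestured at and then deferred (``follow that template''), and the mechanisms you float do not close it. The inequality $\KL(p\,\|\,q)\le\tfrac12\mathbb{E}_p[(\log\tfrac pq)^2]$ is not valid as stated; a version of it holds only up to constants, under a uniform bound on the log-ratio, and with both densities normalized over the \emph{same} domain --- and here the two-sided control $\big|p_{H_{t-1}\mymid H_t}/p_{X_{t-1}\mymid X_t}-1\big|\lesssim d^{3}\log^{4.5}T/T^{3/2}$ is available only on $\mathcal{E}$, not globally. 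What the paper actually does (see \eqref{eqn:serenade}) is expand, on $\mathcal{E}$, $p_X\log\tfrac{p_X}{p_H}=(p_X-p_H)+p_X\,O\big((p_H/p_X-1)^2\big)$: the quadratic term yields the claimed $d^{6}\log^{9}T/T^{3}$, while the linear term does \emph{not} vanish after restricting to $\mathcal{E}$ and must be shown to be $O\big(\exp(-\tfrac{c_3}{2}d\log T)\big)$, as in \eqref{eq:X-H-leak}, by proving that \emph{both} conditional laws put all but exponentially small mass on the slice $\{x_{t-1}:(x_t,x_{t-1})\in\mathcal{E}\}$.

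Your proposal establishes this mass control only for $p_{X_{t-1}\mymid X_t}$ (via \Cref{lem:x0}); it never addresses the tail of $p_{H_{t-1}\mymid H_t}$, which is equally necessary for the near-cancellation of the linear term. The paper obtains it in \eqref{eq:P-Yt-X-t-1-not-E} by combining the explicit Gaussian form from \Cref{lem:Ht} with the separation estimate \eqref{eq:x-tminus1-mu-2norm-479}, i.e.\ $\|x_{t-1}-\widehat{\mu}_t^{\star}(x_t)\|_2\ge\tfrac{c_3}{2}\sqrt{d(1-\alpha_t)\log T}$ whenever $x_{t-1}$ leaves the slice while $x_t$ is typical. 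Without this ingredient and without carrying out the second-order expansion, your argument as written only delivers $\mathbb{E}_{x_t\sim q_t}[\KL]\lesssim d^{3}\log^{4.5}T/T^{3/2}$ per step, which after summing over $t$ and applying Pinsker would give a $T^{-1/4}$-type rate instead of the $1/T$ rate of \Cref{thm:main-SDE}; so the missing step is the heart of the lemma, not a routine detail.
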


\paragraph{Step 3: quantifying the KL divergence between $p_{H_{t-1} \mid H_t}$ and $p_{Y_{t-1} \mid Y_t}$.}  
In the previous step, we have quantified the KL divergence between $p_{X_{t-1} \mid X_t}$ and $p_{H_{t-1} \mid H_t}$. 
Recognizing that $H_{t-1}$ is a first-order approximation of $Y_{t-1}$ using the true score function, 
we still need to look at the influence of the score estimation errors, for which we resort to the lemma below.
\begin{lemma}\label{lem:influence-error-KL}
	For any $1<t\leq T$, one has
	\begin{align}
		&\mathop{\mathbb{E}}_{x_{t}\sim q_{t}}\Big[\mathsf{KL}\Big(p_{X_{t-1}\mymid X_{t}}(\cdot\mymid x_{t})\parallel p_{Y_{t-1}\mymid Y_{t}}(\cdot\mymid x_{t})\Big)\Big] - 
		\mathop{\mathbb{E}}_{x_{t}\sim q_{t}}\Big[\mathsf{KL}\Big(p_{X_{t-1}\mymid X_{t}}(\cdot\mymid x_{t})\parallel p_{H_{t-1}\mymid H_{t}}(\cdot\mymid x_{t})\Big)\Big] \notag\\
		&\qquad \lesssim \exp\big(- c_{20} d\log T \big) + \frac{d\log^3 T}{T} \mathop{\mathbb{E}}_{X_t\sim q_t}\big[\varepsilon_{\score, t}(X_t)^2\big]+\frac{d^5\log^{7} T}{T^{3}} .
		\label{eq:H-upper-bound-DDPM}
	\end{align}
\end{lemma}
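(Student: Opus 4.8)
The plan is to bound the extra KL cost incurred by replacing the true-score update $H_{t-1}$ with the estimated-score update $Y_{t-1}$, keeping in mind that $p_{H_{t-1}\mid H_t}$ and $p_{Y_{t-1}\mid Y_t}$ are both (approximately) Gaussian with the \emph{same} covariance structure $\frac{1-\alpha_t}{\alpha_t}\big(I-\tfrac{1-\alpha_t}{2(1-\overline{\alpha}_t)}J_t(x_t)\big)\big(I-\tfrac{1-\alpha_t}{2(1-\overline{\alpha}_t)}J_t(x_t)\big)^\top$ but with means differing by a term governed by $s_t(x_t)-s_t^\star(x_t)$ and $J_{s_t}-J_{s_t^\star}$. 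The key observation is that for two Gaussians with the same covariance $\Sigma$ and means $\mu,\mu'$, the KL divergence is $\frac12(\mu-\mu')^\top\Sigma^{-1}(\mu-\mu')$; here $\Sigma^{-1}\asymp \frac{\alpha_t}{1-\alpha_t}I$ up to a bounded multiplicative factor (since $\|J_t\|\lesssim d\log T$ and $\frac{1-\alpha_t}{1-\overline{\alpha}_t}\lesssim \frac{\log T}{T}$ make the correction factor $1+o(1)$), so the per-step KL is $\lesssim \frac{\alpha_t}{1-\alpha_t}\|\mu_{H}-\mu_{Y}\|_2^2$.

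First I would write down explicitly the two update rules and read off their conditional means. From \eqref{eqn:sde-extragradient}, $Y_{t-1}=\frac{1}{\sqrt{\alpha_t}}\big(Y_t+(1-\alpha_t)s_t(Y_t)+\sqrt{\tfrac{1-\alpha_t}{2}}Z_t+\sqrt{\tfrac{1-\alpha_t}{2}}Z_t^+\big)$ after unfolding $\Phi_t,\Psi_t$, so conditioned on $Y_t=x_t$ the law of $Y_{t-1}$ is $\mathcal N\big(\widehat\mu_t(x_t),\tfrac{1-\alpha_t}{\alpha_t}I\big)$ with $\widehat\mu_t(x_t)=\frac{1}{\sqrt{\alpha_t}}(x_t+(1-\alpha_t)s_t(x_t))$ — note the crucial point that because $Z_t$ enters $\Phi_t$ only additively (not through the Jacobian), $p_{Y_{t-1}\mid Y_t}$ has \emph{spherical} covariance, whereas $p_{H_{t-1}\mid H_t}$ has the distorted covariance. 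So the comparison is really Gaussian-vs-Gaussian with both different mean and different covariance, and I would bound it by the standard formula $\mathsf{KL}(\mathcal N(\mu_0,\Sigma_0)\|\mathcal N(\mu_1,\Sigma_1))=\frac12\big[\mathrm{Tr}(\Sigma_1^{-1}\Sigma_0)-d+\log\frac{\det\Sigma_1}{\det\Sigma_0}+(\mu_1-\mu_0)^\top\Sigma_1^{-1}(\mu_1-\mu_0)\big]$. The covariance-mismatch terms are $O\big((\tfrac{1-\alpha_t}{1-\overline\alpha_t})^2 \|J_t\|^2\big)$-type quantities already controlled in the proof of Lemma~\ref{lem:Ht}; the mean-mismatch term contributes $\frac{\alpha_t}{1-\alpha_t}\|\widehat\mu_t(x_t)-\widehat\mu_t^\star(x_t)\|_2^2=\frac{1-\alpha_t}{\alpha_t}\|s_t(x_t)-s_t^\star(x_t)\|_2^2\lesssim \frac{\log T}{T}\varepsilon_{\score,t}(x_t)^2$ plus a Jacobian piece $\lesssim (\tfrac{1-\alpha_t}{1-\overline\alpha_t})^2 \varepsilon_{\Jacobi,t}(x_t)^2 \|Z_t\|^2$-type term, but since Assumption~\ref{assumption:score-estimate-Jacobi} is \emph{not} available here, I would instead absorb $H_{t-1}$'s Jacobian correction term $-\frac{(1-\alpha_t)^{3/2}}{\sqrt2(1-\overline\alpha_t)}J_t(H_t)Z_t$ into the analysis as a small random perturbation and show that the resulting Gaussian in $x_{t-1}$ has mean $\widehat\mu_t^\star(x_t)$ and the covariance distortion contributes only $O(d^3\log^5 T/T^2)$ — exactly what Lemma~\ref{lem:Ht} already gives. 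After that the only remaining discrepancy between $H$ and $Y$ is $s_t^\star$ vs.\ $s_t$, which enters purely additively in the mean.

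Then I would split $\mathbb{R}^d\times\mathbb{R}^d$ into the typical set $\mathcal E$ from \eqref{eqn:eset} and its complement. On $\mathcal E$, I combine Lemma~\ref{lem:Ht} (Gaussian form of $p_{H_{t-1}\mid H_t}$ up to an $O(d^3\log^5 T/T^2)$ additive error in the log-density) with the explicit Gaussian form of $p_{Y_{t-1}\mid Y_t}$, and expand $\mathsf{KL}(p_{X_{t-1}\mid X_t}\|p_{Y_{t-1}\mid Y_t})-\mathsf{KL}(p_{X_{t-1}\mid X_t}\|p_{H_{t-1}\mid H_t})=\mathbb E_{X_{t-1}\sim p_{X_{t-1}\mid X_t}(\cdot\mid x_t)}\big[\log\frac{p_{H_{t-1}\mid H_t}}{p_{Y_{t-1}\mid Y_t}}\big]$; on $\mathcal E$ this log-ratio is a quadratic form in $x_{t-1}$ whose expectation under $p_{X_{t-1}\mid X_t}$ is controlled using the moment bounds of Lemma~\ref{lem:x0} (applied to $X_0\mid X_t$), yielding the $\frac{d\log^3 T}{T}\varepsilon_{\score,t}(x_t)^2$ main term plus $O(d^5\log^7 T/T^3)$ lower-order terms. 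Off $\mathcal E$, I use Lemma~\ref{lem:sde-R-full} together with the (exponentially small) probability bound $\mathbb P((X_t,X_{t-1})\notin\mathcal E)\le \exp(-c_{20}d\log T)$ coming from the tail estimate \eqref{eq:P-xt-X0-124} and the lower bound on $-\log p_{X_t}$, controlling the contribution by $\exp(-c_{20}d\log T)$ after a Cauchy–Schwarz argument that trades the $T^{c_0+2c_R+2}$ polynomial factor against the exponential decay. Finally I take expectation over $x_t\sim q_t$ and invoke Assumption~\ref{assumption:score-estimate} only at the very end (i.e.\ I keep $\mathbb E_{X_t\sim q_t}[\varepsilon_{\score,t}(X_t)^2]$ inside the bound, as the statement requires), which gives \eqref{eq:H-upper-bound-DDPM}.

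The main obstacle I anticipate is the off-$\mathcal E$ analysis: the crude bound of Lemma~\ref{lem:sde-R-full} carries a large polynomial factor $T^{c_0+2c_R+2}$, and one must verify that the Gaussian tails of $p_{X_{t-1}\mid X_t}$ together with the atypicality event decay fast enough (super-polynomially, via $\exp(-c_{20}d\log T)=T^{-c_{20}d}$) to kill this factor — this requires carefully choosing $c_{20}$ large relative to $c_0,c_R$ and handling the $\|x_{t-1}-\widehat x_t\|_2^2$ and $\|x_t\|_2^2$ growth by integrating against the sub-Gaussian-type tail bounds in Lemma~\ref{lem:x0}, exactly as in \citet[Step 3, Theorem 3]{li2023towards}. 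The rest — the Gaussian KL expansion on $\mathcal E$ — is a bookkeeping exercise in Taylor-expanding $\log\det$ and the quadratic forms, using that $\frac{1-\alpha_t}{1-\overline\alpha_t}\|J_t\|\lesssim \frac{d\log^2 T}{T}\ll 1$.
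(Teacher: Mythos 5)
There is a genuine gap, and it comes from a misreading of the accelerated stochastic sampler. You claim that after unfolding $\Phi_t,\Psi_t$ one gets $Y_{t-1}=\frac{1}{\sqrt{\alpha_t}}\big(Y_t+(1-\alpha_t)s_t(Y_t)+\sqrt{\tfrac{1-\alpha_t}{2}}Z_t+\sqrt{\tfrac{1-\alpha_t}{2}}Z_t^+\big)$, so that $p_{Y_{t-1}\mid Y_t}$ is Gaussian with spherical covariance and mean $\frac{1}{\sqrt{\alpha_t}}(x_t+(1-\alpha_t)s_t(x_t))$. That is not the algorithm: in \eqref{eqn:sde-extragradient} the score is evaluated at the \emph{intermediate} point, $Y_{t-1}=\frac{1}{\sqrt{\alpha_t}}\big(Y_t+\sqrt{\tfrac{1-\alpha_t}{2}}Z_t+(1-\alpha_t)\,s_t\big(Y_t+\sqrt{\tfrac{1-\alpha_t}{2}}Z_t\big)+\sqrt{\tfrac{1-\alpha_t}{2}}Z_t^+\big)$, so $Z_t$ enters the conditional law of $Y_{t-1}$ nonlinearly through $s_t(\cdot)$. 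Consequently $p_{Y_{t-1}\mid Y_t}$ is a non-Gaussian mixture over $Z_t$, the closed-form Gaussian-vs-Gaussian KL identity you want to apply is unavailable, and the mean discrepancy is \emph{not} $\frac{1-\alpha_t}{\alpha_t}\|s_t(x_t)-s_t^{\star}(x_t)\|_2^2$ evaluated at $x_t$ (indeed, if the sampler were as you wrote it, it would reduce to an ordinary DDPM step and no acceleration argument would be needed). Your entire first two paragraphs rest on this incorrect Gaussian form, so the main-term computation does not go through as stated.

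The paper circumvents exactly this obstacle by conditioning on $Z_t$: using non-negativity of $\mathsf{KL}\big(p_{Z_t\mid H_{t-1},H_t}\,\|\,p_{Z_t\mid Y_{t-1},Y_t}\big)$, it upper bounds $\int p_{X_{t-1}\mid X_t}\log\frac{p_{H_{t-1}\mid H_t}}{p_{Y_{t-1}\mid Y_t}}$ by the analogous quantity with $p_{H_{t-1}\mid H_t,Z_t}$ and $p_{Y_{t-1}\mid Y_t,Z_t}$, which \emph{are} Gaussians sharing the same spherical covariance $\frac{1-\alpha_t}{2\alpha_t}I$ with means $\mu_t^{\star}(x_t,z_t)$ and $\mu_t(x_t,z_t)$. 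The log-ratio then splits into a squared-mean term $\mathcal{H}_1$ and a cross term $\mathcal{H}_2$; bounding $\mathcal{H}_1$ requires (a) the score error at the perturbed point $x_t+\sqrt{\tfrac{1-\alpha_t}{2}}z_t$, converted back to an expectation under $q_t$ via $p_{\Phi_t(X_t,Z_t)}\asymp p_{X_t}$, and (b) the Taylor remainder $s_t^{\star}\big(x_t+\sqrt{\tfrac{1-\alpha_t}{2}}z_t\big)-s_t^{\star}(x_t)+\frac{(1-\alpha_t)^{1/2}}{\sqrt{2}(1-\overline{\alpha}_t)}J_t(x_t)z_t$, controlled through the Lipschitz-type bound \eqref{eq:prop-norm-partJ} to yield the $d^5\log^7 T/T^3$ term; the cross term is killed using $\mathbb{E}[H_{t-1}-\mu_t^{\star}\mid H_t,Z_t]=0$ together with the density comparison between $p_{X_{t-1}\mid X_t}$ and $p_{H_{t-1}\mid H_t}$ from Lemmas~\ref{lem:Ht}--\ref{lem:sde-R} and Cauchy--Schwarz. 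None of these steps appear in your plan (your sketch has no mechanism for handling the score evaluated at the noised point, nor for the cross term weighted by $p_{X_{t-1}\mid X_t}$ rather than $p_{H_{t-1}\mid H_t}$), so the proposal as written would not deliver \eqref{eq:H-upper-bound-DDPM}. Your treatment of the off-$\mathcal{E}$ region via Lemma~\ref{lem:sde-R-full} and exponential tails is consistent with the paper, but that part alone does not rescue the main-term analysis.
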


\paragraph{Step 4: putting all this together.} 
We are now ready to complete the proof. 
Substituting \eqref{eqn:each-kl} and \eqref{eq:H-upper-bound-DDPM} into the decomposition~\eqref{eqn:kl-decomp} yields
\begin{align*}
	\mathsf{KL}(p_{X_{1}}\parallel p_{Y_{1}}) & \leq\mathsf{KL}(p_{X_{T}}\parallel p_{Y_{T}})+\sum_{t=1}^{T-1}\mathop{\mathbb{E}}_{x_{t}\sim q_{t}}\Big[\mathsf{KL}\Big(p_{X_{t-1}\mymid X_{t}}(\cdot\mymid x_{t})\parallel p_{H_{t-1}\mymid H_{t}}(\cdot\mymid x_{t})\Big)\Big]\\
	 & +\sum_{t=1}^{T-1}\left\{ \mathop{\mathbb{E}}_{x_{t}\sim q_{t}}\Big[\mathsf{KL}\Big(p_{X_{t-1}\mymid X_{t}}(\cdot\mymid x_{t})\parallel p_{Y_{t-1}\mymid Y_{t}}(\cdot\mymid x_{t})\Big)\Big]-\mathop{\mathbb{E}}_{x_{t}\sim q_{t}}\Big[\mathsf{KL}\Big(p_{X_{t-1}\mymid X_{t}}(\cdot\mymid x_{t})\parallel p_{H_{t-1}\mymid H_{t}}(\cdot\mymid x_{t})\Big)\Big]\right\} \\
		& \lesssim\mathsf{KL}(p_{X_{T}}\parallel p_{Y_{T}})+\sum_{2\leq t\leq T}\frac{d^{6}\log^{9}T}{T^{3}}+\frac{d\log^{3}T}{T}\sum_{t=2}^{T}\mathop{\mathbb{E}}_{X_{t}\sim q_{t}}\left[\varepsilon_{\score,t}(X_{t})^{2}\right]\\
	 & \asymp\frac{d^{6}\log^{9}T}{T^2}+d\varepsilon_{\score}^{2}\log^{3}T,
	\end{align*}
thereby concluding the proof of Theorem~\ref{thm:main-SDE}.

\subsection{Proof of Lemma~\ref{lem:Ht}}
To begin with, we observe that
\begin{align*}
    &p_{H_{t-1} \mymid H_t}(x_{t-1}\mymid x_t)\propto \\
 &\qquad \exp\Big(-\frac{\alpha_t}{1-\alpha_t}\big(x_{t-1} - \widehat{\mu}_t^{\star}(x_t)\big)^{\top}\mathsf{Var}\Big(Z_t - \frac{1-\alpha_{t}}{1-\overline{\alpha}_{t}}J_t(H_t)Z_t + Z_t^+ \mid H_t=x_t\Big)^{-1}\big(x_{t-1} - \widehat{\mu}_t^{\star}(x_t)\big)\Big). 
    \end{align*}
It is easy to verify that 
\begin{align*}
&\mathsf{Var}\Big(Z_t - \frac{1-\alpha_{t}}{1-\overline{\alpha}_{t}}J_t(H_t)Z_t + Z_t^+ \mid H_t=x_t\Big) \\
&~~~~= 2\Big(I - \frac{1-\alpha_t}{2(1-\overline{\alpha}_{t})}J_{t}(x_t)\Big)\Big(I - \frac{1-\alpha_t}{2(1-\overline{\alpha}_{t})}J_{t}(x_t)\Big)^{\top} + \frac{(1-\alpha_t)^2}{2(1-\overline{\alpha}_{t})^2}J_{t}(x_t)J_{t}(x_t)^{\top}.
\end{align*}
For any $(x_{t-1},x_{t})\in\mathcal{E}$, we can deduce that
\begin{subequations}
	\begin{align}\label{eq:Jt-upper}
		\left\| J_t\left(x_t\right)\right\| &\lesssim  d \log T , \\
\left\|x_{t-1}-\widehat{\mu}^{\star}_t\left(x_t\right)\right\|_2 & \leq\left\|x_{t-1}-\widehat{x}_t\right\|_2+\frac{1-\alpha_t}{\sqrt{\alpha_t}\left(1-\overline{\alpha}_t\right)}
		\mathbb{E}\left[\left\|x_t-\sqrt{\overline{\alpha}_t} X_0\right\|_2 \mid X_t=x_t\right]  \notag\\
		&\stackrel{\text{(i)}}{ \lesssim} \sqrt{d\left(1-\alpha_t\right) \log T}+\sqrt{\frac{d \log T}{1-\overline{\alpha}_t}}\left(1-\alpha_t\right) \asymp \sqrt{d\left(1-\alpha_t\right) \log T} ,
\end{align}
\end{subequations}
where \eqref{eq:Jt-upper} follows \eqref{eq:prop-norm-J}
and (i) both arises from \Cref{lem:x0}. 
Taking the above relations together and using the relation~\eqref{eqn:alpha-t}, we arrive at
\begin{align*}
\frac{1-\alpha_t}{(1-\overline{\alpha}_{t})^2}\|J_{t}(x_t)\|^2\|x_{t-1} - \widehat{\mu}_t^{\star}(x_t)\|_2^2 \lesssim \frac{d^3\log^{5} T}{T^{2}}, 
\end{align*}
which completes the proof.
\subsection{Proof of Lemma~\ref{lem:sde-R-full}}
\label{sec:proof-lem:sde-R-full}

% with $\mu$ and $J$ defined as in expressions~\eqref{eqn:nu-t-2} and \eqref{eqn:derivative-2}. 

According to the expression \eqref{eq:Ht}, one has
\[
H_{t-1}\mid H_{t}=x_{t}\ \sim\ \mathcal{N}\Bigg(      \widehat{\mu}^{*}_{t}(x_{t}),\,\underset{\eqqcolon\,\Sigma(\widehat{x}_{t})}{\underbrace{\frac{1-\alpha_{t}}{\alpha_{t}}\bigg(I-\frac{1-\alpha_{t}}{2(1-\overline{\alpha}_{t})}J_{t}(x_{t})\bigg)^{2}+\frac{(1-\alpha_t)^3}{4\alpha_t(1-\overline{\alpha}_{t})^2}J_{t}(x_t)^2}}\,\Bigg).
\]
In order to quantify the above density of interest, we first bound the Jacobian matrix $J_{t}(x)$ defined in \eqref{eq:Jacobian-Thm4}. 
On the one hand, the expression \eqref{eq:Jt-x-expression-ij-23} tells us that $J_{t}({x}) \preceq I_d$ for any $x$,  
given that the term within the curly bracket in \eqref{eq:Jt-x-expression-ij-23} is a negative covariance matrix. 
On the other hand, $J_{t}(x)$ can be lower bounded by
\begin{align*}
J_{t}(x) & \succeq-\frac{1}{1-\overline{\alpha}_{t}}\mathbb{E}\Big[\big(X_{t}-\sqrt{\overline{\alpha}_{t}}X_{0}\big)\big(X_{t}-\sqrt{\overline{\alpha}_{t}}X_{0}\big)^{\top}\mid X_{t}=x\Big]\notag\\
 & \succeq-\frac{\mathbb{E}\Big[\big\| X_{t}-\sqrt{\overline{\alpha}_{t}}X_{0}\big\|_{2}^{2}\mid X_{t}=x\Big]}{1-\overline{\alpha}_{t}}I_{d}\succeq-\frac{2\|x\|_{2}^{2}+2T^{2c_{R}}}{1-\overline{\alpha}_{t}}I_{d}\\
 & \succeq-T^{c_{0}+1}\big(\|x\|_{2}^{2}+T^{2c_{R}}\big)I_{d},
\end{align*}
where the second line applies the assumption that $\|X_{0}\|_{2}\leq T^{c_{R}}$,
and the last line invokes the choice \eqref{eqn:alpha-t}. 
As a consequence, we obtain
\begin{subequations}
\label{eq:Sigma-UB-LB}
\begin{align}
\Sigma(\widehat{x}_{t}) & \succeq\frac{1-\alpha_{t}}{\alpha_{t}}\bigg(1-\frac{1-\alpha_{t}}{2(1-\overline{\alpha}_{t})}\bigg)^{2}I_{d}=\frac{1-\alpha_{t}}{4\alpha_{t}}\bigg(\frac{1-\overline{\alpha}_{t}+\alpha_{t}-\overline{\alpha}_{t}}{1-\overline{\alpha}_{t}}\bigg)^{2}I_{d}\succeq\frac{1-\alpha_{t}}{4\alpha_{t}}I_{d}\succeq\frac{1-\alpha_{t}}{4}I_{d};\\
\Sigma(\widehat{x}_{t}) & \preceq\frac{1-\alpha_{t}}{\alpha_{t}}T^{2c_{0}+2}\big(2\|\widehat{x}_{t}\|_{2}^{4}+2T^{4c_{R}}\big)I_{d}+\frac{(1-\alpha_t)^3}{4\alpha_t(1-\overline{\alpha}_{t})^2}I_d\preceq4T^{2c_{0}+2}\big(\|\widehat{x}_{t}\|_{2}^{4}+T^{4c_{R}}\big)I_{d}.
\end{align}
\end{subequations}
%

%
%According to Cauchy's inequality, one can easily check that $J_{t}(x_{t}) \preceq I$, and 
%
%\begin{align}
%J_{t}(x_{t}) \succ -\frac{1}{1-\overline{\alpha}_{t-1}}\int_{x_0}   p_{X_0 \mymid X_{t}}(x_0 \mymid x)
%\big\|\widehat{x}_t - \sqrt{\overline{\alpha}_{t}}x_{0}\big\|_2^2 I\mathrm{d} x_0 \succ - \frac{2(\|\widehat{x}_t\|_2^2 + \mathsf{poly}(T))}{1-\overline{\alpha}_{t}}I,
%\end{align}
%%
%where we recall that $\ltwo{X_0} \leq \mathsf{poly}(T).$

%
%
%If we write 
%%
%\begin{align*}
%	Y_{t-1} \mymid Y_t = x_t \sim \mathcal{N}(\mu_t, U_tM_tU_t^{\top}),
%\end{align*}
%%
%where $\{M_{t,i}\}$ denote eigenvalues of covariance matrix $\frac{1-\alpha_t}{\alpha_t}(I - (1-\alpha_t)/2(1-\overline{\alpha}_{t})J_{t}(x_{t}))$, 
%in view of the above properties of $J_{t}$, one has 
%%
%\begin{align*}
%	M_{t,i} \gtrsim 1-\alpha_t,
%	\qquad \text{and} \qquad \log M_{t,i} \lesssim \log (\|\widehat{x}_t\|_2 + \mathsf{poly}(T)).
%\end{align*}
%%

With the above relations in mind, we are ready to bound the density function $p_{H_{t-1} \mymid H_t}(x_{t-1}\mymid x_t)$ for any $x_t,x_{t-1}\in \mathbb{R}^d$. 
It is seen from \eqref{eq:Ht} that
\begin{align*}
\log\frac{1}{p_{H_{t-1}\mid H_{t}}(x_{t-1}\mymid x_{t})} & =\frac{\big(x_{t-1}-\widehat{\mu}^{*}_{t}(x_{t})\big){}^{\top}\big(\Sigma(\widehat{x}_{t})\big)^{-1}\big(x_{t-1}-\widehat{\mu}^{*}_{t}(x_{t})\big)}{2}+\frac{1}{2}\log\mathsf{det}\big(\Sigma(\widehat{x}_{t})\big)+\frac{d}{2}\log(2\pi)\\
 & \leq\frac{2\big\| x_{t-1}-\widehat{\mu}^{*}_{t}(x_{t})\big\|_{2}^{2}}{1-\alpha_{t}}+\frac{d}{2}\log\Big(8\pi T^{2c_{0}+2}\big(\|\widehat{x}_{t}\|_{2}^{4}+T^{4c_{R}}\big)\Big)\\
 & \leq2T^{c_{0}+1}\left\{ 2\big\| x_{t-1}-\widehat{x}_{t}\big\|_{2}^{2}+\|x_{t}\|_{2}^{2}+T^{2c_{R}}\right\} +\frac{d}{2}\log\Big(8\pi T^{2c_{0}+2}\big(\|\widehat{x}_{t}\|_{2}^{4}+T^{4c_{R}}\big)\Big)\\
 & \leq T^{c_{0}+2c_{R}+2}\left\{ \big\| x_{t-1}-\widehat{x}_{t}\big\|_{2}^{2}+\|x_{t}\|_{2}^{2}+1\right\} ,
\end{align*}
where the second inequality results from \eqref{eq:Sigma-UB-LB}, 
and the third inequality makes use of  \eqref{eqn:alpha-t} and  the fact that 
\begin{align}
   \|x_{t-1}-\widehat{\mu}_{t}^{\star}(x_t)\|_{2}^{2} & \leq2\|x_{t-1}-\widehat{x}_{t}\|_{2}^{2}+2\|\widehat{x}_{t}-\widehat{\mu}_{t}^{\star}(x_t)\|_{2}^{2} \notag\\
    & =2\|x_{t-1}-\widehat{x}_{t}\|_{2}^{2}+2\bigg(\frac{1-\alpha_{t}}{\sqrt{\alpha_{t}}(1-\overline{\alpha}_{t})}\bigg)^{2}\Big\|\int_{x_{0}}p_{X_{0}\mymid X_{t}}(x_{0}\mymid x_{t})\big(x_{t}-\sqrt{\overline{\alpha}_{t}}x_{0}\big)\mathrm{d}x_{0}\Big\|_{2}^{2}\notag\\
    & \leq2\|x_{t-1}-\widehat{x}_{t}\|_{2}^{2}+\frac{2(1-\alpha_{t})^{2}}{\alpha_{t}(1-\overline{\alpha}_{t-1})^{2}}\sup_{x_{0}:\|x_{0}\|_{2}\leq T^{c_{R}}}\|x_{t}-\sqrt{\overline{\alpha}_{t}}x_{0}\|_{2}^{2}\notag\\
    & \leq2\|x_{t-1}-\widehat{x}_{t}\|_{2}^{2}+\frac{64c_{1}^{2}\log^{2}T}{T^{2}}\bigg(2\|x_{t}\|_{2}^{2}+2\overline{\alpha}_{t}T^{2c_{R}}\bigg)\notag\\
    & \leq2\|x_{t-1}-\widehat{x}_{t}\|_{2}^{2}+\|x_{t}\|_{2}^{2}+T^{2c_{R}}. 
      \label{eq:dist-xt-mu-xt-UB-crude}
   \end{align}
%\eqref{eq:dist-xt-mu-xt-UB-crude} and \eqref{eqn:alpha-t}. 
Given that $\log\frac{p_{X_{t-1}\mid X_{t}}(x_{t-1}\mymid x_{t})}{p_{H_{t-1}\mid H_{t}}(x_{t-1}\mymid x_{t})}\leq \log\frac{1}{p_{H_{t-1}\mid H_{t}}(x_{t-1}\mymid x_{t})}$, 
we have concluded the proof.

\subsection{Proof of Lemma~\ref{lemma:KL-X-H}}
Firstly, it follows from \Cref{lem:Ht} and \Cref{lem:sde-R} that: for any $(x_t,x_{t-1})\in \mathcal{E}$, 
\begin{align}
	\frac{p_{X_{t-1}\mymid X_{t}}(x_{t-1}\mymid x_{t})}{p_{H_{t-1}\mymid H_{t}}(x_{t-1}\mymid x_{t})} & =\exp\Big(O\Big(\frac{d^{3}\log^{4.5}T}{T^{3/2}}\Big)\Big)\\
   &=1+O\bigg(\frac{d^{3}\log^{4.5}T}{T^{3/2}}\bigg)\in\Big[\frac{1}{2},2\Big],
	\label{eq:ratio-p-X-H-range}
\end{align}
%
%where the last line results from \eqref{eqn:properties-alpha-proof-1} and the assumption that $T\geq c_{10} d^2 \log^3T$ for some large enough constant $c_{10}>0$.     
which further allows one to derive 
\begin{align}
 & \mathop{\mathbb{E}}\limits_{x_{t}\sim q_{t}}\Big[\mathsf{KL}\Big(p_{X_{t-1}\mymid X_{t}}(\cdot\mymid x_{t})\parallel p_{H_{t-1}\mymid H_{t}}(\cdot\mymid x_{t})\Big)\Big]\notag\\
 & =\Big(\int_{\mathcal{E}}+\int_{\mathcal{E}^{\mathrm{c}}}\Big)p_{X_{t}}(x_{t})p_{X_{t-1}\mymid X_{t}}(x_{t-1}\mymid x_{t})\log\frac{p_{X_{t-1}\mymid X_{t}}(x_{t-1}\mymid x_{t})}{p_{H_{t-1}\mymid H_{t}}(x_{t-1}\mymid x_{t})}\mathrm{d}x_{t-1}\mathrm{d}x_{t},\notag\\
 & \stackrel{(\text{i})}{=}\int_{\mathcal{E}}p_{X_{t}}(x_{t})\Bigg\{ p_{X_{t-1}\mymid X_{t}}(x_{t-1}\mymid x_{t})-p_{H_{t-1}\mymid H_{t}}(x_{t-1}\mymid x_{t})\notag\\
 & \qquad\qquad\qquad+p_{X_{t-1}\mymid X_{t}}(x_{t-1}\mymid x_{t})\cdot O\Bigg(\bigg(\frac{p_{H_{t-1}\mymid H_{t}}(x_{t-1}\mymid x_{t})}{p_{X_{t-1}\mymid X_{t}}(x_{t-1}\mymid x_{t})}-1\bigg)^{2}\Bigg)\Bigg\}\mathrm{d}x_{t-1}\mathrm{d}x_{t} \notag\\
 & \qquad+\int_{\mathcal{E}^{\mathrm{c}}}p_{X_{t}}(x_{t})p_{X_{t-1}\mymid X_{t}}(x_{t-1}\mymid x_{t})\log\frac{p_{X_{t-1}\mymid X_{t}}(x_{t-1}\mymid x_{t})}{p_{H_{t-1}\mymid H_{t}}(x_{t-1}\mymid x_{t})}\mathrm{d}x_{t-1}\mathrm{d}x_{t}\notag%
\\
 & \stackrel{(\text{ii})}{=}\int_{\mathcal{E}}p_{X_{t}}(x_{t})\Bigg\{ p_{X_{t-1}\mymid X_{t}}(x_{t-1}\mymid x_{t})-p_{H_{t-1}\mymid H_{t}}(x_{t-1}\mymid x_{t})+p_{X_{t-1}\mymid X_{t}}(x_{t-1}\mymid x_{t}) O\bigg(\frac{d^{6}\log^{9}T}{T^{3}}\bigg)\Bigg\}\mathrm{d}x_{t-1}\mathrm{d}x_{t}\notag\\
 & \qquad+\int_{\mathcal{E}^{\mathrm{c}}}p_{X_{t}}(x_{t})p_{X_{t-1}\mymid X_{t}}(x_{t-1}\mymid x_{t})\Big\{2T\big(\|x_{t}\|_{2}^{2}+\|x_{t-1}-\widehat{x}_{t}\|_{2}^{2}+T^{2c_{R}}\big)\Big\}\mathrm{d}x_{t-1}\mathrm{d}x_{t}.% 
\label{eqn:serenade}
\end{align}
Here,  (i) invokes the basic fact that: if $\big|\frac{p_Y(x)}{p_X(x)} - 1\big| < \frac{1}{2}$, then the Taylor expansion gives
\begin{align*}
p_X(x)\log \frac{p_X(x)}{p_Y(x)} &= -p_X(x)\log \Big(1 + \frac{p_Y(x) - p_X(x)}{p_X(x)}\Big) \notag \\
&= p_X(x) - p_Y(x) + p_X(x)O\bigg(\Big(\frac{p_Y(x)}{p_X(x)} - 1\Big)^2\bigg);
\end{align*}
and in (ii) we apply \eqref{eq:ratio-p-X-H-range} and Lemma~\ref{lem:sde-R-full}.

Next, we would like to bound each term on the right-hand side of \eqref{eqn:serenade} separately.  
In view of the definition of the set $\mathcal{E}$ (cf.~\eqref{eqn:eset}), one has 
\begin{align}
\mathbb{P}\big((X_{t},X_{t-1})\notin\mathcal{E}\big) & =\int_{(x_{t},x_{t-1})\notin\mathcal{E}}p_{X_{t}-1}(x_{t-1})p_{X_{t}\mymid X_{t-1}}(x_{t}\mymid x_{t-1})\mathrm{d}x_{t-1}\mathrm{d}x_{t} \notag\\
 & =\int_{(x_{t},x_{t-1})\notin\mathcal{E}}p_{X_{t}-1}(x_{t-1})\frac{1}{\big(2\pi(1-\alpha_{t})\big)^{d/2}}\exp\bigg(-\frac{\|x_{t}-\sqrt{\alpha_{t}}x_{t-1}\|_{2}^{2}}{2(1-\alpha_{t})}\bigg)\mathrm{d}x_{t-1}\mathrm{d}x_{t} \notag\\
 & \le\exp\big(-c_3d\log T\big),
	\label{eq:P-Xt-X-t-1-not-E}
\end{align}
and similarly,
\begin{align}
\int_{(x_{t-1},x_{t})\notin\mathcal{E}}p_{X_{t}}(x_{t})p_{X_{t-1}\mymid X_{t}}(x_{t-1}\mymid x_{t})\Big(2T\big(\|x_{t}\|_{2}^{2}+\|x_{t-1}-\widehat{x}_{t}\|_{2}^{2}\big)+T^{2c_{R}}\big)\mathrm{d}x_{t-1}\mathrm{d}x_{t} & \le\exp\big(-c_3d\log T\big).	
\label{eq:P-Xt-X-t-1-not-E-246}
\end{align}
In addition, for every $(x_t,x_{t-1})$ obeying $\|x_{t-1} - x_t/\sqrt{\alpha_t}\|_2 > c_3 \sqrt{d(1 - \alpha_t)\log T}$ and $-\log p_{X_t}(x_t) \leq \frac{1}{2}c_6 d\log T$, 
it follows from the definition \eqref{eq-mu-hat} of $\widehat{\mu}_t^{\star}(\cdot)$ that
\begin{align}
	\|x_{t-1}-\widehat{\mu}_{t}^{\star}(x_{t})\|_{2} & =\bigg\| x_{t-1}-\frac{1}{\sqrt{\alpha_{t}}}x_{t}-\frac{1-\alpha_{t}}{\sqrt{\alpha_{t}}(1-\overline{\alpha}_{t})}\mathbb{E}\Big[x_{t}-\sqrt{\overline{\alpha}_{t}}X_{0}\mid X_{t}=x_{t}\Big]\bigg\|_{2} \label{eq:x-tminus1-mu-2norm}\\
 & \geq\bigg\| x_{t-1}-\frac{1}{\sqrt{\alpha_{t}}}x_{t}\bigg\|_{2}-\frac{1-\alpha_{t}}{\sqrt{\alpha_{t}}(1-\overline{\alpha}_{t})}\mathbb{E}\Big[\big\| x_{t}-\sqrt{\overline{\alpha}_{t}}X_{0}\big\|_{2}\mid X_{t}=x_{t}\Big] \notag\\
 & \geq c_{3}\sqrt{d(1-\alpha_{t})\log T}-6\overline{c}_{5}\frac{1-\alpha_{t}}{\sqrt{\alpha_{t}(1-\overline{\alpha}_{t})}}\sqrt{d\log T} \notag\\
 & =\Bigg(c_{3}-6\overline{c}_{5}\frac{\sqrt{1-\alpha_{t}}}{\sqrt{\alpha_{t}(1-\overline{\alpha}_{t})}}\Bigg)\sqrt{d(1-\alpha_{t})\log T}\geq\frac{c_{3}}{2}\sqrt{d(1-\alpha_{t})\log T},
	\label{eq:x-tminus1-mu-2norm-479}
\end{align}
where the third line results from \eqref{eq:E-xt-X0} in Lemma~\ref{lem:x0},
and the last line applies \eqref{eqn:properties-alpha-proof}
and holds true as long as $c_{3}$ is large enough. 
Taking this result together with \Cref{lem:Ht} reveals that: for any $x_t$ obeying $-\log p_{X_t}(x_t) \leq \frac{1}{2}c_6 d\log T$, one has
\begin{align}
	\int_{x_{t-1}:\|x_{t-1} - x_t/\sqrt{\alpha_t}\|_2 > c_3 \sqrt{d(1 - \alpha_t)\log T}} 
	p_{H_{t-1} \mymid H_t}(x_{t-1}\mymid x_t)\mathrm{d}x_{t-1} &\le \exp\Big(- \frac{c_3}{2} d\log T \Big).
	\label{eq:P-Yt-X-t-1-not-E}
\end{align}
Combine \eqref{eq:P-Xt-X-t-1-not-E} and \eqref{eq:P-Yt-X-t-1-not-E} to arrive at
\begin{align}
 & \left|\int_{\mathcal{E}}p_{X_{t}}(x_{t})\Big\{ p_{X_{t-1}\mymid X_{t}}(x_{t-1}\mymid x_{t})-p_{H_{t-1}\mymid H_{t}}(x_{t-1}\mymid x_{t})\Big\}\mathrm{d}x_{t-1}\mathrm{d}x_{t}\right|\leq \mathbb{P}\big((X_{t},X_{t-1})\notin\mathcal{E}\big) \notag\\
 & \quad+\int_{\log p_{X_t}(x_t) \leq \frac{1}{2}c_6 d\log T, \|x_{t-1} - x_t/\sqrt{\alpha_t}\|_2 > c_3 \sqrt{d(1 - \alpha_t)\log T}}p_{X_{t}}(x_{t})p_{H_{t-1}\mymid H_{t}}(x_{t-1}\mymid x_{t})\mathrm{d}x_{t-1}\mathrm{d}x_{t} \notag\\
 & \quad\leq2\exp\Big(- \frac{c_3}{2} d\log T \Big).
	\label{eq:X-H-leak}
\end{align}

To finish up, plugging \eqref{eq:P-Xt-X-t-1-not-E-246} and \eqref{eq:X-H-leak} into \eqref{eqn:serenade} yields: for each $t\geq 2$,  
\begin{align}
%\label{eqn:each-kl}
	\mathop{\mathbb{E}}_{x_{t}\sim q_{t}}\Big[\mathsf{KL}\Big(p_{X_{t-1}\mymid X_{t}}(\cdot\mymid x_{t})\parallel p_{Y_{t-1}^{\star}\mymid Y_{t}}(\cdot\mymid x_{t})\Big)\Big] 
	 \lesssim
	\frac{d^{6}\log^{9}T}{T^{3}} + 3\exp\Big(- \frac{c_3}{2} d\log T \Big)
		\lesssim \frac{d^{6}\log^{9}T}{T^{3}}.
		%d^6\Big(\frac{1-\alpha_t}{\alpha_t-\overline{\alpha}_{t}}\Big)^{3}\log^6 T
\end{align}

\subsection{Proof of Lemma~\ref{lem:influence-error-KL}}
We first introduce the following notation: 
\begin{align*}
    \mu_{t}(x_t,z_t)& \coloneqq \frac{1}{\sqrt{\alpha_{t}}}\Big(x_t + \sqrt{\frac{1 - \alpha_t}{2}} z_t + (1-\alpha_{t})s_t\Big(x_t + \sqrt{\frac{1 - \alpha_t}{2}} z_t\Big)\Big); \\
    \mu_{t}^{\star}(x_t,z_t) &\coloneqq \frac{1}{\sqrt{\alpha_{t}}}\Big(x_t + \sqrt{\frac{1 - \alpha_t}{2}} z_t + (1-\alpha_{t})s_t^{\star}(x_t) - \frac{(1-\alpha_{t})^{3/2}}{\sqrt{2}(1-\overline{\alpha}_{t})}J_t(x_t)z_t\Big). 
    \end{align*}
In the sequel, we shall use $ \mu_{t}$ and  $\mu_{t}^{\star}$ to denote $\mu_{t}(x_t,z_t)$ and $\mu_{t}^{\star}(x_t,z_t)$, respectively, for simplicity, as long as it is clear from the context.  
It is observed that 
   \begin{align}
      & \mathop{\mathbb{E}}\limits_{x_{t}\sim q_{t}}\Big[\mathsf{KL}\Big(p_{X_{t-1}\mymid X_{t}}(\cdot\mymid x_{t})\parallel p_{Y_{t-1}\mymid Y_{t}}(\cdot\mymid x_{t})\Big)\Big] 
	   - \mathop{\mathbb{E}}\limits_{x_{t}\sim q_{t}}\Big[\mathsf{KL}\Big(p_{X_{t-1}\mymid X_{t}}(\cdot\mymid x_{t})\parallel p_{H_{t-1}\mymid H_{t}}(\cdot\mymid x_{t})\Big)\Big] \notag\\
      & = \int p_{X_{t}}(x_{t})p_{X_{t-1}\mymid X_{t}}(x_{t-1}\mymid x_{t})\log\frac{p_{H_{t-1}\mymid H_{t}}(x_{t-1}\mymid x_{t})}{p_{Y_{t-1}\mymid Y_{t}}(x_{t-1}\mymid x_{t})}\mathrm{d}x_{t-1}\mathrm{d}x_{t} \notag\\
      & = \int p_{X_{t}}(x_{t})p_{X_{t-1}\mymid X_{t}}(x_{t-1}\mymid x_{t})p_{Z_{t}\mymid H_{t-1},H_{t}}(z_{t}\mymid x_{t-1},x_{t})\log\frac{p_{H_{t-1}\mymid H_{t}}(x_{t-1}\mymid x_{t})}{p_{Y_{t-1}\mymid Y_{t}}(x_{t-1}\mymid x_{t})}\mathrm{d}z_{t}\mathrm{d}x_{t-1}\mathrm{d}x_{t} \notag\\
	   & \stackrel{\text{(i)}}{\le} \int p_{X_{t}}(x_{t})p_{X_{t-1}\mymid X_{t}}(x_{t-1}\mymid x_{t})p_{Z_{t}\mymid H_{t-1},H_{t}}(z_{t}\mymid x_{t-1},x_{t})\log\frac{p_{H_{t-1}\mymid H_{t}, Z_{t}}(x_{t-1}\mymid x_{t}, z_{t})}{p_{Y_{t-1}\mymid Y_{t}, Z_{t}}(x_{t-1}\mymid x_{t}, z_{t})}\mathrm{d}z_{t}\mathrm{d}x_{t-1}\mathrm{d}x_{t} \notag\\
      % & \stackrel{(ii)}{\le}  \int p_{X_{t}}(x_{t})p_{H_{t-1}\mymid H_{t}, Z_{t}}(x_{t-1}\mymid x_{t}, z_{t})p_{Z_{t}}(z_{t})\log\frac{p_{H_{t-1}\mymid H_{t}, Z_{t}}(x_{t-1}\mymid x_{t}, z_{t})}{p_{Y_{t-1}\mymid Y_{t}, Z_{t}}(x_{t-1}\mymid x_{t}, z_{t})}\mathrm{d}z_{t}\mathrm{d}x_{t-1}\mathrm{d}x_{t} 
      % +O\bigg(\frac{d^{3}\log^{4.5}T}{T^{3/2}}\bigg) \notag \\
      % &\quad\cdot\int p_{X_{t}}(x_{t})p_{H_{t-1}\mymid H_{t}, Z_{t}}(x_{t-1}\mymid x_{t}, z_{t})p_{Z_{t}}(z_{t})\frac{\alpha_{t}}{2(1-\alpha_{t})}\Big|\big\| x_{t-1}-\mu_{t}\big\|_{2}^{2} - \big\| x_{t-1}-\mu_{t}^{\star}\big\|_{2}^{2}\Big|\mathrm{d}x_{t-1}\mathrm{d}x_{t} \notag\\
	   & \stackrel{\text{(ii)}}{=} \int p_{X_{t}}(x_{t})p_{X_{t-1}\mymid X_{t}}(x_{t-1}\mymid x_{t})p_{Z_{t}\mymid H_{t-1},H_{t}}(z_{t}\mymid x_{t-1},x_{t})\frac{\alpha_{t}}{(1-\alpha_{t})}\notag\Big(\big\| x_{t-1}-\mu_{t}\big\|_{2}^{2} - \big\| x_{t-1}-\mu_{t}^{\star}\big\|_{2}^{2} \Big)\mathrm{d}z_{t}\mathrm{d}x_{t-1}\mathrm{d}x_{t}\\
      &=\underbrace{\int p_{X_{t}}(x_{t})p_{X_{t-1}\mymid X_{t}}(x_{t-1}\mymid x_{t})p_{Z_{t}\mymid H_{t-1},H_{t}}(z_{t}\mymid x_{t-1},x_{t})\frac{\alpha_{t}}{(1-\alpha_{t})}\notag\big\| \mu_{t}^{\star}-\mu_{t}\big\|_{2}^{2}  \mathrm{d}z_{t}\mathrm{d}x_{t-1}\mathrm{d}x_{t}}_{\mathcal{H}_1}\\
      &\qquad+\underbrace{\int p_{X_{t}}(x_{t})p_{X_{t-1}\mymid X_{t}}(x_{t-1}\mymid x_{t})p_{Z_{t}\mymid H_{t-1},H_{t}}(z_{t}\mymid x_{t-1},x_{t})\frac{2\alpha_{t}}{(1-\alpha_{t})}\notag (\mu_{t}^{\star}-\mu_{t})^{\top}\big( x_{t-1}-\mu_{t}^{\star}\big) \mathrm{d}z_{t}\mathrm{d}x_{t-1}\mathrm{d}x_{t}}_{\mathcal{H}_2}. 
     \end{align}
%    where
%    \begin{align*}
%    \mu_{t} &:= \frac{1}{\sqrt{\alpha_{t}}}\Big(x_t + \sqrt{\frac{1 - \alpha_t}{2}} z_t + (1-\alpha_{t})s_t\Big(x_t + \sqrt{\frac{1 - \alpha_t}{2}} z_t\Big)\Big); \\
%    \mu_{t}^{\star} &:= \frac{1}{\sqrt{\alpha_{t}}}\Big(x_t + \sqrt{\frac{1 - \alpha_t}{2}} z_t + (1-\alpha_{t})s_t^{\star}(x_t) - \frac{(1-\alpha_{t})^{3/2}}{\sqrt{2}(1-\overline{\alpha}_{t})}J_t(x_t)z_t\Big). 
%    \end{align*}
Here, (i) follows the property of KL divergence that 
   \begin{align*}
   \int p_{Z_{t}\mymid H_{t-1},H_{t}}(z_{t}\mymid x_{t-1},x_{t})\log\frac{p_{Z_{t}\mymid H_{t-1},H_{t}}(z_{t}\mymid x_{t-1},x_{t})}{p_{Z_{t}\mymid Y_{t-1},Y_{t}}(z_{t}\mymid x_{t-1},x_{t})}\mathrm{d}z_{t} \ge 0,
   \end{align*}
  % and $(ii)$ holds by using \Cref{lem:Ht,lem:sde-R,lem:sde-R-full} to bound the difference between $p_{X_{t-1}\mymid X_{t}}$ and $p_{H_{t-1}\mymid H_{t}}$; 
%    \begin{align*}
%    &\int p_{X_{t}}(x_{t})\big(p_{X_{t-1}\mymid X_{t}}(x_{t-1}\mymid x_{t}) - p_{H_{t-1}\mymid H_{t}}(x_{t-1}\mymid x_{t})\big)p_{Z_{t}\mymid H_{t-1},H_{t}}(z_{t}\mymid x_{t-1},x_{t})\log\frac{p_{H_{t-1}\mymid H_{t}, Z_{t}}(x_{t-1}\mymid x_{t}, z_{t})}{p_{Y_{t-1}\mymid Y_{t}, Z_{t}}(x_{t-1}\mymid x_{t}, z_{t})}\mathrm{d}z_{t}\mathrm{d}x_{t-1}\mathrm{d}x_{t} \\
%    &\lesssim \frac{d^{3}\log^{4.5}T}{T^{3/2}}\cdot\\
%  &\int p_{X_{t}}(x_{t})\big(p_{X_{t-1}\mymid X_{t}}(x_{t-1}\mymid x_{t}) - p_{H_{t-1}\mymid H_{t}}(x_{t-1}\mymid x_{t})\big)p_{Z_{t}\mymid H_{t-1},H_{t}}(z_{t}\mymid x_{t-1},x_{t})\log\frac{p_{H_{t-1}\mymid H_{t}, Z_{t}}(x_{t-1}\mymid x_{t}, z_{t})}{p_{Y_{t-1}\mymid Y_{t}, Z_{t}}(x_{t-1}\mymid x_{t}, z_{t})}\mathrm{d}z_{t}\mathrm{d}x_{t-1}\mathrm{d}x_{t},
%    \end{align*}
   whereas (ii) results from the following expressions:
   \begin{align*}
    p_{H_{t-1} \mymid H_t, Z_t}(x_{t-1}\mymid x_t, z_t) &\propto \exp\Big(-\frac{\alpha_t}{(1-\alpha_t)}\Big\|(x_{t-1} - {\mu}_t^{\star}(x_t))\Big\|^2 \Big)\\
    p_{Y_{t-1} \mymid Y_t, Z_t}(x_{t-1}\mymid x_t, z_t) &\propto \exp\Big(-\frac{\alpha_t}{(1-\alpha_t)}\Big\|(x_{t-1} - {\mu}_t(x_t))\Big\|^2 \Big).
    \end{align*}
%  we have
%  \begin{align}
%    \mathcal{H}_1=\int p_{X_{t}}(x_{t})p_{X_{t-1}\mymid X_{t}}(x_{t-1}\mymid x_{t})p_{Z_{t}\mymid H_{t-1},H_{t}}(z_{t}\mymid x_{t-1},x_{t})\frac{\alpha_{t}}{(1-\alpha_{t})}\notag\big\| \mu_{t}^{\star}-\mu_{t}\big\|_{2}^{2}  \mathrm{d}z_{t}\mathrm{d}x_{t-1}\mathrm{d}x_{t}
%    %(\int_{\mathcal{E}}+\int_{\mathcal{E}^c} ) p_{X_{t}}(x_{t})p_{X_{t-1}\mymid X_{t}}(x_{t-1}\mymid x_{t})p_{Z_{t}\mymid H_{t-1},H_{t}}(z_{t}\mymid x_{t-1},x_{t})\frac{\alpha_{t}}{(1-\alpha_{t})}\notag\big\| \mu_{t}^{\star}-\mu_{t}\big\|_{2}^{2}  \mathrm{d}z_{t}\mathrm{d}x_{t-1}\mathrm{d}x_{t}
 %\end{align}
%  We further decompose the first two term of \eqref{eqn:tmp-sde-brahms} as follows:
%    \begin{align*}
%    &\int p_{X_{t}}(x_{t})p_{H_{t-1}\mymid H_{t}, Z_{t}}(x_{t-1}\mymid x_{t}, z_{t})p_{Z_{t}}(z_{t})\frac{\alpha_{t}}{(1-\alpha_{t})}\cdot\Big(\big\| x_{t-1}-\mu_{t}\big\|_{2}^{2} - \big\| x_{t-1}-\mu_{t}^{\star}\big\|_{2}^{2} \Big)\mathrm{d}z_{t}\mathrm{d}x_{t-1}\mathrm{d}x_{t}\\
%    &=\int p_{X_{t}}(x_{t})p_{H_{t-1}\mymid H_{t}, Z_{t}}(x_{t-1}\mymid x_{t}, z_{t})p_{Z_{t}}(z_{t})\frac{\alpha_{t}}{(1-\alpha_{t})}\Big(\big\| \mu_{t}^{\star}-\mu_{t}\big\|_{2}^{2} - 2(\mu_{t}^{\star}-\mu_{t})^{\top}( x_{t-1}-\mu_{t}^{\star}) \Big)\mathrm{d}z_{t}\mathrm{d}x_{t-1}\mathrm{d}x_{t}
%    \end{align*}
   %\yh{TBD}
   To bound $\mathcal{H}_1$,  we first note that 
   \begin{align*}
     & \frac{1}{1-\alpha_t}\big\| \mu_{t}-\mu_{t}^{\star}\big\|_{2}^2 = (1-\alpha_t)\cdot \\
      & \qquad \left\|s_t\Big(x_t + \sqrt{\frac{1 - \alpha_t}{2}} z_t\Big)-s_t^{\star}\Big(x_t + \sqrt{\frac{1 - \alpha_t}{2}} z_t\Big) +s_t^{\star}\Big(x_t + \sqrt{\frac{1 - \alpha_t}{2}} z_t\Big) - s_t^{\star}(x_t) + \frac{(1-\alpha_{t})^{1/2}}{\sqrt{2}(1-\overline{\alpha}_{t})}J_t(x_t)z_t\right\|_{2}^2 \\
      &\leq (1-\alpha_t) \left\|s_t\Big(x_t + \sqrt{\frac{1 - \alpha_t}{2}} z_t\Big)-s_t^{\star}\Big(x_t + \sqrt{\frac{1 - \alpha_t}{2}} z_t\Big)\right\|_{2}^2+\\
      &\qquad  (1-\alpha_t)\bigg\|s_t^{\star}\Big(x_t + \sqrt{\frac{1 - \alpha_t}{2}} z_t\Big) - s_t^{\star}(x_t) + \frac{(1-\alpha_{t})^{1/2}}{\sqrt{2}(1-\overline{\alpha}_{t})}J_t(x_t)z_t\bigg\|_2^2\\
      &\lesssim \frac{\log T}{T}\varepsilon_{\score, t}\Big(x_t + \sqrt{\frac{1 - \alpha_t}{2}} z_t\Big)^2 + \frac{d^5\log^{7} T}{T^{3}}
      \end{align*}
   % \begin{align*}
   % \frac{1}{1-\alpha_t}\big\| \mu_{t}-\mu_{t}^{\star}\big\|_{2}^2 \lesssim \frac{\log T}{T}\varepsilon_{\score, t}\Big(x_t + \sqrt{\frac{1 - \alpha_t}{2}} z_t\Big)^2 + \frac{d^5\log^{7} T}{T^{3}},
   % \end{align*}
   % since
   where the last inequality follows from the definition~\eqref{eq:pointwise-epsilon-score-J}, the relation~\eqref{eqn:alpha-t}, and the fact that
   \begin{align*}
   &(1-\alpha_t)\bigg\|s_t^{\star}\Big(x_t + \sqrt{\frac{1 - \alpha_t}{2}} z_t\Big) - s_t^{\star}(x_t) + \frac{(1-\alpha_{t})^{1/2}}{\sqrt{2}(1-\overline{\alpha}_{t})}J_t(x_t)z_t\bigg\|_2^2 \\
   &= \frac{(1-\alpha_{t})^{2}}{2(1-\overline{\alpha}_{t})^2}\bigg\|\int_0^1 \Big(J_t(x_t) - J_t\Big(x_t + \gamma\sqrt{\frac{1 - \alpha_t}{2}} z_t\Big)\Big) z_t\mathrm{d}\gamma\bigg\|_2^2 
   \lesssim \frac{d^5\log^{7} T}{T^{3}}.
   \end{align*}
   Here, the last inequality holds by invoking the property \eqref{eq:prop-norm-partJ} that  %(the calculation is similar to $\frac{\partial s^{\star}_t(x)}{\partial x}$, and hence is omitted here for simplicity) that 
   for $(x, x_{t-1}) \in \mathcal{E}$,
\begin{align} \label{eq:J-Lip}
  % \|\nabla_x u^{\top}J_t(x)u\|_2& \lesssim d^{3/2}\log^{3/2} T,\quad \text{ for } u \in \mathbb{S}^{d-1} \\
\|J_t(x) - J_t(x_t)\| &\le \sup_{u \in \mathbb{S}^{d-1}} |u^{\top}(J_t(x) - J_t(x_t))u| \lesssim d^{3/2}\|x - x_t\|_2\log^{3/2} T. 
\end{align}
% since for any $u \in \mathbb{S}^{d-1}$,
% \begin{align*}
% \|\nabla_x u^{\top}J_t(x)u\|_2 \lesssim d^{3/2}\log^{3/2} T,
% \end{align*}
%whose calculation is similar to $\frac{\partial s^{\star}_t(x)}{\partial x}$, and hence is omitted here for simplicity.
For the case with $(x, x_{t-1}) \notin \mathcal{E}$, this term will decay exponentially fast and can be bounded analogously. Furthermore, we observe that 
   \begin{align*}
   p_{\Phi_t(X_t, Z_t)}(x) = \big(\pi(2(1 - \overline{\alpha}_t) + 1-\alpha_t)\big)^{-d/2}\int p_{X_0}(x_0)\exp\bigg(-\frac{\|x - \sqrt{\overline{\alpha}_t}x_0\|_2^2}{2(1 - \overline{\alpha}_t) + 1-\alpha_t}\bigg) \mathrm{d} x_0 \asymp p_{X_t}(x),
   \end{align*}
   which in turn implies that
   \begin{align*}
      \mathcal{H}_{1}&\leq  
	   \Big(1+O\bigg(\frac{d^{3}\log^{4.5}T}{T^{3/2}}\bigg)\Big) \int p_{X_{t}}(x_{t})p_{H_{t-1}\mymid H_{t}, Z_{t}}(x_{t-1}\mymid x_{t}, z_{t})p_{Z_{t}}(z_{t}) \frac{\alpha_t}{1-\alpha_t}\big\| \mu_{t}-\mu_{t}^{\star}\big\|_{2}^2 \mathrm{d}x_t\mathrm{d}x_{t-1}\mathrm{d}z_t\\
	   &\lesssim \mathop{\mathbb{E}}\limits_{x^{+}\sim \Phi_t(X_t, Z_t) } \left[ \frac{\log T}{T}\varepsilon_{\score, t}(x^{+})^2\right] + \frac{d^5\log^{7} T}{T^{3}}\\
      &\asymp\frac{d\log^3 T}{T} \mathop{\mathbb{E}}_{X_t\sim q_t}\big[\varepsilon_{\score, t}(X_t)^2\big]+\frac{d^5\log^{7} T}{T^{3}} 
   \end{align*}
   We then decompose  $\mathcal{H}_2$ as follows
   \begin{align*}
   \mathcal{H}_2 &=\int p_{X_{t}}(x_{t})\big(p_{X_{t-1}\mymid X_{t}}(x_{t-1}\mymid x_{t})-p_{H_{t-1}\mymid H_{t}}(x_{t-1}\mymid x_{t})\big)p_{Z_{t}\mymid H_{t-1},H_{t}}(z_{t}\mymid x_{t-1},x_{t})\\
   &\qquad \cdot \frac{2\alpha_{t}}{(1-\alpha_{t})}\notag (\mu_{t}^{\star}-\mu_{t})^{\top}\big( x_{t-1}-\mu_{t}^{\star}\big) \mathrm{d}z_{t}\mathrm{d}x_{t-1}\mathrm{d}x_{t}\\
   & + \int p_{X_{t}}(x_{t})p_{H_{t-1}\mymid H_{t}, Z_t}(x_{t-1}\mymid x_{t},z_t)p_{Z_{t}}(z_{t}) \cdot \frac{2\alpha_{t}}{(1-\alpha_{t})}\notag (\mu_{t}^{\star}-\mu_{t})^{\top}\big( x_{t-1}-\mu_{t}^{\star}\big) \mathrm{d}z_{t}\mathrm{d}x_{t-1}\mathrm{d}x_{t}\\
	   &\stackrel{\text{(i)}}{=}\int p_{X_{t}}(x_{t})\big(p_{X_{t-1}\mymid X_{t}}(x_{t-1}\mymid x_{t})-p_{H_{t-1}\mymid H_{t}}(x_{t-1}\mymid x_{t})\big)p_{Z_{t}\mymid H_{t-1},H_{t}}(z_{t}\mymid x_{t-1},x_{t})\\
   &\qquad \cdot \frac{2\alpha_{t}}{(1-\alpha_{t})}\notag (\mu_{t}^{\star}-\mu_{t})^{\top}\big( x_{t-1}-\mu_{t}^{\star}\big) \mathrm{d}z_{t}\mathrm{d}x_{t-1}\mathrm{d}x_{t}\\
   &= \left(\int_{\mathcal{E}}+\int_{\mathcal{E}^c}\right) p_{X_{t}}(x_{t})\big(p_{X_{t-1}\mymid X_{t}}(x_{t-1}\mymid x_{t})-p_{H_{t-1}\mymid H_{t}}(x_{t-1}\mymid x_{t})\big)p_{Z_{t}\mymid H_{t-1},H_{t}}(z_{t}\mymid x_{t-1},x_{t})\\
   &\qquad \cdot \frac{2\alpha_{t}}{(1-\alpha_{t})}\notag (\mu_{t}^{\star}-\mu_{t})^{\top}\big( x_{t-1}-\mu_{t}^{\star}\big) \mathrm{d}z_{t}\mathrm{d}x_{t-1}\mathrm{d}x_{t},
   \end{align*}
   where (i) follows the fact that $\mathbb{E}[H_{t-1}-\mu^{*}_{t}|H_t,Z_t]=0$. In the following, we mainly focus on the term $\int_{\mathcal{E}}$ denoted as $\mathcal{K}_1$, since the other term can be bounded similarly as \cite[Lemma 10]{li2023towards} and is exponentially small.

   \begin{align}
      %\eqref{eq:int-Xt-Xt-1-792} 
      \mathcal{K}_1
      &\overset{\mathrm{(i)}}{\lesssim} \frac{d^{3}\log^{4.5}T}{T^{3/2}}\int_{\mathcal{E}} p_{X_{t}}(x_{t})p_{H_{t-1}\mymid H_{t},Z_t}(x_{t-1}\mymid x_{t},z_t) P_{Z_t}(z_t)\big\|x_{t-1} - \mu_{t}^{\star}\big\|_2 \frac{1}{1-\alpha_t}\big\| \mu_{t}-\mu_{t}^{\star}\big\|_{2}\mathrm{d}x_{t-1}\mathrm{d}x_{t} 
      %+ \exp\Big(- \frac{c_3}{4} d\log T \Big) 
      \notag\\
   %&
   % 
   &\overset{\mathrm{(ii)}}{\lesssim}\frac{d^{3}\log^{4.5}T}{T^{3/2}}\sqrt{\mathcal{K}_2\mathcal{K}_3}. 
   %\frac{d^3\log^3 T}{T}\int_{\mathcal{E}} p_{X_{t}}(x_{t})p_{X_{t-1}\mymid X_{t}}(x_{t-1}\mymid x_{t})
   % \big\|x_{t-1} - \mu_{t}^{\star}(x_t)\big\|_2^2\mathrm{d}x_{t-1}\mathrm{d}x_{t} 
   %    %\notag\\
   %    %&\qquad\qquad 
   %    + \frac{d\log^3 T}{T}
   % \mathbb{E}_{X_t\sim q_t} \big[\varepsilon^2_{\score, t}(X_t)\big] 
   %    %+ \exp\Big(- \frac{c_3}{4} d\log T \Big) 
   %    \notag\\
   % %
   % &= \frac{d^3\log^3 T}{T}\int_{\mathcal{E}} p_{X_{t}}(x_{t})\frac{p_{X_{t-1}\mymid X_{t}}(x_{t-1}\mymid x_{t})}{p_{Y_{t-1}^{\star}\mymid Y_{t}}(x_{t-1}\mymid x_{t})}p_{Y_{t-1}^{\star}\mymid Y_{t}}(x_{t-1}\mymid x_{t})
   % \big\|x_{t-1} - \mu_{t}^{\star}(x_t)\big\|_2^2\mathrm{d}x_{t-1}\mathrm{d}x_{t} \notag\\
   % &\qquad\qquad + 
   %  \frac{d\log^3 T}{T}
   % \mathbb{E}_{X_t\sim q_t}\big[\varepsilon^2_{\score, t}(X_t)\big] 
   %    %+ \exp\Big(- \frac{c_3}{4} d\log T \Big)
   %    \notag\\
   % %
   %    &\overset{\mathrm{(iii)}}{\lesssim} \frac{d^3\log^3 T}{T}\bigg(1+O\Big(\frac{d^2\log^3 T}{T}\Big)\bigg)\frac{d(1 - \alpha_t)}{\alpha_t} + 
   %    \frac{d\log^3 T}{T} \mathbb{E}_{X_t\sim q_t}\big[\varepsilon^2_{\score, t}(X_t)\big] + \exp\Big(- \frac{c_3}{4} d\log T \Big) \notag\\
   % %
   %    &\overset{\mathrm{(iv)}}{\lesssim} \frac{d^{4}\log^{4}T}{T^{2}} + \frac{d\log^3 T}{T}\mathbb{E}_{X_t\sim q_t}\big[\varepsilon_{\score, t}(X_t)^2\big]. 
      \label{eq:final-bound-K1}
   \end{align}
Here, we have
   \begin{align*}
	   \mathcal{K}_2 &= \int_{\mathcal{E}} p_{X_{t}}(x_{t})p_{H_{t-1}\mymid H_{t},Z_t}(x_{t-1}\mymid x_{t},z_t) P_{Z_t}(z_t)
   \big\|x_{t-1} - \mu_{t}^{\star}\big\|_2^2\mathrm{d}x_{t-1}\mathrm{d}x_{t} \mathrm{d}z_{t}\\
	   &\leq  \frac{d(1-\alpha_t)}{\alpha_t}\lesssim \frac{d\log T}{T};\\
	   \mathcal{K}_3 &= \int_{\mathcal{E}} p_{X_{t}}(x_{t})p_{H_{t-1}\mymid H_{t},Z_t}(x_{t-1}\mymid x_{t},z_t) P_{Z_t}(z_t)
   \frac{1}{(1-\alpha_t)^2}\big\| \mu_{t}-\mu_{t}^{\star}\big\|_{2}^2\mathrm{d}x_{t-1}\mathrm{d}x_{t}\mathrm{d}z_{t}\\
	   &\lesssim  \mathop{\mathbb{E}}\limits_{X_t \sim q_t}\left[\varepsilon_{\text {score }, t}\left(X_t\right)^2\right]+ \frac{d^5\log^{6} T}{T^2}.
   \end{align*}
   Therefore, we arrive at
   \begin{align*}
      \mathcal{K}_1\lesssim \frac{d^{3.5}\log^5 T}{T^2}\mathop{\mathbb{E}}\limits_{X_t \sim q_t}\left[\varepsilon_{\text {score }, t}\left(X_t\right)^2\right]+ \frac{d^{6}\log^8 T}{T^3}.
   \end{align*}
   Taking the above bounds on $\mathcal{H}_1$ and $\mathcal{K}_1$ together completes the proof.

   % Then follow the similar analysis as~\citet[Lemma10]{li2023towards}, we complete the proof.
%    \begin{align*}
%     &\int p_{X_{t}}(x_{t})p_{H_{t-1}\mymid H_{t}, Z_{t}}(x_{t-1}\mymid x_{t}, z_{t})p_{Z_{t}}(z_{t})\frac{\alpha_{t}}{(1-\alpha_{t})}\big\| \mu_{t}^{\star}-\mu_{t}\big\|_{2}^{2} \mathrm{d}z_{t}\mathrm{d}x_{t-1}\mathrm{d}x_{t}\\
%     &\lesssim(1-\alpha_t)\mathbb{E}_{X_t \sim q_t}\left[\varepsilon_{\text {score }, t}\left(X_t\right)^2\right]+ \frac{d^6\log^{9} T}{T^{3}}
%     \end{align*}

%     \begin{align*}
%         &\int p_{X_{t}}(x_{t})p_{H_{t-1}\mymid H_{t}, Z_{t}}(x_{t-1}\mymid x_{t}, z_{t})p_{Z_{t}}(z_{t})\frac{2\alpha_{t}}{(1-\alpha_{t})}(\mu_{t}^{\star}-\mu_{t})^{\top}( x_{t-1}-\mu_{t}^{\star}) \mathrm{d}z_{t}\mathrm{d}x_{t-1}\mathrm{d}x_{t}\\
%         &= 
%         \end{align*}
   %the desired bound follows from

\end{document}